\newtheorem{lemma}{Lemma}
\newtheorem{definition}{Definition}
\newtheorem{proposition}{Proposition}
\newcommand*{\Scale}[2][4]{\scalebox{#1}{$#2$}}%
\definecolor{softred}{HTML}{FE8A71}
\begin{document}
%
\title{A Tale of HodgeRank and Spectral Method: Target Attack Against Rank Aggregation Is\\the Fixed Point of Adversarial Game}
%
%
%
%
\author
{
    Ke~Ma,~\IEEEmembership{Member,~IEEE,}
    Qianqian~Xu$^*$,~\IEEEmembership{Senior Member,~IEEE,}
    Jinshan~Zeng,\\
    Guorong~Li,~
    Xiaochun~Cao,~\IEEEmembership{Senior Member,~IEEE,}
    and~Qingming~Huang$^*$,~\IEEEmembership{Fellow,~IEEE}
    \IEEEcompsocitemizethanks
    {
        \IEEEcompsocthanksitem K. Ma and G. Li are with the School of Computer Science and Technology, University of Chinese Academy of Sciences, Beijing 100049, China. E-mail: make@ucas.ac.cn,\ liguorong@ucas.ac.cn\protect\\
        \IEEEcompsocthanksitem Q. Xu is with the Key Laboratory of Intelligent Information Processing, Institute of Computing Technology, Chinese Academy of Sciences, Beijing 100190, China. E-mail: qianqian.xu@vipl.ict.ac.cn, xuqianqian@ict.ac.cn.\protect\\
        \IEEEcompsocthanksitem J. Zeng is with the School of Computer and Information Engineering, Jiangxi Normal University, Nanchang, Jiangxi 330022, China. E-mail: jinshanzeng@jxnu.edu.cn\protect\\ 
        \IEEEcompsocthanksitem X. Cao is with School of Cyber Science and Technology, Shenzhen Campus, Sun Yat-sen University, Shenzhen 518107, China. E-mail: caoxiaochun@mail.sysu.edu.cn.\protect\\
        \IEEEcompsocthanksitem Q. Huang is with the School of Computer Science and Technology, University of Chinese Academy of Sciences, Beijing 100049, China, also with the Key Laboratory of Big Data Mining and Knowledge Management (BDKM), University of Chinese Academy of Sciences, Beijing 100049, China, also with the Key Laboratory of Intelligent Information Processing, Institute of Computing Technology, Chinese Academy of Sciences, Beijing 100190, China, and also with Peng Cheng Laboratory, Shenzhen 518055, China. E-mail: qmhuang@ucas.ac.cn.\protect\\
        \IEEEcompsocthanksitem $^*$ Corresponding author.
    }
}

%
%

\markboth{Journal of \LaTeX\ Class Files,~Vol.~14, No.~8, August~2015}%
{Shell \MakeLowercase{\textit{et al.}}: Bare Demo of IEEEtran.cls for Computer Society Journals}
%



\IEEEtitleabstractindextext
{%
    \begin{abstract}
        \justifying
        Rank aggregation with pairwise comparisons has shown promising results in elections, sports competitions, recommendations, and information retrieval. However, little attention has been paid to the security issue of such algorithms, in contrast to numerous research work on the computational and statistical characteristics. Driven by huge profit, the potential adversary has strong motivation and incentives to manipulate the ranking list. Meanwhile, the intrinsic vulnerability of the rank aggregation methods is not well studied in the literature. To fully understand the possible risks, we focus on the purposeful adversary who desires to designate the aggregated results by modifying the pairwise data in this paper. From the perspective of the dynamical system, the attack behavior with a target ranking list is a fixed point belonging to the composition of the adversary and the victim. To perform the targeted attack, we formulate the interaction between the adversary and the victim as a game-theoretic framework consisting of two continuous operators while Nash equilibrium is established. Then two procedures against HodgeRank and RankCentrality are constructed to produce the modification of the original data. Furthermore, we prove that the victims will produce the target ranking list once the adversary masters the complete information. It is noteworthy that the proposed methods allow the adversary only to hold incomplete information or imperfect feedback and perform the purposeful attack. The effectiveness of the suggested target attack strategies is demonstrated by a series of toy simulations and several real-world data experiments. These experimental results show that the proposed methods could achieve the attacker's goal in the sense that the leading candidate of the perturbed ranking list is the designated one by the adversary.
    \end{abstract}  
    \begin{IEEEkeywords}
    Adversarial Learning, Pairwise Comparison, Ranking Aggregation.
    \end{IEEEkeywords}
}

\maketitle

\IEEEdisplaynontitleabstractindextext

%
\IEEEpeerreviewmaketitle

\IEEEraisesectionheading{\section{Introduction}\label{sec:introduction}}
\IEEEPARstart{G}{iven} partially observed pairwise comparisons, we are interested in aggregating these partial orders into a full ranking of all candidates. Such a statistical estimation problem arises in various disciplines, including the social choice theory\cite{arrow2012social}, statistics\cite{Jiang2011}, machine learning\cite{pmlr-v54-korba17a,DBLP:journals/ior/NegahbanOS17}, sports\cite{DBLP:conf/nips/HerbrichMG06}, recommendation system\cite{DBLP:conf/sigir/Cao0MAYH18}, information retrieval\cite{DBLP:books/daglib/0027504}, psychology\cite{CRITCHLOW1991294}, bioinformatics\cite{kolde2012robust}, \textit{etc}. Roughly speaking, the rank aggregation approach treats pairwise comparisons as access to estimate the underlying ``qualities'' or ``scores'' of the items being compared (\textit{e.g.}, skill levels of tennis players, preference for political candidates, and interest degree of advertisements). These latent preference scores represent the voters' choices. 

Besides the statistical \cite{DBLP:journals/jmlr/ShahBBPRW16,chen2019spectral} and computational issues \cite{DBLP:conf/icml/AgarwalPA18,DBLP:conf/aistats/0001SR20}, security is a new perspective for understanding the ranking aggregation problem. In pursuit of substantial economic benefits, the potential attackers have strong motivations and incentives to manipulate the aggregated results when the pairwise ranking algorithms are utilized in \textbf{\textit{high-stakes}} applications, \textit{e.g.}, elections (the so-called ``Condorcet winner'' \cite{grofman1986condorcet} is a winner who will win every pairwise comparison), sports competitions (in ``round-robin tournament'', each contestant meets all other contestants in turn), and recommendation (``click-through'' data is always in paired form). A profit-oriented adversary could try his/her best to designate the ranking list to favor his/her demands. The attacker could place the particular object at the top of the recommendation list or help the specific candidate win an election. If the attackers compromise the integrity of ranking results, these high-stakes applications' fairness and rationality will be profoundly damaged. 

However, the possible risk and potential threat of pairwise ranking algorithms have not been comprehensively examined. To the best of our knowledge, the existing adversarial arsenals do not cause damage to the ranking aggregation algorithms. Two significant disparities exist between the rank aggregation and the learning problems (\textit{e.g.} classification and regression).
\begin{itemize}
    \item Considering the process of learning problems, it separates into training and test phases where the used datasets are non-overlapping. Conversely, the {aggregation} problem needs to combine all collected comparisons into a single global ranking list in a one-shot way, and the test protocol does not exist. 
    \item  From the view of data, the typical {learning} data consists of features (always a continuous variable) and the desired output value (also called the supervisory information). The {aggregation} object is a pairwise comparison which involves two candidates and the partial order (\textit{a.k.a} preference) between them. A binary variable always represents such an order.
\end{itemize}
Consequently, the attack methods against {learning} and {aggregation} have their characteristics. The Attack against {learning} could occur in the training phase (poisoning attack \cite{DBLP:conf/icml/BiggioNL12}) and test phase (evasion attack, \textit{a.k.a} adversarial examples \cite{DBLP:journals/corr/GoodfellowSS14}) separately. Meanwhile, the attack against {aggregation}, which modifies the pairwise comparisons, should happen before the combination of pairwise comparisons. Furthermore, the attack against {learning} always conducts a continuous optimization problem which operates a single learning data point into the ``weapon'' of the adversary. Meanwhile, due to its discrete nature, modification of {aggregation} data naturally derives a discrete optimization problem, which decides to insert, delete or flip a pairwise comparison. To achieve the targeted attack against the ranking aggregation, the adversary must analyze the characteristics of the pairwise ranking problem, including the properties of both paired data and the victim algorithms. Consequently, the adversary would predict and induce the victim's behavior with parsimonious actions to achieve his/her goals.
 
Given these challenges, we propose a principle framework for adversarial perturbations of pairwise comparisons, aiming to manipulate the results of rank aggregation algorithms. In particular, we focus on two well-known procedures: least squared approach \cite{Jiang2011} and spectral method \cite{DBLP:journals/ior/NegahbanOS17}. These algorithms give the full ranking based on appropriate estimates of the latent preference scores. We make the following contributions of the paper: 
\begin{itemize}
    \item We identify the purposeful attack process as a fixed point problem in the dynamical system. Such a dynamical system is composed of two continuous operators characterized by the adversary and the victim. This perspective allows us to explore a general understanding of ranking aggregation and adversarial manipulation. 
    \vspace{0.15cm}
    \item The core of the proposed target attack procedure lies in a perspective of the continuous operator with optimal conditions, which models the interaction between the adversary and the rank aggregation algorithms. We prove that the target ranking of the adversary is a fixed point of the dynamical system consisting of the complete information adversary and the specific ranker, \textit{e.g.} \textbf{HodgeRank} and \textbf{RankCentrality}.
    \vspace{0.15cm}
    \item The targeted attack strategy against \textbf{HodgeRank} is established with the help of the optimal condition of the least squared problem. Moreover, we show that the proposed method allows the attacker to successfully manipulate \textbf{HodgeRank} with incomplete information or imperfect feedback.
    \vspace{0.15cm}
    \item Different target attacks against \textbf{RankCentrality} corresponding to the reversible and irreversible stochastic transition matrices are formulated into the inverse eigenvalue problem. The worst-case asymptotic variance analysis can help handle incomplete information or imperfect feedback with constructive solutions.
    \item A complexity model named ``Incremental Pairwise Comparison Oracle'' (\textbf{ICPO}) is proposed to evaluate the efficiency of purposeful attack strategies against rank aggregation methods. 
\end{itemize}

The rest of the paper is structured as follows: in the next section, we introduce the basic concepts of rank aggregation and two representative algorithms as \textbf{HodgeRank} and \textbf{RankCentrality}. We then establish a general framework for manipulating different rank aggregation algorithms in Sec. 3. We identify the manipulation process as a fixed point problem in the dynamical system. Such a dynamical system is composed of two continuous operators characterized by the adversary and the victim. The core of the proposed target attack procedure lies in a perspective of the continuous operator with the victim's optimal condition, which elicits the interaction between the adversary and the victim. We prove that the target ranking is a fixed point of the dynamical system consisting of the complete information adversary and the specific ranker. Based on the proposed framework, we derive the target attack strategies against \textbf{HodgeRank} and \textbf{RankCentrality} in Sec. 4 and Sec. 5 correspondingly. Moreover, we establish the sufficient conditions that the target rankings lists are the fixed points of the two cases by Theorem 2 (\textbf{HodgeRank}), Theorem 4 (\textbf{RankCentrality} with reversible stochastic transition matrix), and Theorem 6 (\textbf{RankCentrality} with irreversible stochastic transition matrix). To the best of our knowledge, this is the first systematic study of attacking rank aggregation with specific purposes. The extensive evaluations are conducted on several datasets from different high-stake domains. Our experiments demonstrate that the proposed attacks could achieve the attacker's goal in the sense that the leading candidate of the perturbed ranking list is the designated one by the adversary. 

\vspace{0.25cm}
\noindent\textbf{Notations.} In the remainder of this paper, we will use positive integers to indicate alternatives and voters. Let $\boldsymbol{V}$ be the set $[n]=\{1,\ \dots,\ n\}$ which  denotes a set of alternatives to be ranked. $\boldsymbol{U}=\{\boldsymbol{u}_1,\ \dots,\ \boldsymbol{u}_m\}$ denotes a set of voters. We will adopt the following notation from combinatorics:
\begin{equation*}
    \begin{bmatrix}\boldsymbol{V} \\
    l \end{bmatrix}:=\text{set of all}\ l\ \text{elements subset of}\ \boldsymbol{V}.
\end{equation*}
In particular, 
\begin{equation*}
    \begin{aligned}
        &  \begin{bmatrix}\boldsymbol{V} \\
            2 \end{bmatrix}&:=&\ \ \text{set of all unordered pairs of elements of}\ \boldsymbol{V}\\
        & &:=&\left\{[i,\ j]\ \Big\vert\ \forall\ i,\ j\in\boldsymbol{V},\ i\neq j\right\}.
    \end{aligned}
\end{equation*}
Moreover, for any $i,j\in\boldsymbol{V},\ i\neq j$, we write $i\succ j$ to mean that alternative $i$ is preferred over alternative $j$. Such a comparison could be converted into an ordered pair $(i,\ j)$. The set of ordered pair will be denoted as
\begin{equation*}
    \boldsymbol{V}\times\boldsymbol{V}:=\left\{(i,\ j)\ \Big\vert\ i\succ j,\ \forall\ i,\ j\in\boldsymbol{V},\ i\neq j\right\}. 
\end{equation*}
Ordered and unordered pairs will be delimited by parentheses $(i,j)$ and braces $\{i,j\}$ respectively. If we wish to emphasize the preference judgment from a particular voter $\boldsymbol{u}\in\boldsymbol{U}$, we will write $i\succ_{\boldsymbol{u}} j$. 

\section{Ranking with Pairwise Comparisons}
We begin with a formal description of the parametric model for binary comparisons, \textit{a.k.a} Bradley-Terry-Luce (\textbf{BTL}) model \cite{bradley1952rank}. Then we revisit the comparison graph and the Laplacian matrix which are essential for the ranking algorithms tailored to the \textbf{BTL} model. Two popular approaches which rank the items based on appropriate estimation of the latent preference scores, named \textbf{HodgeRank} \cite{Jiang2011} and \textbf{Rank Centrality} \cite{DBLP:journals/ior/NegahbanOS17}, are chosen as the victims to motivate our target attack strategies.  

\subsection{Parametric Model and Pairwise Comparisons}

Given a collection $\boldsymbol{V}$ of $n$ alternatives, the parametric model of pairwise comparisons assumes that each $i\in\boldsymbol{V}$ has a certain numeric quality score $\theta^*_i$. Suppose that $\boldsymbol{\theta}^*\in\mathbb{R}^n$
\begin{equation}
    \boldsymbol{\theta}^* = \left[\theta^*_1,\ \dots,\ \theta^*_n\right]^\top
\end{equation}
comprises the underlying preference scores assigned to each of the $n$ items. Without loss of generality, $\boldsymbol{\theta}^*$ could be positive as 
\begin{equation*}
    \theta^*_i > 0,\ \forall\ i\in[n].
\end{equation*}
Specifically, a comparison of any pair $\{i,j\}\in\begin{bmatrix}\boldsymbol{V}\\2 \end{bmatrix}$ is generated via the comparing between the corresponding scores $\theta^*_i$ and $\theta^*_j$ (in the presence of noise) by the \textbf{BTL} model. Let $y^*_{ij}$ denote the outcome of the comparison of the pair $i$ and $j$ based on $\boldsymbol{\theta}^*$, such that $y^*_{ij}=1$ if $i$ is preferred over $j$ and $y^*_{ij}=-1$ otherwise. Then, according to the \textbf{BTL} model,
\begin{equation}
    \label{eq:BTL}
    y^*_{ij} = \left\{
    \begin{array}{rl}
        1,  & \text{with probability}\ \theta^*_i/(\theta^*_i+\theta^*_j),\\[5pt]
        -1, & \text{otherwise}.
    \end{array}
    \right.
\end{equation}
Since the \textbf{BTL} model is invariant under the scaling of the scores, the latent preference score is not unique. Indeed, under the \textbf{BTL} model, a score vector $\boldsymbol{\theta}^*\in\mathbb{R}^n_+$ is the equivalence class
\begin{equation*}
    \boldsymbol{\Theta}^*=\left\{\boldsymbol{\theta}\ \Big\vert\ \text{there exists}\ \alpha>0\ \text{such that}\ \boldsymbol{\theta}=\alpha\boldsymbol{\theta}^*\right\}.
\end{equation*}
The outcome of a comparison depends on the equivalence class $\boldsymbol{\Theta}^*$.

\subsection{Comparison Graph and Combinatorial Laplacian}
A graph structure, named comparison graph, arises naturally from pairwise comparisons as follows. Let $\boldsymbol{G}=(\boldsymbol{V},\boldsymbol{E})$ stand for a comparison graph, where the vertex set $\boldsymbol{V}=[n]$ represents the $n$ candidates. In our problem setting, we pay attention to the complete graph setting: the directed edge set $\boldsymbol{E}=\boldsymbol{V}\times\boldsymbol{V}$ and $N:=|\boldsymbol{E}| = n(n-1)$. One can further associate weights $\boldsymbol{w}^*$ on $\boldsymbol{E}$ as voters $\boldsymbol{U}$ would have rated, \textit{i.e.} assigned cardinal scores or given an ordinal ordering to, the complete set of the alternatives $\boldsymbol{V}$. But no matter how incomplete the rated portion is, one may always convert such judgments into pairwise rankings that have no missing values as follows. For each voter $\boldsymbol{u}\in\boldsymbol{U}$, the pairwise ranking matrix is a skew-symmetric matrix $\boldsymbol{Y}^{\boldsymbol{u}}=\{y^{\boldsymbol{u}}_{ij}\}\in\{-1,0,1\}^{n\times n}$ as
\begin{equation}
    y^{\boldsymbol{u}}_{ij} = -y^{\boldsymbol{u}}_{ji},\ \forall\ (i,\ j)\in\boldsymbol{E},\ \forall\ \boldsymbol{u}\in\boldsymbol{U},
\end{equation}
where
\begin{equation}
    \label{eq:direction_ind}
    y^{\boldsymbol{u}}_{ij} = \left\{
    \begin{array}{rl}
        1, & \text{if}\ i\succ_{\boldsymbol{u}} j,\\[5pt]
        -1,& \text{if}\ j\succ_{\boldsymbol{u}} i,\\[5pt]
        0, & \text{otherwise}.
    \end{array}
    \right.
\end{equation}
Furthermore, we associate weight with each directed edge as $\boldsymbol{w}^*=[w^*_{12},w^*_{13},\dots,w^*_{n,n-1}]^\top\in\mathbb{Z}_+$
\begin{equation}
    \label{eq:weight}
    w^*_{ij}:= \underset{\boldsymbol{u}\in\boldsymbol{U}}{\sum}\ \mathbb{I}[y^{\boldsymbol{u}}_{ij}> 0] + \mathbb{I}[y^{\boldsymbol{u}}_{ji}<0], 
\end{equation}
where $\mathbb{I}[\cdot]$ is the Iverson bracket. Consequently, we can represent any pairwise ranking data as a comparison graph $\boldsymbol{G}$ with edge weights $\boldsymbol{w}^*$.

Given a graph $\boldsymbol{G}$ and weights $\boldsymbol{w}^*$, it is common to consider the weight matrix $\boldsymbol{W}^*$ with $w^*_{ij}$ as matrix elements, as well as the diagonal degree matrix $\boldsymbol{D}^*=\textbf{diag}(d^*_1,\dots,d^*_n)$ given by $d^*_{i} = \sum_{j\in\boldsymbol{V}}w^*_{ij}$, which represents the volume taken by each node in the graph $\boldsymbol{G}$. The combinatorial Laplacian $\mathcal{L}_0$ is defined as
\begin{equation}
    \mathcal{L}_0 = \boldsymbol{D}^* - \boldsymbol{W}^*. 
\end{equation}
In both solving process and the theoretical analysis, the combinatorial Laplacian $\mathcal{L}_0$ plays a vital role for the popular approaches based on the parametric model.

\subsection{HodgeRank}
\label{sec:hodgerank}
The \textbf{Hodgerank} method discussed in \cite{Jiang2011} consists in finding the relative ranking score by solving the following least-squares problem:
\begin{equation}
    \label{opt:Hodgerank}
    \begin{aligned}
        \underset{\boldsymbol{\theta}\in\mathbb{R}^n}{\textbf{\textit{minimize}}}\ \ \frac{\ 1\ }{\ 2\ }\underset{(i,j)\in\boldsymbol{E}}{\sum}\ w^*_{ij}\big(y_{ij}-\theta_j+\theta_i\big)^2
    \end{aligned}
\end{equation}
where $\boldsymbol{y}=[y_{12},y_{13},\dots,y_{n,n-1}]^\top$ represents the directions of edges. As $\boldsymbol{G}$ is a complete graph, we set $y_{ij} = 1$ which indicates a direct edge from node $i$ to $j$. Based on combinational Hodge theory \cite{Jiang2011}, the minimal norm solution of \eqref{opt:Hodgerank} is simply given as
\begin{equation}
    \label{eq:HodgeRank}
    \boldsymbol{\bar{\theta}} = -\mathcal{L}^{\dagger}_0\cdot\textbf{div}(\boldsymbol{y}),
\end{equation}
where $\mathcal{L}^{\dagger}_0$ is the Moore-Penrose pseudo-inverse of $\mathcal{L}_0$, and the divergence operator $\textbf{div}$ is defined as 
\begin{equation}
    [\textbf{div}(\boldsymbol{y})](i) = \underset{j:(i,j)\in\boldsymbol{E}}{\sum}\ w^*_{ij}y_{ij},\ \forall\ i\in[n].
\end{equation}

\subsection{Rank Centrality}
\label{sec:spectral}
The spectral ranking algorithm, or \textbf{RankCentrality} \cite{DBLP:journals/ior/NegahbanOS17}, is motivated by the connection between the pairwise comparisons $\boldsymbol{w}^*$ and a random walk over a directed graph $\boldsymbol{G}$. Spectral method constructs a random walk on $\boldsymbol{G}$ where at each time, the random walk is likely to go from vertex $i$ to vertex $j$ if items $i$ and $j$ were ever compared; and if so, the likelihood of going from $i$ to $j$ depends on how often $i$ lost to $j$. That is, the random walk is more likely to move to a neighbor who is more probable to “wins”. How frequently this walk visits a particular node in the long run, or equivalently the stationary distribution, is the score of the corresponding item. Thus, effectively this algorithm captures the preference of the given item versus all the others, not just immediate neighbors: the global effect induced by transitivity of comparisons is captured through the stationary distribution. 

A random walk can be represented by a time-independent transition matrix $\boldsymbol{P} = \{P_{i,j}\}_{1\leq i,j\leq n}\in\mathbb{R}^{n\times n}_+$, where $P_{i,j} = \mathbb{P}(X_{t+1}=j|X_{t}=i)$ and $X_t$ represents the state of the process (arriving node) at time $t$. By definition, the entries of a transition matrix are nonnegative and satisfy
\begin{equation}
    P_{i,j}+P_{j,i}=1,\ \ \forall\ i,\ j\in[n],\ i\neq j.
\end{equation}
One way to define a valid transition matrix $\boldsymbol{P}$ of a random walk on $\boldsymbol{G}$ is to scale all the edge weights by the maximum out-degree of a node, noted as $d_{\text{max}}$. This re-scaling ensures that each row-sum is at most one. Finally, to ensure that each row-sum is exactly one, the spectral method adds a self-loop to each node of $\boldsymbol{V}$. Concretely, the transition matrix $\boldsymbol{P}^*$ is converted from the pairwise comparison data $\boldsymbol{w}^*$ in such a way that
\begin{equation}
    \label{eq:empirical_matrix}
    P^*_{i,j} =
    \begin{dcases}
        \ \ \ \ \ \ \ \ \ \ \ \ \ \ \ \ \ \ \ \ \ \ \ \ \ \ \ \ \ \ \ \ \ \ \ 0, &\text{if}\ i\neq j,\ w^*_{ij}+w^*_{ji}=0,\\[3.5pt]
        \ \ \ \ \ \ \ \ \ \ \ \ \frac{1}{d_{\text{max}}}\ \frac{w^*_{ij}}{w^*_{ji}+w^*_{ij}}, &\text{if}\ i\neq j,\ w^*_{ij}+w^*_{ji}\neq 0,\\[2.5pt]
        1 - \frac{1}{d_{\text{max}}}\underset{k\neq i}{\sum}\ \frac{w^*_{ik}}{w^*_{ik}+w^*_{ki}}, &\text{otherwise}.
    \end{dcases}
\end{equation}
Rank centrality estimates the probability distribution obtained by applying matrix $\boldsymbol{P}^*$ repeated starting from any initial condition. Precisely, let $\theta_t(i)=\mathbb{P}(X_t=i)$ denote the distribution of the random walk at time $t$ with $\boldsymbol{\theta}_0=\{\theta_0(i)\}\in\mathbb{R}^n_+$ as an arbitrary starting distribution on $[n]$. Then the random walk holds
\begin{equation}
    \label{eq:transition_matrix_condition}
    \boldsymbol{\theta}^{\top}_{t+1} = \boldsymbol{\theta}^{\top}_t\boldsymbol{P}^*.
\end{equation}
One expects the stationary distribution of the sample version $\boldsymbol{P}^*$ to form a good estimate of true relative ranking score\footnote{\tiny The original paper assumes that the true relative scores are generated from the logistic pairwise comparison model, \textit{e.g.} Bradley-Terry-Luce (BTL) model, multinomial logit (MNL) and Plackett-Luce (PL) model.}, provided the sample size is sufficiently large. When the transition matrix has a unique left eigenvector $\boldsymbol{\theta}^*$ related to the largest eigenvalue, then starting from any initial distribution $\boldsymbol{\theta}_0$, the limiting distribution $\boldsymbol{\theta}_{t+1}$ is unique. This stationary distribution $\underset{t\rightarrow\infty}{\lim}\boldsymbol{\theta}_{t}$ is the top left eigenvector of $\boldsymbol{P}^*$ as 
\begin{equation}
    \label{eq:detailed_balance}
    \underset{t\rightarrow\infty}{\lim}\boldsymbol{\theta}_{t} = \boldsymbol{\bar{\theta}}\ \ \text{and}\ \ \boldsymbol{\bar{\theta}}^\top=\boldsymbol{\bar{\theta}}^\top\boldsymbol{P}^*,
\end{equation}
which only involves a simple eigenvector computation.

\section{General Framework}
In this section, we systematically introduce the general game-theoretic framework for target attacks on pairwise ranking. Specifically, we firstly propose the threat model of the purposeful adversary, then analyze the adversarial game from the view of the fixed point theorem, and finally present the definition of the continuous operator game and show the existence of equilibrium.

\subsection{Threat Model of Purposeful Adversary}
\label{sec:threat_model}
We provide here a detailed framework for target attacks against pairwise ranking algorithms. The framework consists of goal of the adversary, knowledge of the attacked method, and capability of manipulating the pairwise data, to eventually define the target attack strategies.

\vspace{0.3cm}
\noindent\textbf{The Goal of Adversary. }To execute the target attack, an adversary desires to provide the ranking algorithm with the tempered data. This action by the adversary will induce its opponent to produce a designated ranking result. If the ranking result meets the demand of adversary, we will say that the adversary has executed a successful attack. Let $\mathcal{A}$ and $\mathcal{R}$ be the adversary and the ranking algorithm respectively. Suppose that $\boldsymbol{\hat{w}}$ is the substitute of the original pairwise comparison data $\boldsymbol{w}^*$. The relative ranking scores $\boldsymbol{\bar{\theta}}$ and $\boldsymbol{\hat{\theta}}$ are produced by $\mathcal{R}$ with the original data $\boldsymbol{w}^*$ and the poisoned data $\boldsymbol{\hat{w}}$ accordingly. We adopt $\boldsymbol{\pi}_{\boldsymbol{\theta}}$ as the ranking list decided by the relative ranking score $\boldsymbol{\theta}$. $\mathcal{A}$ may have multiple intentions to manipulate the ranking result $\boldsymbol{\pi}_{\boldsymbol{\hat{\theta}}}$. \textbf{However, designating a given candidate to be the winner, \textit{a.k.a} the top-$1$ position of $\boldsymbol{\pi}_{\boldsymbol{\hat{\theta}}}$, must be the most attractive achievement.} Consequently, we are concerned about the following situation: $\mathcal{A}$ hopes that $\mathcal{R}$ would arrange $i_0\in[n]$ as $\boldsymbol{\pi}_{\boldsymbol{\hat{\theta}}}(1)$. The goal of $\mathcal{A}$ could be described as a set of ranking lists
\begin{equation}
    \label{eq:constraint_set_1}
    \mathcal{C}_{\mathcal{A}} := \left\{\boldsymbol{\pi}\ \Big\vert\ \boldsymbol{\pi}(1) = i_0\right\}.
\end{equation}
If it holds that $\boldsymbol{\pi}_{\boldsymbol{\hat{\theta}}}\in\mathcal{C}_{\mathcal{A}}$, we will say that $\mathcal{A}$ has a successful target attack strategy by substituting $\boldsymbol{w}^*$ with $\boldsymbol{\hat{w}}$. However, \eqref{eq:constraint_set_1} always leads to the integer optimization problems which is NP-complete \cite{10.1007/978-3-642-95322-4_17}. To avoid solving such a challenging problem, we turn our attention to the latent preference score. As the latent preference score $\boldsymbol{\hat{\theta}}$ decides $\boldsymbol{\pi}(\boldsymbol{\hat{\theta}})$, we can convert $\mathcal{C}_{\mathcal{A}}$ into the constraints on the relative ranking scores as 
\begin{equation}
  \label{eq:constraint_set_2}
  \mathcal{C}_{\mathcal{A}} := \left\{\boldsymbol{\theta} \in \mathbb{R}^n\ \Big\vert\ \underset{i\in[n],\ i\neq i_0}{\textbf{max}}\theta_i \leq \theta_{i_0}\right\}. 
\end{equation}
This set is more convenient to incorporate with the optimization process of $\mathcal{R}$ and establish the targeted attack strategies. 


\vspace{0.3cm}
\noindent\textbf{The Knowledge of Adversary. }We assume four distinct target attack scenarios which are distinguished by the knowledge of $\mathcal{A}$. The knowledge of $\mathcal{A}$ is partitioned into two dimensions: the completeness of information predominated by $\mathcal{A}$ and the accuracy of the feedback given by $\mathcal{R}$. As the adversarial mechanism among $\mathcal{A}$ and $\mathcal{R}$ can be naturally modeled as a game, we introduce the following definitions from game theory.

\begin{definition}[\textbf{Complete/Incomplete Game}]
    \label{def:com_game}
    In the adversary game between ranking algorithm $\mathcal{R}$ and the attacker $\mathcal{A}$, there exists the following decomposition of $\boldsymbol{w}^*$
    \begin{equation}
        \label{eq:decomposition}
        \boldsymbol{w}^* = \boldsymbol{w}^*_k + \boldsymbol{w}^*_u,
    \end{equation}
    where $\boldsymbol{w}^*_k\in\mathbb{Z}^N_+$ represents the available part for $\mathcal{A}$, which can be modified and substituted; $\boldsymbol{w}^*_u\in\mathbb{Z}^N_+$ is the inaccessible part and it could create a disturbance for $\mathcal{A}$. 
    \begin{itemize}
      \item If $\boldsymbol{w}^*_u=\boldsymbol{0}$, it is a \textbf{Complete Information Game} and $\mathcal{A}$ has the full accessibility to manipulate the input data for $\mathcal{R}$.
      \vspace{0.25cm}
      \item If $\boldsymbol{w}^*_u\neq\boldsymbol{0}$, it is an \textbf{Incomplete Information Game} and there exists unavailable data for $\mathcal{R}$.
    \end{itemize}
\end{definition}
It is known that uncertainty is a common phenomenon in adversarial operation. The uncertainty comes from the limitation of the attack ability, the defense mechanism of the victim, the randomness of data generation process and so on. 

\begin{definition}[\textbf{Perfect/Imperfect Game}]
    \label{def:pef_game}
    Let $\boldsymbol{\bar{\theta}}$ be the relative ranking score obtained by the victim algorithm $\mathcal{R}$ with $\boldsymbol{w}^*$ and $\boldsymbol{\pi}_{\boldsymbol{\bar{\theta}}}$ be the corresponding ranking list. 
    \begin{itemize}
        \item If the attacker can get the relative ranking score $\boldsymbol{\bar{\theta}}$ obtained by $\mathcal{R}$, it is a \textbf{Perfect Information Game}.
        \vspace{0.25cm}
        \item If the attacker only knows the ranking results $\boldsymbol{\pi}_{\boldsymbol{\bar{\theta}}}$, it is an \textbf{Imperfect Information Game}.
    \end{itemize}
\end{definition}
Even though the perfect feedback $\boldsymbol{\bar{\theta}}$ is inaccessible in the majority of cases, $\mathcal{A}$ still is possible to simulate $\boldsymbol{\bar{\theta}}$ with $\boldsymbol{w}^*$ or $\boldsymbol{w}^*_u$. Such a simulation could be completed with the public algorithm of $\mathcal{R}$. Due to the existence of equivalence class, the numerical accuracy of $\boldsymbol{\bar{\theta}}$ would not strongly influence the behavior of $\mathcal{A}$. As a consequence, $\mathcal{A}$ could construct an imprecise copy of $\boldsymbol{\bar{\theta}}$ by $\boldsymbol{\pi}_{\boldsymbol{\bar{\theta}}}$ under the imperfect information situation. Of course, we can further limit the attacker’s knowledge to design the corresponding attack strategies, which would be more efficient in actual situations but more challenging. The target attack strategies with the least knowledge could be the future direction of our research.

\vspace{0.3cm}
\noindent\textbf{The Capability of Adversary. }To modify the accessible part of pairwise comparisons $\boldsymbol{w}^*_u$, $\mathcal{A}$ could inject an arbitrary pairwise comparison into $\boldsymbol{w}^*_u$, delete the existing comparison $(i,j)$ in $\boldsymbol{w}^*_u$ or just flip $(i,j)$ as $(j,i)$. We authorize the following three operations to $\mathcal{A}$.

\renewcommand\labelitemii{$\circ$}
\begin{itemize}
    \item The adding operation can be described as an assignment in the following two cases:
    \begin{itemize}
        \vspace{0.1cm}
        \item $\exists\ \boldsymbol{u}\in\boldsymbol{U},\ \exists\ i,\ j\in\boldsymbol{V},\ i\neq j,\ \text{such that}\ y^{\boldsymbol{u}}_{ij}=0$,
        \vspace{0.25cm}
        \item $\forall\ \boldsymbol{u}\notin\boldsymbol{U},\ \forall\ i,\ j\in\boldsymbol{V},\ i\neq j,\ y^{\boldsymbol{u}}_{ij}$ does not exist,
        \vspace{0.1cm}
    \end{itemize}
    then 
    \[
        y^{\boldsymbol{u}}_{ij}\gets 1\ \text{or}\ -1\ \text{arbitrarily}.
    \]
    \item The deleting operation only involves the non-zero $y^{\boldsymbol{u}}_{ij}$:
    \begin{itemize}
        \vspace{0.1cm}
        \item $\exists\ \boldsymbol{u}\in\boldsymbol{U},\ \exists\ i,\ j\in\boldsymbol{V},\ i\neq j,\ \text{such that}\ y^{\boldsymbol{u}}_{ij}\neq 0$,
        \vspace{0.1cm}      
    \end{itemize}
    then 
    \[
        y^{\boldsymbol{u}}_{ij}\gets 0.
    \]
    \item The flipping operation turns the non-zero $y^{\boldsymbol{u}}_{ij}$ into its opponent
    \begin{itemize}
        \vspace{0.1cm}
        \item $\exists\ \boldsymbol{u}\in\boldsymbol{U},\ \exists\ i,\ j\in\boldsymbol{V},\ i\neq j,\ \text{such that}\ y^{\boldsymbol{u}}_{ij}\neq 0$,
        \vspace{0.1cm}
    \end{itemize}
    then 
    \[
        y^{\boldsymbol{u}}_{ij}\gets-\big|y^{\boldsymbol{u}}_{ij}\big|.
    \]
\end{itemize}
With the complete comparison graph $\boldsymbol{G}$, all attack operations (adding, deleting and flipping) can be executed by increasing or decreasing the corresponding element in $\boldsymbol{w}^*_u$. Authorizing these operations to $\mathcal{A}$ without quantitative restrictions is a kind of favoritism. However, to fully expose the vulnerability of $\mathcal{R}$, we assume that the attacker does not need to disguise himself/herself or pay for his/her malicious behaviors. The target attack strategy with limited budget is an open question for future research.

\subsection{A Fixed Point View of the Adversarial Game}

Now we are ready to propose the concrete target attack strategies for rank aggregation approaches. First of all, we describe a fixed-point view of the purposeful attack process, where the dynamical system is the union of all behaviors, actions and purpose, the inputs are adversarial actions, and the control costs are defined by the adversary’s goals to do harm. As discussed before, we treat the adversary $\mathcal{A}$ and the victim algorithm $\mathcal{R}$ as two agents/players in a game. According to the above threat model in Sec. \ref{sec:threat_model}, $\mathcal{A}$ attempts to obtain information about what $\mathcal{R}$ will choose and take into account this information in making their own decision with some special interests. Specifically, in such an adversarial game, $\mathcal{A}$ utilizes the complete/incomplete pairwise comparisons and the perfect/imperfect feedback to counterfeit some new paired data $\boldsymbol{\hat{w}}$ and replace the observed data $\boldsymbol{w}^*$. Then the victim $\mathcal{R}$ executes the ranking algorithm to generate the misguided ranking $\boldsymbol{\pi}_{\mathcal{R}}$ which satisfies $\boldsymbol{\pi}_{\mathcal{R}}\in\mathcal{C}_{\mathcal{A}}$. 

Let $\mathcal{X}$ be a non-empty set and we define 
\begin{equation}
    \mathcal{T}_{\mathcal{A}}: \mathcal{C}_{\mathcal{A}} \rightarrow \mathbb{Z}^{N}_+
\end{equation}
to be the set-valued operator of the adversary $\mathcal{A}$ which maps every desired ranking list $\boldsymbol{\pi}$ in $\mathcal{C}_{\mathcal{A}}$ to a set of possible paired data $\mathcal{T}_{\mathcal{A}}\boldsymbol{\pi}\subset2^{\mathbb{Z}^{N}_+}$. Moreover, the ranking aggregation process belonging to the victim $\mathcal{R}$ is also a set-valued operator
\begin{equation}
    \mathcal{T}_{\mathcal{R}}: \mathbb{Z}^{N}_+ \rightarrow \mathbb{R}^{n}
\end{equation}
which maps the pairwise comparisons in $\mathbb{Z}^{N}_+$ to a set of ranking lists. Furthermore, the composition of the above two operators is 
\begin{equation}
    \mathcal{T}_{\mathcal{R}}\circ\mathcal{T}_{\mathcal{A}}:\mathcal{C}_{\mathcal{A}} \rightarrow \mathbb{R}^{n}
\end{equation}
which is a mapping function as
\begin{equation}
    \boldsymbol{\pi}\ \ \mapsto\ \ \mathcal{T}_{\mathcal{R}}(\mathcal{T}_{\mathcal{A}}\boldsymbol{\pi})\ \ \ \coloneqq\underset{\boldsymbol{w}\in\mathcal{T}_{\mathcal{A}}\boldsymbol{\pi}}{\bigcup} \mathcal{T}_{\mathcal{R}}\boldsymbol{w}.
\end{equation}
Then the complete interaction between $\mathcal{A}$ and $\mathcal{R}$ can be described by the composition of $\mathcal{T}_{\mathcal{R}}$ and $\mathcal{T}_{\mathcal{A}}$. It is worth noting that the purpose of the adversary could be represented as $\boldsymbol{\pi}_{\mathcal{R}}\in\mathcal{C}_{\mathcal{A}}$ and the target ranking $\boldsymbol{\pi}_{\mathcal{A}}$ also satisfies $\boldsymbol{\pi}_{\mathcal{A}}\in\mathcal{C}_{\mathcal{A}}$. By the definition of $\mathcal{C}_{\mathcal{A}}$ \eqref{eq:constraint_set_1} and \eqref{eq:constraint_set_2}, it is obvious that the leading candidate of $\boldsymbol{\pi}_{\mathcal{A}}$ is invariant under $\mathcal{T}_{\mathcal{R}}\circ\mathcal{T}_{\mathcal{A}}$. Accordingly, any element of $\mathcal{C}_{\mathcal{A}}$ is, to some degree, a ``fixed point'' of the adversarial game between $\mathcal{A}$ and $\mathcal{R}$. 

It is known \cite{Nash48,10.2307/1969529} that a Nash equilibrium of every finite zero-sum game is a fixed point of the game's best response correspondence. A Nash equilibrium is a solution in which each player correctly predicts what the other players will do and responds optimally, so that no player can improve their utilities by choosing different strategies. But we could not apply the classic Kakutani fixed-point theorem \cite{kakutani1941generalization} in this adversarial game. On one hand, the adversarial game between $\mathcal{A}$ and $\mathcal{R}$ is a nonzero-sum game. The purpose of $\mathcal{A}$ is allocating the target candidate $i_0$ as the leading one in the aggregation result; $\mathcal{R}$ pursuits a best aggregated ranking list by the pairwise comparisons. On the other hand, \eqref{eq:constraint_set_1} is not a convex subset of $\mathbb{R}^n$. 

But the situation will change with the choice of $\mathcal{R}$. If $\mathcal{R}$ executes the parametric-model-based ranking aggregation approaches, \textit{e.g.} \textbf{HodgeRank} in Sec \ref{sec:hodgerank} and \textbf{RankCentrality} in Sec \ref{sec:spectral}, $\mathcal{A}$ would convert her/his manipulation purpose of ranking aggregation result into the constraints on the latent preference score like \eqref{eq:constraint_set_2}. Obviously, \eqref{eq:constraint_set_2} is a convex subset of $\mathbb{R}^n$. With the convex set, we can further discuss the fixed points of operators in the adversarial game. 

\begin{definition}[Fixed Points of Operator]
    The set of fixed points of an operator $\mathcal{T}:\mathcal{X}\rightarrow\mathcal{X}$ is denoted by $\textbf{Fix}[\mathcal{T}]$, i.e.
    \begin{equation}
        \textbf{Fix}[\mathcal{T}] = \big\{\boldsymbol{x}\in\mathcal{X}\ \big\vert\ \mathcal{T}\boldsymbol{x}=\boldsymbol{x}\big\}.
    \end{equation}
\end{definition}


The following proposition shows the existence of fixed point for the continuous operators.

\begin{proposition}
     Let $(\mathcal{X}, d)$ be a complete metric space with distance (or metric) $d$ and $\mathcal{T}:\mathcal{X}\rightarrow\mathcal{X}$ be a Lipschitz continuous operator with the Lipschitz constant $L\in[0,1)$. Given $\boldsymbol{x}_0\in\mathcal{X}$, define the following sequence $\{\boldsymbol{x}_n\}$
    \begin{equation}
        \boldsymbol{x}_{n+1} \coloneqq \mathcal{T}\boldsymbol{x}_n,\ \forall\ n\in\mathbb{N}.
    \end{equation}
    Let $d_n(\boldsymbol{x})\coloneqq d(\boldsymbol{x}_{n},\boldsymbol{x})$. The sequence $\{d_n(\boldsymbol{x})\}$ is monotonically decreasing:
        \begin{equation}
            d(\boldsymbol{x}_{n+1},\ \boldsymbol{x})\leq Ld(\boldsymbol{x}_{n},\ \boldsymbol{x}),\ \forall\ n\in\mathbb{N}.
        \end{equation}
    Moreover, $\boldsymbol{x}$ is the unique fixed point of $\mathcal{T}$.
\end{proposition}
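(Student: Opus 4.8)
This is the classical Banach contraction mapping theorem, so the plan is to follow the standard route: produce a candidate fixed point as the limit of the iterates $\{\boldsymbol{x}_n\}$, verify it is fixed, establish uniqueness, and finally read off the monotone contraction estimate as an immediate consequence. First I would control consecutive terms of the iteration. Applying the Lipschitz property repeatedly gives $d(\boldsymbol{x}_{n+1},\boldsymbol{x}_n) = d(\mathcal{T}\boldsymbol{x}_n,\mathcal{T}\boldsymbol{x}_{n-1}) \leq L\, d(\boldsymbol{x}_n,\boldsymbol{x}_{n-1})$, and iterating down to the base case yields $d(\boldsymbol{x}_{n+1},\boldsymbol{x}_n) \leq L^n d(\boldsymbol{x}_1,\boldsymbol{x}_0)$.

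The key step is then the Cauchy argument. For any $m > n$, the triangle inequality together with the geometric bound gives $d(\boldsymbol{x}_m,\boldsymbol{x}_n) \leq \sum_{k=n}^{m-1} L^k d(\boldsymbol{x}_1,\boldsymbol{x}_0) \leq \frac{L^n}{1-L}\, d(\boldsymbol{x}_1,\boldsymbol{x}_0)$. Since $L \in [0,1)$, the right-hand side tends to zero as $n \to \infty$, so $\{\boldsymbol{x}_n\}$ is Cauchy, and by completeness of $(\mathcal{X}, d)$ it converges to a limit, which I would name $\boldsymbol{x}$.

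Next I verify this limit is a fixed point: Lipschitz continuity implies continuity, so passing to the limit in $\boldsymbol{x}_{n+1} = \mathcal{T}\boldsymbol{x}_n$ yields $\boldsymbol{x} = \mathcal{T}\boldsymbol{x}$. For uniqueness, if $\boldsymbol{x}$ and $\boldsymbol{y}$ were both fixed, then $d(\boldsymbol{x},\boldsymbol{y}) = d(\mathcal{T}\boldsymbol{x},\mathcal{T}\boldsymbol{y}) \leq L\, d(\boldsymbol{x},\boldsymbol{y})$, forcing $(1-L)\, d(\boldsymbol{x},\boldsymbol{y}) \leq 0$ and hence $\boldsymbol{x} = \boldsymbol{y}$. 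Finally, the monotone decreasing estimate stated in the proposition is exactly the contraction property evaluated at this unique fixed point: $d(\boldsymbol{x}_{n+1},\boldsymbol{x}) = d(\mathcal{T}\boldsymbol{x}_n,\mathcal{T}\boldsymbol{x}) \leq L\, d(\boldsymbol{x}_n,\boldsymbol{x})$, and $L < 1$ makes the sequence $\{d_n(\boldsymbol{x})\}$ strictly decreasing.

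The main obstacle is the Cauchy estimate; everything else is routine. The one subtlety worth flagging is the ordering in the statement: the inequality $d(\boldsymbol{x}_{n+1},\boldsymbol{x}) \leq L\, d(\boldsymbol{x}_n,\boldsymbol{x})$ presupposes that $\boldsymbol{x}$ is the fixed point, since otherwise it need not hold. I would therefore establish existence and uniqueness of the fixed point first and only then derive the monotone contraction bound as a corollary, rather than proving the two assertions in the order they are listed.
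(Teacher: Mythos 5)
Your proof is correct and is exactly the classical Banach contraction-mapping argument; the paper does not actually print a proof of this proposition (it is stated as a known fact), so the standard route you take --- geometric bound on consecutive iterates, Cauchy sequence via completeness, continuity to pass to the limit, uniqueness from $(1-L)\,d(\boldsymbol{x},\boldsymbol{y})\leq 0$, and then the contraction estimate at the fixed point --- is precisely what is intended. Your observation that the displayed inequality only makes sense once $\boldsymbol{x}$ is identified as the fixed point is a fair reading of the statement's slightly inverted ordering; the only nitpick is that the sequence $\{d_n(\boldsymbol{x})\}$ is monotonically non-increasing rather than strictly decreasing, since it is eventually zero if some iterate lands on the fixed point.
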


For $\boldsymbol{x},\boldsymbol{y}\in\mathcal{X}$, we denote the preference relation between $\boldsymbol{x}$ and $\boldsymbol{y}$ by $\boldsymbol{x}\succeq\boldsymbol{y}$. If $\boldsymbol{x}\succeq\boldsymbol{y}$ but $\boldsymbol{y}\nsucceq\boldsymbol{x}$, we denote this by $\boldsymbol{x}\succ\boldsymbol{y}$. If $\boldsymbol{x}\succeq\boldsymbol{y}$ and $\boldsymbol{y}\succeq\boldsymbol{x}$, we denote this by $\boldsymbol{x}\sim\boldsymbol{y}$. Since a preference relation is complete, we are able to establish a preference ordering between any two elements of $\mathcal{X}$, and transitivity prevents cycles in preferences. Notice that there exist infinitely possible choices of $\boldsymbol{\theta}_{\mathcal{A}}$ and $\boldsymbol{\theta}_{\mathcal{R}}$, we consider the adversarial games in which both $\mathcal{A}$ and $\mathcal{R}$ may have infinitely many pure strategies.

\begin{restatable}{definition}{operatorgame}
    A continuous operator game is a tuple $\big\langle\boldsymbol{P}, \boldsymbol{S}, \{\mathcal{T}_p\}\big\rangle$, where
    \begin{itemize}
        \item $\boldsymbol{P}$ is a finite (non-empty) set of players in the game,
        \vspace{0.15cm}
        \item $\boldsymbol{S}$ is the set of strategy profiles as
        \begin{equation}
            \boldsymbol{S} = \underset{p\in\boldsymbol{P}}{\bigcup}\boldsymbol{S}_p,
        \end{equation}
        where $\boldsymbol{S}_p$ denotes non-empty compact metric space of each player $p\in\boldsymbol{P}$,
        \vspace{0.15cm}
        \item 
        \begin{equation}
            \mathcal{T}_p : \boldsymbol{S}_p\rightarrow\mathbb{R}^n,\ \forall\ p\in\boldsymbol{P}
        \end{equation}
        is the continuous operator acting as the payoff function in traditional game.
    \end{itemize}
\end{restatable}
A compact metric space is a general mathematical structure for representing infinite sets that can be well approximated by large finite sets. One important fact is that, in a compact metric space, any infinite sequence has a convergent sub-sequence. We next state the analogue of Nash’s theorem for continuous operator games. For continuous utility function, this result is known as Glicksberg’s theorem \cite{10.2307/2032478}.
\begin{restatable}{theorem}{nashequilibrium}
    \label{thm:nash_equilibrium}
     Every continuous operator game has a mixed strategy Nash equilibrium. 
\end{restatable}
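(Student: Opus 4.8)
The plan is to reduce the statement to the Kakutani--Fan--Glicksberg fixed-point theorem applied to the joint best-response correspondence over the space of mixed strategies, exactly as in the proof of Glicksberg's theorem \cite{10.2307/2032478}. Two features distinguish the present setting from the classical one: each operator $\mathcal{T}_p$ takes values in $\mathbb{R}^n$ rather than in $\mathbb{R}$, and outcomes are compared through the ordinal preference relation $\succeq$ rather than through a given cardinal payoff. The first task is therefore to manufacture, for every player $p\in\boldsymbol{P}$, a scalar continuous payoff out of the vector operator $\mathcal{T}_p$ together with $\succeq$, after which the argument becomes the standard one for continuous games.

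First I would pass from the ordinal data to a scalar utility. The image $\mathcal{T}_p(\boldsymbol{S}_p)\subset\mathbb{R}^n$ is compact, hence separable metric, because $\boldsymbol{S}_p$ is compact and $\mathcal{T}_p$ is continuous. Since $\succeq$ is complete and transitive, assuming it is also continuous on this image it satisfies the hypotheses of Debreu's representation theorem, giving a continuous $u_p:\mathbb{R}^n\rightarrow\mathbb{R}$ with $u_p(\boldsymbol{x})\geq u_p(\boldsymbol{y})$ precisely when $\boldsymbol{x}\succeq\boldsymbol{y}$. Composing, $v_p\coloneqq u_p\circ\mathcal{T}_p:\boldsymbol{S}_p\rightarrow\mathbb{R}$ is continuous, so each player now has a genuine continuous real-valued payoff on a compact metric strategy space.

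Next I would build the mixed extension. For each $p$ let $\Delta(\boldsymbol{S}_p)$ be the Borel probability measures on $\boldsymbol{S}_p$ under the weak-$*$ topology; since $\boldsymbol{S}_p$ is compact metric, $\Delta(\boldsymbol{S}_p)$ is nonempty, convex, and (by Prokhorov with the Riesz representation theorem) compact and metrizable, and the product $\prod_{p}\Delta(\boldsymbol{S}_p)$ inherits these properties. Setting $U_p(\sigma)\coloneqq\int v_p\,d\sigma$, continuity of $v_p$ yields weak-$*$ continuity of $U_p$, while linearity of the integral yields affineness of $U_p$ in player $p$'s own measure. The best-response correspondence $\sigma_{-p}\mapsto\arg\max_{\sigma_p}U_p(\sigma_p,\sigma_{-p})$ is then nonempty by the Weierstrass extreme value theorem, convex-valued by affineness, and upper hemicontinuous with closed graph by Berge's maximum theorem. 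The product correspondence maps $\prod_{p}\Delta(\boldsymbol{S}_p)$ into itself and satisfies the hypotheses of the Kakutani--Fan--Glicksberg theorem \cite{kakutani1941generalization}, so it admits a fixed point, which is by construction a mixed-strategy Nash equilibrium.

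The hard part is the first step, not the fixed-point argument. The delicate point is that mixed strategies require taking \emph{expectations} of payoffs, and this is only meaningful for a cardinal (von Neumann--Morgenstern) representation: a purely ordinal $u_p$ is pinned down only up to monotone transformation, so $\int u_p\,d\sigma$ need not respect $\succeq$ across lotteries. I would therefore have to strengthen the hypotheses on $\succeq$, extending it consistently to lotteries over $\mathbb{R}^n$ and imposing the continuity and independence axioms, which forces a representation $u_p$ that is affine under mixing. Only then is $U_p$ well defined, continuous, and affine, and hence only then is the convexity of the best-response sets — the single property on which Kakutani's theorem crucially depends — actually guaranteed.
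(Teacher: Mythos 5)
Your proof is essentially correct, but it takes a genuinely different route from the paper's. You go through the mixed extension directly: scalarize the vector-valued operator via a Debreu/von~Neumann--Morgenstern representation, pass to the weak-$*$ compact convex sets $\Delta(\boldsymbol{S}_p)$, and apply the Kakutani--Fan--Glicksberg fixed-point theorem to the joint best-response correspondence. The paper instead proves the theorem by discretization: it defines $\epsilon$-equilibria and $\varphi$-approximations, shows that any continuous operator game admits an ``essentially finite'' $\varphi$-approximation (by covering each compact $\boldsymbol{S}_p$ with finitely many small balls and using uniform continuity), invokes Nash's theorem on each finite approximation, and then extracts a weakly convergent subsequence of the resulting $2\varphi^k$-equilibria whose limit is a $0$-equilibrium. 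Your approach buys a single clean application of a fixed-point theorem but must carry the full infinite-dimensional machinery (metrizability of $\Delta(\boldsymbol{S}_p)$, Berge's maximum theorem, convexity of best responses); the paper's approach needs only Nash's finite theorem plus an elementary limit argument, at the cost of the bookkeeping around approximate equilibria. One substantive difference worth flagging: the paper sidesteps the vector-valued codomain entirely by declaring that it ``focuses on the case $\mathcal{T}_p:\boldsymbol{S}_p\rightarrow\mathbb{R}$,'' whereas you confront it head-on and correctly observe that taking expectations of an ordinal representation is not meaningful without the independence axiom -- so your proof requires strengthening the hypotheses on $\succeq$ to obtain an affine (cardinal) utility, an assumption the paper never states but implicitly relies on the moment it speaks of mixed strategies. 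This is an honest and accurate diagnosis of an under-specification in the theorem statement rather than a flaw in your argument.
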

According to this result, we know that the desired latent preference score $\boldsymbol{\theta}_{\mathcal{A}}\in\mathcal{C}_{\mathcal{A}}$ could be a Nash equilibrium of the adversarial game. It means that the purposeful attack behavior of $\mathcal{A}$ has the opportunity of success. As discussed before, we pay attention to two famous rank aggregation methods, \textbf{HodgeRank} and \textbf{RankCentrality}, as the victim $\mathcal{R}$. Next, we will discuss how to construct the payoff function of $\mathcal{A}$ and form the corresponding adversarial game with the specific victim $\mathcal{R}$.

\section{Adversarial Game with HodgeRank}
Now we are ready to propose the payoff operator and the concrete attack strategies of $\mathcal{A}$ to execute the target attack against \textbf{HodgeRank}. Remember that \textbf{HodgeRank} aggregates the pairwise comparisons $\{\boldsymbol{G}, \boldsymbol{y}, \boldsymbol{w}^*\}$ by solving a weighted least-squared problem as \eqref{opt:Hodgerank}. When the victim $\mathcal{R}$ is designated as \textbf{HodgeRank}, the corresponding adversarial game would satisfy the following claims:
\begin{itemize}
    \item The set of players contains two members:
    \begin{equation}
        \boldsymbol{P} := \{\mathcal{A},\ \mathcal{R}\},
    \end{equation}
    where $\mathcal{R}$ represents the \textbf{HodgeRank}.
    \vspace{0.15cm}
    \item The strategy profile $\boldsymbol{S}_{\mathcal{R}}$ is 
    \begin{equation}
        \boldsymbol{S}_{\mathcal{R}} \coloneqq \mathbb{R}^n, 
    \end{equation}
    which is a non-empty compact metric space.
    \vspace{0.15cm}
    \item The continuous operator of $\mathcal{R}$ bases on the weighted least-squared problem \eqref{opt:Hodgerank}.
\end{itemize}
By introducing the comparison matrix of $\boldsymbol{C}$, we can change \textbf{HodgeRank} problem \eqref{opt:Hodgerank} into a matrix formulation:
\begin{subequations}
    \label{opt:Hodgerank_2}
    \begin{align}
        \ell_{\mathcal{R}}(\boldsymbol{\theta})\coloneqq\ \ &\frac{\ 1\ }{\ 2\ }\underset{(i,j)\in\boldsymbol{E}}{\sum}\ w^*_{ij}\big(y_{ij}-\theta_j+\theta_i\big)^2,\\
        =\ \ &\frac{\ 1\ }{\ 2\ }\ \big\|\boldsymbol{y}-\boldsymbol{C}\boldsymbol{\theta}\big\|^2_{2,\ \boldsymbol{w}^*},\\[5pt]
        =\ \ &\frac{\ 1\ }{\ 2\ }\ \big\|\boldsymbol{W}^*_{\frac{1}{2}}\boldsymbol{y}-\boldsymbol{W}^*_{\frac{1}{2}}\boldsymbol{C}\boldsymbol{\theta}\big\|^2_{2},
    \end{align}
\end{subequations}
where the comparison matrix $\boldsymbol{C}=\{c_{ml}\}_{m\in[N], l\in[n]}\in\{-1,0,1\}^{N\times n}$ indicates the edge flow of comparison graph $\boldsymbol{G}$, $\boldsymbol{W}^*_{\frac{1}{2}} = \textbf{\textit{diag}}(\sqrt{\boldsymbol{w}^*})$ and
\begin{equation}
     \sqrt{\boldsymbol{w}^*} = \left[\sqrt{\ w^*_{12{\phantom{\frac{1}{2}}}}},\sqrt{\ w^*_{13{\phantom{\frac{1}{2}}}}},\dots,\sqrt{\ w^*_{n,n-1}\ }\right]^\top.
\end{equation}
We index each directed edge $(i,j)$ by $m$ (noted as $(i,j)\sim m$) in the following way:
\begin{equation*}
    m = \left\{
    \begin{array}{ll}
        (i-1)*(n-1) + j-1, &\ \text{if}\ 1\leq i<j\leq n,\\[7.5pt]
        (i-1)*(n-1) + j,   &\ \text{if}\ 1\leq j<i\leq n.
    \end{array}
    \right.
\end{equation*}
Then the elements of $\boldsymbol{C}$ are organized as
\begin{equation}
    \label{eq:comparison_matrix}
    c_{ml}=\left\{
    \begin{array}{rl}
         1, &\text{if}\ (i,\ j)\sim m,\ l = i,\\[5pt]
        -1, &\text{if}\ (i,\ j)\sim m,\ l = j,\\[5pt]
         0, &\text{otherwise.} 
    \end{array}
    \right.
\end{equation}
With simple calculation, we will know that
\begin{equation}
    \boldsymbol{1} = [1,1,\dots,1]^\top\in\mathbb{R}^n
\end{equation}
is an element in the null spaces of $\boldsymbol{C}$ and $\boldsymbol{W}^*_{\frac{1}{2}}\boldsymbol{C}$
\begin{equation}
    \boldsymbol{C}\cdot\boldsymbol{1} = \boldsymbol{0},\ \boldsymbol{W}^*_{\frac{1}{2}}\boldsymbol{C}\cdot\boldsymbol{1} = \boldsymbol{0}.
\end{equation}
As a consequence, the classic Tikhonov regularization is introduced to generate a unique solution of \textbf{HodgeRank}
\begin{equation}
    \underset{\boldsymbol{\theta}\in\mathbb{R}^n}{\textbf{\textit{minimize}}} \ \ \ell_{\mathcal{R}}(\boldsymbol{\theta})+\big\|\boldsymbol{\Gamma}\boldsymbol{\theta}\big\|_2^2,
\end{equation}
where $\boldsymbol{\Gamma}$ is always chosen as $\lambda_0\boldsymbol{I}$. Then the corresponding optimization problem and the operator of $\mathcal{R}$ becomes
\begin{flalign}
    (\mathcal{T}_{\mathcal{R}})\hspace{5em}&\underset{\boldsymbol{\theta}\in\mathbb{R}^n}{\textbf{\textit{minimize}}}\ \ \ell_{\mathcal{R}}(\boldsymbol{\theta})+\lambda_0\big\|\boldsymbol{\theta}\big\|_2^2.& \label{opt:reg_hodge}
\end{flalign}
When the regularization parameter $\lambda_0>0$ satisfies
\begin{equation}
    \textbf{\textit{Null}}\big(\boldsymbol{C}^\top\boldsymbol{W}^*\boldsymbol{C}+\lambda_0\boldsymbol{I}\big) = \{\boldsymbol{0}\},
\end{equation}
the objective function of regularized \textbf{HodgeRank} \eqref{opt:reg_hodge} will be strongly convex. Actually, the ranker $\mathcal{R}$ always sets $\lambda_0$ to be a small constant as $\boldsymbol{C}^\top\boldsymbol{W}^*\boldsymbol{C}$ is a semi-definite positive matrix where $\boldsymbol{W}^*=(\boldsymbol{W}^*_{\frac{1}{2}})^\top\boldsymbol{W}^*_{\frac{1}{2}}$. Given the original pairwise comparisons $\{\boldsymbol{G}, \boldsymbol{y}, \boldsymbol{w}^*\}$, we note $\boldsymbol{\bar{\theta}}$ is the corresponding ranking score which is a solution of \eqref{opt:reg_hodge} and holds  
\begin{equation}
    \label{eq:optimal_condition_R}
    \boldsymbol{0} \in \partial\ \ell_{\mathcal{R}}(\boldsymbol{\bar{\theta}}) + 2\lambda_0\boldsymbol{\bar{\theta}},
\end{equation}
where $\partial\ \ell_{\mathcal{R}}(\boldsymbol{\bar{\theta}})$ is the gradient of $\ell_{\mathcal{R}}$ at $\boldsymbol{\bar{\theta}}$. By the strongly convexity, $\boldsymbol{\bar{\theta}}$ is the unique solution.

Now we try to construct the continuous operator of $\mathcal{A}$ and the corresponding strategy profile. It starts with complete information and perfect feedback, and then expands the analysis to the adversary with incomplete information or imperfect feedback.

\vspace{5pt}
\noindent\textbf{Complete and Perfect Game.} To archive the goal of manipulation, the adversary $\mathcal{A}$ replaces the pairwise comparisons $\boldsymbol{w}^*$ with $\boldsymbol{\hat{w}}$ and hopes that the regularized \textbf{HodgeRank} \eqref{opt:reg_hodge} will aggregate $\boldsymbol{\hat{w}}$ into a unique ranking score $\boldsymbol{\theta}_{\mathcal{A}}\in\mathcal{C}_{\mathcal{A}}$. With complete information and perfect feedback, there do not exist unknown pairwise comparisons $\boldsymbol{w}^*_u$ in \eqref{eq:decomposition} and the desired relative ranking score $\boldsymbol{\theta}_{\mathcal{A}}$ is a permutation of $\boldsymbol{\bar{\theta}}$ as
\begin{equation}
    \boldsymbol{\theta}_{\mathcal{A}} = p(\boldsymbol{\bar{\theta}}),
\end{equation}
where $p:\mathbb{R}^n_+\rightarrow\mathcal{C}_{\mathcal{A}}$ just redistributes the elements of $\boldsymbol{\bar{\theta}}$ without modifying their values. 
 
The optimal condition \eqref{eq:optimal_condition_R} is an enlightenment of $\mathcal{A}$. When the perturbed pairwise comparisons $\boldsymbol{\hat{w}}$ and desired relative ranking score $\boldsymbol{\theta}_{\mathcal{A}}$ correspond to the input and output of $\mathcal{R}$, it implies that 
\begin{equation}
    \boldsymbol{\theta}_{\mathcal{A}} \in \underset{\boldsymbol{\theta}\in\mathbb{R}^n}{\textbf{\textit{argmin}}}\ \ \frac{\ 1\ }{\ 2\ }\ \big\|\boldsymbol{W}_{\frac{1}{2}}\boldsymbol{y}-\boldsymbol{W}_{\frac{1}{2}}\boldsymbol{C}\boldsymbol{\theta}\big\|^2_{2}+\lambda_0\big\|\boldsymbol{\theta}\big\|^2_2,
\end{equation}
where $\boldsymbol{W}_{\frac{1}{2}} = \textbf{\textit{diag}}(\sqrt{\boldsymbol{\hat{w}}})$, $\boldsymbol{\hat{w}}=[\hat{w}_{12},\hat{w}_{13},\dots,\hat{w}_{n,n-1}]^\top$. Consequently, $\boldsymbol{\hat{w}}$ and $\boldsymbol{\theta}_{\mathcal{A}}$ also satisfy the following condition like \eqref{eq:optimal_condition_R}: 
\begin{equation}
    \label{eq:first_order_condition_1}
    \big(\boldsymbol{C}^\top\boldsymbol{W}\boldsymbol{C}+2\lambda_0\boldsymbol{I}\big)\boldsymbol{\theta}_{\mathcal{A}}=\boldsymbol{C}^\top\boldsymbol{W}\boldsymbol{y},
\end{equation}
where $\boldsymbol{W} = \boldsymbol{W}^\top_{\frac{1}{2}}\boldsymbol{W}_{\frac{1}{2}} = \textbf{\textit{diag}}(\boldsymbol{\hat{w}})$. As a necessary condition of $\boldsymbol{\theta}_{\mathcal{A}}$, \eqref{eq:first_order_condition_1} illustrates the relationship between the pairwise comparisons $\boldsymbol{\hat{w}}$ and the latent preference score considered by \textbf{HodgeRank}. On one hand, the ranker $\mathcal{R}$ deduces the solution $\boldsymbol{\theta}_{\mathcal{A}}$ from \eqref{eq:first_order_condition_1} with given data $\boldsymbol{\hat{w}}$. On the other hand, the \eqref{eq:first_order_condition_1} could instruct $\mathcal{A}$ to modify the pairwise comparisons. Once $\mathcal{A}$ has determined the target $\boldsymbol{\theta}_{\mathcal{A}}$, the corresponding attack strategy will be obtained by \eqref{eq:first_order_condition_1}. In addition, in order to cover up his/her own behavior, $\mathcal{A}$ wants to make as few changes as possible to the original data $\boldsymbol{w}^*$:
\begin{equation}
    \label{eq:mini_modf}
    \boldsymbol{\hat{w}}\in\underset{\boldsymbol{w}\in\mathbb{R}^N_+}{\textbf{\textit{arg\ min}}}\ \frac{1}{2}\big\|\boldsymbol{w}-\boldsymbol{w}^*\big\|^2_2.
\end{equation}
With some abuse of symbols, we treat $\boldsymbol{w}^*$ and $\boldsymbol{\hat{w}}$ as real number vectors in $\mathbb{R}^N_+$. Although $\boldsymbol{w}^*$ represents the numbers of pairwise comparisons for each type, we can turn the integer into the real number by dividing the total amount as
\begin{equation}
    \boldsymbol{w}^*\leftarrow\frac{\boldsymbol{w}^*}{\displaystyle \sum_{(i,j)\in\boldsymbol{E}}w^*_{ij}\ }.
\end{equation}
This operator will lead to a scale-different solution in \eqref{opt:reg_hodge}. However, these two ranking results (integer version and real number version) will be consistent as the ranking operator is scale-invariant:
\begin{equation}
     \boldsymbol{\pi}_{\boldsymbol{\theta}} = \boldsymbol{\pi}_{k\cdot\boldsymbol{\theta}}.
 \end{equation}
Furthermore, the real number $\boldsymbol{\hat{w}}$ can be converted to integer with the complete information. Avoiding the challenge integer programming and keeping the same ranking result, we adopt the real number version to construct the attack strategy. 

Combining \eqref{eq:first_order_condition_1} and \eqref{eq:mini_modf}, we construct the continuous operator of $\mathcal{A}$ as
\begin{flalign}
    (\mathcal{T}_{\mathcal{A}})&\hspace{-0.5em}
    \begin{split}
    \underset{\boldsymbol{w}\geq 0}{\textbf{\textit{minimize}}}&\ \ \frac{\ 1\ }{\ 2\ }\big\|\boldsymbol{w}-\boldsymbol{w}^*\big\|^2_2\\[5pt]
    \textbf{\textit{subject to}}&\ \ \boldsymbol{C}^\top\textbf{\textit{diag}}\big(\big(\boldsymbol{C}+2\lambda_0\boldsymbol{I}\big)\boldsymbol{\theta}_{\mathcal{A}}-\boldsymbol{y}\big)\boldsymbol{w}=\boldsymbol{0},  
    \end{split}
    & \label{opt:com_info}
\end{flalign}
where $\textbf{\textit{diag}}(\boldsymbol{x})$ is a diagonal matrix with $\boldsymbol{x} \in \mathbb{R}^N$ being the diagonal components and $\lambda_0>0$ is a small constant. We can adopt the Alternating Direction Method of Multipliers (\textbf{ADMM}) \cite{ADMM,10.1561/2200000016,doi:10.1137/110853996,doi:10.1007/s10589-016-9864-7} method to solve $\boldsymbol{\hat{w}}$ from \eqref{opt:com_info}. The detailed process is provided in the incomplete and imperfect cases. Here we give the following theorem and show that $\boldsymbol{\theta}_{\mathcal{A}}$ is a fixed point of the compositing operator $\mathcal{T}_{\mathcal{R}}\circ\mathcal{T}_{\mathcal{A}}$ which is also the Nash equilibrium of the adversarial operator game between \textbf{HodgeRank} $\mathcal{T}_{\mathcal{R}}$ \eqref{opt:reg_hodge} and $\mathcal{T}_{\mathcal{A}}$ \eqref{opt:com_info}.
\begin{restatable}{theorem}{fixpointhodge}
    \label{thm:hodge_nash_equilibrium}
    Given the desired latent preference score $\boldsymbol{\theta}_{\mathcal{A}}$, the corresponding attack strategy $\boldsymbol{\hat{w}}$ are the accumulation point of \eqref{opt:com_info}. If the solution set to the Karush–Kuhn–Tucker (\textbf{KKT}) conditions for \eqref{opt:com_info} is not empty and the penalty parameter $\rho$ of \textbf{ADMM} (Algorithm \ref{alg:attack_hodge}) satisfies $\rho\in(0,(1+\sqrt{5})/2)$, $\boldsymbol{\theta}_{\mathcal{A}}$ is a global optimal solution of \eqref{opt:reg_hodge}.
\end{restatable}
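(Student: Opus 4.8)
The plan is to reduce the global-optimality claim to a single stationarity check and then supply that check through the feasibility of the ADMM limit. First I would recall that, as noted just before \eqref{eq:optimal_condition_R}, the regularized objective $\ell_{\mathcal{R}}(\boldsymbol{\theta})+\lambda_0\|\boldsymbol{\theta}\|_2^2$ has Hessian $\boldsymbol{C}^\top\boldsymbol{W}\boldsymbol{C}+2\lambda_0\boldsymbol{I}\succ\boldsymbol{0}$ for $\lambda_0>0$, hence is strongly convex. For a strongly convex differentiable objective the first-order condition is not merely necessary but also sufficient for global optimality. Therefore it suffices to prove that the accumulation point $\boldsymbol{\hat{w}}$ of \eqref{opt:com_info} makes $\boldsymbol{\theta}_{\mathcal{A}}$ satisfy the normal equation \eqref{eq:first_order_condition_1} with $\boldsymbol{W}=\textbf{\textit{diag}}(\boldsymbol{\hat{w}})$; uniqueness of the minimizer then identifies $\boldsymbol{\theta}_{\mathcal{A}}$ as the global solution of \eqref{opt:reg_hodge}, i.e. as the fixed point of $\mathcal{T}_{\mathcal{R}}\circ\mathcal{T}_{\mathcal{A}}$.

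The bridge between feasibility and stationarity is the commutation identity $\textbf{\textit{diag}}(\boldsymbol{w})\boldsymbol{v}=\textbf{\textit{diag}}(\boldsymbol{v})\boldsymbol{w}$, valid for any conformable vectors. Applying it with $\boldsymbol{v}=\boldsymbol{C}\boldsymbol{\theta}_{\mathcal{A}}-\boldsymbol{y}$ rewrites the $\boldsymbol{w}$-nonlinear expression $\boldsymbol{C}^\top\boldsymbol{W}(\boldsymbol{C}\boldsymbol{\theta}_{\mathcal{A}}-\boldsymbol{y})$ as the $\boldsymbol{w}$-linear expression $\boldsymbol{C}^\top\textbf{\textit{diag}}(\boldsymbol{C}\boldsymbol{\theta}_{\mathcal{A}}-\boldsymbol{y})\boldsymbol{w}$, so that for fixed $\boldsymbol{\theta}_{\mathcal{A}}$ the normal equation \eqref{eq:first_order_condition_1} becomes an affine equality in the entries of $\boldsymbol{w}$. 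This is exactly the equality constraint encoded in \eqref{opt:com_info}. Consequently, any $\boldsymbol{w}$ feasible for \eqref{opt:com_info} automatically satisfies \eqref{eq:first_order_condition_1}, and the proof is thereby reduced to showing that the ADMM limit is feasible.

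It remains to verify that the accumulation point produced by Algorithm \ref{alg:attack_hodge} is a feasible, indeed optimal, point of \eqref{opt:com_info}. Here I would observe that \eqref{opt:com_info} is a convex quadratic program: the objective $\frac{1}{2}\|\boldsymbol{w}-\boldsymbol{w}^*\|_2^2$ is strongly convex and the feasible set is the intersection of the affine subspace cut out by the equality constraint with the nonnegative orthant $\{\boldsymbol{w}\geq\boldsymbol{0}\}$. Splitting the problem into a smooth quadratic-plus-equality block and an indicator-of-the-orthant block places it in the standard two-block separable form on which the ADMM scheme operates. Under the hypothesis that the KKT system is solvable, so that a saddle point of the augmented Lagrangian exists, the classical convergence theory of ADMM with relaxation/penalty parameter in the golden-ratio interval $\rho\in(0,(1+\sqrt{5})/2)$ guarantees that the primal iterates converge to an optimal $\boldsymbol{\hat{w}}$ and the dual iterates to an optimal multiplier. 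In particular the limit satisfies the equality constraint, and combining this with the previous two paragraphs closes the argument.

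The main obstacle I anticipate is precisely this convergence step: one must cast \eqref{opt:com_info} as a genuine two-block separable convex program, check that each block subproblem is well posed, and confirm that the admissible step-size range for guaranteed convergence is exactly $(0,(1+\sqrt{5})/2)$ — this golden-ratio bound is the subtle quantitative input of the theorem and is where the nonempty-KKT hypothesis is consumed. By contrast, once the limit's feasibility is secured, the passage from feasibility to stationarity via the commutation identity, and from stationarity to global optimality via strong convexity, are both routine.
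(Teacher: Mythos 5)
Your proposal follows essentially the same route as the paper's proof: recast \eqref{opt:com_info} as a two-block splitting (smooth quadratic with the affine constraint versus the indicator of the nonnegative orthant), invoke the (semi-proximal) ADMM convergence theory under the nonempty-KKT hypothesis and the golden-ratio step-size range to obtain an accumulation point satisfying the KKT system, and then pass from feasibility of the constraint to the normal equation \eqref{eq:first_order_condition_1} and from there to global optimality of \eqref{opt:reg_hodge} by strong convexity. Your explicit use of the commutation identity $\textbf{\textit{diag}}(\boldsymbol{w})\boldsymbol{v}=\textbf{\textit{diag}}(\boldsymbol{v})\boldsymbol{w}$ merely makes transparent a step the paper leaves implicit in the definition of the constraint matrix, so the argument is the same in substance.
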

We provide the detailed proof process in the supplementary material. This theorem indicates that the adversary will be able to successfully control the ranking results of \textbf{HodgeRank} once he/she has complete information. It reveals an imminent threat to the integrity of the ranking results. What’s more serious is that even with partial information, the adversary $\mathcal{A}$ still has the chance to manipulate \textbf{HodgeRank}. Next, we will discuss the incomplete and imperfect game between \textbf{HodgeRank} and the adversary $\mathcal{A}$. 

\vspace{5pt}
\noindent\textbf{Incomplete and Imperfect Games.}
With the Definition \ref{def:com_game}, we know that there must exist some pairwise comparisons $\boldsymbol{w}^*_u$ that cannot be modified by attacker $\mathcal{A}$ when he/she plays the incomplete game with the victim $\mathcal{R}$. The obstruction $\boldsymbol{w}^*_u$ will lead $\mathcal{R}$ to the other ranking result which is unable to achieve the goal of $\mathcal{A}$. If $\mathcal{A}$ constructs the $\boldsymbol{\hat{w}}$ by \eqref{opt:com_info} which only replaces the known part of $\boldsymbol{w}^*_k$, the modified data given to \textbf{HodgeRank} will be
\begin{equation*}
    \boldsymbol{w}' = \boldsymbol{\hat{w}}+\boldsymbol{w}^*_u.
\end{equation*}
By Theorem \ref{thm:hodge_nash_equilibrium}, if the adversary $\mathcal{A}$ hopes $\boldsymbol{\theta}_{\mathcal{A}}$ to be the aggregated result, it will hods that 
\begin{equation}
    \label{eq:failure_case_1}
    \begin{aligned}
        & & &\ \ \boldsymbol{C}^\top(\boldsymbol{W}+\boldsymbol{W}^*_u)\boldsymbol{C}\boldsymbol{\theta}_{\mathcal{A}}+2\lambda_0\boldsymbol{\theta}_{\mathcal{A}}\\[5pt]
        & &=&\ \ \boldsymbol{C}^\top(\boldsymbol{W}+\boldsymbol{W}^*_u)\boldsymbol{y}
    \end{aligned}
\end{equation}
Notice that $\boldsymbol{\hat{w}}$ is the solution of \eqref{opt:com_info}, we have
\begin{equation}
    \label{eq:failure_case_2}
    \big(\boldsymbol{C}^\top\boldsymbol{W}\boldsymbol{C}+2\lambda_0\boldsymbol{I}\big)\boldsymbol{\theta}_{\mathcal{A}}=\boldsymbol{C}^\top\boldsymbol{W}\boldsymbol{y}
\end{equation}
Substituting \eqref{eq:failure_case_2} into \eqref{eq:failure_case_1}, we have
\begin{equation}
    \label{eq:failure_case_3}
    \boldsymbol{C}^\top\boldsymbol{W}^*_u\boldsymbol{C}\boldsymbol{\theta}_{\mathcal{A}}=\boldsymbol{C}^\top\boldsymbol{W}^*_u\boldsymbol{y}.
\end{equation}
\eqref{eq:failure_case_3} is an unrealistic requirement for the adversary $\mathcal{A}$. Without any prior knowledge of $\boldsymbol{w}^*_u$, $\mathcal{A}$ could not choose a $\boldsymbol{\theta}_{\mathcal{A}}\in\mathcal{C}_{\mathcal{A}}$ to satisfy \eqref{eq:failure_case_3}. To eliminate the uncertainty of incomplete information, the attacker $\mathcal{A}$ will introduce some constraints on $\boldsymbol{w}_u$ which involve the feedback $\boldsymbol{\theta}_{\mathcal{R}}$ from the victim $\mathcal{R}$. Different from the perfect case, $\mathcal{A}$ only masters the ranking result $\boldsymbol{\pi}_{\bar{\theta}}$. Let $\boldsymbol{\theta}_{\mathcal{R}}$ be the feedback of $\mathcal{R}$ utilized by $\mathcal{A}$:
\begin{equation}
    \boldsymbol{\theta}_{\mathcal{R}} = \left\{
    \begin{array}{rl}
        \boldsymbol{\bar{\theta}}, & \text{perfect case},\\[5pt]
        s(\boldsymbol{\pi}_{\boldsymbol{\bar{\theta}}}), & \text{imperfect case},
    \end{array}
    \right.
\end{equation}
where $s:\mathbb{N}^n\rightarrow\mathbb{R}^n_+$ assigns scores to each item in a ranking list according to its position. Furthermore, $\mathcal{A}$ constructs $\boldsymbol{\theta}_{\mathcal{A}}\in\mathcal{C}_{\mathcal{A}}$ as the pursued latent preference score to construct $\boldsymbol{\hat{w}}_k$ and mislead $\mathcal{R}$: $\boldsymbol{\theta}_{\mathcal{A}} = p(\boldsymbol{\theta}_{\mathcal{R}})$
where $p:\mathbb{R}^n_+\rightarrow\mathbb{R}^n_+$ is a permutation of $\boldsymbol{\theta}_{\mathcal{R}}$ to assure $\boldsymbol{\theta}_{\mathcal{A}}\in\mathcal{C}_{\mathcal{A}}$. We will conduct a profound analysis and design the corresponding algorithm based on the above analysis of $\boldsymbol{w}^*_u$ and $\boldsymbol{\theta}_{\mathcal{R}}$. Eliminating the interference of $\boldsymbol{w}^*_u$ is the most urgent issue for $\mathcal{A}$ in the incomplete game. The adversary $\mathcal{A}$ should forecast the effect of $\boldsymbol{w}^*_u$ and construct the corresponding attack strategy $\boldsymbol{\hat{w}}$. Here we split the attack strategy $\boldsymbol{\hat{w}}$ into two variables like \eqref{eq:decomposition} and each variable plays a different role in the following attack: $\boldsymbol{\hat{w}} = \boldsymbol{w}_k+\boldsymbol{w}_u.$
Generally speaking, $\boldsymbol{w}_k$ substitutes the accessible part $\boldsymbol{w}^*_k$ as the actual attack strategy and $\boldsymbol{w}_u$ imitates the behavior of $\boldsymbol{w}^*_u$ in $\mathcal{T}_{\mathcal{R}}$ \eqref{opt:reg_hodge}. Although the adversary does not know $\boldsymbol{w}^*_u$ and $\boldsymbol{\theta}_{\mathcal{R}}$ would be imperfect, $\boldsymbol{\theta}_{\mathcal{R}}$ could be the solution of an optimization problem like \eqref{opt:reg_hodge} and the first-order optimal condition holds: 
\begin{equation}
    \begin{aligned}
        & & &\ \ \boldsymbol{C}^\top(\boldsymbol{W}^*_k+\boldsymbol{W}^*_u)\boldsymbol{C}\boldsymbol{\theta}_{\mathcal{R}}+2\lambda_0\boldsymbol{\theta}_{\mathcal{R}}\\[5pt]
        & &=&\ \ \boldsymbol{C}^\top(\boldsymbol{W}^*_k+\boldsymbol{W}^*_u)\boldsymbol{y}.
    \end{aligned}    
\end{equation}

\begin{algorithm}
    \SetAlgoLined
    \SetKwInOut{Input}{Input}
    \SetKwInOut{Output}{Output}
    \Input{$\boldsymbol{C},\ \boldsymbol{y},\ \boldsymbol{w}^*_k,\ \boldsymbol{\theta}_{\mathcal{R}},\ \boldsymbol{\theta}_{\mathcal{A}},\ \rho_1>0,\ \rho_2>0, \gamma>0$.}
    
    $\mathcal{A}$ executes \textbf{ADMM} algorithm \eqref{Alg:Prox_ADMM} to solve \eqref{opt:com_info} or \eqref{opt:incom_info}:
    \[
      \boldsymbol{w}_k = \textbf{ADMM}(\boldsymbol{C},\ \boldsymbol{y},\ \boldsymbol{w}^*_k,\ \boldsymbol{\theta}_{\mathcal{R}},\ \boldsymbol{\theta}_{\mathcal{A}},\ \rho_1,\ \rho_2,\ \gamma),
    \]

    $\mathcal{A}$ transfers $\boldsymbol{w}_k$ into integer:
    \[
      \boldsymbol{\hat{w}}_k=\textbf{Integer}(\boldsymbol{w}_k),
    \]

    $\mathcal{R}$ executes \textbf{HodgeRank} algorithm \eqref{opt:reg_hodge} with the poisoned data $\boldsymbol{\hat{w}}$, 
    \[
      \boldsymbol{\hat{\theta}} = \textbf{HodgeRank}(\boldsymbol{C},\ \boldsymbol{y},\ \boldsymbol{\hat{w}})
    \]
    where
    \[\boldsymbol{\hat{w}} = \boldsymbol{\hat{w}}_k + \boldsymbol{w}^*_u.\]
  
    \Output{The misguided ranking list $\boldsymbol{\hat{\theta}}$.}
    \caption{Target Attack on HodgeRank}
    \label{alg:attack_hodge}
\end{algorithm}

If $\mathcal{A}$ hopes that $\boldsymbol{w}_u$ has the similar behavior like $\boldsymbol{w}^*_u$ in \eqref{opt:reg_hodge}, $\boldsymbol{w}_u$ will satisfy
\begin{equation}
    \begin{aligned}
        & & &\ \ \boldsymbol{C}^\top(\boldsymbol{W}^*_k+\boldsymbol{W}_u)\boldsymbol{C}\boldsymbol{\theta}_{\mathcal{R}}+2\lambda_0\boldsymbol{\theta}_{\mathcal{R}}\\[5pt]
        & &=&\ \ \boldsymbol{C}^\top(\boldsymbol{W}^*_k+\boldsymbol{W}_u)\boldsymbol{y}.
    \end{aligned}    
\end{equation}
With the help of $\boldsymbol{w}_u$, the adversary will come back to a situation like the complete game. By Theorem \ref{thm:hodge_nash_equilibrium}, $\boldsymbol{w}_k$ and $\boldsymbol{w}_k$ will play an ensemble to mislead $\mathcal{R}$ as
\begin{equation}
    \begin{aligned}
        & & &\ \ \boldsymbol{C}^\top(\boldsymbol{W}_k+\boldsymbol{W}_u)\boldsymbol{C}\boldsymbol{\theta}_{\mathcal{A}}+2\lambda_0\boldsymbol{\theta}_{\mathcal{A}}\\[5pt]
        & &=&\ \ \boldsymbol{C}^\top(\boldsymbol{W}_k+\boldsymbol{W}_u)\boldsymbol{y}.
    \end{aligned}    
\end{equation}
Adopting the minimal modification like \eqref{eq:mini_modf} as the objective function, the continuous operator of $\mathcal{A}$ in the incomplete game will be
\begin{flalign}
    (\mathcal{T}_{\mathcal{A}})&
    \begin{split}
    \underset{\boldsymbol{w}_k\geq 0,\boldsymbol{w}_u\geq 0}{\textbf{\textit{minimize}}}&\ \ \frac{\ 1\ }{\ 2\ }\big\|\boldsymbol{w}_k-\boldsymbol{w}^*_k\big\|^2_2\\[5pt]
    \textbf{\textit{subject to}}&\ \ \boldsymbol{B}_{\mathcal{R}}(\boldsymbol{w}^*_k+\boldsymbol{w}_u)=\boldsymbol{0},\\[5pt]
    &\ \ \boldsymbol{B}_{\mathcal{A}}(\boldsymbol{w}_k+\boldsymbol{w}_u)=\boldsymbol{0},
    \end{split}
    &\hspace{5em} \label{opt:incom_info}
\end{flalign}
where 
\begin{equation}
    \begin{aligned}
        &\boldsymbol{B}_{\mathcal{R}} &=&\ \ \boldsymbol{C}^\top\textbf{\textit{diag}}\big(\big(\boldsymbol{C}+2\lambda_0\boldsymbol{I}\big)\boldsymbol{\theta}_{\mathcal{R}}-\boldsymbol{y}\big)\\[5pt]
        &\boldsymbol{B}_{\mathcal{A}} &=&\ \ \boldsymbol{C}^\top\textbf{\textit{diag}}\big(\big(\boldsymbol{C}+2\lambda_0\boldsymbol{I}\big)\boldsymbol{\theta}_{\mathcal{A}}-\boldsymbol{y}\big).
    \end{aligned}
\end{equation}
Here we adopt the \textbf{ADMM} to solve the attack operation $\boldsymbol{w}_k$. Let
the augmented Lagrangian function $L(\boldsymbol{w}_k,\boldsymbol{w}_u,\boldsymbol{\mu}_1,\boldsymbol{\mu}_2)$ of optimization problem \eqref{opt:incom_info} be
\begin{equation*}
  \label{Lagrangian:3}
  \begin{aligned}
    & & &\ \ \ L(\boldsymbol{w}_k,\ \boldsymbol{w}_u,\ \boldsymbol{\mu}_1,\ \boldsymbol{\mu}_2)\\[5pt]
    & &=&\ \ \frac{\ 1\ }{\ 2\ }\Big\|\boldsymbol{w}_k-\boldsymbol{w}^*_k\Big\|^2_2+\Big\langle\boldsymbol{\mu}_1,\ \boldsymbol{B}_{\mathcal{R}}\cdot\Big(\boldsymbol{w}_u+\boldsymbol{w}_k^*\Big)\Big\rangle\\[5pt]
    & &+&\ \ \frac{\rho_1}{\ 2\ }\Big\|\boldsymbol{B}_{\mathcal{R}}\cdot\Big(\boldsymbol{w}_u+\boldsymbol{w}_k^*\Big)\Big\|^2_2+\Big\langle\boldsymbol{\mu}_2,\ \boldsymbol{B}_{\mathcal{A}}\cdot\Big(\boldsymbol{w}_u+\boldsymbol{w}_k\Big)\Big\rangle\\[5pt]
    & &+&\ \ \frac{\rho_2}{\ 2\ }\Big\|\boldsymbol{B}_{\mathcal{A}}\cdot\Big(\boldsymbol{w}_u+\boldsymbol{w}_k\Big)\Big\|^2_2,
  \end{aligned}
\end{equation*}
where $\boldsymbol{\mu}_1,\boldsymbol{\mu}_2\in\mathbb{R}^n$ are the Lagrangian multiplier variables, and $\rho_1, \rho_2>0$ are penalty parameters.

The \textbf{ADMM} consists of the following iterations:
\begin{subequations}
    \label{Alg:Prox_ADMM}
    \begin{align}
        \boldsymbol{w}_k^{t+1}&:=\underset{\boldsymbol{w}_k\geq0}{\textbf{\textit{arg min}}}\ \ L(\boldsymbol{w}_k,\ \boldsymbol{w}^t_u,\ \boldsymbol{\mu}^t_1,\ \boldsymbol{\mu}^t_2),\\[7.5pt]
        \boldsymbol{w}_u^{t+1}&:=\underset{\boldsymbol{w}_u\geq0}{\textbf{\textit{arg min}}}\ \left\{\ \boldsymbol{g}^\top_{t+1}\boldsymbol{w}_u + \frac{\ 1\ }{2\gamma}\big\|\boldsymbol{w}_u-\boldsymbol{w}^{t}_u\big\|^2_2\ \right\},\\[7.5pt]
        \boldsymbol{\mu}^{t+1}_1 &:=\boldsymbol{\mu}^{t}_1 + \rho_1\cdot\boldsymbol{B}_{\mathcal{R}}\cdot\Big(\boldsymbol{w}^{t+1}_u+\boldsymbol{w}_k^*\Big),\\[10pt]
        \boldsymbol{\mu}^{t+1}_2 &:=\boldsymbol{\mu}^{t}_2 + \rho_2\cdot\boldsymbol{B}_{\mathcal{A}}\cdot\Big(\boldsymbol{w}^{t+1}_u+\boldsymbol{w}^{t+1}_k\Big).
    \end{align}
\end{subequations}
where $\gamma>0$ is a proximal parameter and $\boldsymbol{g}_{t+1}$ is defined
\begin{equation}
    \boldsymbol{g}_{t+1} := \nabla_{\boldsymbol{w}_u}L(\boldsymbol{w}^{t+1}_k,\ \boldsymbol{w}_u,\ \boldsymbol{\mu}^t_1,\ \boldsymbol{\mu}^t_2)\ \bigg\vert_{\boldsymbol{w}_u=\boldsymbol{w}^{t}_u}.
\end{equation}
Specifically, the update rule of $\boldsymbol{w}_k$ is
\begin{equation}
    \begin{aligned}
        & \boldsymbol{w}^{t+1}_k=\ \ \mathrm{Proj}_{+}\bigg[\bigg(\boldsymbol{I}+\rho_2\boldsymbol{B}^\top_{\mathcal{R}}\boldsymbol{B}_{\mathcal{R}}\bigg)^{-1}\cdot\\
        & {\color{white}\boldsymbol{w}^{t+1}_k=\mathrm{Proj}_{+}\ \ \ \ \ \ } \bigg(\boldsymbol{w}^*_k-\boldsymbol{B}^\top_{\mathcal{R}}\boldsymbol{\mu}^t_2-\rho_2\boldsymbol{B}^\top_{\mathcal{R}}\boldsymbol{B}_{\mathcal{R}}\boldsymbol{w}^t_u\bigg)\bigg],   
    \end{aligned}
\end{equation}
and the update rule of $\boldsymbol{w}_u$ is
\begin{equation}
  \begin{aligned}
    & \boldsymbol{w}^{t+1}_u = \mathrm{Proj}_{+}\bigg[\bigg(\frac{\ 1\ }{\gamma}\boldsymbol{I}+\rho_1\boldsymbol{B}^\top_{\mathcal{R}}\boldsymbol{B}_{\mathcal{R}}+\rho_2\boldsymbol{B}^\top_{\mathcal{A}}\boldsymbol{B}_{\mathcal{A}}\bigg)^{-1}\cdot\\
    & {\color{white}\boldsymbol{w}^{t+1}_u = \mathrm{Proj}_{+}\ \ \ \ }\bigg(\boldsymbol{B}^\top_{\mathcal{R}}\boldsymbol{\mu}^t_1+\rho_1\boldsymbol{B}^\top_{\mathcal{R}}\boldsymbol{B}_{\mathcal{R}}\boldsymbol{w}^*_k+\boldsymbol{B}^\top_{\mathcal{A}}\boldsymbol{\mu}^t_2\\
    & {\color{white}\boldsymbol{w}^{t+1}_u = \mathrm{Proj}_{+}\ \ \ \ \ \ \ \ \ \ \ \ \ }+\rho_2\boldsymbol{B}^\top_{\mathcal{A}}\boldsymbol{B}_{\mathcal{A}}\boldsymbol{w}^{t+1}_k-\frac{\ 1\ }{\gamma}\boldsymbol{w}^{t}_u\bigg)\bigg],
  \end{aligned}
\end{equation}
where $\boldsymbol{I}$ is an identity matrix. According to \cite{DengYin16-ADMM}, the adopted ADMM has a linear convergence rate for solving the concerned optimization problem \eqref{opt:incom_info}. The general framework for $\mathcal{A}$ to construct the attack strategies in complete game and incomplete game is illustrated as Algorithm \ref{alg:attack_hodge}. 

\vspace{0.25cm}
\noindent\textbf{Remark. }As we introduce a variable $\boldsymbol{w}_u$ to estimate the unknown pairwise comparisons $\boldsymbol{w}^*_u$, there exist two penalty parameters $\rho_1$ and $\rho_2$ in Algorithm 1. The complete game discussed Theorem 1 (only involved single penalty $\rho$) is a special case of the incomplete game as $\boldsymbol{w}_u\equiv 0$ in \eqref{opt:incom_info}. We provide the augmented Lagrangian function in the complete case for Algorithm \ref{alg:attack_hodge} in the supplementary material. It is noteworthy that we can obtain that $\boldsymbol{\theta}_{\mathcal{A}}$ is a fixed point of $\mathcal{T}_{\mathcal{A}}\circ\mathcal{T}_{\mathcal{R}}$ when $\boldsymbol{\theta}_{\mathcal{R}}=\boldsymbol{\bar{\theta}}$ and $\mathcal{A}$ only masters incomplete information $\boldsymbol{w}^*_k$ through Theorem 1. Establishing and characterizing the fixed points of the incomplete and imperfect dynamic system $\mathcal{T}_{\mathcal{A}}\circ\mathcal{T}_{\mathcal{R}}$ are challenging but still open problems.

\section{Adversarial Game with Spectral Method}
In this section, we construct the operator of the adversary to execute the target attack against \textbf{RankCentrality}\cite{DBLP:journals/ior/NegahbanOS17}. By solving the eigenvector problem \eqref{eq:detailed_balance}, \textbf{RankCentrality} obtains the latent preference score $\boldsymbol{\bar{\theta}}$, which is the leading eigenvector of the empirical transition matrix $\boldsymbol{P}^*$. When the victim $\mathcal{R}$ is designated as \textbf{RankCentrality}, the corresponding adversarial game would satisfy the following claims:
\begin{itemize}
    \item The set of players contains two members $\boldsymbol{P} := \{\mathcal{A},\ \mathcal{R}\}$. Here $\mathcal{R}$ represents the \textbf{RankCentrality}.
    \vspace{0.15cm}
    \item The strategy profile $\boldsymbol{S}_{\mathcal{R}}$ is $\mathbb{R}^n$, 
    %
    which is a non-empty compact metric space.
    \vspace{0.15cm}
    \item The continuous operator of $\mathcal{R}$ is the eigenvector problem as \eqref{eq:transition_matrix_condition} and \eqref{eq:detailed_balance}.
\end{itemize}

Different from attacking against \textbf{HodgeRank}, the operator of \textbf{RankCentrality} is not involved with the pairwise comparisons $\{\boldsymbol{G},\ \boldsymbol{y},\ \boldsymbol{w}^*\}$ directly. As a consequence, $\mathcal{A}$ could not produce the poisoned data $\boldsymbol{\hat{w}}$ with the help of optimal conditions for the leading eigenvector problem \eqref{eq:transition_matrix_condition} and \eqref{eq:detailed_balance}. Instead of manipulating the weights of pairwise comparisons, the attacker $\mathcal{A}$ would like to counterfeit the independent and identically distributed (\textit{i.i.d.}) pairwise samples from $\boldsymbol{V}\times\boldsymbol{V}$ by $\boldsymbol{\theta}_{\mathcal{A}}=\{\hat{\theta}_1,\ \hat{\theta}_2,\ \dots,\ \hat{\theta}_n\}\in\mathcal{C}_{\mathcal{A}}$. However, such a kind of the manipulation is prohibited from the unknown distribution which gives the probability that $i\succ j$ turns out true. Moreover, verification of general linear model assumptions like \eqref{eq:BTL} is challenging under the most practical scenarios. Sampling pairwise comparisons from general linear models with $\boldsymbol{\theta}_{\mathcal{A}}\in\mathcal{C}_{\mathcal{A}}$ would be a sub-optimal solution.

To design distribution-free methods for generating paired data and conduct the tractable target attack strategies on spectral ranking algorithm, the attacker $\mathcal{A}$ will construct a new Markov chain with the target invariant distribution $\boldsymbol{\theta}_{\mathcal{A}}\in\mathcal{C}_{\mathcal{A}}$. By \eqref{eq:detailed_balance}, we know that $\boldsymbol{\theta}_{\mathcal{A}}$ must be the largest left eigenvector of an unknown transition matrix $\boldsymbol{P}_{\mathcal{A}}$:
\begin{equation}
  \label{eq:optimal_condition_spectral}
  \boldsymbol{\theta}^\top_{\mathcal{A}}=\boldsymbol{\theta}^\top_{\mathcal{A}}\cdot\boldsymbol{P}_{\mathcal{A}}.
\end{equation}
Once the attacker $\mathcal{A}$ obtains $\boldsymbol{P}_{\mathcal{A}}$, he/she could elicit the poisoned data $\boldsymbol{\hat{w}}$ from the corresponding transition matrix $\boldsymbol{P}_{\mathcal{A}}$ like \eqref{eq:empirical_matrix}. Here we adopt the Markov Chain Monte Carlo (\textbf{MCMC}) method to generate the desired Markov chain $\boldsymbol{P}_{\mathcal{A}}$ with the constraint \eqref{eq:optimal_condition_spectral}, which can be understood as the other form of ``\textit{optimal condition}''. The basic idea underlying the \textbf{MCMC} is to construct a Markov chain with the desired distribution $\boldsymbol{\theta}_{\mathcal{A}}$ as its invariant distribution. As the procedure runs long enough, the samples generated from the Markov chain serve as a good approximation to the desired samples drawn from the unknown pairwise comparison distribution. Given the state space $\boldsymbol{V}$ and the initial distribution, the desired Markov chain is characterized by its transition matrix $\boldsymbol{P}_{\mathcal{A}}$ whose leading lefty eigenvector is $\boldsymbol{\theta}_{\mathcal{A}}$. Constructing a (row) stochastic matrix with a prescribed left eigenvector $\boldsymbol{\theta}_{\mathcal{A}}$ associated with the Perron root $1$ is a special type of the so-called ``\textbf{\textit{inverse eigenvalue problem}}'' \cite{doi:10.1137/S0036144596303984,BARRETT2020276}. To be more specific, since the main concern of $\mathcal{A}$ is to maintain $\boldsymbol{\theta}_{\mathcal{A}}$ as the leading eigenvector of the matrix $\boldsymbol{P}_{\mathcal{A}}$, $\mathcal{A}$ will solve an inverse eigenvalue problem as the continuous operator. Establishing the inverse eigenvalue problem and ensuring that $\boldsymbol{\theta}_{\mathcal{A}}$ becomes a fixed point of the continuous operator game need some constraints on $\boldsymbol{P}_{\mathcal{A}}$. The obvious structural constraint is that $\boldsymbol{P}_{\mathcal{A}}$ will be row stochastic. In addition, we introduce the worst-case performance measurement as the other constraint on $\boldsymbol{P}_{\mathcal{A}}$. Considering the possible incomplete information and imperfect feedback, the worst-case performance measurement tells how good the approximation of \textbf{MCMC} to the unknown pairwise comparison distribution is. In the literature, one of the commonly employed measurements for evaluating the performance of an \textbf{MCMC} algorithm is the so-called asymptotic variance \cite{10.1214/09-AOS735} which is the focus of this paper. 

Suppose that $\mathcal{V}=\{\boldsymbol{v}_0,\ \boldsymbol{v}_1,\ \dots\}$ is a discrete time Markov chain on $\boldsymbol{V}$ with some transition probability matrix $\boldsymbol{P}_{\mathcal{A}}$ and invariant distribution $\boldsymbol{\theta}_{\mathcal{A}}$. We then use the time average 
\begin{equation}
    \label{eq:time_average}
    \frac{\ 1\ }{\ m\ }\sum_{i=0}^{m-1}f(\boldsymbol{v}_i)
\end{equation}
as an estimation for the space average
\begin{equation}
    \label{eq:space_average}
    \mathbb{E}_{\boldsymbol{\theta}_{\mathcal{A}}}[f] = \underset{v\in\boldsymbol{V}}{\sum}f(\boldsymbol{v})\cdot\hat{\theta}_{\boldsymbol{v}},
\end{equation}
where $f$ is a real-valued function defined on $\mathcal{V}$. By the strong law of large numbers, we should have
\begin{equation}
    \frac{\ 1\ }{\ m\ }\sum_{i=0}^{m-1}f(\boldsymbol{v}_i)\ \xrightarrow[]{a.s.}\ \mathbb{E}_{\boldsymbol{\theta}_{\mathcal{A}}}[f].
\end{equation}
Moreover, the asymptotic mean squared error
\begin{equation}
    \label{eq:asymptotic_variance}
    \nu(f, \boldsymbol{P}_{\mathcal{A}}) \coloneqq \underset{m\rightarrow\infty}{\textbf{\textit{lim}}}\ m\cdot\mathbb{E}_{\boldsymbol{\mu}_0}\left[\frac{\ 1\ }{\ m\ }\sum_{i=0}^{m-1}f(\boldsymbol{v}_i)-\mathbb{E}_{\boldsymbol{\theta}_{\mathcal{A}}}[f]\right]
\end{equation}
is equivalent to the asymptotic variance when the initial distribution $\boldsymbol{\mu}_0$ on $\boldsymbol{V}$ has the transition matrix $\boldsymbol{P}_{\mathcal{A}}$. Note that the mean square errors of time averages and their variance coincide up to an order $1/m$ for large times of sampling $m$. The difference is equal to the square of the bias, which is of order $1/m^2$. The bias itself is of order $1/n$. It is known in the literature \cite{iosifescu2014finite} that for any irreducible transition matrix $\boldsymbol{P}_{\mathcal{A}}$, the limit in \eqref{eq:asymptotic_variance} exists no matter what initial distribution $\mu_0$ is. Obviously, a smaller asymptotic variance indicates a better approximation. The real challenge is that in the context of \textbf{MCMC} methods, we also look for a matrix $\boldsymbol{P}_{\mathcal{A}}$ of which the corresponding worst-case asymptotic variance is minimized. Here the worst-case asymptotic variance means that the game between the attacker and the ranker is the incomplete information case: it is possible to exist some unknown comparisons for the attacker but they are known to the ranker. When the adversary $\mathcal{A}$ only accesses the incomplete information and the imperfect feedback, blindly pursuing the efficiencies of \textbf{MCMC} algorithms would deviate from the purpose of $\mathcal{A}$. As a consequence, it is of great interest to analyze the asymptotic variance over all functions $f$ in the most pessimistic view. In the present paper, we take the maximum of asymptotic variance \eqref{eq:asymptotic_variance} over functions of interest $\mathcal{F}$ as the criterion for evaluating the performance of \textbf{MCMC} algorithms. Moreover, we adopt the worst-case analysis to derive a lower bound of the asymptotic variance over general Markov chains with invariant probability $\boldsymbol{\theta}_{\mathcal{A}}$. The worst scenario of the asymptotic variance for each fixed $\boldsymbol{P}_{\mathcal{A}}$ amounts to solving
\begin{equation}
    \label{opt:asympotic_var}
    \underset{\boldsymbol{P}}{\textbf{\textit{inf}}}\ \underset{f}{\textbf{\textit{sup}}}\ \ \nu({f}, \boldsymbol{P}).
\end{equation}
Assuming the generic condition that the Perron root of $\boldsymbol{P}_{\mathcal{A}}$ is unique and identifying the function $f$ in \eqref{eq:time_average} and \eqref{eq:space_average} on $\boldsymbol{V}$ as a vector $\boldsymbol{f}\in\mathbb{R}^n$, it is known \cite{iosifescu2014finite} that the asymptotic variance $\nu({f}, \boldsymbol{P}_{\mathcal{A}})$ \eqref{eq:asymptotic_variance} can be expressed as 
\begin{equation}
    \begin{aligned}
        & \nu(\boldsymbol{f},\ \boldsymbol{P}_{\mathcal{A}}) &=&\ \ 2\cdot\bigg\langle\Big(\big(\boldsymbol{I}-\boldsymbol{P}_{\mathcal{A}}-\boldsymbol{1}\boldsymbol{\theta}^\top_{\mathcal{A}}\big)^{-1}-\boldsymbol{1}\boldsymbol{\theta}^\top_{\mathcal{A}}\Big)\ \cdot\\
        & & &\ \ \ \ \ \ \ \ \ \Big(\boldsymbol{f}-\mathbb{E}_{\boldsymbol{\theta}_{\mathcal{A}}}[\boldsymbol{f}]\Big),\ \boldsymbol{f}-\mathbb{E}_{\boldsymbol{\theta}_{\mathcal{A}}}[\boldsymbol{f}]\bigg\rangle_{\boldsymbol{\theta}_{\mathcal{A}}}\\
        & & &\ \ \ \ -\ \bigg\langle \boldsymbol{f}-\mathbb{E}_{\boldsymbol{\theta}_{\mathcal{A}}}[\boldsymbol{f}],\ \boldsymbol{f}-\mathbb{E}_{\boldsymbol{\theta}_{\mathcal{A}}}[\boldsymbol{f}]\bigg\rangle_{\boldsymbol{\theta}_{\mathcal{A}}},
    \end{aligned}
\end{equation}
where $\mathbb{E}_{\boldsymbol{\theta}_{\mathcal{A}}}[\boldsymbol{f}]$ is defined as 
\begin{equation*}
    \mathbb{E}_{\boldsymbol{\theta}_{\mathcal{A}}}[\boldsymbol{f}] = \sum_{i\in[n]}f_i\cdot\hat{\theta}_i,\ \boldsymbol{f}=[f_1,\dots,f_n]^\top,
\end{equation*}
and $\langle \cdot , \cdot \rangle_{\boldsymbol{\theta}}$ stands for the weighted inner product with respect to $\boldsymbol{\theta}$
\begin{equation}
    \big\langle\boldsymbol{x},\ \boldsymbol{y}\big\rangle_{\boldsymbol{\theta}} = \boldsymbol{y}^\top\boldsymbol{\Theta}\ \boldsymbol{x}=\big\langle \boldsymbol{\Theta}^{\frac{1}{2}}\boldsymbol{x},\ \boldsymbol{\Theta}^{\frac{1}{2}}\boldsymbol{y}\big\rangle,
\end{equation}
$\boldsymbol{\Theta}=\textbf{diag}(\boldsymbol{\theta})$ is a diagonal matrix with $\boldsymbol{\theta} \in \mathbb{R}^n$ being the diagonal components. Throughout the rest of this paper, we restrict $\boldsymbol{f}$ in the space of mean zero vector $\boldsymbol{f}\in\mathcal{F}$ 
\begin{equation}
    \mathcal{F}\coloneqq\left\{\boldsymbol{f}\in\mathbb{R}^n\ \Big\vert\ \mathbb{E}_{\boldsymbol{\theta}_{\mathcal{A}}}[\boldsymbol{f}]=0\right\},
\end{equation}
and by re-centering $\boldsymbol{f}$ if necessary, we may assume without loss of generality henceforth that $\mathbb{E}_{\boldsymbol{\theta}_{\mathcal{A}}}[\boldsymbol{f}]=0$. Then we have
\begin{equation}
    \label{eq:asymptotic_var_ref}
    \begin{aligned}
        & & &\ \ \nu(\boldsymbol{f},\ \boldsymbol{P}_{\mathcal{A}})\\[5pt]
        & &=&\ \ 2\cdot\Big\langle\big(\boldsymbol{I}-\boldsymbol{P}_{\mathcal{A}}\big)^{-1}\boldsymbol{f},\ \boldsymbol{f}\Big\rangle_{\boldsymbol{\theta}_{\mathcal{A}}}-\Big\langle \boldsymbol{f},\ \boldsymbol{f}\Big\rangle_{\boldsymbol{\theta}_{\mathcal{A}}},
    \end{aligned}
\end{equation}
where $\boldsymbol{I}-\boldsymbol{P}_{\mathcal{A}}$ is the fundamental matrix of $\boldsymbol{P}_{\mathcal{A}}$. Since $\nu(\boldsymbol{f},\ \boldsymbol{P}_{\mathcal{A}})$ in \eqref{eq:asymptotic_var_ref} is homogeneous in $\boldsymbol{f}$, it suffices to assume further that
\begin{equation}
    \label{eq:norm_1}
    \|\boldsymbol{f}\|_{\boldsymbol{\theta}_{\mathcal{A}}} \coloneqq \big\langle \boldsymbol{f},\ \boldsymbol{f}\big\rangle_{\boldsymbol{\theta}_{\mathcal{A}}}=1.
\end{equation}
Thus, for a fixed transition probability matrix $\boldsymbol{P}_{\mathcal{A}}$, the asymptotic variance $\nu(f,\ \boldsymbol{P}_{\mathcal{A}})$ depends on the underlying vector $\boldsymbol{f}$. The worst-case analysis aims to solving
\begin{equation}
    \label{opt:asympotic_var_reformulate}
    \underset{\boldsymbol{P}_{\mathcal{A}}}{\textbf{\textit{inf}}}\ \ \underset{\boldsymbol{f}\in\mathcal{F},\ \|\boldsymbol{f}\|_{\Scale[0.5]{\boldsymbol{\theta}_{\mathcal{A}}}}=1}{\textbf{\textit{sup}}}\ \ \Big\langle\big(\boldsymbol{I}-\boldsymbol{P}_{\mathcal{A}}\big)^{-1}\boldsymbol{f},\ \boldsymbol{f}\Big\rangle_{\boldsymbol{\theta}_{\mathcal{A}}}.
\end{equation}

\subsection{Reversible Stochastic Transition Matrix}

Many popular \textbf{MCMC} methods assume reversibility. One good reason is that a reversible matrix with minimum asymptotic variance is achievable in closed form. It is known that minimizing the asymptotic variance of reversible matrix is equivalent to finding the reversible $\boldsymbol{P}_{\mathcal{A}}$ whose second largest eigenvalue is minimized \cite{wu2015constructing}. Here we give the definition of the reversible stochastic transition matrix. 

\begin{definition}[Reversibility of Stochastic Transition Matrix]
    A row stochastic matrix $\boldsymbol{P}$ is said to be reversible relative to non-negative $\boldsymbol{\theta}$ if and only if it satisfies the so-called detailed balance condition,
    \begin{equation}
        \boldsymbol{\Theta}\boldsymbol{P} = \boldsymbol{P}^\top\boldsymbol{\Theta},
    \end{equation}
    where $\boldsymbol{\Theta} = \textbf{diag}(\boldsymbol{\theta})$ is a diagonal matrix with $\boldsymbol{\theta} \in \mathbb{R}^n$ being the diagonal components.
\end{definition}

The following theorem dissects the spectral structure of the reversible stochastic transition matrix $\boldsymbol{P}_{\mathcal{A}}$ with the given leading eigenvector $\boldsymbol{\theta}_{\mathcal{A}}$. This is a necessary condition of the desired matrix to archive the purpose of $\mathcal{A}$. Its proof can be found in the supplementary material which is a special case of \cite[Theorem~1]{frigessi1992optimal}.

\begin{restatable}{theorem}{reversiblematrix}
    \label{thm:reversible_matrix}
    Given a reversible stochastic transition matrix $\boldsymbol{P}_{\mathcal{A}}$ and its leading eigenvector $\boldsymbol{\theta}_{\mathcal{A}}$, we have:
    \begin{enumerate}
        \item The second largest eigenvalue $\lambda_2$ of stochastic matrix $\boldsymbol{P}_{\mathcal{A}}=\{\hat{P}_{ij}\}$, reversible w.r.t $\boldsymbol{\theta}_{\mathcal{A}}=[\hat{\theta}_1,\ \hat{\theta}_2,\ \dots,\ \hat{\theta}_n]^{\top}$ satisfies
        \begin{equation*}
            \lambda_2 \geq -\frac{\hat{\theta}_1}{1-\hat{\theta}_1},\ \text{where}\ 0<\hat{\theta}_1\leq\hat{\theta}_2\leq\dots\leq\hat{\theta}_n.
        \end{equation*}
        \item If the second largest eigenvalue of matrix $\boldsymbol{P}_{\mathcal{A}}$ attains its lower bound as
        \begin{equation}
            \lambda_2 = -\frac{\hat{\theta}_1}{1-\hat{\theta}_1},
        \end{equation}
        $\boldsymbol{P}_{\mathcal{A}}$ has the form:
            \begin{equation}
                \begingroup 
                \renewcommand\arraystretch{1}
                \setlength\arraycolsep{3.5pt}
                \mbox{\normalsize\(\boldsymbol{P}_{\mathcal{A}} = 
                \begin{pmatrix}
                    0 & \frac{\displaystyle \hat{\theta}_2}{\displaystyle 1-\hat{\theta}_1} & \cdots & \frac{\displaystyle \hat{\theta}_n}{\displaystyle 1-\hat{\theta}_1}\\
                    \frac{\displaystyle \hat{\theta}_1}{\displaystyle 1-\hat{\theta}_1} &  &  & \\
                    \vdots &  &\boldsymbol{P}_{\mathcal{A}_2}  & \\
                    \frac{\displaystyle \hat{\theta}_1}{\displaystyle 1-\hat{\theta}_1} &  &  & 
                \end{pmatrix}
                \)}
                \endgroup
        \end{equation}
        where $\boldsymbol{P}_{\mathcal{A}_2}$ is in detailed balance with the vector
        \begin{equation}
            \boldsymbol{\theta}_{\mathcal{A}}/\hat{\theta}_1=[\hat{\theta}_2,\dots,\hat{\theta}_n],
        \end{equation}
        and has constant row sums.
        \end{enumerate}
\end{restatable}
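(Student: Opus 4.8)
The plan is to reduce everything to the symmetric setting, where the spectral tools are cleanest, and then extract both the bound and its equality case from the nonnegativity of a single diagonal entry. Since $\boldsymbol{P}_{\mathcal{A}}$ is reversible with respect to $\boldsymbol{\theta}_{\mathcal{A}}$, the detailed balance condition $\boldsymbol{\Theta}\boldsymbol{P}_{\mathcal{A}} = \boldsymbol{P}_{\mathcal{A}}^\top\boldsymbol{\Theta}$ (with $\boldsymbol{\Theta} = \textbf{diag}(\boldsymbol{\theta}_{\mathcal{A}})$) shows that $\boldsymbol{S} := \boldsymbol{\Theta}^{1/2}\boldsymbol{P}_{\mathcal{A}}\boldsymbol{\Theta}^{-1/2}$ is symmetric. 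First I would record the consequences: $\boldsymbol{S}$ and $\boldsymbol{P}_{\mathcal{A}}$ are similar, so all eigenvalues are real and (as $\boldsymbol{P}_{\mathcal{A}}$ is stochastic) lie in $[-1,1]$; the Perron root is $\lambda_1 = 1$ with right eigenvector $\psi_1 = \boldsymbol{1}$; and the remaining right eigenvectors $\psi_2,\dots,\psi_n$ may be taken orthonormal in the weighted inner product $\langle\cdot,\cdot\rangle_{\boldsymbol{\theta}_{\mathcal{A}}}$. This gives the spectral representation $\hat{P}_{ij} = \hat{\theta}_j\sum_{k}\lambda_k\psi_k(i)\psi_k(j)$ and the completeness relation $\sum_k\psi_k(i)^2 = 1/\hat{\theta}_i$, which together drive the whole argument.

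For the lower bound (part~1), I would evaluate the diagonal of the spectral representation at the vertex of smallest stationary mass, namely vertex $1$. Nonnegativity of the stochastic entry gives $0 \leq \hat{P}_{11} = \hat{\theta}_1\big(1 + \sum_{k\geq 2}\lambda_k\psi_k(1)^2\big)$, hence $\sum_{k\geq 2}\lambda_k\psi_k(1)^2 \geq -1$. Bounding $\lambda_k \leq \lambda_2$ for every $k\geq 2$ and using completeness as $\sum_{k\geq 2}\psi_k(1)^2 = 1/\hat{\theta}_1 - 1 = (1-\hat{\theta}_1)/\hat{\theta}_1$, I obtain $-1 \leq \lambda_2\,(1-\hat{\theta}_1)/\hat{\theta}_1$, which rearranges to $\lambda_2 \geq -\hat{\theta}_1/(1-\hat{\theta}_1)$. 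The identical computation at vertex $i$ yields $\lambda_2 \geq -\hat{\theta}_i/(1-\hat{\theta}_i)$, and since $t\mapsto -t/(1-t)$ is decreasing on $(0,1)$, the choice $\hat{\theta}_1 = \min_i \hat{\theta}_i$ produces the binding inequality stated in the theorem.

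For the equality case (part~2), I would trace which inequalities must become tight. Substituting $\lambda_2 = -\hat{\theta}_1/(1-\hat{\theta}_1)$ collapses the chain $0 \leq \hat{P}_{11}/\hat{\theta}_1 \leq 1 + \lambda_2(1-\hat{\theta}_1)/\hat{\theta}_1 = 0$, which simultaneously forces $\hat{P}_{11} = 0$ and forces $\sum_{k\geq 2}(\lambda_2 - \lambda_k)\psi_k(1)^2 = 0$; as each summand is nonnegative, $\psi_k(1) = 0$ whenever $\lambda_k \neq \lambda_2$. Expanding the indicator in the eigenbasis, $\boldsymbol{e}_1 = \hat{\theta}_1\boldsymbol{1} + \hat{\theta}_1\sum_{k\geq 2}\psi_k(1)\psi_k$, the tightness condition annihilates every term outside the $\lambda_2$-eigenspace, so $\boldsymbol{e}_1/\hat{\theta}_1 - \boldsymbol{1}$ is (up to scale) the explicit vector $\boldsymbol{u}$ with $u_1 = -(1-\hat{\theta}_1)/\hat{\theta}_1$ and $u_i = 1$ for $i\geq 2$; hence $\boldsymbol{u}$ is a genuine right eigenvector for $\lambda_2$. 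Feeding $\boldsymbol{u}$ into $\boldsymbol{P}_{\mathcal{A}}\boldsymbol{u} = \lambda_2\boldsymbol{u}$ and using $\hat{P}_{11}=0$, the vertex-$i$ equation reads off $\hat{P}_{i1} = \hat{\theta}_1/(1-\hat{\theta}_1)$ for all $i\geq 2$, and detailed balance then gives $\hat{P}_{1j} = \hat{\theta}_j/(1-\hat{\theta}_1)$, which is precisely the claimed first row and column. The residual block $\boldsymbol{P}_{\mathcal{A}_2}$ inherits detailed balance with respect to $[\hat{\theta}_2,\dots,\hat{\theta}_n]$ from the restricted identity $\hat{\theta}_i\hat{P}_{ij} = \hat{\theta}_j\hat{P}_{ji}$ ($i,j\geq 2$), and each of its row sums equals $1 - \hat{P}_{i1} = (1-2\hat{\theta}_1)/(1-\hat{\theta}_1)$, a constant.

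The main obstacle is the equality analysis, not the bound. Everything hinges on converting the tightness of the single diagonal inequality into the statement that all of the ``vertex-$1$ mass'' of the spectrum concentrates in the $\lambda_2$-eigenspace, and then recognizing that this concentration pins down the specific eigenvector $\boldsymbol{u}$. Once $\boldsymbol{u}$ is identified, reading off the first row and column is routine and the reversibility and constant-row-sum properties of $\boldsymbol{P}_{\mathcal{A}_2}$ follow by restriction; the remaining care is to confirm that the forced entries are nonnegative, i.e.\ that the construction is a legitimate stochastic matrix, which is guaranteed by $\hat{\theta}_1 \leq 1/n \leq 1/2$.
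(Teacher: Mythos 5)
Your proof is correct and follows essentially the same route as the paper's: both arguments reduce to the nonnegativity of $\hat{P}_{11}$ at the minimum-mass vertex combined with the variational characterization of $\lambda_2$ over vectors $\boldsymbol{\theta}_{\mathcal{A}}$-orthogonal to $\boldsymbol{1}$, using the same test vector (your $\boldsymbol{u}$ is a scalar multiple of the paper's $\boldsymbol{\delta}_1=\boldsymbol{e}_1-\hat{\theta}_1\boldsymbol{1}$), and both handle equality by forcing $\hat{P}_{11}=0$, promoting that test vector to a genuine $\lambda_2$-eigenvector, and reading off the first row and column via the eigenvector equation plus detailed balance. The only difference is cosmetic: you route the key identity $\langle\boldsymbol{P}_{\mathcal{A}}\boldsymbol{\delta}_1,\boldsymbol{\delta}_1\rangle_{\boldsymbol{\theta}_{\mathcal{A}}}=\hat{\theta}_1(\hat{P}_{11}-\hat{\theta}_1)$ through the explicit spectral decomposition of the symmetrized matrix, whereas the paper verifies it by direct computation in a standalone lemma.
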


Based on the Theorem \ref{thm:reversible_matrix}, we can recursively construct the desire reversible stochastic transition matrix as following. Begin with $\boldsymbol{\theta}^{(1)}\coloneqq\boldsymbol{\theta}_{\mathcal{A}}$, for $k=1,\dots,n-1$, we define
\begin{equation*}
    \begingroup 
    \renewcommand\arraystretch{1}
    \setlength\arraycolsep{2.75pt}
    \mbox
    {
        \normalsize
        \(
            \boldsymbol{P}^{(k)} \coloneqq 
            \begin{pmatrix}
                0 & \frac{\displaystyle\ \ \theta^{(k)}_{k+1}\ \ }{\displaystyle\ \ \beta_{k+1}\ \ } & \cdots & \frac{\displaystyle\ \ \theta^{(k)}_{n}\ \ }{\displaystyle\ \ \beta_{k+1}\ \ }\\[7.5pt]
                1-\alpha_{k+1} &  &  & \\[5pt]
                \vdots &  &\alpha_{k+1}\cdot\boldsymbol{P}^{(k+1)}  & \\[5pt]
                1-\alpha_{k+1} &  &  & 
            \end{pmatrix},
        \)
    }
    \endgroup
\end{equation*}
where
\begin{subequations}
    \begin{align}
        \beta_{k+1}\ &\coloneqq\ \ 1\ -\ \theta^{(k)}_{k}\\[5pt]
        \alpha_{k+1}\ &\coloneqq\ \ 1\ -\ \theta^{(k)}_{k}\ /\ \beta_{k+1}\\[5pt]
        \boldsymbol{\theta}^{(k+1)}\ &\coloneqq\ \ \boldsymbol{\theta}^{(k)}\ /\ \beta_{k+1}.
    \end{align}
\end{subequations}

The following corollary shows that the recurrent construction of $\boldsymbol{P}_{\mathcal{A}}$ could attain the minimal value of the second largest eigenvalue. 

\begin{restatable}{corollary}{reversibilefinal}
    \label{coly:reversibile_final}
    The following $\boldsymbol{P}_{\mathcal{A}}$ is a reversible stochastic matrix: 
    \begin{equation}
        \label{eq:reversibile_transition_matrix}
        \boldsymbol{P}_{\mathcal{A}}\coloneqq\boldsymbol{P}^{(1)}+\left(1-\sum^{n-1}_{j=1}P^{(1)}_{nj}\right)\boldsymbol{e}_n\boldsymbol{e}^\top_n,
    \end{equation}
    where $P^{(1)}_{nj}$ denotes the $(n,j)$ entry of $\boldsymbol{P}^{(1)}$ and $\boldsymbol{e}_n$ is the $n$-th standard unit vector of $\mathbb{R}^n$. Furthermore, the second largest eigenvalue $\lambda_2$ of $\boldsymbol{P}_{\mathcal{A}}$ is minimal among all reversible matrices with the stationary distribution $\boldsymbol{\theta}_{\mathcal{A}}$.
\end{restatable}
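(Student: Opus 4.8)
\emph{Plan.} I would prove the two assertions in turn: (a) $\boldsymbol{P}_{\mathcal{A}}$ is row stochastic and reversible w.r.t. $\boldsymbol{\theta}_{\mathcal{A}}$, and (b) its second largest eigenvalue equals the universal lower bound $-\hat{\theta}_1/(1-\hat{\theta}_1)$ of Theorem~\ref{thm:reversible_matrix}, whence minimality is immediate. The pivotal preliminary observation is that the single top-level self-loop correction $\big(1-\sum_{j=1}^{n-1}P^{(1)}_{nj}\big)\boldsymbol{e}_n\boldsymbol{e}_n^{\top}$ is \emph{equivalent} to running the same recursion from the stochastic base case $\boldsymbol{P}^{(n)}=[1]$ rather than the deficient $\boldsymbol{P}^{(n)}=[0]$. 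Indeed, the scalar block $\boldsymbol{P}^{(n)}$ enters the construction only through the last diagonal entry of $\boldsymbol{P}^{(n-1)}$ via the factor $\alpha_n$, and altering it propagates upward touching \emph{only} the successive last diagonal entries. Tracking the row deficiency $\delta_k=\alpha_{k+1}\delta_{k+1}$ with $\delta_n=1$ shows the mass missing from row $n$ at the top level is precisely $\delta_1=\prod_{j=2}^{n}\alpha_j$, exactly what the correction restores. I therefore work with the fully corrected blocks $\tilde{\boldsymbol{P}}^{(k)}$, noting $\boldsymbol{P}_{\mathcal{A}}=\tilde{\boldsymbol{P}}^{(1)}$ and that every $\tilde{\boldsymbol{P}}^{(k)}$ is genuinely row stochastic, the bottom-right block of $\tilde{\boldsymbol{P}}^{(k)}$ being $\alpha_{k+1}\tilde{\boldsymbol{P}}^{(k+1)}$ of constant row sums $\alpha_{k+1}$.

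For (a) I induct downward on $k$. Stochasticity: the first row sums to $\sum_{i>k}\theta^{(k)}_i/\beta_{k+1}=1$, while each later row equals $(1-\alpha_{k+1})+\alpha_{k+1}\cdot(\text{row sum of }\tilde{\boldsymbol{P}}^{(k+1)})=1$ by the inductive hypothesis. Reversibility w.r.t. $\boldsymbol{\theta}^{(k)}$: the detailed-balance identity between the leading state and any $i>k$ is $\theta^{(k)}_k\cdot(\theta^{(k)}_i/\beta_{k+1})=\theta^{(k)}_i\cdot(\theta^{(k)}_k/\beta_{k+1})$, which holds trivially; among the remaining states it reduces, after cancelling the common factors $\alpha_{k+1}$ and $\beta_{k+1}$, to detailed balance of $\tilde{\boldsymbol{P}}^{(k+1)}$ w.r.t. $\boldsymbol{\theta}^{(k+1)}$, supplied by induction. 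The final diagonal correction alters only a diagonal entry and hence preserves every off-diagonal balance equation.

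For (b) I first obtain the spectrum of $\tilde{\boldsymbol{P}}^{(k)}$ recursively. Reversibility makes $\tilde{\boldsymbol{P}}^{(k)}$ self-adjoint for $\langle\cdot,\cdot\rangle_{\boldsymbol{\theta}^{(k)}}$, so its spectrum is real. Three families of eigenvectors exhaust the space: the Perron vector $\boldsymbol{1}$ (eigenvalue $1$); the vector constant on the trailing block, $\big(-(1-\theta^{(k)}_k)/\theta^{(k)}_k,\ \boldsymbol{1}\big)$, which using the constant row sums $\alpha_{k+1}$ yields eigenvalue $-\theta^{(k)}_k/(1-\theta^{(k)}_k)$; and, for each non-Perron eigenvector $\boldsymbol{w}$ of $\tilde{\boldsymbol{P}}^{(k+1)}$ with eigenvalue $\mu$, the lift $(0,\boldsymbol{w})$ with eigenvalue $\alpha_{k+1}\mu$. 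The lift is genuinely an eigenvector because self-adjointness forces $\langle\boldsymbol{w},\boldsymbol{1}\rangle_{\boldsymbol{\theta}^{(k+1)}}=0$, which is exactly the condition that the leading coordinate of $\tilde{\boldsymbol{P}}^{(k)}(0,\boldsymbol{w})$ vanishes. A dimension count ($2+(n-k-1)=n-k+1$) confirms these are all the eigenvalues. Since every non-Perron eigenvalue is negative,
\[
\lambda_2\big(\tilde{\boldsymbol{P}}^{(k)}\big)=\max\Big\{-\tfrac{\theta^{(k)}_k}{1-\theta^{(k)}_k},\ \alpha_{k+1}\lambda_2\big(\tilde{\boldsymbol{P}}^{(k+1)}\big)\Big\}.
\]
Writing $a=\theta^{(k)}_k\le b=\theta^{(k)}_{k+1}$, the inequality $\tfrac{a}{1-a}\le \alpha_{k+1}\tfrac{\theta^{(k+1)}_{k+1}}{1-\theta^{(k+1)}_{k+1}}$ needed for the first term to dominate simplifies, after substituting $\alpha_{k+1}=(1-2a)/(1-a)$ and $\theta^{(k+1)}_{k+1}=b/(1-a)$ and clearing denominators, to $(1-a)(a-b)\le 0$, true because $a\le b$. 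By downward induction $\lambda_2(\boldsymbol{P}_{\mathcal{A}})=\lambda_2(\tilde{\boldsymbol{P}}^{(1)})=-\hat{\theta}_1/(1-\hat{\theta}_1)$; since Theorem~\ref{thm:reversible_matrix} bounds $\lambda_2$ of \emph{every} reversible stochastic matrix with stationary distribution $\boldsymbol{\theta}_{\mathcal{A}}$ below by the same quantity, $\boldsymbol{P}_{\mathcal{A}}$ is minimal.

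\emph{Main obstacle.} The delicate part is not the final arithmetic but the structural bookkeeping: recognizing the top-correction/base-correction equivalence (so that every sub-block is honestly stochastic and the recursion closes), and verifying that the block eigenvectors lift correctly, which rests on the weighted $\boldsymbol{\theta}^{(k+1)}$-orthogonality to $\boldsymbol{1}$ granted by reversibility. Once these are in place, the comparison of eigenvalues collapses to $(1-a)(a-b)\le 0$. Some care is also needed for degenerate cases where $\alpha_{k+1}=0$ (i.e. $\theta^{(k)}_k=\tfrac12$, possible only at the coarsest two-state stage), which I would dispatch under the standing irreducibility/genericity assumption on $\boldsymbol{\theta}_{\mathcal{A}}$.
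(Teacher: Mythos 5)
Your proposal is correct and follows essentially the same route as the paper's proof: a recursive block/spectral analysis in which reversibility (self-adjointness under $\langle\cdot,\cdot\rangle_{\boldsymbol{\theta}^{(k)}}$) lets the non-Perron eigenvectors of each sub-block lift, with first coordinate zero, to eigenvectors of the next level, yielding the full spectrum and the value $\lambda_2=-\hat{\theta}_1/(1-\hat{\theta}_1)$, after which minimality follows from Theorem~\ref{thm:reversible_matrix}. You additionally make explicit two points the paper leaves implicit --- that the diagonal correction is equivalent to running the recursion from the stochastic base case $\boldsymbol{P}^{(n)}=[1]$, and that the listed eigenvalue is genuinely the second largest via the comparison $(1-a)(a-b)\le 0$ --- both of which check out and tighten the argument.
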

We end the discussion of the reversible stochastic transition matrix case with the following theorem which establishes the lower bound of worst-case asymptotic variance of \eqref{eq:reversibile_transition_matrix}. With Corollary \ref{coly:reversibile_final}, we know that $\boldsymbol{\theta}_{\mathcal{A}}$ would be a fixed point of the complete game between the perpetrator and the spectral ranking algorithm.
\begin{restatable}{theorem}{reversiblebound}
    \label{thm:reversible_bound}
    Let $\boldsymbol{P}_{\mathcal{A}}$ be a stochastic matrix, reversible with respect to $\boldsymbol{\theta}_{\mathcal{A}}$ and $\nu(f,\boldsymbol{P}_{\mathcal{A}})$ be the asymptotic variance of
    \begin{equation}
        \frac{\ 1\ }{\ m\ }\sum_{k=0}^{m-1}f(\boldsymbol{v}_k)  
    \end{equation}
    for norm $1$ function $f:[n]\rightarrow\mathbb{R}$ like \eqref{eq:norm_1}, where $\mathcal{V}=\{\boldsymbol{v}_0,\ \boldsymbol{v}_2,\ \dots\}$ is a discrete time Markov chain on $\boldsymbol{V}$ with some transition probability matrix $\boldsymbol{P}_{\mathcal{A}}$ and invariant distribution $\boldsymbol{\theta}_{\mathcal{A}}$. Then
    \begin{equation}
        \underset{\boldsymbol{f}\in\mathcal{F},\ \|\boldsymbol{f}\|_{\Scale[0.5]{\boldsymbol{\theta}_{\mathcal{A}}}}=1}{\textbf{\textit{sup}}}\nu(f,\boldsymbol{P}_{\mathcal{A}})\geq 1 - 2\hat{\theta}_1,
    \end{equation}
    and any stochastic matrix which attains the lower bound must satisfy
    \begin{equation}
        \lambda_2 = -\frac{\hat{\theta}_1}{1-\hat{\theta}_1},
    \end{equation}
    and hence be of the form \eqref{eq:reversibile_transition_matrix}.
\end{restatable}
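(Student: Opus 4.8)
The plan is to reduce the worst-case asymptotic variance to a purely spectral quantity and then invoke the eigenvalue bound of Theorem~\ref{thm:reversible_matrix}. First I would start from the simplified expression \eqref{eq:asymptotic_var_ref}: under the normalization \eqref{eq:norm_1} one has $\langle\boldsymbol{f},\boldsymbol{f}\rangle_{\boldsymbol{\theta}_{\mathcal{A}}}=1$, so $\nu(\boldsymbol{f},\boldsymbol{P}_{\mathcal{A}})=2\langle(\boldsymbol{I}-\boldsymbol{P}_{\mathcal{A}})^{-1}\boldsymbol{f},\boldsymbol{f}\rangle_{\boldsymbol{\theta}_{\mathcal{A}}}-1$, where the inverse is understood on the mean-zero subspace $\mathcal{F}$ on which $\boldsymbol{I}-\boldsymbol{P}_{\mathcal{A}}$ is non-singular. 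The key structural observation is that the detailed balance condition $\boldsymbol{\Theta}\boldsymbol{P}_{\mathcal{A}}=\boldsymbol{P}_{\mathcal{A}}^\top\boldsymbol{\Theta}$ makes $\boldsymbol{P}_{\mathcal{A}}$ self-adjoint with respect to $\langle\cdot,\cdot\rangle_{\boldsymbol{\theta}_{\mathcal{A}}}$; equivalently, $\boldsymbol{\Theta}^{1/2}\boldsymbol{P}_{\mathcal{A}}\boldsymbol{\Theta}^{-1/2}$ is a real symmetric matrix. Hence $\boldsymbol{P}_{\mathcal{A}}$ has real eigenvalues and a basis of eigenvectors orthonormal in the weighted inner product.

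Next I would set up the spectral decomposition. Since $\boldsymbol{P}_{\mathcal{A}}$ is row stochastic, $\boldsymbol{P}_{\mathcal{A}}\boldsymbol{1}=\boldsymbol{1}$, so the Perron eigenvalue $\lambda_1=1$ has eigenvector $\boldsymbol{1}$, and under the generic simplicity assumption the remaining eigenvalues satisfy $1>\lambda_2\ge\cdots\ge\lambda_n\ge -1$ with $\langle\cdot,\cdot\rangle_{\boldsymbol{\theta}_{\mathcal{A}}}$-orthonormal eigenvectors $\boldsymbol{\phi}_2,\dots,\boldsymbol{\phi}_n$. A vector lies in $\mathcal{F}$ exactly when $\mathbb{E}_{\boldsymbol{\theta}_{\mathcal{A}}}[\boldsymbol{f}]=\langle\boldsymbol{f},\boldsymbol{1}\rangle_{\boldsymbol{\theta}_{\mathcal{A}}}=0$, i.e.\ it is $\boldsymbol{\theta}_{\mathcal{A}}$-orthogonal to $\boldsymbol{1}$; writing $\boldsymbol{f}=\sum_{k=2}^{n}c_k\boldsymbol{\phi}_k$ with $\sum_{k=2}^{n}c_k^2=1$, the operator $(\boldsymbol{I}-\boldsymbol{P}_{\mathcal{A}})^{-1}$ acts on $\mathcal{F}$ with eigenvalues $(1-\lambda_k)^{-1}$. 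Substituting gives the diagonalized form $\nu(\boldsymbol{f},\boldsymbol{P}_{\mathcal{A}})=\sum_{k=2}^{n}c_k^2\,\frac{1+\lambda_k}{1-\lambda_k}$.

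Then I would carry out the optimization over $\boldsymbol{f}$. Because $g(\lambda)=\frac{1+\lambda}{1-\lambda}$ is strictly increasing on $(-1,1)$, the value $\nu$ is a convex combination (weights $c_k^2$) of the numbers $g(\lambda_k)$, so it is maximized by concentrating all weight on the largest, i.e.\ $c_2=1$; thus $\sup_{\boldsymbol{f}}\nu(\boldsymbol{f},\boldsymbol{P}_{\mathcal{A}})=g(\lambda_2)=\frac{1+\lambda_2}{1-\lambda_2}$, the supremum being attained at $\boldsymbol{f}=\boldsymbol{\phi}_2$. Invoking Theorem~\ref{thm:reversible_matrix}(1), namely $\lambda_2\ge -\hat{\theta}_1/(1-\hat{\theta}_1)$, together with the monotonicity of $g$ gives $\sup_{\boldsymbol{f}}\nu\ge g\!\big(-\hat{\theta}_1/(1-\hat{\theta}_1)\big)$; a short simplification shows this equals $1-2\hat{\theta}_1$, which is the asserted lower bound.

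Finally, for the equality characterization I would use strict monotonicity of $g$: since $\sup_{\boldsymbol{f}}\nu=g(\lambda_2)$ and $g$ is injective, the bound $1-2\hat{\theta}_1$ is attained if and only if $\lambda_2=-\hat{\theta}_1/(1-\hat{\theta}_1)$. Theorem~\ref{thm:reversible_matrix}(2) then forces $\boldsymbol{P}_{\mathcal{A}}$ into the displayed block form, which by Corollary~\ref{coly:reversibile_final} is exactly \eqref{eq:reversibile_transition_matrix}. I expect the main obstacle to be the careful bookkeeping around the singular fundamental matrix: one must verify that the inverse in \eqref{eq:asymptotic_var_ref} is legitimately taken on $\mathcal{F}$, that $\boldsymbol{1}$ is the Perron eigenvector under reversibility, and that the generic simplicity of the Perron root guarantees $\lambda_2<1$ so that $g(\lambda_2)$ is finite and the supremum is genuinely attained; the remaining steps are linear algebra combined with the already-established eigenvalue bound.
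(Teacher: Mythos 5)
Your proposal is correct and follows essentially the same route as the paper's proof: both reduce the worst-case asymptotic variance to the spectral quantity $\frac{1+\lambda_2}{1-\lambda_2}$ via the $\langle\cdot,\cdot\rangle_{\boldsymbol{\theta}_{\mathcal{A}}}$-orthonormal eigenbasis (the paper reaches the diagonalized form through Peskun's identity $\nu(f,\boldsymbol{P}_{\mathcal{A}})=\langle(\boldsymbol{I}-\boldsymbol{P}_{\mathcal{A}})^{-1}(\boldsymbol{I}+\boldsymbol{P}_{\mathcal{A}})(f-\mathbb{E}[f]),f-\mathbb{E}[f]\rangle_{\boldsymbol{\theta}_{\mathcal{A}}}$, which is algebraically identical to your $2\langle\boldsymbol{Z}\boldsymbol{f},\boldsymbol{f}\rangle_{\boldsymbol{\theta}_{\mathcal{A}}}-1$), and both then apply the eigenvalue bound $\lambda_2\geq-\hat{\theta}_1/(1-\hat{\theta}_1)$ from Theorem~\ref{thm:reversible_matrix} together with the monotonicity of $\lambda\mapsto\frac{1+\lambda}{1-\lambda}$. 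Your treatment of the equality case via strict monotonicity and Theorem~\ref{thm:reversible_matrix}(2) matches the paper's conclusion, and you fill in the convex-combination maximization step more explicitly than the paper does.
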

\subsection{Irreversible Stochastic Transition Matrix}
However, the original \textbf{RankCentrality} approach does not require the reversibility of $\boldsymbol{P}^*$ in \eqref{eq:empirical_matrix}. Next, we explore the worst-case analysis of the asymptotic variance for irreversible transition matrix. We first establish the lower bound of \eqref{eq:asymptotic_var_ref} by the following theorem which is a special case of \cite[Theorem~2.1]{huang2018optimal} . The detailed proof is provided in the supplementary material. 
\begin{restatable}{theorem}{irrevisiblematrix}
    \label{thm:irrevisible_matrix}
    Assume that $\boldsymbol{P}_{\mathcal{A}}$ is an irreducible transition matrix with invariant distribution $\boldsymbol{\theta}_{\mathcal{A}}=[\hat{\theta}_1,\dots,\hat{\theta}_n]^\top$. It holds that
    \begin{equation}
        \frac{\hat{\theta}_n}{\hat{\theta}_1+\hat{\theta}_n}\leq v(\boldsymbol{P}_{\mathcal{A}}),
    \end{equation}
    where 
    \begin{equation*}
        \begin{aligned}
            &v(\boldsymbol{P}_{\mathcal{A}})&:=&\ \ \underset{\boldsymbol{f}\in\mathcal{F},\ \|\boldsymbol{f}\|_{\Scale[0.5]{\boldsymbol{\theta}_{\mathcal{A}}}}=1}{\textbf{\textit{sup}}} \nu(f,\boldsymbol{P}_{\mathcal{A}})\\
            & &=&\ \ \underset{\boldsymbol{f}\in\mathcal{F},\ \|\boldsymbol{f}\|_{\Scale[0.5]{\boldsymbol{\theta}_{\mathcal{A}}}}=1}{\textbf{\textit{sup}}} \Big\langle\big(\boldsymbol{I}-\boldsymbol{P}_{\mathcal{A}}\big)^{-1}\boldsymbol{f},\ \boldsymbol{f}\Big\rangle_{\boldsymbol{\theta}_{\mathcal{A}}}\\[7.5pt]
            & &=&\ \ 2\big\langle\boldsymbol{Z}\boldsymbol{f},\boldsymbol{f}\big\rangle_{\boldsymbol{\theta}_{\Scale[0.45]{\mathcal{A}}}}-1,
        \end{aligned}
    \end{equation*}
\end{restatable}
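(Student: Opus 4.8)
The plan is to prove the inequality $\frac{\hat{\theta}_n}{\hat{\theta}_1+\hat{\theta}_n}\le v(\boldsymbol{P}_{\mathcal{A}})$ directly from the variational characterization, regarding the closed form $v(\boldsymbol{P}_{\mathcal{A}})=2\langle\boldsymbol{Z}\boldsymbol{f},\boldsymbol{f}\rangle_{\boldsymbol{\theta}_{\mathcal{A}}}-1$ as a restatement of \eqref{eq:asymptotic_var_ref} under the normalization $\|\boldsymbol{f}\|_{\boldsymbol{\theta}_{\mathcal{A}}}=1$, with $\boldsymbol{Z}=(\boldsymbol{I}-\boldsymbol{P}_{\mathcal{A}})^{-1}$ acting on $\mathcal{F}$. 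Since $v(\boldsymbol{P}_{\mathcal{A}})$ is a supremum of $\nu$ over the unit sphere of $\mathcal{F}$, every admissible test vector yields a lower bound; hence the whole burden is to exhibit one test direction whose Rayleigh quotient is at least the claimed constant for \emph{every} irreducible $\boldsymbol{P}_{\mathcal{A}}$ with invariant law $\boldsymbol{\theta}_{\mathcal{A}}$.

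The difficulty is that $(\boldsymbol{I}-\boldsymbol{P}_{\mathcal{A}})^{-1}$ depends on the whole matrix, not only on $\boldsymbol{\theta}_{\mathcal{A}}$, so I cannot evaluate $\langle\boldsymbol{Z}\boldsymbol{f},\boldsymbol{f}\rangle_{\boldsymbol{\theta}_{\mathcal{A}}}$ on a fixed $\boldsymbol{f}$ in closed form. The device I would use to remove the inverse is the substitution $\boldsymbol{f}=(\boldsymbol{I}-\boldsymbol{P}_{\mathcal{A}})\boldsymbol{g}$: for any $\boldsymbol{g}$ this produces an $\boldsymbol{f}\in\mathcal{F}$ because $\boldsymbol{\theta}^{\top}_{\mathcal{A}}(\boldsymbol{I}-\boldsymbol{P}_{\mathcal{A}})=\boldsymbol{0}$, the number $\langle\boldsymbol{g},(\boldsymbol{I}-\boldsymbol{P}_{\mathcal{A}})\boldsymbol{g}\rangle_{\boldsymbol{\theta}_{\mathcal{A}}}$ is invariant under $\boldsymbol{g}\mapsto\boldsymbol{g}+c\boldsymbol{1}$, and by homogeneity
\[
 v(\boldsymbol{P}_{\mathcal{A}})\ \ge\ \frac{2\,\langle\boldsymbol{g},(\boldsymbol{I}-\boldsymbol{P}_{\mathcal{A}})\boldsymbol{g}\rangle_{\boldsymbol{\theta}_{\mathcal{A}}}}{\|(\boldsymbol{I}-\boldsymbol{P}_{\mathcal{A}})\boldsymbol{g}\|^2_{\boldsymbol{\theta}_{\mathcal{A}}}}-1 .
\]
This trades the intractable inverse for the forward operator, at the cost that the best $\boldsymbol{g}$ may depend on $\boldsymbol{P}_{\mathcal{A}}$. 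A single-coordinate choice such as $\boldsymbol{g}=\boldsymbol{e}_n$ only yields the trivial bound $v\ge 0$, so the test vector must involve two states.

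Accordingly I would restrict the test vector to the two extreme coordinates, taking $\boldsymbol{g}=\boldsymbol{e}_n-t\,\boldsymbol{e}_1$ with a free scalar $t$, so that the numerator collapses to the entries touching states $1$ and $n$, namely $\hat{\theta}_n(1-\hat{P}_{nn})+t\big(\hat{\theta}_n\hat{P}_{n1}+\hat{\theta}_1\hat{P}_{1n}\big)+t^2\hat{\theta}_1(1-\hat{P}_{11})$, while the denominator is expanded using $\sum_j\hat{P}_{ij}=1$ together with the stationarity identities $\sum_i\hat{\theta}_i\hat{P}_{i1}=\hat{\theta}_1$ and $\sum_i\hat{\theta}_i\hat{P}_{in}=\hat{\theta}_n$. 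Bounding the stray masses $\sum_{i\neq1,n}\hat{\theta}_i\hat{P}_{i1}^2$ and $\sum_{i\neq1,n}\hat{\theta}_i\hat{P}_{in}^2$ by their linear counterparts via $\hat{P}_{ij}\in[0,1]$ reduces the quotient to a rational function of $t$ and of a few scalar flow parameters. Optimizing over $t$ and minimizing over the feasible flows should pin the worst configuration to a directed exchange between $\{1\}$ and $\{n\}$, for which the optimized quotient equals exactly $1+\frac{\hat{\theta}_n}{\hat{\theta}_1+\hat{\theta}_n}$, giving the claim; the genuinely irreducible extremizer is approached by a directed cycle through the remaining states, which also shows the bound is tight.

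The main obstacle is precisely this constrained minimization over all row-stochastic $\boldsymbol{P}_{\mathcal{A}}$ with prescribed invariant distribution: I must verify that confining $\boldsymbol{g}$ to $\mathrm{span}\{\boldsymbol{e}_1,\boldsymbol{e}_n\}$ discards nothing essential, i.e.\ that the extremal chain concentrates its slow direction on the lightest and heaviest states. The non-self-adjointness of $(\boldsymbol{I}-\boldsymbol{P}_{\mathcal{A}})^{-1}$ turns out to be harmless once one passes to the quadratic form $\langle\boldsymbol{g},(\boldsymbol{I}-\boldsymbol{P}_{\mathcal{A}})\boldsymbol{g}\rangle_{\boldsymbol{\theta}_{\mathcal{A}}}$, whose value depends only on the $\boldsymbol{\theta}_{\mathcal{A}}$-symmetric part $\tfrac12(\boldsymbol{P}_{\mathcal{A}}+\widetilde{\boldsymbol{P}}_{\mathcal{A}})$, where $\widetilde{\boldsymbol{P}}_{\mathcal{A}}$ is the $\boldsymbol{\theta}_{\mathcal{A}}$-time-reversal. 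Controlling the cross term $\hat{\theta}_n\hat{P}_{n1}+\hat{\theta}_1\hat{P}_{1n}$ — governed by the reversibility defect, which is exactly what separates this irreversible bound from the reversible bound $1-2\hat{\theta}_1$ of Theorem \ref{thm:reversible_bound} — is the delicate point; the safe fallback is to invoke the general lower bound of \cite[Theorem~2.1]{huang2018optimal} and specialize it to the two-point functional $\boldsymbol{f}\in\mathrm{span}\{\boldsymbol{e}_1,\boldsymbol{e}_n\}\cap\mathcal{F}$.
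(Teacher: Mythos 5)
Your substitution $\boldsymbol{f}=(\boldsymbol{I}-\boldsymbol{P}_{\mathcal{A}})\boldsymbol{g}$ is legitimate ($\boldsymbol{\theta}_{\mathcal{A}}^{\top}(\boldsymbol{I}-\boldsymbol{P}_{\mathcal{A}})=\boldsymbol{0}$ puts such $\boldsymbol{f}$ in $\mathcal{F}$, and the resulting quadratic form is the Dirichlet form, which depends only on the additive reversibilization), but the argument does not close, for two concrete reasons. First, the bookkeeping: the quantity that admits the lower bound $\hat{\theta}_n/(\hat{\theta}_1+\hat{\theta}_n)$ is $\sup_{\boldsymbol{f}}\langle\boldsymbol{Z}\boldsymbol{f},\boldsymbol{f}\rangle_{\boldsymbol{\theta}_{\mathcal{A}}}$ (this is how $v$ enters \eqref{eq:lower_bound} and Theorem~\ref{thm:irreversible_bound}); the supremum of $\nu=2\langle\boldsymbol{Z}\boldsymbol{f},\boldsymbol{f}\rangle_{\boldsymbol{\theta}_{\mathcal{A}}}-1$ obeys only the weaker bound $(\hat{\theta}_n-\hat{\theta}_1)/(\hat{\theta}_n+\hat{\theta}_1)$, attained by \eqref{eq:irreversibiletransitionmatrix}. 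Your target, that the optimized quotient equals $1+\hat{\theta}_n/(\hat{\theta}_1+\hat{\theta}_n)$ at the extremal exchange, is therefore false: for $n=2$ with $\hat{P}_{12}=1$, $\hat{P}_{21}=\hat{\theta}_1/\hat{\theta}_2$, a direct computation gives $2\langle(\boldsymbol{I}-\boldsymbol{P}_{\mathcal{A}})\boldsymbol{g},\boldsymbol{g}\rangle_{\boldsymbol{\theta}_{\mathcal{A}}}/\|(\boldsymbol{I}-\boldsymbol{P}_{\mathcal{A}})\boldsymbol{g}\|^2_{\boldsymbol{\theta}_{\mathcal{A}}}=2\hat{\theta}_2/(\hat{\theta}_1+\hat{\theta}_2)$ for every $t$, strictly below your claimed value whenever $\hat{\theta}_1>0$. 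Second, even for the correct target, the step you yourself flag as the obstacle --- that confining $\boldsymbol{g}$ to $\mathrm{span}\{\boldsymbol{e}_1,\boldsymbol{e}_n\}$ loses nothing --- is never established, and your estimate $\hat{P}_{ij}^2\le\hat{P}_{ij}$ on the stray masses is too lossy: when $\hat{P}_{11}=\hat{P}_{nn}=\hat{P}_{1n}=\hat{P}_{n1}=0$ it bounds the denominator by $2(\hat{\theta}_n+t^2\hat{\theta}_1)$ against a numerator of $\hat{\theta}_n+t^2\hat{\theta}_1$, yielding only the constant $1/2$, which falls short of $\hat{\theta}_n/(\hat{\theta}_1+\hat{\theta}_n)$ whenever $\hat{\theta}_n>\hat{\theta}_1$. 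Invoking \cite{huang2018optimal} as a fallback is a citation, not a proof.

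The paper's proof avoids both issues by a different device: it takes the fixed two-point vectors $\boldsymbol{f}_{ij}=\hat{\theta}_i^{-1}\boldsymbol{e}_i-\hat{\theta}_j^{-1}\boldsymbol{e}_j$ (no pre-application of $\boldsymbol{I}-\boldsymbol{P}_{\mathcal{A}}$), evaluates $\langle\boldsymbol{Z}\boldsymbol{f}_{ij},\boldsymbol{f}_{ij}\rangle_{\boldsymbol{\theta}_{\mathcal{A}}}=\mathbb{E}_i[T_j]+\mathbb{E}_j[T_i]$ exactly via the hitting-time identities $\hat{\theta}_j\mathbb{E}_i[T_j]=Z_{jj}-Z_{ij}$, and then applies the occupation-measure identity $\hat{\theta}_j\big(\mathbb{E}_i[T_j]+\mathbb{E}_j[T_i]\big)=1/\mathbb{P}_j(T_i<T_j^+)\ge 1$, so that the normalized quotient is at least $\max_{i,j}\hat{\theta}_i/(\hat{\theta}_i+\hat{\theta}_j)=\hat{\theta}_n/(\hat{\theta}_1+\hat{\theta}_n)$. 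If you wish to rescue your route, you would in effect need the same commute-time control on $\|(\boldsymbol{I}-\boldsymbol{P}_{\mathcal{A}})\boldsymbol{g}\|_{\boldsymbol{\theta}_{\mathcal{A}}}$, at which point the paper's direct computation is both shorter and sharper.
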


Let $\boldsymbol{Z} = (\boldsymbol{I}-\boldsymbol{P}_{\mathcal{A}})^{-1}$. From the lower bound of worst-case problem \eqref{opt:asympotic_var_reformulate}, we know that
\begin{equation}
    \label{eq:lower_bound}
    v(\boldsymbol{P}_{\mathcal{A}})\ \geq\ \underset{\boldsymbol{f}\in\mathcal{F},\|\boldsymbol{f}\|_{\Scale[0.5]{\boldsymbol{\theta}_{\mathcal{A}}}}=1}{\textbf{\textit{max}}}\ \left\langle\boldsymbol{Z}\boldsymbol{f},\ \boldsymbol{f}\right\rangle_{\boldsymbol{\theta}_{\mathcal{A}}}\ \geq\ \frac{\hat{\theta}_n}{\hat{\theta}_1+\hat{\theta}_n}.
\end{equation}
Based on this result, our goal is to present an optimal transition matrix which archives this lower bound \eqref{eq:lower_bound}. For simplicity, we construct the transition matrix such that
\begin{equation}
    \left\langle\boldsymbol{Z}\boldsymbol{f},\ \boldsymbol{f}\right\rangle_{\boldsymbol{\theta}_{\mathcal{A}}}\ =\ \frac{\hat{\theta}_n}{\hat{\theta}_1+\hat{\theta}_n},\ \ \boldsymbol{f}\in\mathcal{F},\ \|\boldsymbol{f}\|_{{\boldsymbol{\theta}_{\mathcal{A}}}}=1.
\end{equation}
One possible construction of irreversible stochastic transition matrix $\boldsymbol{P}_{\mathcal{A}}=\{\hat{P}_{ij}\}$ is
\begin{equation}
    \label{eq:irreversibiletransitionmatrix}
    \hat{P}_{ij} = \left\{
    \begin{array}{rl}
        0,&1\leq j<i<n,\\[7.5pt]
        \frac{\displaystyle \hat{\theta}_i-\hat{\theta}_1}{\displaystyle \hat{\theta}_i-\hat{\theta}_n},&1\leq j=i<n,\\[7.5pt]
        \frac{\displaystyle\hat{\theta}_j\big(\hat{\theta}_1+\hat{\theta}_n\big)}{\displaystyle\hat{\theta}_n\big(\hat{\theta}_j+\hat{\theta}_n\big)}\ {\displaystyle\prod_{k=1}^{j-1}}\ \frac{\displaystyle\hat{\theta}_n-\hat{\theta}_k}{\displaystyle\hat{\theta}_k+\hat{\theta}_n},&1\leq j<n=i,\\[7.5pt]
        \frac{\displaystyle2\hat{\theta}_j\big(\hat{\theta}_1+\hat{\theta}_n\big)}{\displaystyle\big(\hat{\theta}_i+\hat{\theta}_n\big)\big(\hat{\theta}_j+\hat{\theta}_n\big)}{\displaystyle\prod_{k=i+1}^{j-1}}\frac{\displaystyle\hat{\theta}_n-\hat{\theta}_k}{\displaystyle\hat{\theta}_k+\hat{\theta}_n},&1\leq i<j<n,\\[7.5pt]
        1-{\displaystyle\sum_{k=1}^{n-1}}\hat{P}_{ik},&\text{otherwise.} 
    \end{array}
    \right.
\end{equation}
The following corollary states that $\boldsymbol{\theta}_{\mathcal{A}}$ is the leading eigenvector of irreducible $\boldsymbol{P}_{\mathcal{A}}$ constructed by \eqref{eq:irreversibiletransitionmatrix}. 
\begin{restatable}{corollary}{irreversibilefinal}
    \label{coly:irreversibile_final}
    For the irreversible stochastic transition matrix $\boldsymbol{P}_{\mathcal{A}}=\{\hat{P}_{ij}\}$ defined by \eqref{eq:irreversibiletransitionmatrix}, it holds that
    \begin{enumerate}
        \item $\boldsymbol{P}_{\mathcal{A}}$ is irreducible.
        \vspace{0.15cm}
        \item $\hat{P}_{ij}\geq 0$ for any $i,\ j\in[n]$.
        \vspace{0.15cm}
        \item $\boldsymbol{\theta}_{\mathcal{A}}$ is the largest left eigenvector of $\boldsymbol{P}_{\mathcal{A}}$.
    \end{enumerate}
\end{restatable}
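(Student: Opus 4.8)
The plan is to treat the three claims in the order (2), (1), (3), since nonnegativity underpins both the irreducibility argument and the Perron--Frobenius step. Throughout I use the standing ordering $0<\hat\theta_1\le\hat\theta_2\le\cdots\le\hat\theta_n$ and the fact that $\boldsymbol{P}_{\mathcal{A}}$ is row stochastic \emph{by construction}: the ``otherwise'' branch of \eqref{eq:irreversibiletransitionmatrix} defines the last-column entries (for $i<n$) together with the $(n,n)$ entry as $1-\sum_{k=1}^{n-1}\hat P_{ik}$, so every row automatically sums to one once nonnegativity is established.

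For claim (2) I would verify nonnegativity case by case against \eqref{eq:irreversibiletransitionmatrix}. The strictly-lower entries ($1\le j<i<n$) vanish, and each explicitly given off-diagonal entry (the last row $i=n$ and the strict upper triangle $i<j<n$) is a product of the manifestly positive prefactors $\tfrac{\hat\theta_j(\hat\theta_1+\hat\theta_n)}{\hat\theta_n(\hat\theta_j+\hat\theta_n)}$ and $\tfrac{2\hat\theta_j(\hat\theta_1+\hat\theta_n)}{(\hat\theta_i+\hat\theta_n)(\hat\theta_j+\hat\theta_n)}$ with the factors $\tfrac{\hat\theta_n-\hat\theta_k}{\hat\theta_k+\hat\theta_n}\in[0,1)$ guaranteed by the ordering; hence these entries are nonnegative. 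The two delicate checks are the diagonal entries for $i<n$ and the complementary last-column probabilities $\hat P_{in}=1-\sum_{k<n}\hat P_{ik}$, both of which must be shown to lie in $[0,1]$. This is the only place where the precise arithmetic of the ordering is used, and it reduces to bounding the partial row sums $\sum_{k<n}\hat P_{ik}\le 1$.

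Claim (1), irreducibility, then follows from the block structure exposed by \eqref{eq:irreversibiletransitionmatrix}: each state $i<n$ can only move forward (to some $j>i$) or into state $n$, while state $n$ connects back to \emph{every} state $j\in[n-1]$ since $\hat P_{nj}>0$ there. Concretely I would show that from any $i<n$ the state $n$ is reachable (either directly, if $\hat P_{in}>0$, or by finitely many forward steps through the upper triangle), and that from $n$ any $j$ is reached in one step; composing these two facts shows every pair of states communicates, so $\boldsymbol{P}_{\mathcal{A}}$ is irreducible. The positivity $\hat P_{nj}>0$ needed here is exactly the (indeed strict, under the ordering) positivity of the last-row entries established in claim (2).

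Claim (3) is the crux. I would first prove the left-eigenvector identity $\boldsymbol{\theta}^\top_{\mathcal{A}}\boldsymbol{P}_{\mathcal{A}}=\boldsymbol{\theta}^\top_{\mathcal{A}}$ columnwise. For a fixed column $j<n$ only the rows $i\le j$ and $i=n$ contribute, since the lower-triangle entries vanish, so the identity becomes
\begin{equation*}
\hat\theta_j\hat P_{jj}+\sum_{i<j}\hat\theta_i\hat P_{ij}+\hat\theta_n\hat P_{nj}=\hat\theta_j,
\end{equation*}
and the explicit forms in \eqref{eq:irreversibiletransitionmatrix} turn the middle sum into a nested product in the factors $\tfrac{\hat\theta_n-\hat\theta_k}{\hat\theta_k+\hat\theta_n}$. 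Collapsing this telescope is the main technical obstacle, and I expect it to be dispatched by induction on $j$ (or by recognizing the partial products as a single closed-form ratio). The remaining column $j=n$ then needs no separate calculation: summing $\hat\theta_i\hat P_{in}=\hat\theta_i\big(1-\sum_{k<n}\hat P_{ik}\big)$ over all $i$ and invoking the already-proved identities for the columns $k<n$ yields $\sum_i\hat\theta_i\hat P_{in}=\sum_i\hat\theta_i-\sum_{k<n}\hat\theta_k=\hat\theta_n$. Finally, because $\boldsymbol{P}_{\mathcal{A}}$ is nonnegative, irreducible, and row stochastic, Perron--Frobenius gives spectral radius $1$ with a simple, strictly positive left eigenvector; the positive vector $\boldsymbol{\theta}_{\mathcal{A}}$ satisfying $\boldsymbol{\theta}^\top_{\mathcal{A}}\boldsymbol{P}_{\mathcal{A}}=\boldsymbol{\theta}^\top_{\mathcal{A}}$ must therefore be the leading (Perron) left eigenvector, which is exactly claim (3).
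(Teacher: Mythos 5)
Your proposal follows essentially the same route as the paper's proof: nonnegativity via closed-form expressions for the partial row sums $1-\sum_{k\le j}\hat P_{ik}$, irreducibility from explicitly positive transitions, and the left-eigenvector identity obtained column by column by telescoping the products $\prod_k \tfrac{\hat\theta_n-\hat\theta_k}{\hat\theta_n+\hat\theta_k}$ inductively, with the $j=n$ column recovered from the others by complementarity; the concluding Perron--Frobenius step is a sensible (and slightly more explicit) finish than the paper gives. One caution: you assert strict positivity $\hat P_{nj}>0$ for every $j<n$, but under the ordering $\hat\theta_1\le\cdots\le\hat\theta_n$ a tie $\hat\theta_k=\hat\theta_n$ annihilates the product in $\hat P_{nj}$ for $j>k$, so ``from $n$ any $j$ is reached in one step'' can fail; the paper avoids this by using only $\hat P_{n1}=\hat\theta_1/\hat\theta_n>0$ together with $\hat P_{i,i+1}>0$ (a cycle through all states), and your own forward-step observation already supplies the same repair.
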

At last, we show that the irreversible stochastic transition matrix $\boldsymbol{P}_{\mathcal{A}}$ \eqref{eq:irreversibiletransitionmatrix} will attain its lower bound of the worst-case asymptotic variance. It indicates that $\boldsymbol{\theta}_{\mathcal{A}}$ will be the fixed point of the continuous operator games between $\mathcal{A}$ and $\mathcal{R}$ with irreversible stochastic transition matrix \eqref{eq:irreversibiletransitionmatrix} and the minimal worst-case asymptotic variance. 
\begin{restatable}{theorem}{irreversiblelowerbound}
    \label{thm:irreversible_bound}
    Given the stochastic transition matrix $\boldsymbol{P}_{\mathcal{A}}$ with its invariant distribution $\boldsymbol{\theta}_{\mathcal{A}}=\{\hat{\theta}_1,\ \hat{\theta}_2,\ \dots\ ,\hat{\theta}_n\}^\top$ by \eqref{eq:irreversibiletransitionmatrix}, and the asymptotic variance $\nu(\boldsymbol{f}, \boldsymbol{P})$ of the sample mean with respect to vector $\boldsymbol{f}\in\mathcal{F},\ \|\boldsymbol{f}\|_{\boldsymbol{\theta}_{\Scale[0.3]{\mathcal{A}}}}=1$ from the Markov chain with $\boldsymbol{P}$ is defined as \eqref{eq:asymptotic_var_ref}. Then we have $\boldsymbol{P}_{\mathcal{A}}$ is one of the optimal irreversible transition matrices which attains the following bound:
    \begin{equation}
        \underset{\boldsymbol{P}\phantom{\boldsymbol{P}^{a}}}{\textbf{\textit{inf}}}\ \underset{\boldsymbol{f}\in\mathcal{F},\|\boldsymbol{f}\|_{\boldsymbol{\theta}_{\Scale[0.3]{\mathcal{A}}}}=1}{\textbf{\textit{sup}}}\ \nu(\boldsymbol{f}, \boldsymbol{P}) = \frac{\hat{\theta}_n-\hat{\theta}_1}{\hat{\theta}_n+\hat{\theta}_1}.
    \end{equation}
\end{restatable}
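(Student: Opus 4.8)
The plan is to prove the equality by a two‑sided argument in which the lower bound is inherited from earlier results and the matching upper bound is supplied by the explicit construction \eqref{eq:irreversibiletransitionmatrix}. First I would record the standing facts: by Corollary~\ref{coly:irreversibile_final} the matrix $\boldsymbol{P}_{\mathcal{A}}$ of \eqref{eq:irreversibiletransitionmatrix} is an irreducible row‑stochastic matrix whose leading left eigenvector is exactly $\boldsymbol{\theta}_{\mathcal{A}}$, so the variance formula \eqref{eq:asymptotic_var_ref} is valid and $\boldsymbol{Z}=(\boldsymbol{I}-\boldsymbol{P}_{\mathcal{A}})^{-1}$ is well defined on the mean‑zero subspace $\mathcal{F}$. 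Since Theorem~\ref{thm:irrevisible_matrix} (equivalently \eqref{eq:lower_bound}) yields $\sup_{\boldsymbol{f}}\langle\boldsymbol{Z}\boldsymbol{f},\boldsymbol{f}\rangle_{\boldsymbol{\theta}_{\mathcal{A}}}\geq \hat{\theta}_n/(\hat{\theta}_1+\hat{\theta}_n)$ for \emph{every} irreducible $\boldsymbol{P}$, combining this with $\nu(\boldsymbol{f},\boldsymbol{P})=2\langle\boldsymbol{Z}\boldsymbol{f},\boldsymbol{f}\rangle_{\boldsymbol{\theta}_{\mathcal{A}}}-1$ under $\|\boldsymbol{f}\|_{\boldsymbol{\theta}_{\mathcal{A}}}=1$ gives $\inf_{\boldsymbol{P}}\sup_{\boldsymbol{f}}\nu(\boldsymbol{f},\boldsymbol{P})\geq (\hat{\theta}_n-\hat{\theta}_1)/(\hat{\theta}_n+\hat{\theta}_1)$. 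This settles the $\geq$ direction.

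The heart of the proof is to show that the constructed $\boldsymbol{P}_{\mathcal{A}}$ attains equality, i.e. that $\langle\boldsymbol{Z}\boldsymbol{f},\boldsymbol{f}\rangle_{\boldsymbol{\theta}_{\mathcal{A}}}=\hat{\theta}_n/(\hat{\theta}_1+\hat{\theta}_n)$ is \emph{constant} over all $\boldsymbol{f}\in\mathcal{F}$ with $\|\boldsymbol{f}\|_{\boldsymbol{\theta}_{\mathcal{A}}}=1$, which is precisely the design goal announced just before the theorem. The route I would take is to exploit the near‑triangular structure of \eqref{eq:irreversibiletransitionmatrix}: $\boldsymbol{I}-\boldsymbol{P}_{\mathcal{A}}$ is upper triangular except for its last row, so the system $(\boldsymbol{I}-\boldsymbol{P}_{\mathcal{A}})\boldsymbol{g}=\boldsymbol{f}$ can be solved by explicit back‑substitution, producing $\boldsymbol{g}=\boldsymbol{Z}\boldsymbol{f}$ as closed‑form combinations of the coordinates of $\boldsymbol{f}$ in which the cumulative products $\prod_k (\hat{\theta}_n-\hat{\theta}_k)/(\hat{\theta}_k+\hat{\theta}_n)$ appear. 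Substituting into $\langle\boldsymbol{g},\boldsymbol{f}\rangle_{\boldsymbol{\theta}_{\mathcal{A}}}=\sum_i \hat{\theta}_i g_i f_i$ and invoking the constraints $\sum_i \hat{\theta}_i f_i=0$ and $\sum_i\hat{\theta}_i f_i^{2}=1$, I expect these products to telescope so that all dependence on the individual $f_i$ cancels and the quadratic form collapses to the single constant $\hat{\theta}_n/(\hat{\theta}_1+\hat{\theta}_n)$.

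An equivalent and perhaps cleaner route is to prove a matrix identity directly: that the $\boldsymbol{\theta}_{\mathcal{A}}$‑symmetric part of $\boldsymbol{Z}$, namely $\tfrac12(\boldsymbol{\Theta}\boldsymbol{Z}+\boldsymbol{Z}^\top\boldsymbol{\Theta})$ with $\boldsymbol{\Theta}=\textbf{diag}(\boldsymbol{\theta}_{\mathcal{A}})$, agrees with $\tfrac{\hat{\theta}_n}{\hat{\theta}_1+\hat{\theta}_n}\boldsymbol{\Theta}$ on $\mathcal{F}$. Because a symmetric matrix whose quadratic form vanishes on the hyperplane $\{\boldsymbol{f}:\boldsymbol{\theta}_{\mathcal{A}}^\top\boldsymbol{f}=0\}$ must have the rank‑two form $\boldsymbol{\theta}_{\mathcal{A}}\boldsymbol{a}^\top+\boldsymbol{a}\boldsymbol{\theta}_{\mathcal{A}}^\top$, it would suffice to verify that $\boldsymbol{\Theta}\boldsymbol{Z}+\boldsymbol{Z}^\top\boldsymbol{\Theta}-\tfrac{2\hat{\theta}_n}{\hat{\theta}_1+\hat{\theta}_n}\boldsymbol{\Theta}$ is of this type. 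Either way, once the constant‑value claim holds, combining it with the first paragraph gives $\sup_{\boldsymbol{f}}\nu(\boldsymbol{f},\boldsymbol{P}_{\mathcal{A}})=(\hat{\theta}_n-\hat{\theta}_1)/(\hat{\theta}_n+\hat{\theta}_1)$; since Theorem~\ref{thm:irrevisible_matrix} forbids any strictly smaller worst‑case value, $\boldsymbol{P}_{\mathcal{A}}$ is optimal and the stated infimum equals this quantity.

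The main obstacle I anticipate is the bookkeeping in the explicit inversion: the five‑case definition \eqref{eq:irreversibiletransitionmatrix} forces one to track index ranges carefully through the back‑substitution, and the telescoping cancellation of the cumulative products is delicate—an off‑by‑one in the product limits would destroy the constant‑value identity. I would therefore first verify the small cases $n=2,3$ by hand to fix the correct form of $\boldsymbol{Z}$ and the cancellation pattern, and then establish the general identity by induction on the coordinate index, mirroring the recursive construction used in the reversible case and in \cite[Theorem~2.1]{huang2018optimal}.
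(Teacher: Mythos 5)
Your overall skeleton is the same as the paper's: take the lower bound $\inf_{\boldsymbol{P}}\sup_{\boldsymbol{f}}\nu(\boldsymbol{f},\boldsymbol{P})\geq(\hat{\theta}_n-\hat{\theta}_1)/(\hat{\theta}_n+\hat{\theta}_1)$ from Theorem~\ref{thm:irrevisible_matrix} via $\nu=2\langle\boldsymbol{Z}\boldsymbol{f},\boldsymbol{f}\rangle_{\boldsymbol{\theta}_{\mathcal{A}}}-1$, and then show that for the specific matrix \eqref{eq:irreversibiletransitionmatrix} the quadratic form $\langle\boldsymbol{Z}\boldsymbol{f},\boldsymbol{f}\rangle_{\boldsymbol{\theta}_{\mathcal{A}}}$ is \emph{constant}, equal to $\hat{\theta}_n/(\hat{\theta}_1+\hat{\theta}_n)$, on the unit sphere of $\mathcal{F}$. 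You have correctly isolated that constancy claim as the crux. Where you diverge is in how you propose to prove it. You want to invert $\boldsymbol{I}-\boldsymbol{P}_{\mathcal{A}}$ explicitly by back-substitution (or, equivalently, verify a rank-two structure for the $\boldsymbol{\theta}_{\mathcal{A}}$-symmetric part of $\boldsymbol{Z}$) and watch the cumulative products telescope. The paper never computes $\boldsymbol{Z}$ entrywise. It instead uses the hitting-time identities $\hat{\theta}_j\,\mathbb{E}_i[T_j]=Z_{jj}-Z_{ij}$ to convert $\langle\boldsymbol{Z}\boldsymbol{f}_{ij},\boldsymbol{f}_{ij}\rangle_{\boldsymbol{\theta}_{\mathcal{A}}}$ into $\mathbb{E}_i[T_j]+\mathbb{E}_j[T_i]$ for the basis vectors $\boldsymbol{f}_{ij}=\hat{\theta}_i^{-1}\boldsymbol{e}_i-\hat{\theta}_j^{-1}\boldsymbol{e}_j$, proves that the chain censored at any interior state $k$ has a transition matrix of exactly the same form \eqref{eq:irreversibiletransitionmatrix}, and runs an induction on $n$ (base case $n=3$ by direct probabilistic computation of return probabilities) to get $\tfrac{\hat{\theta}_i\hat{\theta}_j}{\hat{\theta}_i+\hat{\theta}_j}(\mathbb{E}_i[T_j]+\mathbb{E}_j[T_i])=\hat{\theta}_n/(\hat{\theta}_1+\hat{\theta}_n)$ for every pair; bilinearity then extends the identity from the basis $\{\boldsymbol{f}_{in}\}$ to all of $\mathcal{F}$. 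The probabilistic route buys two things your route does not: it reuses the same machinery already needed for the lower bound in Theorem~\ref{thm:irrevisible_matrix}, and it replaces a global $n\times n$ inversion by a structural self-similarity statement that is checked one censored state at a time.

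Two cautions if you pursue your version. First, $\boldsymbol{I}-\boldsymbol{P}_{\mathcal{A}}$ is singular ($\boldsymbol{1}$ is a right null vector and $\boldsymbol{\theta}_{\mathcal{A}}^\top$ a left null vector), so the back-substitution for $(\boldsymbol{I}-\boldsymbol{P}_{\mathcal{A}})\boldsymbol{g}=\boldsymbol{f}$ must be set up with the fundamental-matrix convention (solve on $\mathcal{F}$ modulo $\boldsymbol{1}$, or invert $\boldsymbol{I}-\boldsymbol{P}_{\mathcal{A}}+\boldsymbol{1}\boldsymbol{\theta}_{\mathcal{A}}^\top$); a naive triangular solve will break at the last pivot. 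Second, the entire content of the theorem lives in the telescoping cancellation you describe only as ``expected.'' As written, the proposal is a correct plan with the decisive computation deferred; until that identity is actually verified (your suggestion of fixing the pattern at $n=3$ and inducting is sound, and is in effect what the paper does, just in probabilistic rather than algebraic clothing), the upper-bound half of the argument is not yet a proof.
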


Now we discuss the actions of $\mathcal{A}$ to eliminate the unknown data $\boldsymbol{w}^*_u$ in detail. Although the above results have pursued the minimization of worst-case asymptotic variance to resist the effect of $\boldsymbol{w}^*_u$, the adversary $\mathcal{A}$ could still seek the help of $\mathcal{R}$'s feedback $\boldsymbol{\theta}_{\mathcal{R}}$. Regardless $\boldsymbol{\theta}_{\mathcal{R}}$ is perfect or not, $\mathcal{A}$ reconstructs the stochastic transition matrix $\boldsymbol{P}_{\mathcal{R}}$ by $\boldsymbol{\theta}_{\mathcal{R}}$, estimates the unknown pairwise comparisons and induces the actual attack strategy. The general framework for $\mathcal{A}$ to construct the attack strategies is listed as Algorithm \ref{alg:attack_spectral}.
\begin{algorithm}
    \SetAlgoLined
    \SetKwInOut{Input}{Input}
    \SetKwInOut{Output}{Output}
    \Input{$\boldsymbol{w}^*_k,\ \boldsymbol{\theta}_{\mathcal{R}},\ \boldsymbol{\theta}_{\mathcal{A}},\ d_1,\ d_2$.}
    
    $\mathcal{A}$ recovers the transition matrix $\boldsymbol{P}_{\mathcal{R}}$ from the feedback $\boldsymbol{\theta}_{\mathcal{R}}$ by \eqref{eq:reversibile_transition_matrix} or \eqref{eq:irreversibiletransitionmatrix}
    \[
        \boldsymbol{P}_{\mathcal{R}} = \textbf{InverseEigen}(\boldsymbol{\theta}_{\mathcal{R}}).
    \]

    $\mathcal{A}$ constructs the transition matrix $\boldsymbol{P}_{\mathcal{A}}$ with the desired score $\boldsymbol{\theta}_{\mathcal{A}}$ by \eqref{eq:reversibile_transition_matrix} or \eqref{eq:irreversibiletransitionmatrix}
    \[
        \boldsymbol{P}_{\mathcal{A}} = \textbf{InverseEigen}(\boldsymbol{\theta}_{\mathcal{A}}).
    \]

    $\mathcal{A}$ estimates the unavailable data $\boldsymbol{w}_u$ by $\boldsymbol{P}_{\mathcal{R}}$ and $\boldsymbol{w}^*_k$
    \begin{equation*}
        \begin{aligned}
            \boldsymbol{w}_u = \textbf{Reconstruction}(\boldsymbol{P}_{\mathcal{R}},\ d_1) - \boldsymbol{w}^*_k,\\
            \boldsymbol{w}_k = \textbf{Reconstruction}(\boldsymbol{P}_{\mathcal{A}},\ d_2) - \boldsymbol{w}_u,
        \end{aligned}
    \end{equation*}
    where $d_1,\ d_2$ are two scale factors which acted as maximum out-degree $d_{\text{max}}$ in \eqref{eq:empirical_matrix}, and $\textbf{Reconstruction}$ is the inverse process of \eqref{eq:empirical_matrix}.

    $\mathcal{A}$ transfers $\boldsymbol{w}_k$ into integer:
    \[
        \boldsymbol{\hat{w}}_k=\textbf{Integer}(\boldsymbol{w}_k),
    \]

    $\mathcal{R}$ executes \textbf{RankCentrality} algorithm by solving \eqref{eq:detailed_balance} with the poisoned data $\boldsymbol{\hat{w}}$, 
    \[
        \boldsymbol{\hat{\theta}} = \textbf{RankCentrality}(\boldsymbol{\hat{w}})
    \]
    where
    \[
        \boldsymbol{\hat{w}} = \boldsymbol{\hat{w}}_k + \boldsymbol{w}^*_u.
    \]
    \Output{The misguided ranking list $\boldsymbol{\hat{\theta}}$.}
    \caption{Target Attack against Spectral Method}
    \label{alg:attack_spectral}
\end{algorithm}

\section{Experiments}
In this section, three examples are exhibited with both simulated and real-world data to illustrate the validity of the proposed attack strategies on \textbf{HodgeRank} and \textbf{RankCentrality}. The first example is with simulated data while the latter two exploit real-world datasets involved crowdsourcing and election.
\subsection{General Settings}
Let $n$ be the number of the candidates involved in the aggregation process of $\mathcal{R}$. The original data is $\{\boldsymbol{G},\boldsymbol{y},\boldsymbol{w}^*\}$. With the limitation of the space, we do not test all $n!$ possible permutations of $[n]$ but construct some target ranking lists of $\mathcal{A}$ as follows. Regardless of the completeness of information and the accuracy of feedback, the adversary $\mathcal{A}$ always gets the ranking list $\boldsymbol{\pi}_{\boldsymbol{\bar{\theta}}}$ aggregated by the victim $\mathcal{R}$ with $\boldsymbol{w}^*$. $\mathcal{A}$ chooses a candidate of $\boldsymbol{\pi}_{\boldsymbol{\bar{\theta}}}$ and generates
\begin{equation}
    \label{eq:target_constrcut}
    \boldsymbol{\pi}^t_{\mathcal{A}} = [\boldsymbol{\pi}_{\boldsymbol{\bar{\theta}}}(t);\ \boldsymbol{\pi}_{\boldsymbol{\bar{\theta}}}/\boldsymbol{\pi}_{\boldsymbol{\bar{\theta}}}(t)],\ t=2,\dots,n,
\end{equation}
where $\boldsymbol{\pi}(t)$ is the $t$-th candidate of $\boldsymbol{\pi}$, and $\boldsymbol{\pi}/\boldsymbol{\pi}(t)$ denotes an ordered sequence with $n-1$ elements which deletes $t$-th candidates from $\boldsymbol{\pi}$. Due to the significance of top-$1$ candidate in rank aggregation, constructing the target ranking in this way can not only reflects the purpose of the adversary, but also illustrates the applicability of the proposed attack methods. When attacking against \textbf{HodgeRank}, the original weight $\boldsymbol{w}^*$ and the comparison matrix $\boldsymbol{C}$ are generated as \eqref{eq:weight} and \eqref{eq:comparison_matrix}. The empirical transition matrix $\boldsymbol{P}^*$ is constructed as \eqref{eq:empirical_matrix} when the victim is \textbf{RankCentrality}. 

\vspace{0.25cm}
\noindent\textbf{Confrontation Scenarios. }To validate the vulnerability of ranking aggregation approaches like \textbf{HodgeRank} and \textbf{RankCentrality}, we establish different scenarios divided by the completeness of the adversary's knowledge and the perfection of the victim's feedback. We illustrate the attack results of the proposed methods in the three following cases:
\begin{itemize}[leftmargin=*]
    \item \textbf{Complete information and perfect feedback. }The adversary obtains the whole of the original data $\boldsymbol{w}^*$ and the perfect feedback $\boldsymbol{\bar{\theta}}$ of the victim. $\boldsymbol{\bar{\theta}}$ is the output of \textbf{HodgeRank} or \textbf{RankCentrality} by solving \eqref{opt:reg_hodge} or \eqref{eq:detailed_balance} with $\boldsymbol{w}^*$ respectively.
    \vspace{0.15cm}
    \item \textbf{Incomplete information and perfect feedback. }The adversary obtains a part of the original data $\boldsymbol{w}^*$ and the perfect feedback $\boldsymbol{\bar{\theta}}$ of the victim. The original data $\boldsymbol{w}^*$ has two parts as \eqref{eq:decomposition}: $\boldsymbol{w}^*_k$ and $\boldsymbol{w}^*_u$. Only the former part is accessible to the adversary. 
    \vspace{0.15cm}
    \item \textbf{Complete information and imperfect feedback. }The adversary obtains the whole of the original data $\boldsymbol{w}^*$ and the imperfect feedback $\boldsymbol{\pi}_{\boldsymbol{\bar{\theta}}}$ of the victim. $\boldsymbol{\pi}_{\boldsymbol{\bar{\theta}}}$ is the ranking list obtained from $\boldsymbol{w}^*$ by \textbf{HodgeRank} or \textbf{RankCentrality}.
    \vspace{0.15cm}
\end{itemize}
\vspace{0.25cm}
\begin{figure*}[t!]
    \centering
    \begin{subfigure}[h]{0.16\textwidth}
        \centering
        \includegraphics[width=\textwidth]{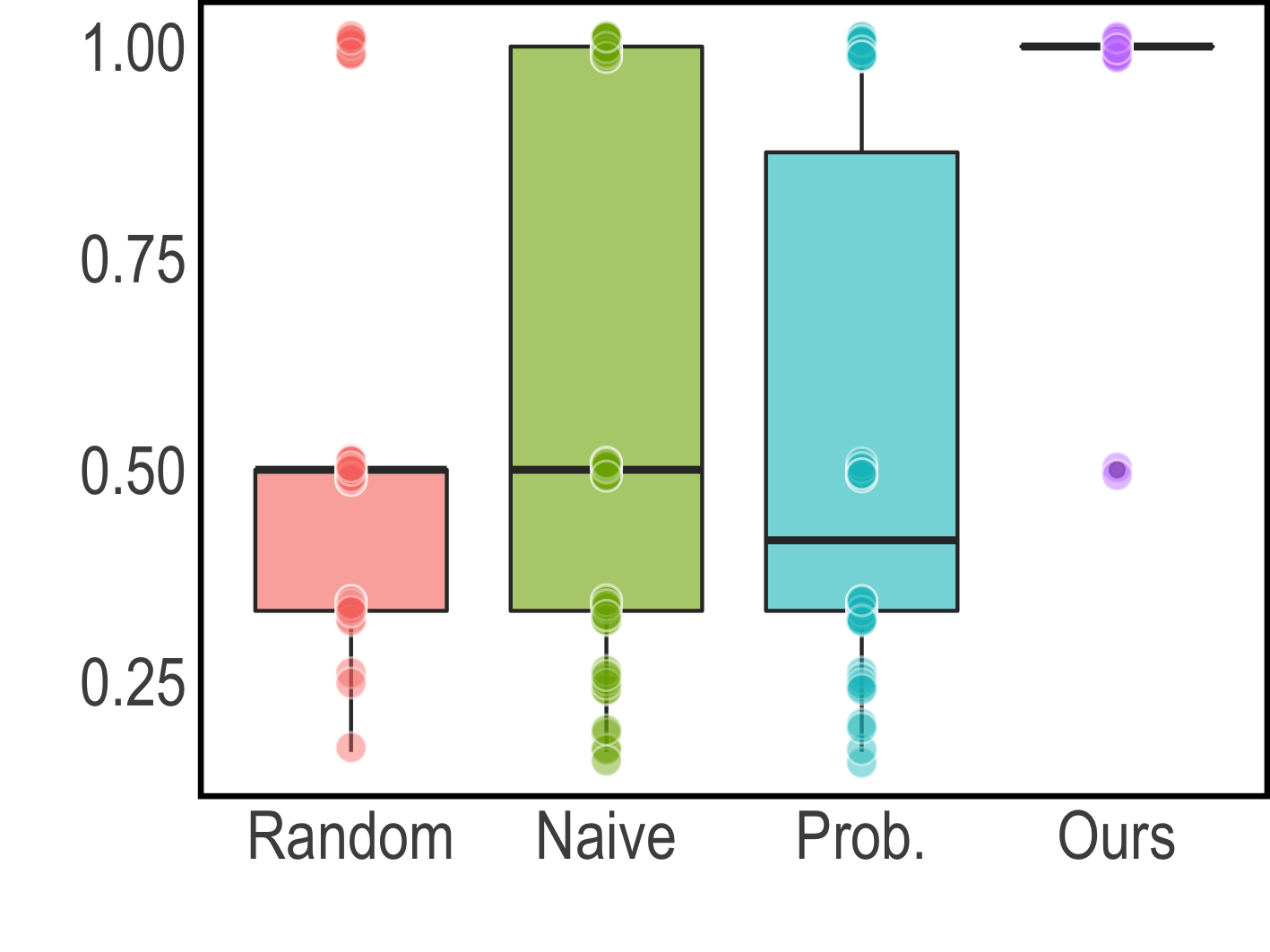}
        \vspace{-15pt}
        \caption{}
        \label{fig:hodge_rrank_cp}
    \end{subfigure}
    \hfill
    \begin{subfigure}[h]{0.16\textwidth}
        \centering
        \includegraphics[width=\textwidth]{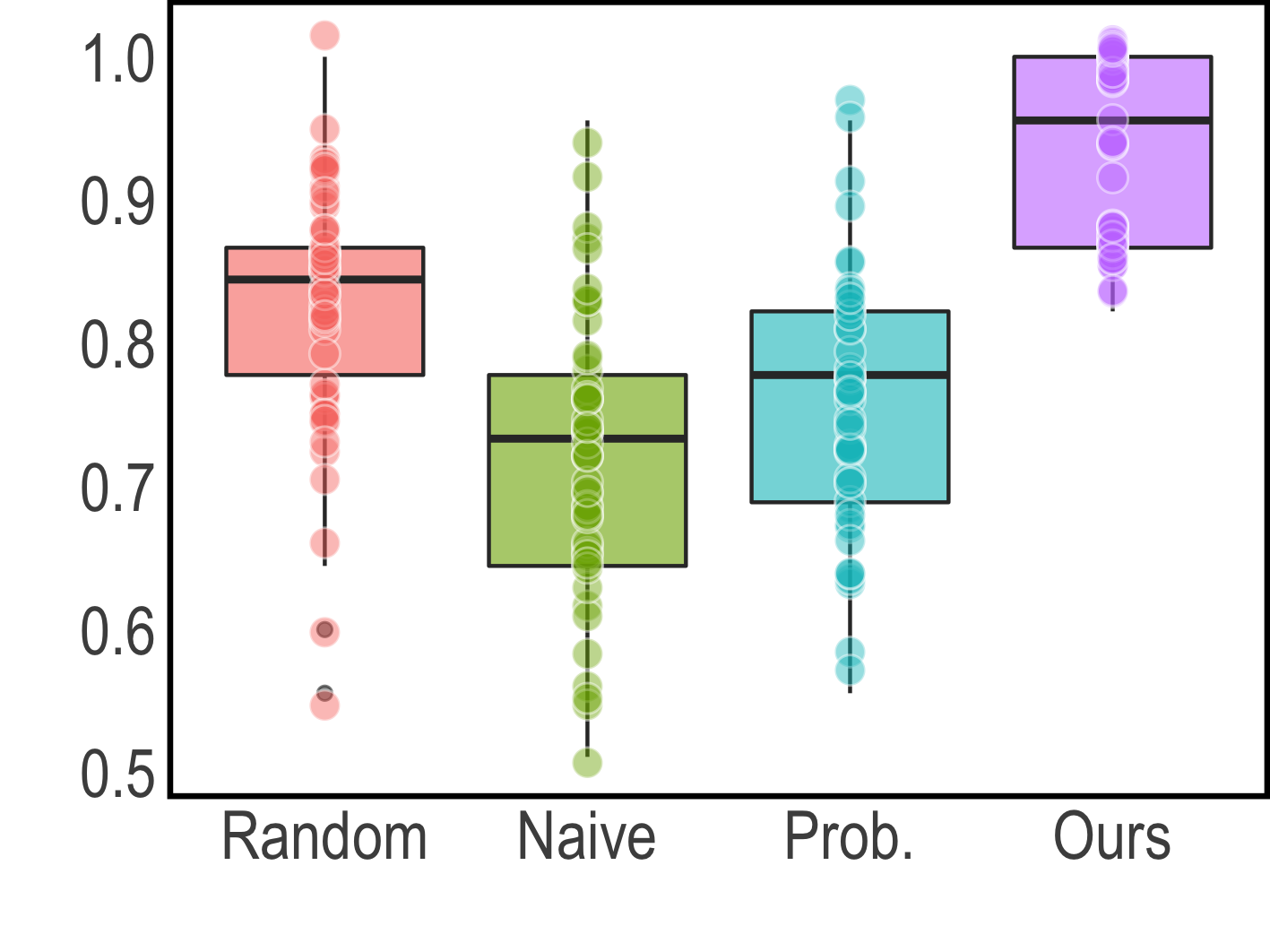}
        \vspace{-15pt}       
        \caption{}
        \label{fig:hodge_kendall_cp}
    \end{subfigure}
    \hfill
    \begin{subfigure}[h]{0.16\textwidth}
        \centering
        \includegraphics[width=\textwidth]{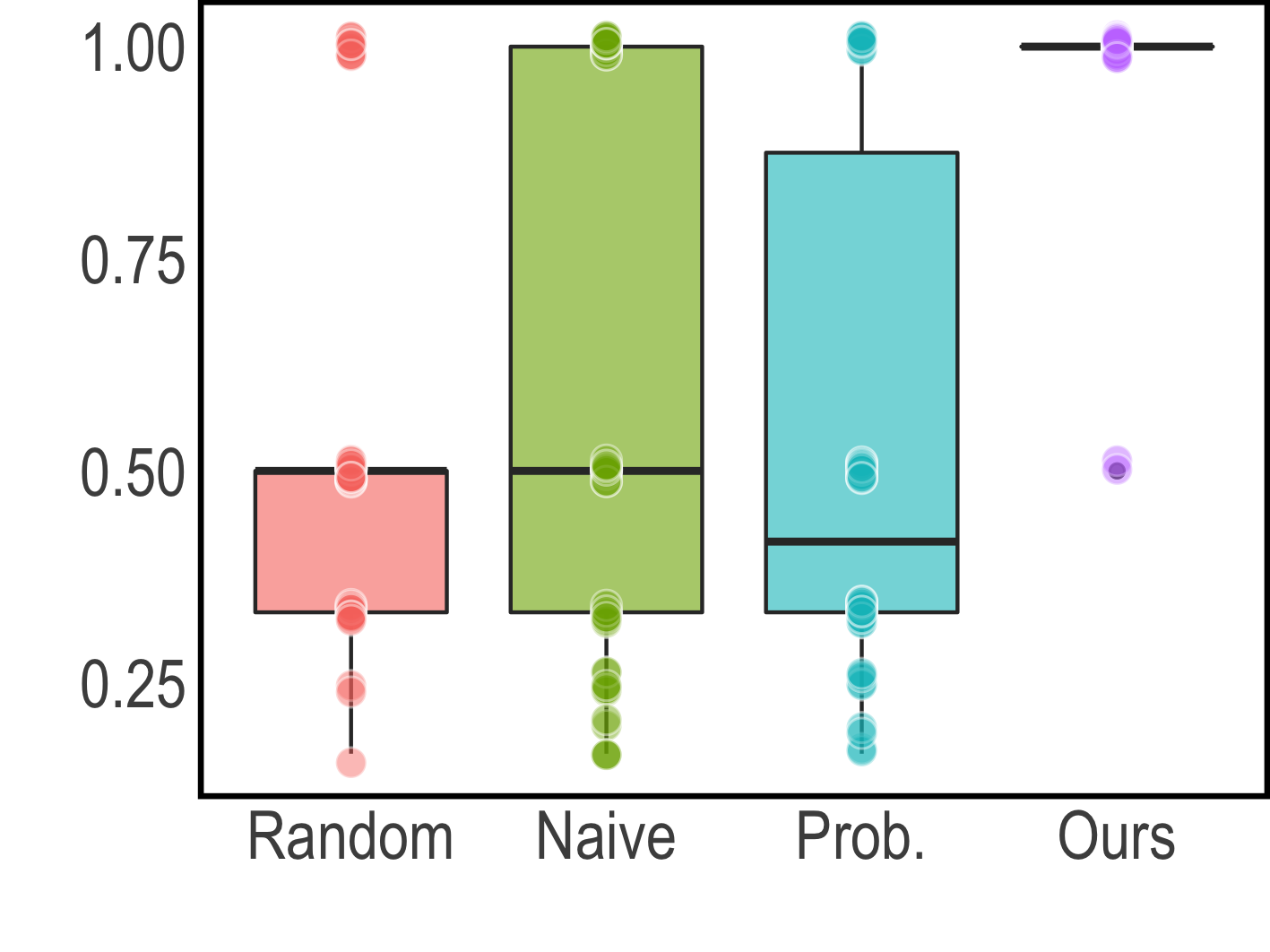}
        \vspace{-15pt}
        \caption{}
        \label{fig:hodge_rrank_ci}
    \end{subfigure}
    \hfill
    \begin{subfigure}[h]{0.16\textwidth}
        \centering
        \includegraphics[width=\textwidth]{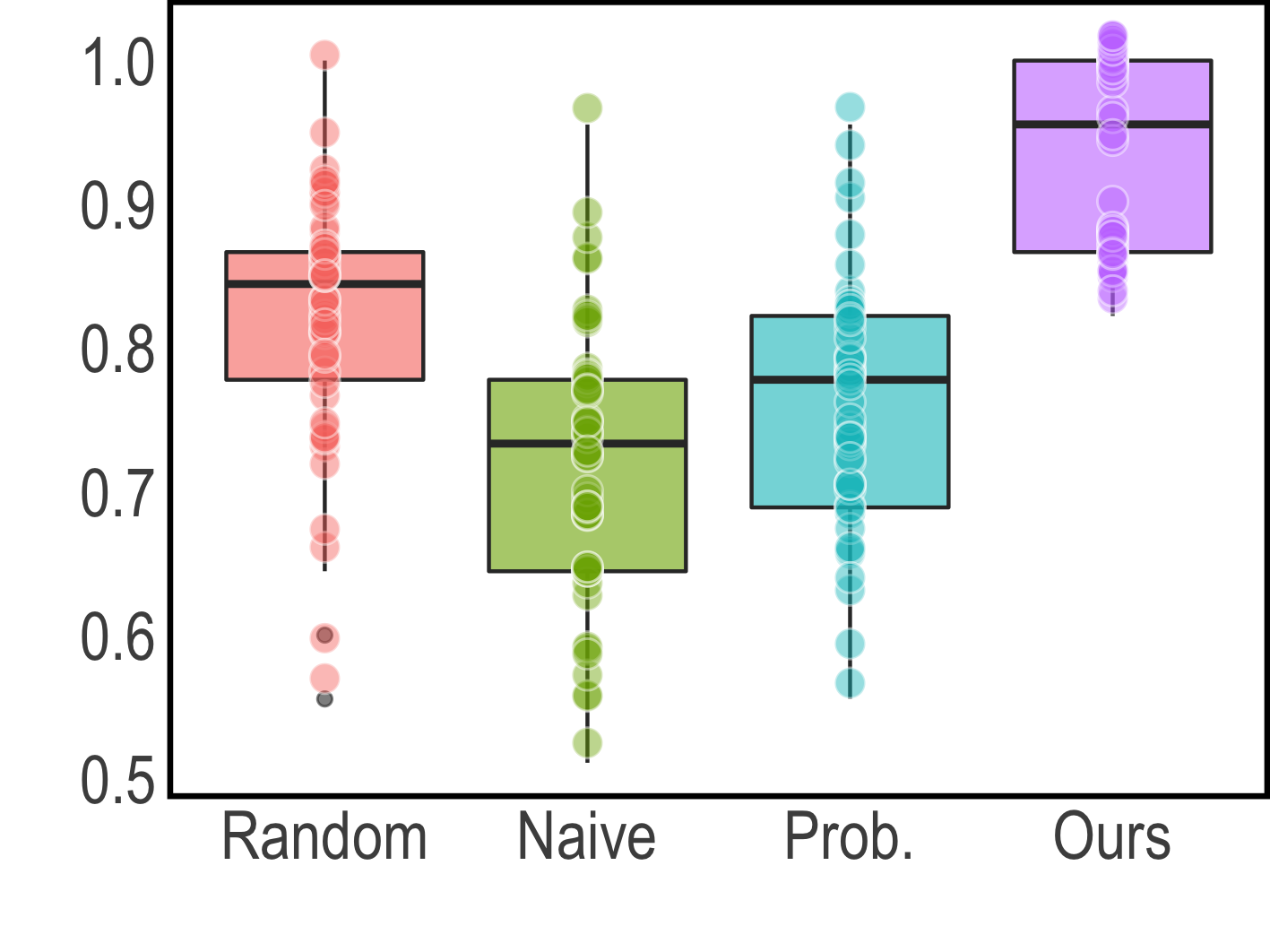}
        \vspace{-15pt}
        \caption{}
        \label{fig:hodge_kendall_ci}
    \end{subfigure}
    \hfill
    \begin{subfigure}[h]{0.16\textwidth}
        \centering
        \includegraphics[width=\textwidth]{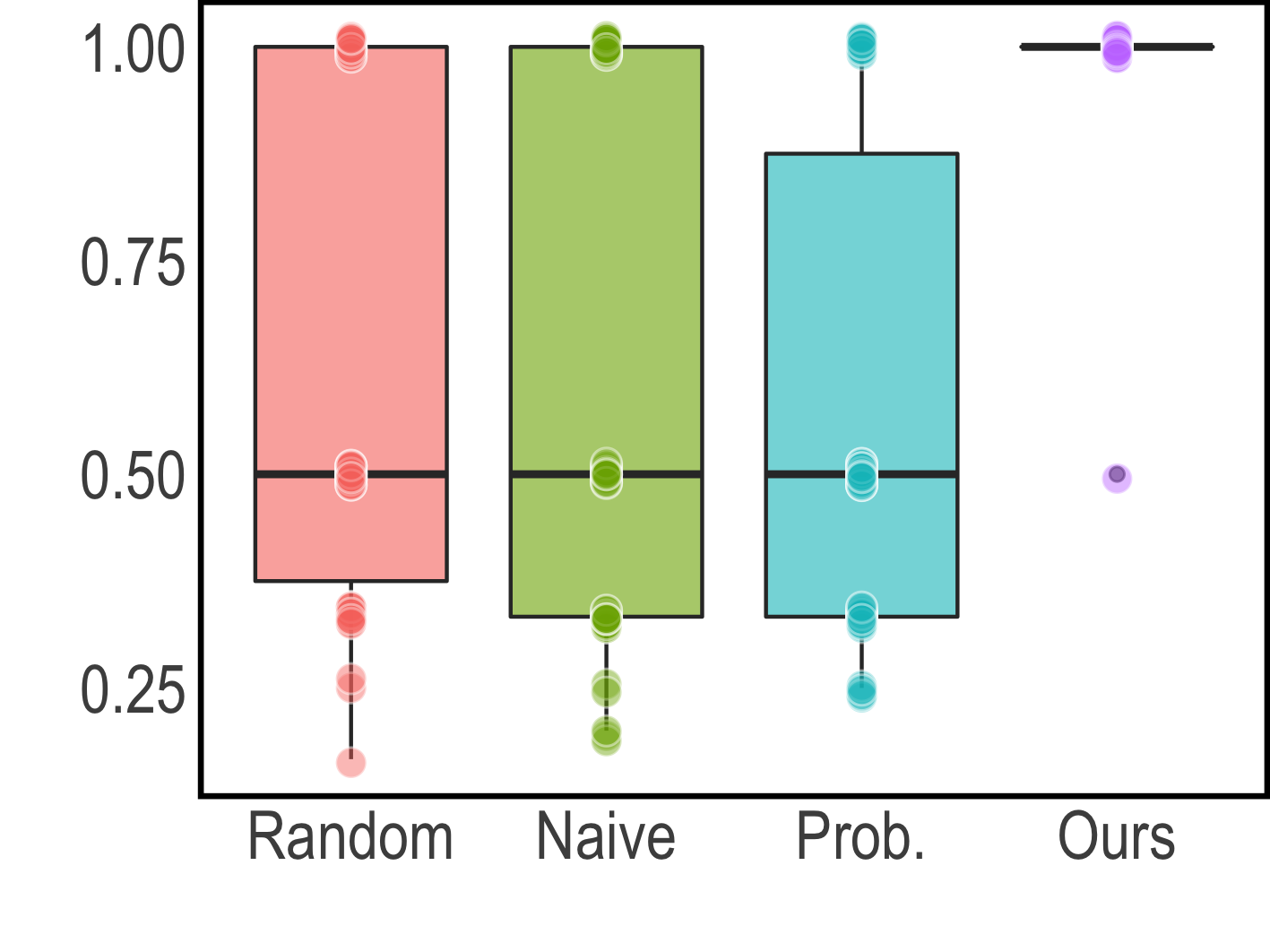}
        \vspace{-15pt}
        \caption{}
        \label{fig:hodge_rrank_ip}
    \end{subfigure}
    \hfill
    \begin{subfigure}[h]{0.16\textwidth}
        \centering
        \includegraphics[width=\textwidth]{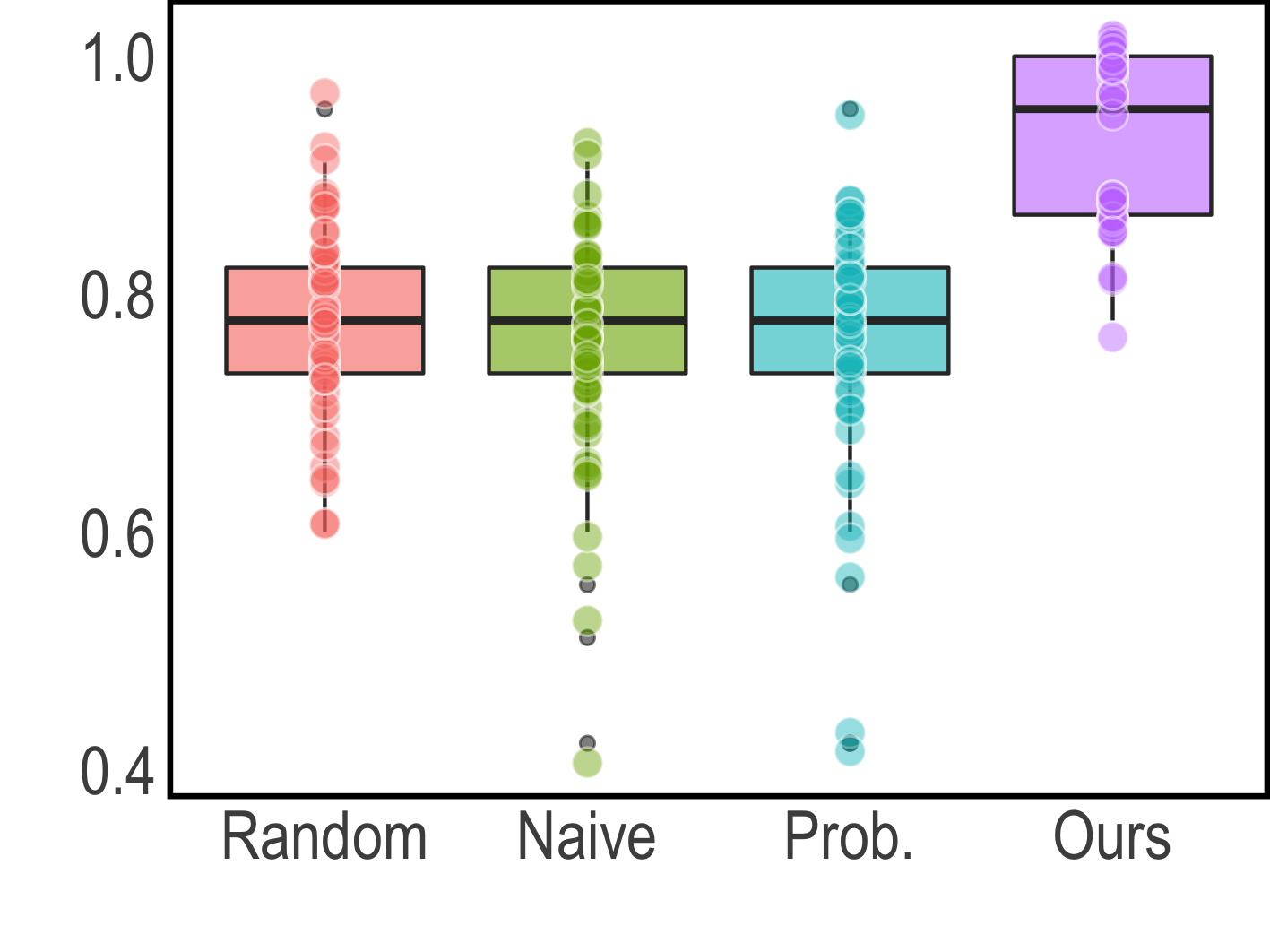}
        \vspace{-15pt}
        \caption{}
        \label{fig:hodge_kendall_ip}
    \end{subfigure}
    \begin{subfigure}[h]{0.16\textwidth}
        \centering
        \includegraphics[width=\textwidth]{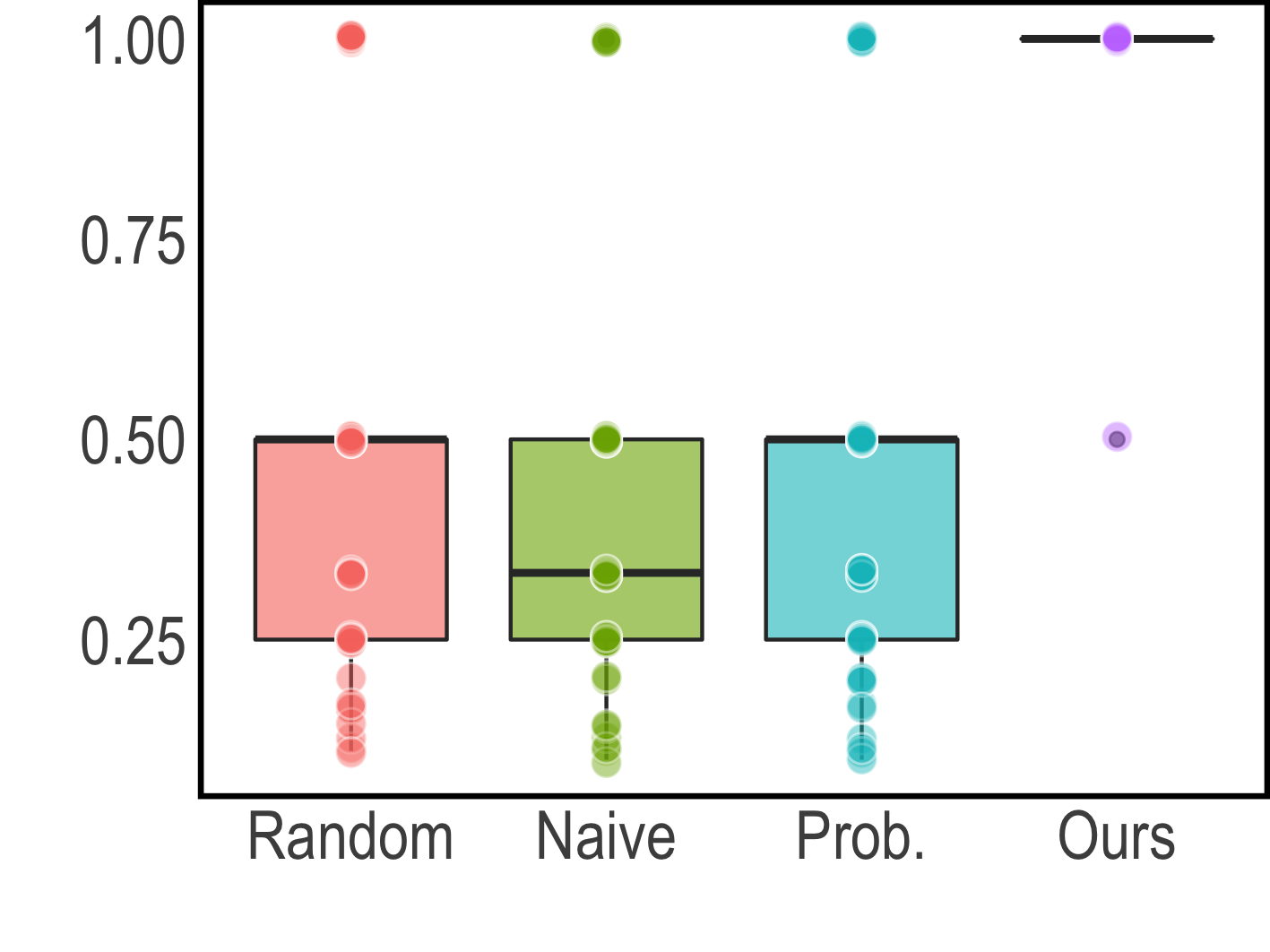}
        \vspace{-15pt}
        \caption{}
        \label{fig:spr_rrank_cp}
    \end{subfigure}
    \hfill
    \begin{subfigure}[h]{0.16\textwidth}
        \centering
        \includegraphics[width=\textwidth]{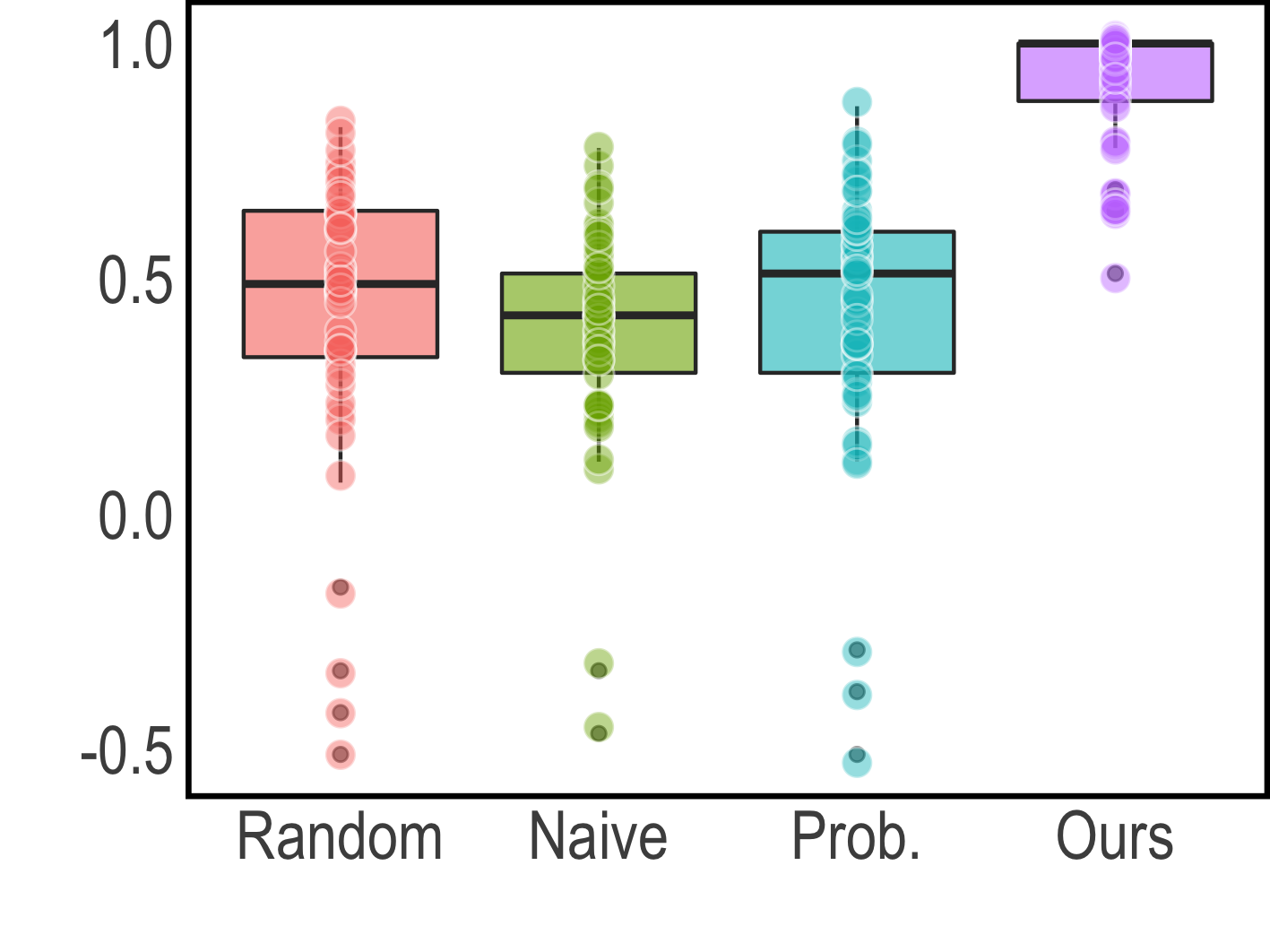}
        \vspace{-15pt}
        \caption{}
        \label{fig:spr_kendall_cp}
    \end{subfigure}
    \hfill
    \begin{subfigure}[h]{0.16\textwidth}
        \centering
        \includegraphics[width=\textwidth]{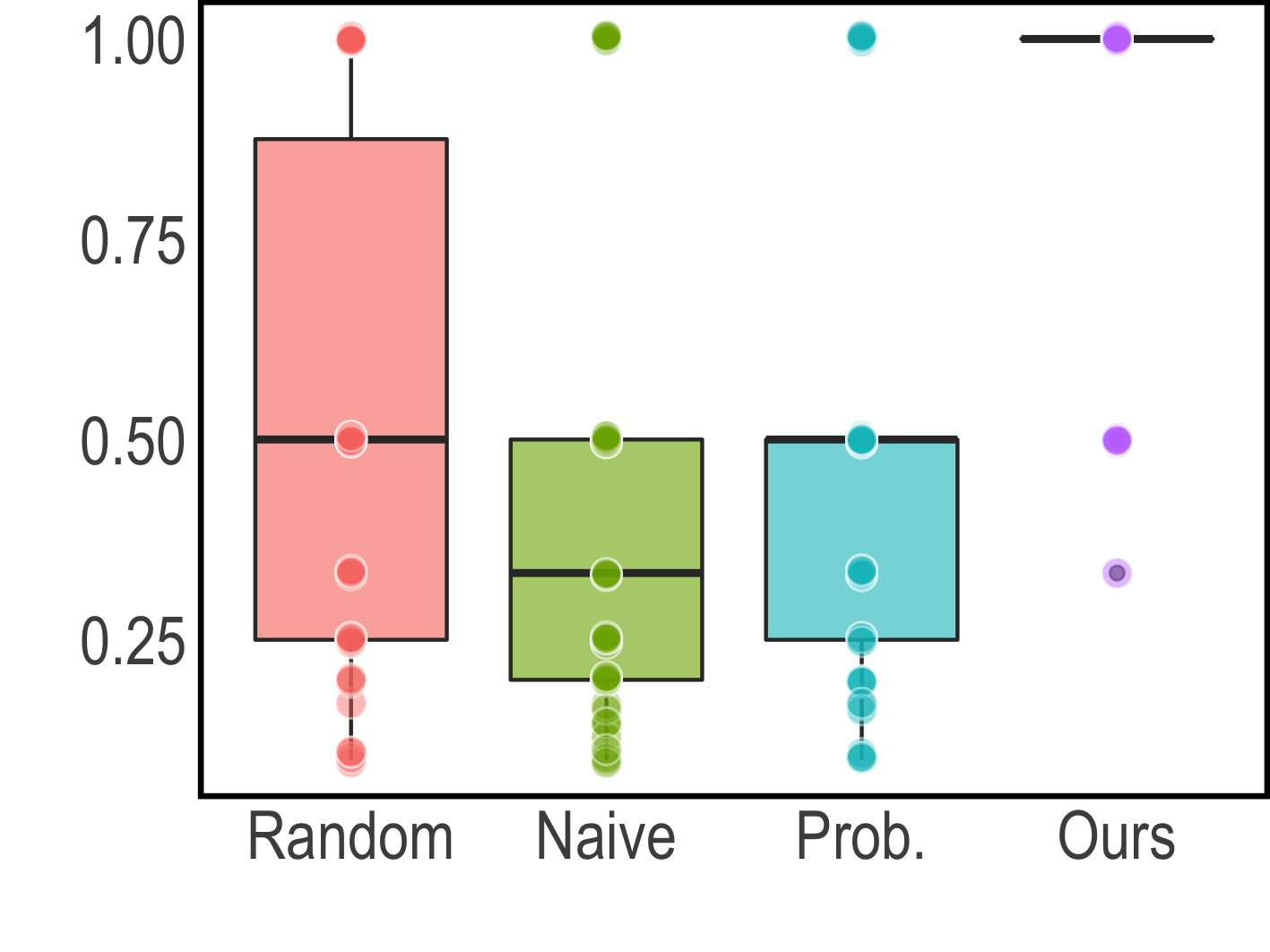}
        \vspace{-15pt}
        \caption{}
        \label{fig:spr_rrank_ci}
    \end{subfigure}
    \hfill
    \begin{subfigure}[h]{0.16\textwidth}
        \centering
        \includegraphics[width=\textwidth]{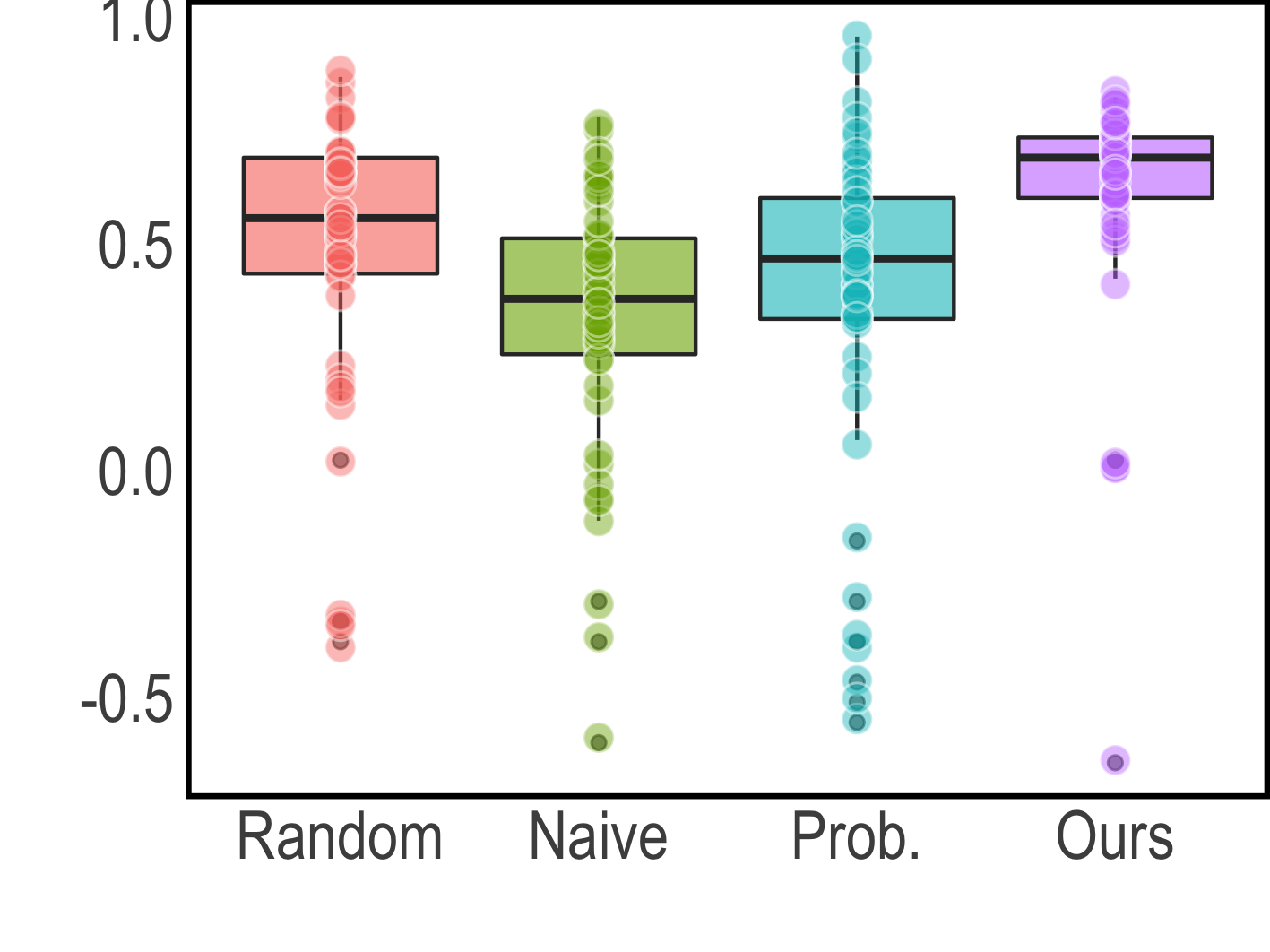}
        \vspace{-15pt}
        \caption{}
        \label{fig:spr_kendall_ci}
    \end{subfigure}
    \hfill
    \begin{subfigure}[h]{0.16\textwidth}
        \centering
        \includegraphics[width=\textwidth]{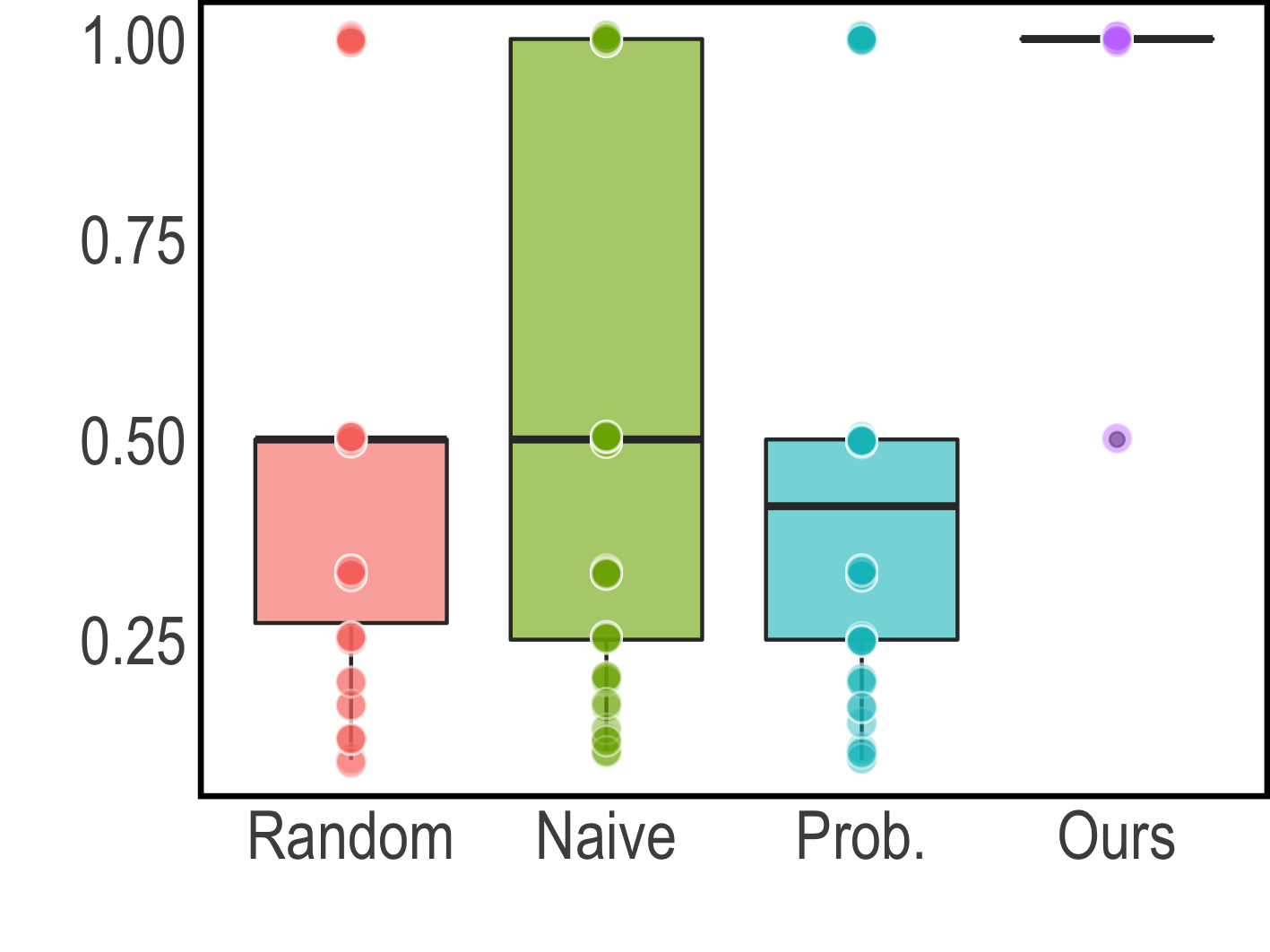}
        \vspace{-15pt}
        \caption{}
        \label{fig:spr_rrank_ip}
    \end{subfigure}
    \hfill
    \begin{subfigure}[h]{0.16\textwidth}
        \centering
        \includegraphics[width=\textwidth]{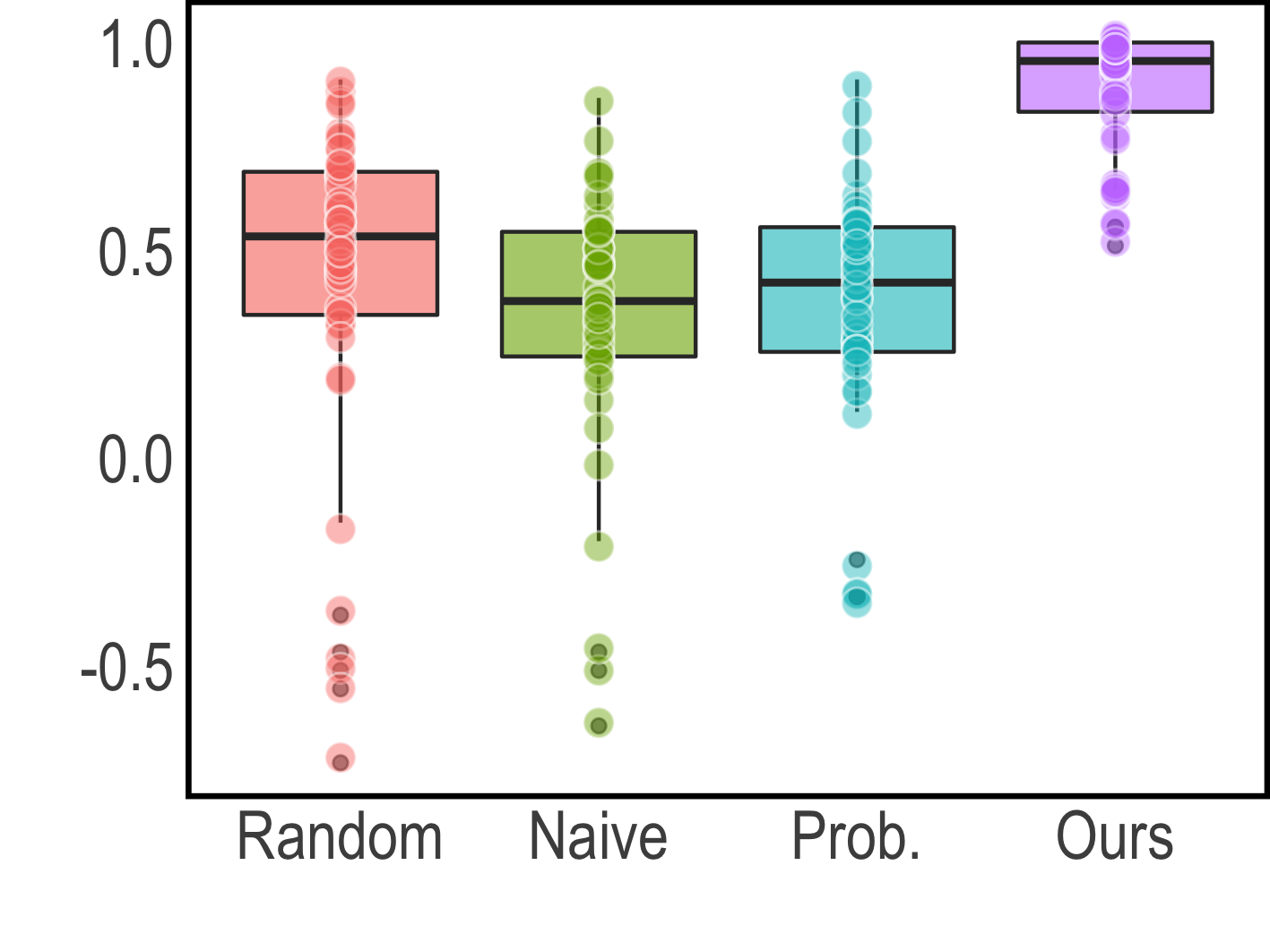}
        \vspace{-15pt}
        \caption{}
        \label{fig:spr_kendall_ip}
    \end{subfigure}
    \begin{subfigure}[h]{0.16\textwidth}
        \centering
        \includegraphics[width=\textwidth]{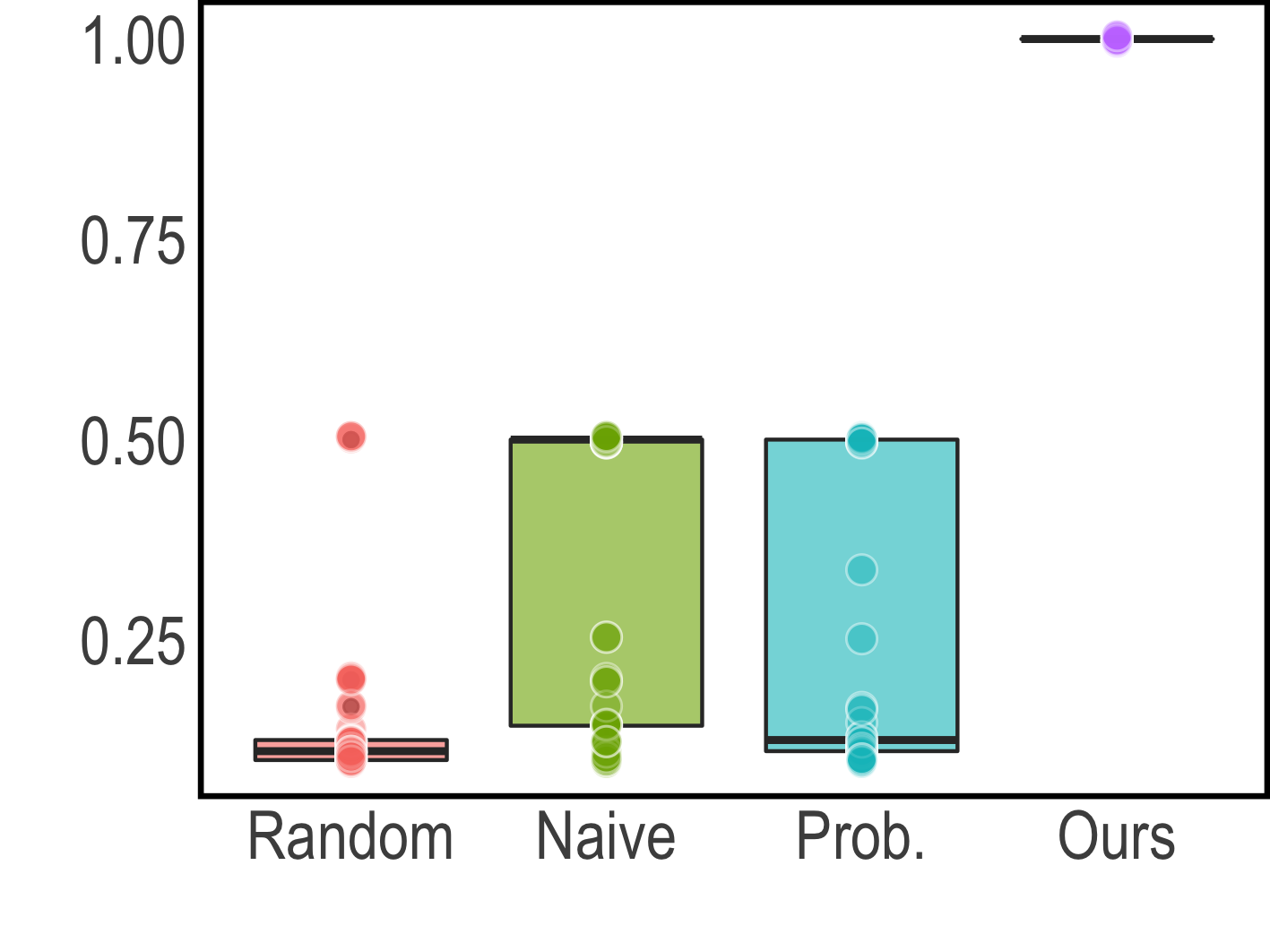}
        \vspace{-15pt}
        \caption{}
        \label{fig:spi_rrank_cp}
    \end{subfigure}
    \hfill
    \begin{subfigure}[h]{0.16\textwidth}
        \centering
        \includegraphics[width=\textwidth]{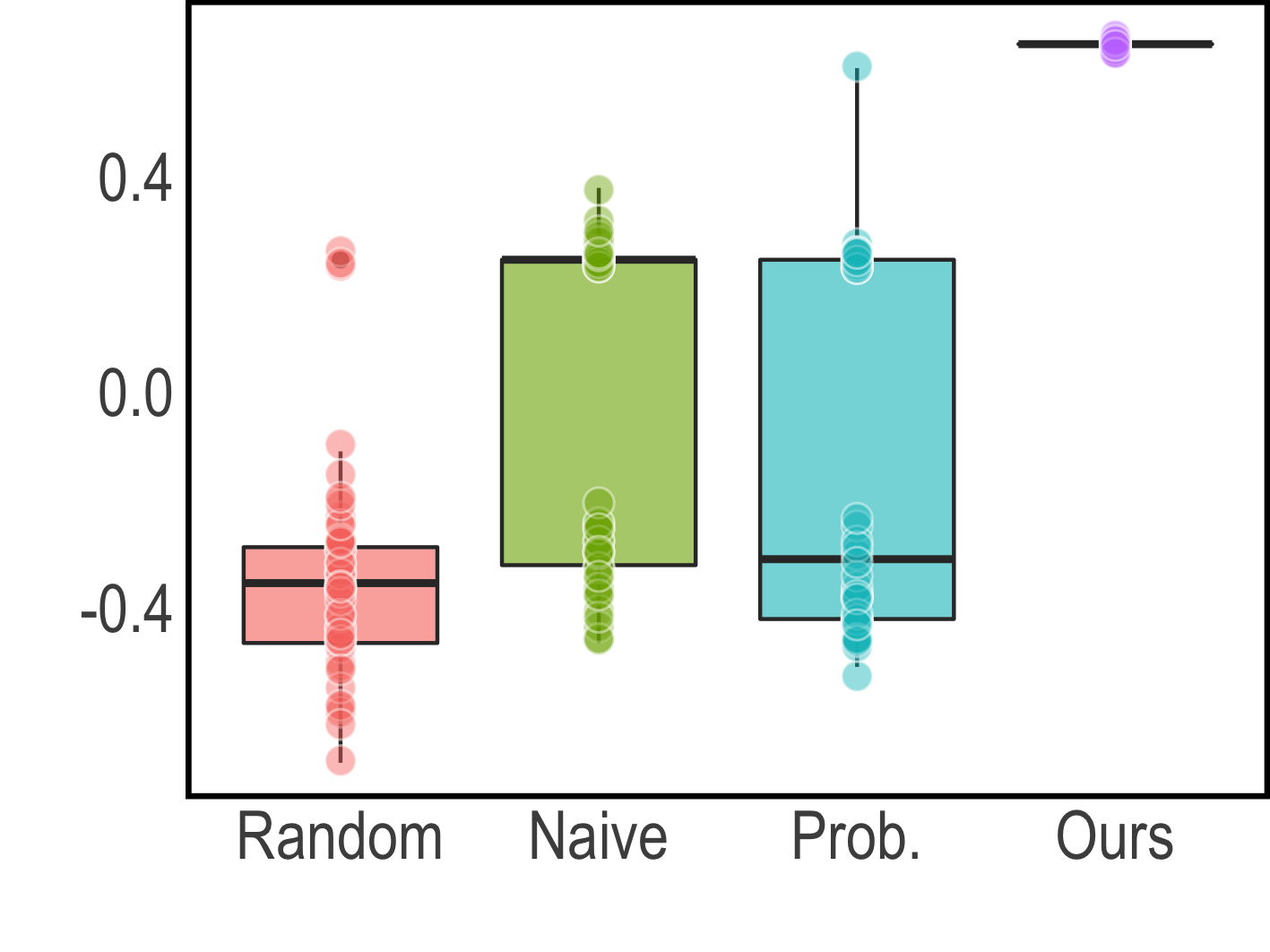}
        \vspace{-15pt}
        \caption{}
        \label{fig:spi_kendall_cp}
    \end{subfigure}
    \hfill
    \begin{subfigure}[h]{0.16\textwidth}
        \centering
        \textcolor{red}{\frame{\includegraphics[width=\textwidth]{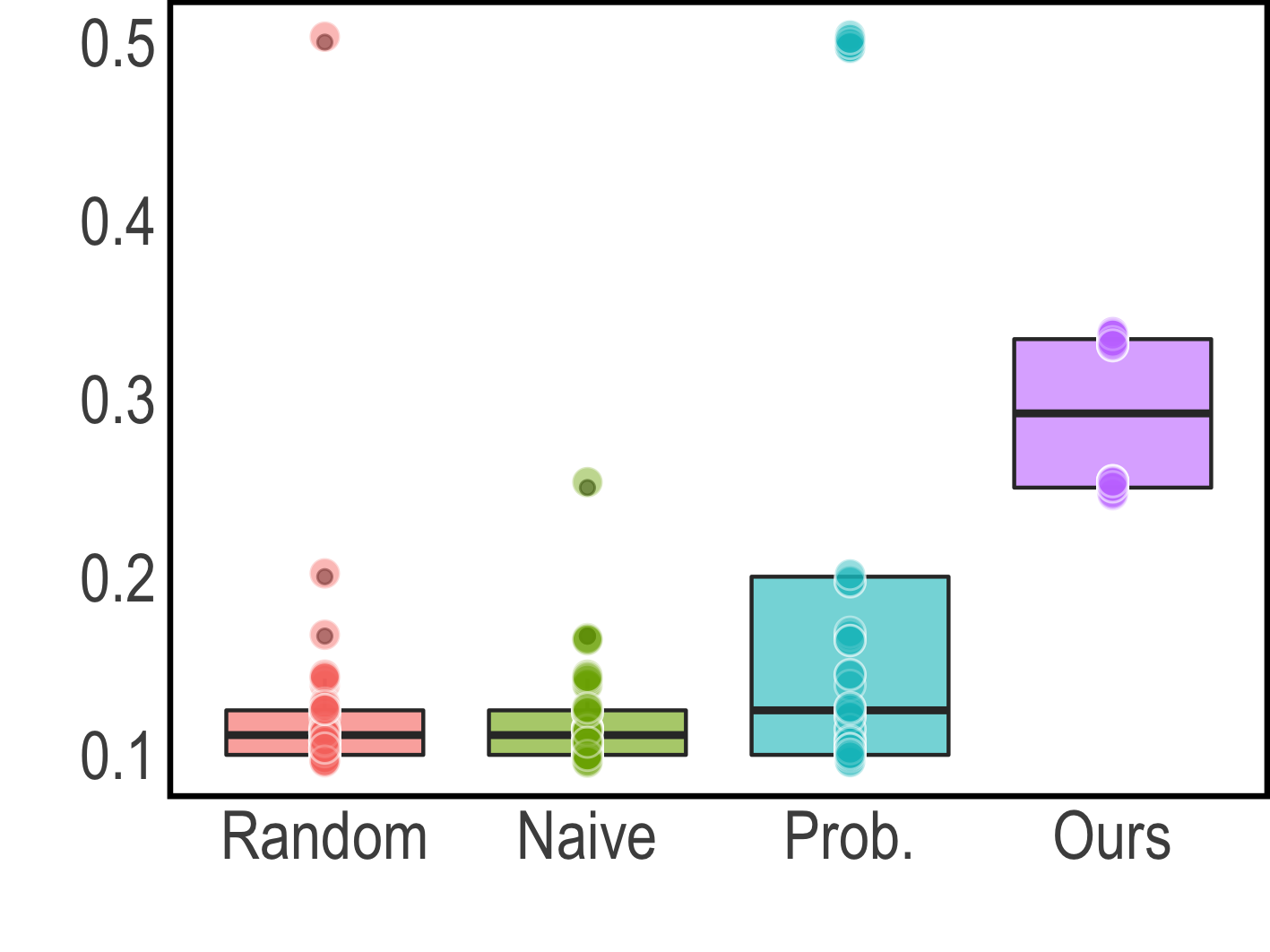}}}
        \vspace{-15pt}
        \caption{}
        \label{fig:spi_rrank_ci}
    \end{subfigure}
    \hfill
    \begin{subfigure}[h]{0.16\textwidth}
        \centering
        \textcolor{red}{\frame{\includegraphics[width=\textwidth]{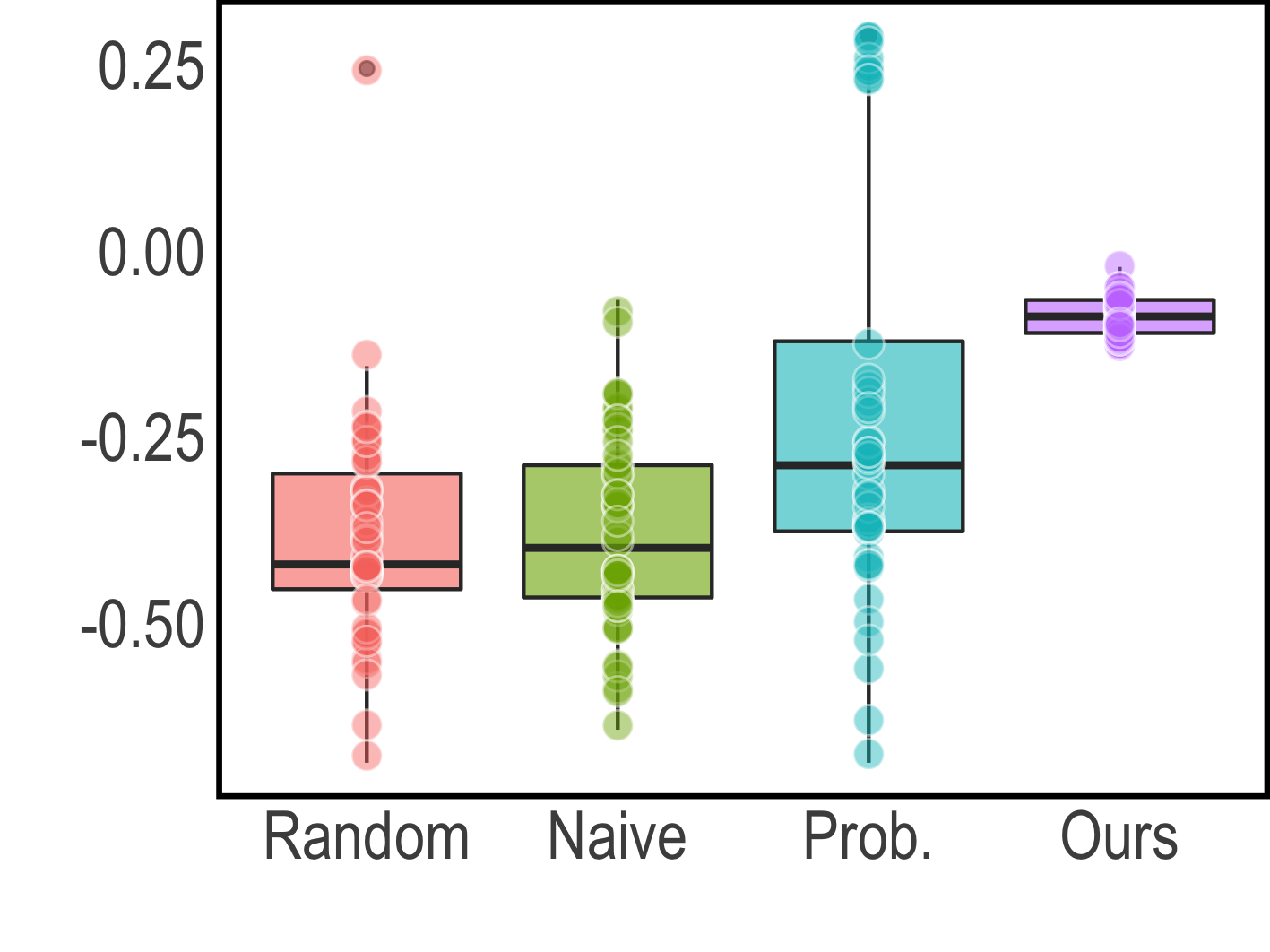}}}
        \vspace{-15pt}
        \caption{}
        \label{fig:spi_kendall_ci}
    \end{subfigure}
    \hfill
    \begin{subfigure}[h]{0.16\textwidth}
        \centering
        \includegraphics[width=\textwidth]{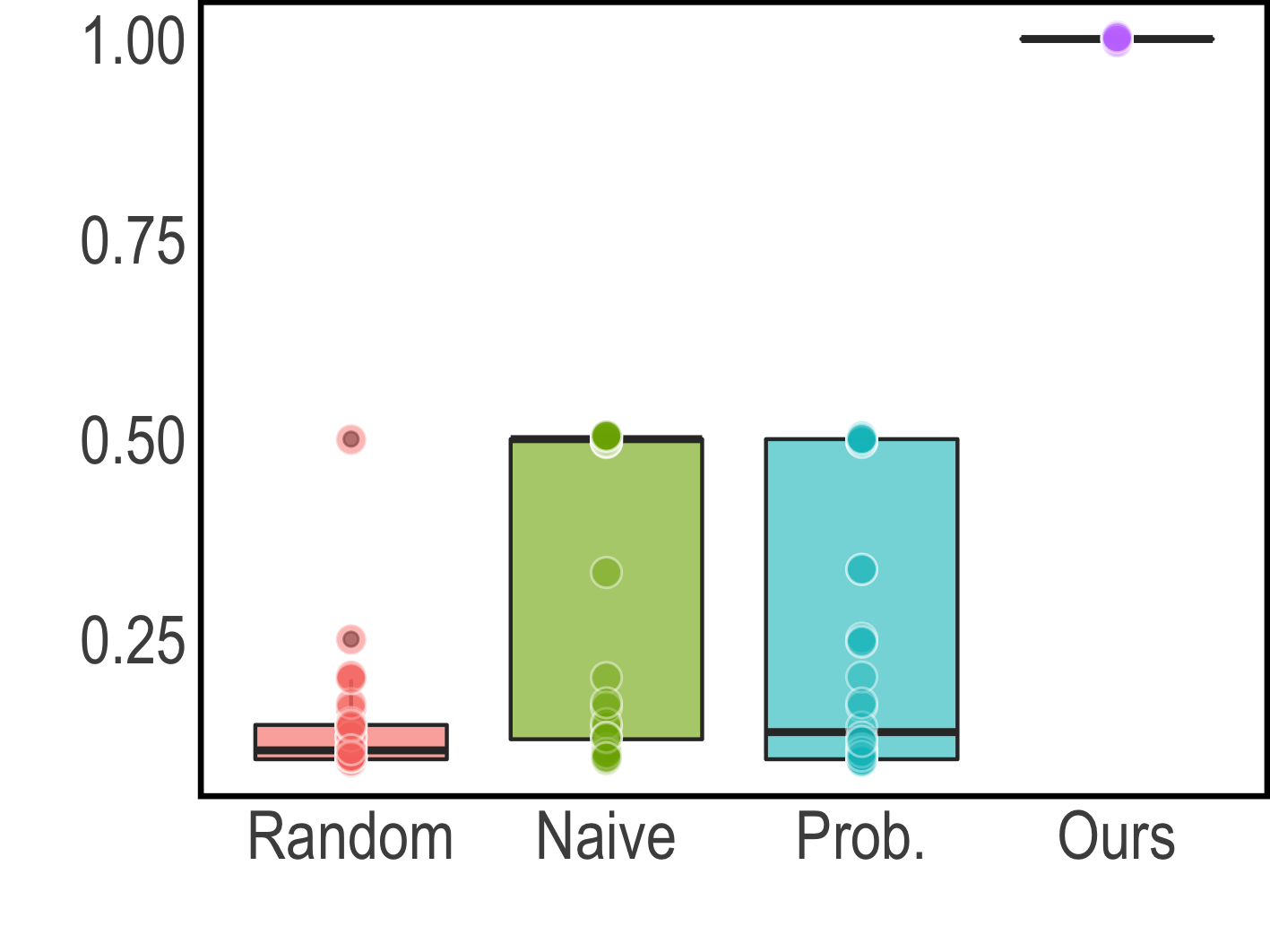}
        \vspace{-15pt}
        \caption{}
        \label{fig:spi_rrank_ip}
    \end{subfigure}
    \hfill
    \begin{subfigure}[h]{0.16\textwidth}
        \centering
        \includegraphics[width=\textwidth]{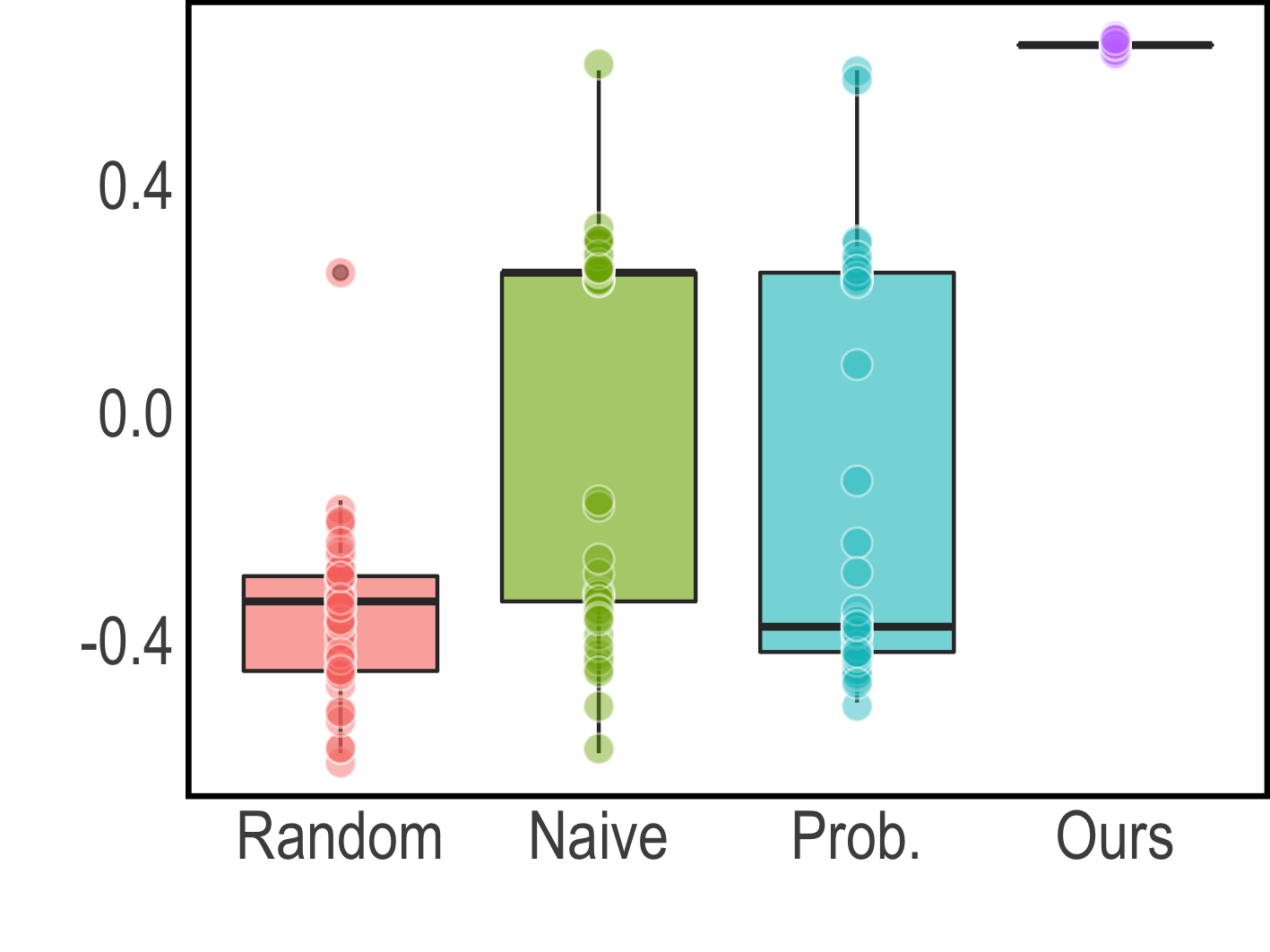}
        \vspace{-15pt}
        \caption{}
        \label{fig:spi_kendall_ip}
    \end{subfigure}
    \caption{Comparative results of different attack methods against \textbf{HodgeRank} and \textbf{RankCentrality} on simulated data. The target ranking is $\boldsymbol{\pi}^2_{\mathcal{A}}$. The box plot illustrates the results of $50$ trials with different original data. The proposed methods will get the \textbf{R-Rank} to be $1$ and \textbf{Kendall}-$\tau$ close to $1$. The first row (a-f) displays the results of \textbf{HodgeRank}. The second row (g-l) is \textbf{RankCentrality} with the \textbf{reversible} stochastic transition matrix. The last row (m-r) represents \textbf{RankCentrality} with \textbf{irreversible} stochastic transition matrix. The odd columns (column $1$, $3$, $5$) show the \textbf{R-Rank} values and the even columns (column $2$, $4$, $6$) are the results of \textbf{Kendall}-$\tau$. The first two columns exhibit the confrontation scenario with \textbf{complete information and perfect feedback}. The second two columns reveal the results of adversary with \textbf{complete information and imperfect feedback}. The third two columns indicate the \textbf{incomplete information and perfect feedback}. The red frames (o \& p) are failure cases where the proposed methods could not make $\boldsymbol{\pi}_{\boldsymbol{\hat{\theta}}}(1) = \boldsymbol{\pi}^2_{\mathcal{A}}(1) = \boldsymbol{\pi}_{\boldsymbol{\bar{\theta}}}(2)$.}\label{fig:simulate}
\end{figure*}
\noindent\textbf{Competitors. }In the interest of fairness and evaluating the performance of different target attack strategies, we define the Incremental Pairwise Comparison Oracle (\textbf{IPCO}) complexity model. At each step $s=0,\ 1,\ 2,\ \dots$, an attacker $\mathcal{A}$ picks a pairwise comparison $(i, j)\in\boldsymbol{V}\times\boldsymbol{V},\ \forall\ i,\ j\in\boldsymbol{V},\ i\neq j$, and the oracle $\mathcal{O}_{\mathcal{A}}$ returns a variable $z^s_{ij}\in\{-1, 0, 1\}$, which decides the operation of the adversary. Notice that different attack strategies correspond to different oracles. Let $\boldsymbol{\hat{w}}_s=[\hat{w}^s_{12}, \hat{w}^s_{13},\dots,\hat{w}^s_{n,n-1}]$ be the modified pairwise comparisons at $s$ step and $\boldsymbol{\hat{w}}_0=\boldsymbol{w}^*$. The strategy profile $\boldsymbol{\hat{w}}_s$ corresponds with $z^s_{ij}$ like:
\begin{equation*}
    \hat{w}^s_{ij} = \left\{
    \begin{array}{ll}
        \hat{w}^{s-1}_{ij} + 1, &\text{if}\ z^s_{ij} = 1,\\[7.5pt]
        \hat{w}^{s-1}_{ij} - 1, &\text{if}\ z^s_{ij} = -1\ \text{and}\ \hat{w}^{s-1}_{ij}\geq 1,\\[7.5pt]
        \hat{w}^{s-1}_{ij},     &\text{if}\ z^s_{ij} = -1\ \text{and}\ \hat{w}^{s-1}_{ij}= 0;\ \text{or}\ z^s_{ij} = 0.
    \end{array}
    \right.
\end{equation*}
Let $S$ be the maximum step of accessing to the oracle $\mathcal{O}_{\mathcal{A}}$, and the adversary $\mathcal{A}$ chooses one profile $\boldsymbol{\hat{w}}_s$ in $\mathcal{W}_S=\{\boldsymbol{\hat{w}}_1,\dots,\boldsymbol{\hat{w}}_S\}$ as the attack strategy which will send to the victim $\mathcal{R}$ as its input. When the attack strategy archives the purpose of the adversary, say that the perturbed latent preference score $\boldsymbol{\hat{\theta}}\in\mathcal{C}_{\mathcal{A}}$ is defined as \eqref{eq:constraint_set_2} where $i_0\equiv t$, \textbf{IPCO} complexity which is defined as the minimal number of interactions between $\mathcal{A}$ and $\mathcal{O}_{\mathcal{A}}$ indicates the efficiency of the corresponding strategy/oracle. If $\mathcal{A}_1$ has a lower \textbf{IPCO} complexity (smaller $s$) than that of $\mathcal{A}_2$ against the same victim with the same purpose, we say that $\mathcal{A}_1$ is more efficient than $\mathcal{A}_2$. To the best of our knowledge, the proposed methods are the first overture to attack strategies on pairwise ranking algorithms with specified purposes. We compare the proposed methods with the following three competitors: the random perturbation strategy (referred to as `\textbf{\textit{Rand.}}'), the naive purposeful strategy (referred to as `\textbf{\textit{Naive}}') and the probabilistic model based target strategy (referred to as `\textbf{\textit{Prob.}}'). For the interest of fairness, we compare different attack profiles with the same number of \textbf{IPCO} interactions. The interaction of our proposed methods is defined as
\begin{equation}
    \Delta = \|\boldsymbol{w}^*_k-\boldsymbol{\hat{w}}_k\|_1 = \sum_{(i,j)\in\boldsymbol{E}} |w^*_{ij}-\hat{w}_{ij}|,
\end{equation}
and the modified data with the same $\Delta$ of each attack strategy is delivered to $\mathcal{R}$.
\begin{itemize}[leftmargin=*]
    \item \textbf{\textit{Random}} perturbation involves a completely random oracle which generates a random sequence as the operation indicators $\{z^s_{ij}\}$ without any prior knowledge. This method does not rely on any information of the desired ranking $\boldsymbol{\pi}^t_\mathcal{A}$ or latent preference score $\boldsymbol{\theta}_{\mathcal{A}}$. We conjecture that the purposeless behavior of the random perturbation would not sculpt the desired results out of the poisoned data $\boldsymbol{\hat{w}}$. However, this strategy is still the evidence to prove the importance of $\boldsymbol{\pi}^t_\mathcal{A}$ and $\boldsymbol{\theta}_{\mathcal{A}}$ in the attack operations with special purpose.
    \vspace{0.15cm}
    \item \textbf{\textit{Naive}} strategy implements its oracle with the so-called ``\textbf{\textit{Matthew Principle}}'': increasing the number of the pairwise comparisons which are consistent with the desired ranking list $\boldsymbol{\pi}^t_{\mathcal{A}}$ and decreasing the number of inconsistent comparisons. The operation variables of this oracle are produced as follows: for all $0<s\leq S$, $(i, j)\in\boldsymbol{V}\times\boldsymbol{V},\ \forall\ i,\ j\in\boldsymbol{V},\ i\neq j$,
    \begin{equation}
        z^s_{ij} = \left\{
        \begin{array}{rl}
             1, & \text{if}\ i\succ_{\boldsymbol{\pi}_{\mathcal{A}}} j,\\[5pt]
            -1, & \text{otherwise,}\\
        \end{array}
        \right.
    \end{equation}
    where $i\succ_{\boldsymbol{\pi}^t_{\mathcal{A}}} j$ represents that candidate $i$ has priority over candidate $j$ in the target ranking list $\boldsymbol{\pi}^t_{\mathcal{A}}$. Obviously, this strategy/oracle only focuses the partial orders of any paired candidates and ignores the global nature of the target ranking $\boldsymbol{\pi}^t_{\mathcal{A}}$. We speculate that the order of query would impress the efficiency of this strategy seriously. 
    \vspace{0.15cm}
    \item \textbf{\textit{Probabilistic}} approach establishes its oracle with the \textbf{BTL} model \eqref{eq:BTL} and the target preference score $\boldsymbol{\theta}_{\mathcal{A}}=\{\hat{\theta}_1, \hat{\theta}_2, \dots, \hat{\theta}_n\}$. If the probability that the pairwise comparison $(i, j)$ turns out true in $\boldsymbol{\pi}^t_{\mathcal{A}}$ is larger than a given threshold $\rho$, the oracle will return a positive variable of operation. Mathematically, for all $0<s\leq T$, the operation variables $\{z^{s}_{ij}\}$ are given as 
    \begin{equation}
        z^s_{ij} = \left\{
        \begin{array}{rl}
             1, & \text{if}\ \frac{\hat{\theta}_i}{\hat{\theta}_i+\hat{\theta}_j}>\rho,\\[7.5pt]
             0, & \text{if}\ \frac{\hat{\theta}_i}{\hat{\theta}_i+\hat{\theta}_j}=\rho,\\[7.5pt]
            -1, & \text{if}\ \frac{\hat{\theta}_i}{\hat{\theta}_i+\hat{\theta}_j}<\rho,
        \end{array}
        \right.
    \end{equation}
    where the hyper-parameter $\rho\geq 0.5$ represents the belief in the \textbf{BTL} model. Moreover, the probabilistic feedbacks of this oracle reflect the global character of the desired ranking $\boldsymbol{\pi}^t_{\mathcal{A}}$. 
\end{itemize}
It is worth noting that the above three attack strategies have some common issues: they fail to predict the behaviors of the victims $\mathcal{R}$ and ignore the completeness of the accessible pairwise comparisons $\boldsymbol{w}^*_u$. On the basis of the pairwise comparisons, the aggregated result depends on the algorithmic processes and statistical characters of the victims. It is known that the same votes would translate into  different election outcomes in the social choice theory or voting theory \cite{arrow2012social}. Instead of proposing a targeted strategy toward the special victim, blindly modifying or perturbing the existing data would not archive the goal of the adversary. In addition, the completeness of observed pairwise comparisons also impacts on the success chance of the target attack operations. Once it exists, the inaccessible part $\boldsymbol{w}^*_u$ becomes the invertible obstruction.
\begin{figure*}[h!]
    \centering
    \begin{subfigure}[b]{0.33\textwidth}
        \centering
        \includegraphics[width=\textwidth]{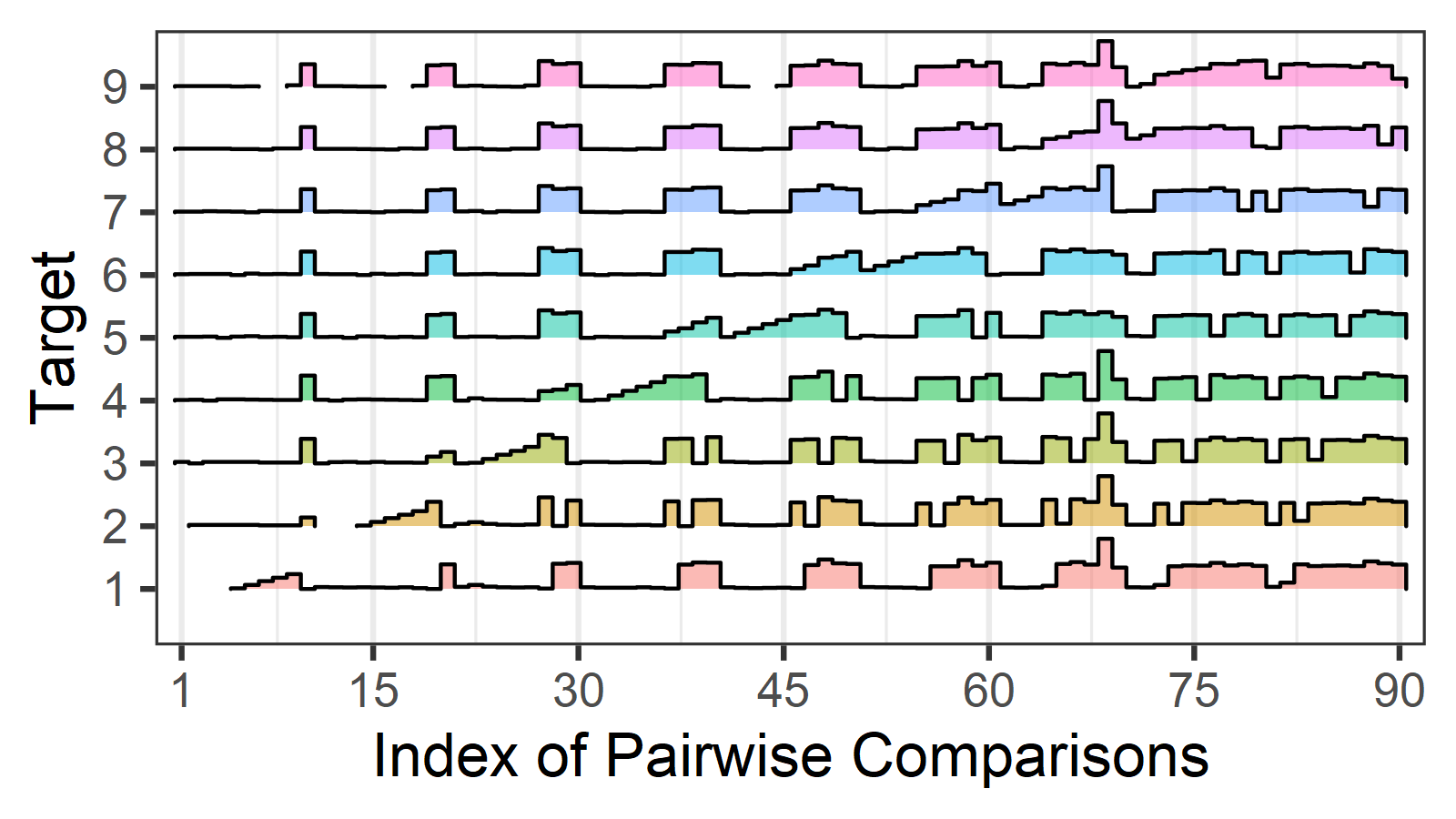}
        \caption{}
        \label{fig:hodge_data}
    \end{subfigure}
    \hfill
    \begin{subfigure}[b]{0.33\textwidth}
        \centering
        \includegraphics[width=\textwidth]{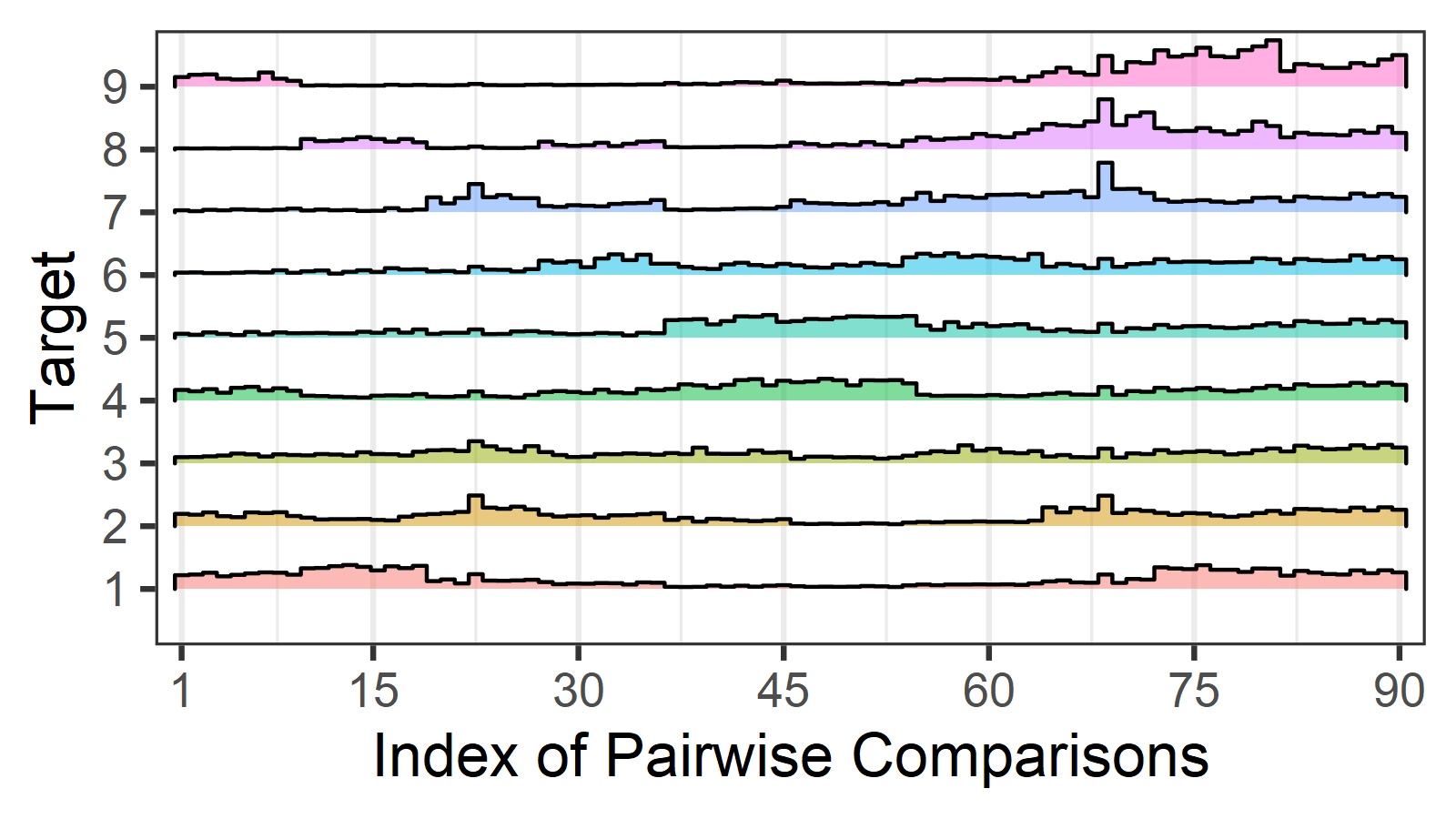}
        \caption{}
        \label{fig:spr_data}
    \end{subfigure}    
    \hfill
    \begin{subfigure}[b]{0.33\textwidth}
        \centering
        \includegraphics[width=\textwidth]{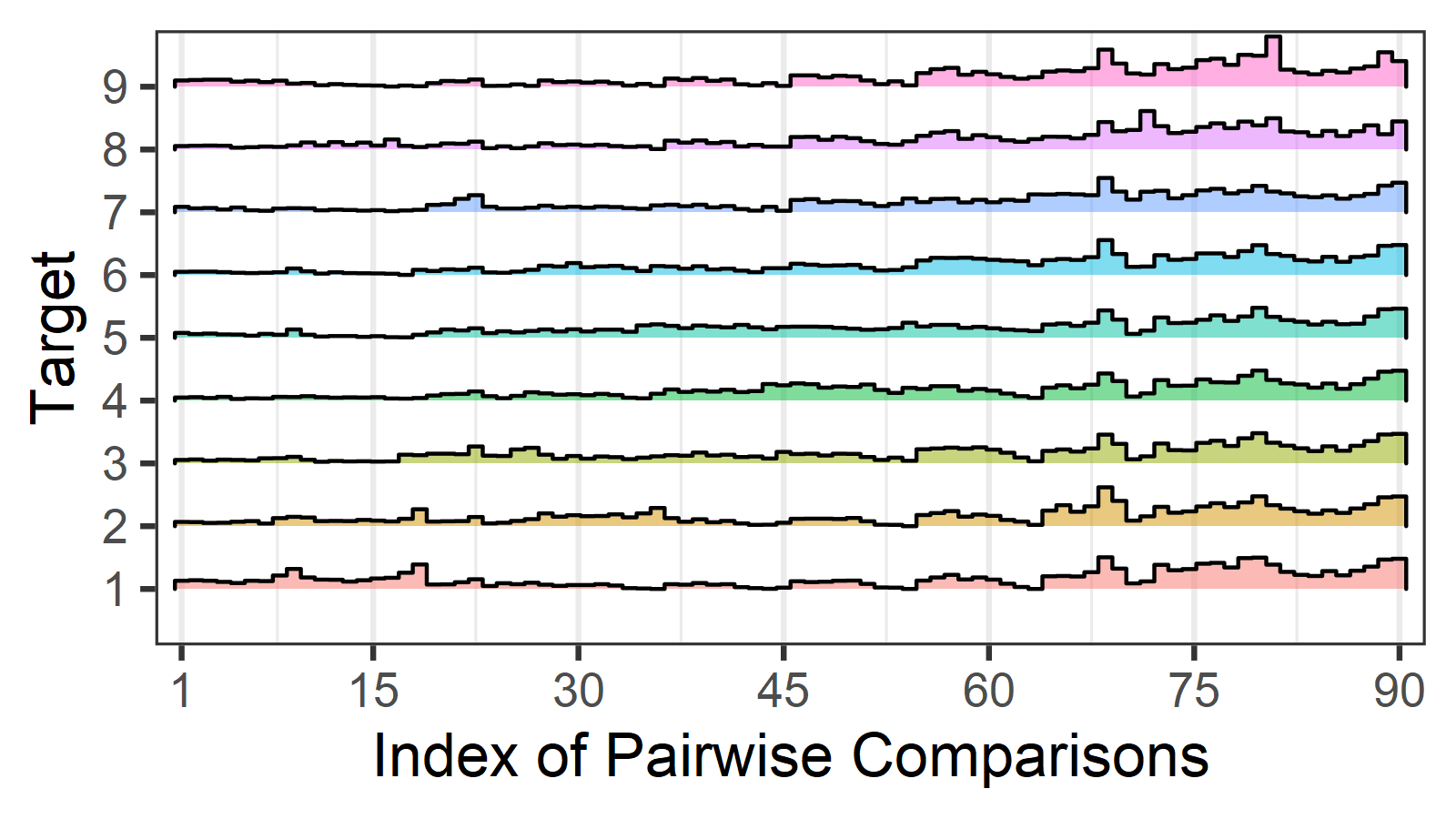}
        \caption{}
        \label{fig:spi_data}
    \end{subfigure}
    \caption{Data distribution of poisoned pairwise comparisons on simulated data with complete information and perfect feedback. The vertical axis lists all possible attack target and the horizontal axis displays all possible pairwise comparisons. All results are based on the same observed data. (a) The victim is \textbf{HodgeRank}. (b) The victim is \textbf{RankCentrality} and the adversary leverages the \textbf{reversible} stochastic transition matrix. (c) The victim is \textbf{RankCentrality} and the adversary utilizes the \textbf{irreversible} stochastic transition matrix.}
    \label{fig:data}
\end{figure*}

\noindent\textbf{Evaluation Metrics. }Based on the \textbf{IPCO} complexity model, we need some metrics to measure whether the perturbed ranking $\boldsymbol{\pi}_{\boldsymbol{\hat{\theta}}}$ achieves the attacker's goal or not. More specifically, given a threshold of \textbf{IPCO} complexity, the larger the ``similarity'' between the target ranking $\boldsymbol{\pi}^t_{\mathcal{A}}$ and the perturbed ranking $\boldsymbol{\pi}_{\boldsymbol{\hat{\theta}}}$ is, the more efficient the attack strategy is. Here we adopt \textbf{\textit{Reciprocal Rank (R-Rank)}} and \textbf{\textit{Kendall $\tau$ Coefficient (Kendall-$\tau$)}} for evaluating the correlation between two ranking lists. The first metric focuses on the top-$1$ candidates of the two ranking lists. Duo to the significance of the leading position in ranking list, this ``local'' metric is the most concerned indicator for the attackers. The second metric considers the position consistency of every candidate pair in the two ranking lists. So it is a ``global'' metric. 

\noindent\textbf{\textit{Reciprocal Rank (R-Rank).}} The reciprocal rank is a statistical measure for evaluating any process that produces an order list of possible responses to a series of queries, ordered by the probability of correctness or the ranking scores. The reciprocal rank of a ranking list is the multiplicative inverse of the leading object's position in the new order list. Let $\boldsymbol{\pi}^t_{\mathcal{A}}$ be the target ranking of the adversary and $\boldsymbol{\pi}_{\boldsymbol{\hat{\theta}}}$ be the aggregation result of the victim with the manipulated data from $\mathcal{A}$. The $\textbf{R-Rank}$ is defined as 
    \begin{equation}
        \textbf{R-Rank}(\boldsymbol{\pi}^t_{\mathcal{A}},\boldsymbol{\pi}_{\boldsymbol{\hat{\theta}}}) = \frac{1}{\ \boldsymbol{\pi}^{-1}_{\boldsymbol{\hat{\theta}}}\bigg[\boldsymbol{\pi}^t_{\mathcal{A}}(1)\bigg]\ },
    \end{equation}
    where $\boldsymbol{\pi}(i)$ refers to the item index which lies in the $i$-th position of ranking list $\boldsymbol{\pi}$, and $\boldsymbol{\pi}^{-1}[i]$ indicates the position of ranking list $\boldsymbol{\pi}$ which belongs to the item $i$. If it holds that $\boldsymbol{\pi}^t_{\mathcal{A}}(1) = \boldsymbol{\pi}_{\boldsymbol{\hat{\theta}}}(1)$,
    we have $\textbf{R-Rank}(\boldsymbol{\pi}^t_{\mathcal{A}},\boldsymbol{\pi}_{\boldsymbol{\hat{\theta}}})$ archives its maximum value $1$. The lager $\text{R-Rank}$ value indicates a better purposeful attack result. 
    
    \vspace{0.25cm}
    \noindent\textbf{\textit{Kendall $\tau$ Coefficient (Kendall-$\tau$).}} The Kendall rank correlation coefficient evaluates the degree of similarity between two ranking lists given the same objects. This coefficient depends upon the number of inversions of pairs of objects which would be needed to transform one rank order into the other. Let $\boldsymbol{V}=[n]$ be a set of $n$ candidates and $\boldsymbol{\pi}_1,\ \boldsymbol{\pi}_2$ are two total orders or permutations on $V$, the Kendall $\tau$ coefficient is defined to be 
    \begin{equation}
        d_{\tau}(\boldsymbol{\pi}_1,\ \boldsymbol{\pi}_2) = \frac{2}{n(n-1)}\cdot\vartheta,
    \end{equation}
    where
    \begin{equation}
        \vartheta = \sum_{i=1}^{n-1}\sum_{j=i+1}^n\vartheta\big(\boldsymbol{\pi}^{-1}_1[i],\ \boldsymbol{\pi}^{-1}_1[j],\ \boldsymbol{\pi}^{-1}_2[i],\ \boldsymbol{\pi}^{-1}_2[j]\big)
    \end{equation}
    is the number of different pairs between these two ordered sets $\boldsymbol{\pi}_1, \boldsymbol{\pi}_2$ as
    \begin{equation*}
        \begin{aligned}
            & & &\ \ \vartheta\big(\boldsymbol{\pi}^{-1}_1[i],\ \boldsymbol{\pi}^{-1}_1[j],\ \boldsymbol{\pi}^{-1}_2[i],\ \boldsymbol{\pi}^{-1}_2[j]\big)\\[5pt]
            & &=& 
            \left\{
            \begin{array}{rl}
                 1, & \text{if}\ \big(\boldsymbol{\pi}^{-1}_1[i]-\boldsymbol{\pi}^{-1}_1[j]\big)\big(\boldsymbol{\pi}^{-1}_2[i]-\boldsymbol{\pi}^{-1}_2[j]\big)>0,\\[7.5pt] 
                -1, & \text{if}\ \big(\boldsymbol{\pi}^{-1}_1[i]-\boldsymbol{\pi}^{-1}_1[j]\big)\big(\boldsymbol{\pi}^{-1}_2[i]-\boldsymbol{\pi}^{-1}_2[j]\big)<0,\\[7.5pt] 
                 0, & \text{otherwise}.
            \end{array}
            \right.
        \end{aligned}
    \end{equation*}
    Kendall $\tau$ coefficient counts the number of inconsistent comparisons between two rank orders. Lager Kendall-$\tau$ value indicates a better purposeful attack result. If $d_{\tau}(\boldsymbol{\pi}_1,\ \boldsymbol{\pi}_2)=1$, we have $\boldsymbol{\pi}_1=\boldsymbol{\pi}_2$.

\begin{figure}[h!]
    \centering
    \includegraphics[width=0.45\textwidth]{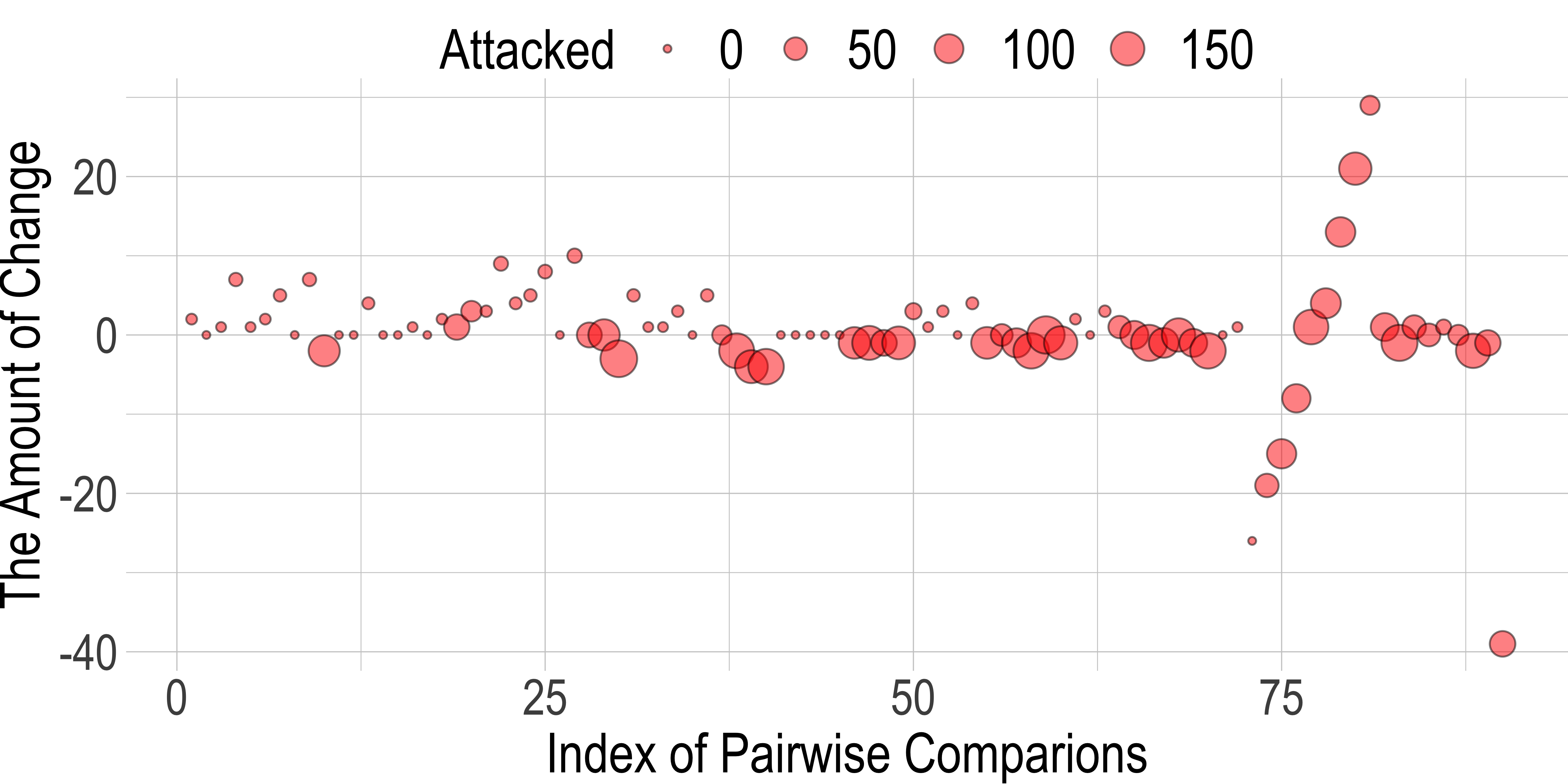}
    \caption{The modification of pairwise comparisons after attack.  Here the victim is \textbf{HodgeRank} and $\boldsymbol{\pi}^2_{\mathcal{A}}(1)=9$.the goal of the adversary is $\{9,10,8,7,6,5,4,3,2,1\}$ and the original aggregated result is $\{10,9,8,7,6,5,4,3,2,1\}$. The horizontal axis lists all possible pairwise comparisons. We index them as follows: No.$(i-1)*9+1$ to $i*9$ are the comparisons $\{(i,j)\ |\ j\in[10],j\neq i\}$. The size of red dot stands for the number of pairwise comparisons after attack. The larger the radius of the red dot, the more the number of the perturbed pairwise comparisons. The vertical axis shows the changes in each comparison. The positive/negative vertical coordinate of red dot stands for the increment/decrement.}\label{fig:hodge_data_1}
\end{figure}

\subsection{Simulated Study}
We validate the proposed target attack strategies against \textbf{Hodgerank} and \textbf{RankCentrality} on simulated data. Here the pairwise comparisons are generated as follows. First we construct a complete graph $\boldsymbol{G}=(\boldsymbol{V},\boldsymbol{E})$ where $\boldsymbol{V}=[10]$. Then the latent preference score is assigned to each candidate/vertex of $\boldsymbol{V}$ and the true ranking is obtained by these scores. Next, we randomly sample the pairwise comparisons $\boldsymbol{w}^*$ from $\boldsymbol{V}\times\boldsymbol{V}$ based on the true ranking. Notice that $\boldsymbol{w}^*$ could contain the comparisons which are inconsistent with the true ranking. Even with the same true ranking, the pairwise comparisons could be different. It means that the same $t$ can lead to induce different $\boldsymbol{\pi}_{\boldsymbol{\bar{\theta}}}(t)$ in \eqref{eq:target_constrcut}. To ensure the statistical stability, the attack procedures are conducted $50$ times for the same target $t$ with different comparisons. 

\vspace{0.25cm}
\noindent\textbf{Comparative Results. }The reciprocal rank and Kendall-$\tau$ coefficient between the perturbed ranking ${\boldsymbol{\pi}}_{\boldsymbol{\hat{\theta}}}$ and the target ranking $\boldsymbol{\pi}^{t}_{\mathcal{A}}$ are illustrated in Fig.~\ref{fig:simulate}. Here the ${\boldsymbol{\pi}}_{\boldsymbol{\hat{\theta}}}$ is obtained by sorting the latent preference scores $\boldsymbol{\hat{\theta}}$ aggregated by \textbf{HodgeRank} or \textbf{RankCentrality} (\textbf{Algorithm} \ref{alg:attack_hodge} or \ref{alg:attack_spectral}). The results indicate that all of our approaches exhibit higher success rate due to the parsimonious mechanism. Only the proposed methods stably lead $\textbf{R-Rank}(\boldsymbol{\pi}^t_{\mathcal{A}},\boldsymbol{\pi}_{\boldsymbol{\hat{\theta}}})$ to be $1$ and get $d_{\tau}(\boldsymbol{\pi}^t_{\mathcal{A}},\boldsymbol{\pi}_{\boldsymbol{\hat{\theta}}})$ close to $1$. However, the irreversible stochastic matrix attack method against \textbf{RankCentrality} would exhibit non-ideal performance with imperfect feedback as Fig. (\ref{fig:spi_rrank_ci}) and (\ref{fig:spi_kendall_ci}). We speculate that this phenomenon comes from the complicated construction of the irreversible stochastic matrix as \eqref{eq:irreversibiletransitionmatrix}. If the adversary only predominates imperfect feedback, he/she would not reconstruct the empirical stochastic matrix through the first step in Algorithm \ref{alg:attack_spectral}. This phenomenon inspires us to develop the ranking aggregation algorithms that can resist target attacks in the future.

\begin{figure}[h!]
    \centering
    \includegraphics[width=0.45\textwidth]{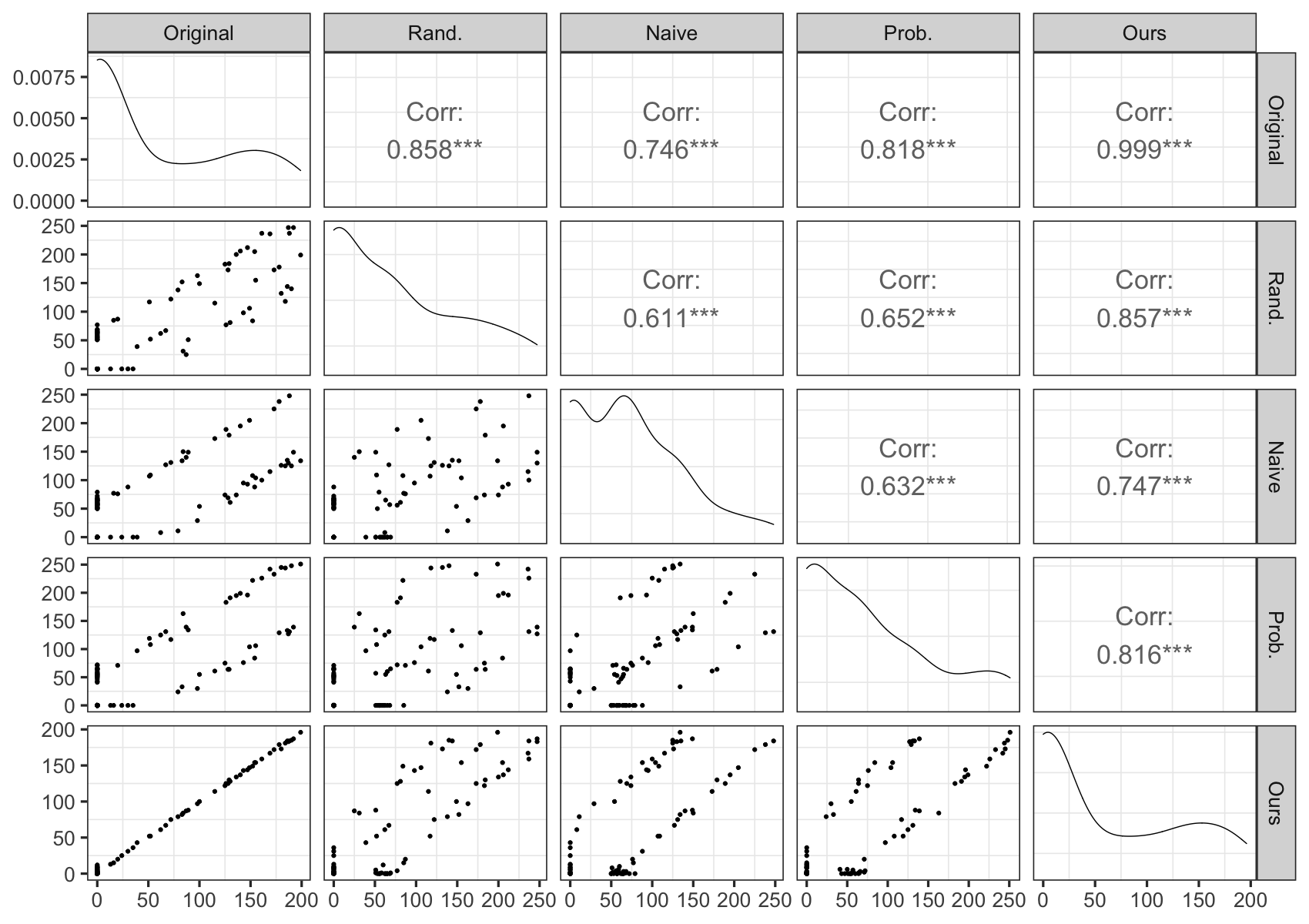}
    \caption{Correlations between the original simulated data and different modified data, where $\boldsymbol{\pi}^2_{\mathcal{A}}(1)=9$. The victim is \textbf{HodgeRank}. The scatter plots in the lower triangle part show a pair of data whose source is labeled in the “gray” regions. The upper left points of each scatter plot belong to the data labeled by the top ``gray'' regions; the lower right points come from the right gay box. In scatter plot, each point represents one type of pairwise comparison and these points are in ascending order of the weight. It is noteworthy that the horizontal axis of the scatter plot also shows the weight but not the index of pairwise comparison. The Pearson correlations of every pair of data are calculated in the upper triangle part and the ``$^{***}$'' stands for the $p$-value is smaller than $0.001$. The diagonal blocks are the data distribution normalized by the total number of pairwise comparisons.}\label{fig:hodge_data_2}
\end{figure}

\begin{table*}
    \footnotesize
    \centering
    \caption{Numeric results of different attack methods on Dublin election data. The {\color{softred}colored} results represent the failure cases in the experiments.}
    \begin{tabular}{@{}c|cccccc|cccccc|cccccc@{}}
        \hline
        \multirow{3}{*}{Target} & \multicolumn{6}{c|}{HodgeRank} & \multicolumn{6}{c|}{Spectral with Revers.} & \multicolumn{6}{c}{Spectral with Irrevers.} \\
                                & \multicolumn{2}{c}{C\&P} & \multicolumn{2}{c}{C\&I} & \multicolumn{2}{c|}{I\&P} & \multicolumn{2}{c}{C\&P} & \multicolumn{2}{c}{C\&I} & \multicolumn{2}{c|}{I\&P} & \multicolumn{2}{c}{C\&P} & \multicolumn{2}{c}{C\&I} & \multicolumn{2}{c}{I\&P} \\
                                & R.r& K.$\tau$& R.r& K.$\tau$& R.r& K.$\tau$& R.r& K.$\tau$& R.r& K.$\tau$& R.r& K.$\tau$& R.r& K.$\tau$& R.r& K.$\tau$& R.r& K.$\tau$\\ \hline
        $\scriptstyle\hat{\boldsymbol{\pi}}(n-1)$ & 1.00 & 1.00 & 1.00 & 1.00 & 1.00 & 1.00 & 1.00 & 1.00 & 1.00  & 1.00 & 1.00 & 1.00 & \color{softred}{0.33} & 0.91 & {\color{softred}0.33} & 0.34 & {\color{softred}0.25} & 0.58 \\ 
        $\scriptstyle\hat{\boldsymbol{\pi}}(n-2)$ & 1.00 & 1.00 & 1.00 & 1.00 & 1.00 & 1.00 & 1.00 & 1.00 & 1.00  & 1.00 & 1.00 & 1.00 & \color{softred}{0.25} & 0.58 & {\color{softred}0.33} & 0.27 & {\color{softred}0.25} & 0.58 \\
        $\scriptstyle\hat{\boldsymbol{\pi}}(n-3)$ & 1.00 & 1.00 & 1.00 & 1.00 & 1.00 & 1.00 & 1.00 & 1.00 & 1.00  & 1.00 & 1.00 & 1.00 & \color{softred}{0.33} & 0.56 & {\color{softred}0.33} & 0.27 & {\color{softred}0.33} & 0.56 \\
        $\scriptstyle\hat{\boldsymbol{\pi}}(n-4)$ & 1.00 & 1.00 & 1.00 & 1.00 & 1.00 & 1.00 & 1.00 & 1.00 & 1.00  & 1.00 & 1.00 & 1.00 & \color{softred}{0.33} & 0.91 & {\color{softred}0.33} & 0.31  & {\color{softred}0.33} & 0.91 \\
        \hline
    \end{tabular}
    \label{tab:duchi}
\end{table*}
\begin{table*}
    \footnotesize
    \centering
    \caption{Numeric results of different attack methods on human age data. The {\color{softred}colored} results represent the failure cases in the experiments.}
    \begin{tabular}{@{}c|cccccc|cccccc|cccccc@{}}
        \hline
        \multirow{3}{*}{Target} & \multicolumn{6}{c|}{HodgeRank} & \multicolumn{6}{c|}{Spectral with Revers.} & \multicolumn{6}{c}{Spectral with Irrevers.} \\
                                & \multicolumn{2}{c}{C\&P} & \multicolumn{2}{c}{C\&I} & \multicolumn{2}{c|}{I\&P} & \multicolumn{2}{c}{C\&P} & \multicolumn{2}{c}{C\&I} & \multicolumn{2}{c|}{I\&P} & \multicolumn{2}{c}{C\&P} & \multicolumn{2}{c}{C\&I} & \multicolumn{2}{c}{I\&P} \\
                                & R.r& K.$\tau$& R.r& K.$\tau$& R.r& K.$\tau$& R.r& K.$\tau$& R.r& K.$\tau$& R.r& K.$\tau$& R.r& K.$\tau$& R.r& K.$\tau$& R.r& K.$\tau$\\ \hline
        $\scriptstyle\hat{\boldsymbol{\pi}}(n-1)$ & 1.00 & 0.20 & 1.00 & 0.16 & 1.00 & 0.22 & 1.00 & 0.68 & 1.00  & 0.04  & 1.00 & 0.60  & 1.00  & 0.07 & {\color{softred}0.33} & 0.43 & {\color{softred}0.25} & 0.56 \\ 
        $\scriptstyle\hat{\boldsymbol{\pi}}(n-2)$ & 1.00 & 0.34 & 1.00 & 0.09 & 1.00 & 0.22 & 1.00 & 0.52 & {\color{softred}0.33}  & 0.006 & 1.00 & 0.29  & 1.00  & 0.15 & {\color{softred}0.25} & 0.23 & {\color{softred}0.33} & 0.46 \\
        $\scriptstyle\hat{\boldsymbol{\pi}}(n-3)$ & 1.00 & 0.16 & 1.00 & 0.01 & 1.00 & 0.21 & 1.00 & 0.54 & 1.00  & 0.09  & 1.00 & 0.30  & 1.00  & 0.11 & {\color{softred}0.25} & 0.31 & {\color{softred}0.25} & 0.34 \\
        $\scriptstyle\hat{\boldsymbol{\pi}}(n-4)$ & 1.00 & 0.26 & \color{softred}{0.20} & 0.12 & 1.00 & 0.06 & 1.00 & 0.56 & {\color{softred}0.50}  & -0.03 & 1.00 & 0.57  & 1.00  & 0.12 & {\color{softred}0.25} & 0.02 & {\color{softred}0.25} & 0.08 \\
        $\scriptstyle\hat{\boldsymbol{\pi}}(n-5)$ & 1.00 & 0.21 & 1.00 & 0.09 & 1.00 & 0.14 & 1.00 & 0.56 & 1.00  & 0.16  & 1.00 & 0.29  & 1.00  & 0.16 & {\color{softred}0.25} & 0.14 & {\color{softred}0.20} & 0.42 \\ \hline
    \end{tabular}
    \label{tab:age}
\end{table*}

\noindent \textbf{Visualization. }We then visualize the attacked data distribution $\boldsymbol{\hat{w}}_k$ of the proposed method with different targets in Fig. \ref{fig:data}. From the same original data $\boldsymbol{w}^*_k$, the proposed methods make efficient manipulation with specific purposes. We observe that the bins which represent the target winning the other candidates are higher after the attack procedure. Meanwhile, the number of the pairwise comparisons which indicate the target is defeated by the other candidates is decreased. Furthermore, we display these phenomenons with a `'magnifier'' in Fig. \ref{fig:hodge_data_1}. 
Here the horizontal axis lists all possible pairwise comparisons. We index them as follows: No.$(i-1)*9+1$ to $i*9$ are the comparisons $\{(i,j)\ |\ j\in[10],j\neq i\}$. The size of red dots stands for the number of pairwise comparisons after attack. The larger the radius of the red dot, the more the number of the perturbed pairwise comparisons. The vertical axis shows the changes in each comparison. The positive/negative vertical coordinate of red dot stands for the increment/decrement. The pairwise comparisons whose indices range from $73$ to $81$ are involved with the target candidate $t=9$. We can see that the numbers of No. $77$ -- $81$ comparisons are increased after the attack procedure; meanwhile, the number of No. $90$ ($10\succ 9$) is decreased by $40$. Moreover, all comparisons which stand for the original top-$1$ candidates are defeated by the other candidates (No. $9,\ 18,\ \dots,\ 72$) which are also reduced by the proposed methods. The manipulation operation on the pairwise comparisons of the adversary naturally leads to the following questions: \textit{is such a modification of the data easy to detect?} We answer this question by Fig. \ref{fig:hodge_data_2}. We plot each pair of poisoned data by different methods on the left part (lower triangle part) and calculate the Pearson correlation on the right part (upper triangle part). The scatter plots in the lower triangle part show a pair of data whose source is labeled in the “gray” regions. The upper left points of each scatter plot belongs to the data labeled by the top gay box; the lower right points come from the right gay box. In scatter plot, each point represents one type of pairwise comparison and these points are in ascending order of the weight. The data distribution normalized by the total number of pairwise comparisons is available on the diagonal. The Pearson correlation coefficient between the original data and the proposed method is $0.999$, and ``***'' represents that the $p$-value is smaller than $0.001$. This result tells us that the minor manipulation of the original data can make the ranking aggregation result arbitrarily arranged with the specified purposes.

\subsection{Human Age}
\textbf{Description. }$30$ images from human age dataset FGNET\footnote{\url{https://yanweifu.github.io/FG_NET_data/}} are annotated by a group of volunteer users on a crowdsourcing platform\footnote{\url{http://www.chinacrowds.com/}}. The ground-truth age ranking is known to us. The annotator is presented with two images and given a binary choice of which one is older. Totally, we obtain $8,017$ pairwise comparisons from $94$ annotators.
\vspace{0.25cm}

\noindent\textbf{Comparative Results. } It is worth mentioning that this real-world data has a high percentage of outliers (about $20\%$ of all comparisons conflict with the correct age ranking). The proposed methods against \textbf{HodgeRank} and \textbf{RankCentrality} with reversible stochastic transition matrix still show promise attack effect as Table \ref{tab:age}. An irreversible stochastic matrix would be failed except the complete information and perfect feedback scenario. These observations further inspires us that the spectral ranking with irreversible stochastic matrix has the potential to resist the target attack.

\subsection{Dublin Election}
\textbf{Description. }The Dublin election data set\footnote{\url{http://www.preflib.org/data/election/irish/}} contains a complete record of votes for elections held in county Meath, Dublin, Ireland in 2002. This set contains $64,081$ votes over $14$ candidates. These votes could be a complete or partial list over the candidate set. The ground-truth ranking of $14$ candidates is based on their obtained first preference votes\footnote{\url{https://electionsireland.org/result.cfm?election=2002&cons=178&sort=first}}. The five candidates who receive the most first preference votes will be the winner of the election. We are interested in the top-$5$ performance of the pairwise rank aggregation method. Then these votes are converted into the pairwise comparisons. The total number of comparisons is $652,817$. 

\noindent\textbf{Comparative Results.} It is worth noting that the election result is not obtained by pairwise ranking aggregation. However, the ordered list aggregated from induced comparisons still shows positive correlations with the actual election result. Once $\mathcal{A}$ generates the successful strategy against the \textbf{HodgeRank} and \textbf{RankCentrality}, this attack plan could be adopted to manipulate the election in the real world. Different from the manipulation or strategic voting problem in social choice \cite{brandt2016handbook}, the poisoning attack framework could break the barrier of \cite{DBLP:journals/toct/HemaspaandraHM20} or the defense mechanism for colonel game \cite{8374881} which is an abstraction involving the election of two candidates. The aggregation results of \textbf{HodgeRank} and \textbf{RankCentrality} are totally manipulated by the proposed methods, see Table \ref{tab:duchi}. Attacking the real election systems and evaluating their robustness is an interesting problem in the future.

\section{Conclusion}
We establish the first study of target attacks in the context of ranking aggregation with pairwise comparisons to the best of our knowledge. The attack problem with specified purpose is formulated as the fixed point of the continuous operator. Moreover, a game-theoretic framework is proposed to model the interaction between the adversary and the victim. Two typical algorithms are selected as the victims. The attack strategy against \textbf{HodgeRank} is modeled as the least squared problems with a constraint which derived from the optimal condition of \textbf{HodgeRank}. We prove that the target ranking would be a fixed point in the complete game. When attacking against spectral ranking algorithm like \textbf{RankCentrality}, an inverse eigenvalue problem of the stochastic transition matrix is constructed. Moreover, we adopt minimizing the worst-case asymptotic variance as a constraint of the corresponding inverse eigenvalue problem. Our empirical studies show that the proposed attack methods could achieve the attacker's goal in the sense that the leading candidate of the aggregated ranking list is the designated one by the adversary. 

There are many avenues for further investigation – providing the finite-sample and asymptotic results, characterizing the theoretical performance of the adversarial game with the minimum knowledge and extending our attacks to more general ranking system like election and cryptocurrency. Vulnerability is an interesting question which can help to develop the reliable ranking algorithms in the future.

%









\ifCLASSOPTIONcaptionsoff
  \newpage
\fi



%


\bibliographystyle{plain}
\bibliography{sample}
    \newpage
    \onecolumn

    
    
    \section*{Proof of Theorem 1}
    First we review the definition of the continuous operator games.
    \operatorgame*
    As the optimization problem of each player usually appears in the form of minimizing a objective function with constraints, we focus the case $\mathcal{T}_p:\boldsymbol{S}_p\rightarrow\mathbb{R}$.

    \nashequilibrium*
    Here we follow the treatment of \cite{10.2307/2032478,DBLP:books/daglib/0023252} and summarize the proof sketches as follows: 
    \begin{itemize}
        \item We approximate the original game with a sequence of finite games, which correspond to successively finer discretization of the original game.
        \vspace{0.15cm}
        \item We use Nash’s theorem to produce an equilibrium for each approximation.
        \vspace{0.15cm}
        \item We use the weak topology and the continuity assumptions to show that these converge to an equilibrium of the original game. 
    \end{itemize}

    \subsection*{Closeness of Two Operator Games and $\epsilon$-Equilibrium}
    Let 
    \begin{equation*}
        \mathcal{T}\ =\ \{\mathcal{T}_p\ |\ p\in\boldsymbol{P}\}
    \end{equation*}
    and
    \begin{equation*}
        \tilde{\mathcal{T}}\ =\ \{ \tilde{\mathcal{T}}_p\ |\ p\in\boldsymbol{P}\}
    \end{equation*}
    be two profiles of continuous operators defined on $\boldsymbol{S}$ such that for each $p\in\boldsymbol{P}$, the operators $\mathcal{T}_p:\boldsymbol{S}_p\rightarrow\mathbb{R}^{n}$ and $\tilde{\mathcal{T}}_p:\boldsymbol{S}_p\rightarrow\mathbb{R}^{n}$ are bounded measurable operators. The distance between the operator profiles $\mathcal{T}$ and $\tilde{\mathcal{T}}$ could be defined as
    \begin{equation}
        \underset{p\in\boldsymbol{P}}{{\vphantom{\textbf{\textit{sup}}}}\textbf{\textit{max}}}\ \underset{s\in\boldsymbol{S}}{\textbf{\textit{sup}}}\ \left|\mathcal{T}_p(\boldsymbol{s}) - \tilde{\mathcal{T}}_p(\boldsymbol{s})\right|.
    \end{equation}
    Consider two strategic form games defined by two profiles of continuous operators:
    \begin{equation}
        \label{eq:strategic}
        \begin{aligned}
            & \mathcal{G} &=&\ \ \ \langle\boldsymbol{P},\boldsymbol{S},\{\mathcal{T}_p\} \rangle,\\
            & \tilde{\mathcal{G}} &=&\ \ \ \langle\boldsymbol{P},\boldsymbol{S},\{\tilde{\mathcal{T}}_p\} \rangle.
        \end{aligned}
    \end{equation}
    Suppose that $\boldsymbol{g}$ and $\tilde{\boldsymbol{g}}$ are mixed strategy Nash equilibria of $\mathcal{G}$ and $\tilde{\mathcal{G}}$ correspondingly. It is noteworthy that $\boldsymbol{g}$ needs not be equal to $\tilde{\boldsymbol{g}}$. Even though $\mathcal{T}$ and $\tilde{\mathcal{T}}$ are close enough, $\boldsymbol{g}$ and $\tilde{\boldsymbol{g}}$ would be far apart. However, the following definition gives a case in which the equilibria of $\mathcal{G}$ are ``almost'' the equilibria of $\tilde{\mathcal{G}}$.

    \begin{definition}[$\epsilon$-equilibrium]
        Given $\epsilon\geq 0$, a mixed strategy $g$ is called an $\epsilon$-equilibrium if for all $p\in\boldsymbol{P}$ and $\boldsymbol{s}_p\in\boldsymbol{S}_p$,
        \begin{equation}
            \mathcal{T}_p(\boldsymbol{s}_p,\boldsymbol{g}_{-p})\leq\mathcal{T}_p(\boldsymbol{g}_p,\boldsymbol{g}_{-p})+\epsilon.
        \end{equation}
        If $\epsilon=0$, an $\epsilon$-equilibrium is a Nash equilibrium in the usual sense. 
    \end{definition}

    The following result shows that such $\epsilon$-equilibria have a continuity property across games.

    \begin{proposition}
        \label{pro:1}
        Let $\mathcal{G}$ be a continuous operator game. Assume that there exist two sequences $\{\boldsymbol{g}^k\}$ and $\{\epsilon_k\}$ where $k\in\mathbb{N}$ and
            \begin{equation*}
                \begin{aligned}
                    & \underset{k\rightarrow\infty}{\lim}\ \boldsymbol{g}^k &=&\ \ \ \boldsymbol{g},\\
                    & \underset{k\rightarrow\infty}{\lim}\ \epsilon^k &=&\ \ \ \epsilon.
                \end{aligned}
            \end{equation*}
        For each $k$, $\boldsymbol{g}^k$ is an $\epsilon^k$-equilibrium of $\mathcal{G}$. Then $g$ is an $\epsilon$-equilibrium of $\mathcal{G}$  
    \end{proposition}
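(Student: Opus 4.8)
The plan is to pass the defining inequality of the $\epsilon^k$-equilibrium to the limit one term at a time, using the continuity of the payoff operators together with the compactness of the strategy spaces. Recall that a mixed profile $\boldsymbol{\sigma}=(\boldsymbol{\sigma}_p)_{p\in\boldsymbol{P}}$ is a tuple of Borel probability measures on the compact metric spaces $\boldsymbol{S}_p$, and that the payoff is extended to mixed profiles by integration, $\mathcal{T}_p(\boldsymbol{\sigma})=\int_{\boldsymbol{S}}\mathcal{T}_p(\boldsymbol{s})\,d\boldsymbol{\sigma}(\boldsymbol{s})$, where the integrating measure is the product $\bigotimes_{q}\boldsymbol{\sigma}_q$. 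The convergence $\boldsymbol{g}^k\to\boldsymbol{g}$ is to be read in the weak topology on each factor, i.e. $\boldsymbol{g}^k_q\to\boldsymbol{g}_q$ weakly for every $q\in\boldsymbol{P}$.

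First I would fix an arbitrary player $p$ and an arbitrary pure deviation $\boldsymbol{s}_p\in\boldsymbol{S}_p$. For each $k$, the hypothesis that $\boldsymbol{g}^k$ is an $\epsilon^k$-equilibrium supplies
\begin{equation*}
\mathcal{T}_p(\boldsymbol{s}_p,\boldsymbol{g}^k_{-p})\le \mathcal{T}_p(\boldsymbol{g}^k_p,\boldsymbol{g}^k_{-p})+\epsilon^k.
\end{equation*}
Because $\boldsymbol{s}_p$ is held fixed throughout, I never need uniform control over deviations; it suffices to take $k\to\infty$ in this single scalar inequality and then invoke the arbitrariness of $p$ and $\boldsymbol{s}_p$ at the very end.

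The key step, which I expect to carry the real content, is to show that both sides converge to their analogues evaluated at $\boldsymbol{g}$. Since each $\boldsymbol{S}_q$ is a compact metric space and each $\mathcal{T}_p$ is continuous, $\mathcal{T}_p$ is a bounded continuous function on the compact product $\boldsymbol{S}$. Weak convergence of the marginals $\boldsymbol{g}^k_q\to\boldsymbol{g}_q$ lifts to weak convergence of the finite product measures $\bigotimes_q\boldsymbol{g}^k_q\to\bigotimes_q\boldsymbol{g}_q$, whence, by the very definition of weak convergence against bounded continuous test functions, $\mathcal{T}_p(\boldsymbol{g}^k_p,\boldsymbol{g}^k_{-p})\to\mathcal{T}_p(\boldsymbol{g}_p,\boldsymbol{g}_{-p})$. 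The left-hand side is handled identically after freezing the $p$-th coordinate: the map $\boldsymbol{s}_{-p}\mapsto\mathcal{T}_p(\boldsymbol{s}_p,\boldsymbol{s}_{-p})$ is again bounded continuous, and $\boldsymbol{g}^k_{-p}\to\boldsymbol{g}_{-p}$ weakly, so $\mathcal{T}_p(\boldsymbol{s}_p,\boldsymbol{g}^k_{-p})\to\mathcal{T}_p(\boldsymbol{s}_p,\boldsymbol{g}_{-p})$.

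Finally I would combine these limits. Letting $k\to\infty$ and using $\epsilon^k\to\epsilon$, the weak inequality is preserved in the limit, yielding $\mathcal{T}_p(\boldsymbol{s}_p,\boldsymbol{g}_{-p})\le\mathcal{T}_p(\boldsymbol{g}_p,\boldsymbol{g}_{-p})+\epsilon$. Since $p\in\boldsymbol{P}$ and $\boldsymbol{s}_p\in\boldsymbol{S}_p$ were arbitrary, $\boldsymbol{g}$ satisfies the definition of an $\epsilon$-equilibrium. The main obstacle is purely the measure-theoretic justification of the weak-continuity of the payoff functional, namely that weak convergence of the player marginals lifts to the product measure and that continuity plus compactness (hence boundedness) of $\mathcal{T}_p$ legitimizes passing the limit under the integral; once this is in place, the algebraic limit of the inequality is immediate.
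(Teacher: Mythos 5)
Your argument is correct and follows essentially the same route as the paper: fix a player $p$ and a deviation $\boldsymbol{s}_p$, write the $\epsilon^k$-equilibrium inequality, and pass to the limit using weak convergence of the mixed profiles together with continuity and boundedness of $\mathcal{T}_p$ on the compact strategy space. Your write-up is in fact slightly more careful than the paper's, since you explicitly justify that weak convergence of the player marginals lifts to the product measure, a step the paper's proof leaves implicit.
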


    \begin{proof}
        For all $p\in\boldsymbol{P}$ and $\boldsymbol{s}_p\in\boldsymbol{S}_p$, we have
        \begin{equation}
            \mathcal{T}_p(\boldsymbol{s}_p,\boldsymbol{g}^k_{-p})\leq\mathcal{T}_p(\boldsymbol{g}^k_p)+\epsilon.
        \end{equation}
        Taking the limit as $k\rightarrow\infty$ in the preceding relation, and using the continuity of the operator together with the convergence of probability distributions under weak topology
        \begin{equation*}
            \underset{k\rightarrow\infty}{\lim} \int_{\boldsymbol{s}_p\in\boldsymbol{S}_p} f(\boldsymbol{s}_p)d\boldsymbol{g}^k_p(\boldsymbol{s}_p) = \int_{\boldsymbol{s}_p\in\boldsymbol{S}_p} f(\boldsymbol{s}_p)d\boldsymbol{g}_p(\boldsymbol{s}_p),
        \end{equation*}
        for every bounded continuous function $f:\boldsymbol{S}\rightarrow\mathbb{R}$, we obtain
        \begin{equation}
            \mathcal{T}_p(\boldsymbol{s}_p,\boldsymbol{g}_{-p})\leq\mathcal{T}_p(\boldsymbol{g}_p)+\epsilon
        \end{equation}
        which establishes the desired results.
    \end{proof}

    We next define formally the closeness of two strategic form games.
    \begin{definition}
        Let $\mathcal{G}$ and $\tilde{\mathcal{G}}$ be two strategic form games as \eqref{eq:strategic}. Assume that the operators $\mathcal{T}$ and $\tilde{\mathcal{T}}$ are measurable and bounded. We say that $\tilde{\mathcal{G}}$ is a $\varphi$-approximation of $\mathcal{G}$ if $\tilde{\mathcal{G}}$ satisfies
        \begin{equation}
            |\mathcal{T}_p(\boldsymbol{s})-\tilde{\mathcal{T}}_p(\boldsymbol{s})|\leq\varphi,\ \forall\ p\in\boldsymbol{P},\ \boldsymbol{s}\in\boldsymbol{S}.
        \end{equation}
    \end{definition}

    The next proposition relates $\epsilon$-equilibrium of the approximation.

    \begin{proposition}
        \label{pro:2}
        If $\tilde{\mathcal{G}}$ is a $\varphi$-approximation of $\mathcal{G}$ and $\boldsymbol{g}$ is an $\epsilon$-equilibrium of $\tilde{\mathcal{G}}$, $\boldsymbol{g}$ would be an $(\epsilon+2\varphi)$-equilibrium of $\mathcal{G}$.
    \end{proposition}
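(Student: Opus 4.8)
The plan is to establish the claim by a direct three-term chain of inequalities, sandwiching the payoff operators of $\mathcal{G}$ between those of $\tilde{\mathcal{G}}$ via the $\varphi$-approximation bound, and invoking the $\epsilon$-equilibrium hypothesis in the middle. Fix an arbitrary player $p\in\boldsymbol{P}$ and an arbitrary pure deviation $\boldsymbol{s}_p\in\boldsymbol{S}_p$. The goal is to bound the deviation payoff $\mathcal{T}_p(\boldsymbol{s}_p,\boldsymbol{g}_{-p})$ above by $\mathcal{T}_p(\boldsymbol{g}_p,\boldsymbol{g}_{-p})+\epsilon+2\varphi$, since establishing this for every $p$ and every $\boldsymbol{s}_p$ is exactly the definition of an $(\epsilon+2\varphi)$-equilibrium of $\mathcal{G}$.

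First I would replace $\mathcal{T}_p(\boldsymbol{s}_p,\boldsymbol{g}_{-p})$ by $\tilde{\mathcal{T}}_p(\boldsymbol{s}_p,\boldsymbol{g}_{-p})+\varphi$ using the approximation bound. Next, since $\boldsymbol{g}$ is an $\epsilon$-equilibrium of $\tilde{\mathcal{G}}$, the deviation payoff $\tilde{\mathcal{T}}_p(\boldsymbol{s}_p,\boldsymbol{g}_{-p})$ is at most $\tilde{\mathcal{T}}_p(\boldsymbol{g}_p,\boldsymbol{g}_{-p})+\epsilon$. Finally, I would apply the approximation bound a second time to pass from $\tilde{\mathcal{T}}_p(\boldsymbol{g}_p,\boldsymbol{g}_{-p})$ back to $\mathcal{T}_p(\boldsymbol{g}_p,\boldsymbol{g}_{-p})+\varphi$. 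Concatenating these three steps accumulates the two $\varphi$ terms together with the $\epsilon$ from the equilibrium hypothesis, yielding the asserted bound; the two appearances of $\varphi$ (one on each ``side'' of the switch between games) are the source of the factor $2$.

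The one point requiring care — and the only genuine obstacle — is that the $\varphi$-approximation is stated as a pointwise bound over pure strategy profiles $\boldsymbol{s}\in\boldsymbol{S}$, whereas the quantities in the chain involve mixed strategies for the opponents (and for $p$ itself in the reference term). To justify the two applications of the bound, I would observe that a mixed-strategy payoff is the integral of the pure-strategy payoff against the product of the players' mixing distributions, exactly as in the weak-topology formula of Proposition~\ref{pro:1}. Since $|\mathcal{T}_p-\tilde{\mathcal{T}}_p|\leq\varphi$ holds pointwise in $\boldsymbol{s}$ and each mixing distribution is a probability measure of total mass one, the bound survives integration, giving $|\mathcal{T}_p(\cdot)-\tilde{\mathcal{T}}_p(\cdot)|\leq\varphi$ on every mixed profile. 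This is precisely what the chain requires, and the measurability and boundedness of $\mathcal{T}$ and $\tilde{\mathcal{T}}$ assumed in the definition of $\varphi$-approximation guarantee the integrals are well defined. Beyond this integration remark, the result is a soft consequence of the triangle inequality and needs no further machinery.
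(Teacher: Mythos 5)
Your proof is correct and follows essentially the same route as the paper: the three-term telescoping decomposition of $\mathcal{T}_p(\boldsymbol{s}_p,\boldsymbol{g}_{-p})-\mathcal{T}_p(\boldsymbol{g})$ bounded by $\varphi+\epsilon+\varphi$ via the triangle inequality. Your additional remark that the pointwise pure-strategy bound $|\mathcal{T}_p-\tilde{\mathcal{T}}_p|\leq\varphi$ survives integration against the mixing distributions is a detail the paper leaves implicit, and it is correctly justified.
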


    \begin{proof}
        It holds that
        \begin{equation*}
            \begin{aligned}
                & & &\ \ \mathcal{T}_p(\boldsymbol{s}_p,\boldsymbol{g}_{-p})-\mathcal{T}_p(\boldsymbol{g})\\
                & &=&\ \ \mathcal{T}_p(\boldsymbol{s}_p,\boldsymbol{g}_{-p})-\tilde{\mathcal{T}}_p(\boldsymbol{s}_p,\boldsymbol{g}_{-p})+\tilde{\mathcal{T}}_p(\boldsymbol{s}_p,\boldsymbol{g}_{-p})-\tilde{\mathcal{T}}(\boldsymbol{g})+\tilde{\mathcal{T}}(\boldsymbol{g})-\mathcal{T}_p(\boldsymbol{g})\\
                & &\leq&\ \ \epsilon + 2\varphi.
            \end{aligned}
        \end{equation*}
    \end{proof}

    The next proposition shows that we can approximate a continuous operator game with an essentially finite game to an arbitrary degree of accuracy.

    \begin{proposition}
        \label{pro:3}
        For any continuous game $\mathcal{G}$ and any $\varphi>0$, there exists an “essentially finite” game which is a $\varphi$-approximation of $\mathcal{G}$.
    \end{proposition}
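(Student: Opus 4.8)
The plan is to exploit the two structural hypotheses of a continuous operator game — that each strategy space $\boldsymbol{S}_p$ is a non-empty compact metric space and that each payoff operator $\mathcal{T}_p$ is continuous — to manufacture a piecewise-constant approximation. First I would upgrade continuity to uniform continuity: since $\boldsymbol{S}=\prod_{p\in\boldsymbol{P}}\boldsymbol{S}_p$ is a finite product of compact metric spaces it is itself compact metric, and a continuous real-valued function on a compact metric space is automatically uniformly continuous. Hence, for the prescribed $\varphi>0$, there is a $\delta>0$ such that $d(\boldsymbol{s},\boldsymbol{s}')<\delta$ forces $|\mathcal{T}_p(\boldsymbol{s})-\mathcal{T}_p(\boldsymbol{s}')|\le\varphi$ simultaneously for every player $p\in\boldsymbol{P}$ (taking the minimum of the finitely many moduli $\delta_p$).

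Next I would discretize each strategy space. For each $p$, cover the compact set $\boldsymbol{S}_p$ by finitely many balls of radius below $\delta/(2|\boldsymbol{P}|)$ and refine this cover into a finite \emph{measurable} partition $\{\boldsymbol{S}_p^{1},\dots,\boldsymbol{S}_p^{k_p}\}$ whose cells have diameter under $\delta/|\boldsymbol{P}|$; in each cell fix a representative point $\boldsymbol{s}_p^{i}$. The products of these cells partition $\boldsymbol{S}$ into finitely many measurable boxes, and with a product metric that adds the coordinatewise distances, any two points lying in a common box are within $\delta$ of one another — in particular within $\delta$ of the box's representative profile $\boldsymbol{s}^{\mathrm{rep}}$ assembled from the chosen $\boldsymbol{s}_p^{i}$.

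I would then define the approximating game $\tilde{\mathcal{G}}=\langle\boldsymbol{P},\boldsymbol{S},\{\tilde{\mathcal{T}}_p\}\rangle$ by declaring $\tilde{\mathcal{T}}_p$ constant on each box, equal to the value of $\mathcal{T}_p$ at that box's representative. By construction $\tilde{\mathcal{T}}_p$ is measurable, bounded (the continuous $\mathcal{T}_p$ attains its bounds on the compact $\boldsymbol{S}$), and constant on each cell of a single fixed finite product partition, so $\tilde{\mathcal{G}}$ is ``essentially finite'' in the required sense. The approximation bound is then immediate: for arbitrary $\boldsymbol{s}$, letting $\boldsymbol{s}^{\mathrm{rep}}$ be the representative of the box containing $\boldsymbol{s}$, uniform continuity gives $|\mathcal{T}_p(\boldsymbol{s})-\tilde{\mathcal{T}}_p(\boldsymbol{s})|=|\mathcal{T}_p(\boldsymbol{s})-\mathcal{T}_p(\boldsymbol{s}^{\mathrm{rep}})|\le\varphi$ for all $p$, which is exactly the $\varphi$-approximation inequality of the preceding definition.

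The compactness-and-uniform-continuity machinery is routine; the one point that needs care — and the main conceptual obstacle — is pinning down what ``essentially finite'' should mean and reconciling it with the notation of the paper, where $\mathcal{T}_p$ is written both as a map on $\boldsymbol{S}_p$ and, in the $\epsilon$-equilibrium definition, as a map $\mathcal{T}_p(\boldsymbol{s}_p,\boldsymbol{g}_{-p})$ of the full profile. I would resolve this by treating the payoff as a function on the product $\boldsymbol{S}$ and requiring only that $\tilde{\mathcal{T}}_p$ be constant on the cells of one finite measurable product partition, so that the continuum of pure strategies collapses to finitely many equivalence classes and Nash's finite-game theorem can be invoked at the next step. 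Ensuring the cells are genuinely measurable rather than merely an open cover, and that the product metric controls the joint diameter, are the remaining details, and they follow from the separability of compact metric spaces.
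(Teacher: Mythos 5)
Your proposal follows essentially the same route as the paper's own proof: uniform continuity of the payoffs on the compact product space, a finite cover of each $\boldsymbol{S}_p$ by small cells with chosen representatives, and a piecewise-constant $\tilde{\mathcal{T}}_p$ equal to the payoff at the representative profile, with the $\varphi$-bound falling out of the modulus of continuity. If anything you are more careful than the paper, which asserts ``without loss of generality these balls are disjoint'' where your refinement of the open cover into a genuine measurable partition and your control of the product-metric diameter are what actually make that step rigorous.
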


    \begin{proof}
        Since $(\boldsymbol{S}, d(\cdot,\cdot))$ is a compact metric space, $\{\mathcal{T}_p\}$ are uniformly continuous, \textit{i.e.}, for all $\varphi>0$, there exists some $\epsilon>0$ such that
        \begin{equation*}
            \mathcal{T}_p(\boldsymbol{u}) - \mathcal{T}_p(\boldsymbol{v})\leq \varphi
        \end{equation*}
        for all $\boldsymbol{u},\boldsymbol{v}\in\boldsymbol{S}$ and $d(\boldsymbol{u},\boldsymbol{v})\leq\epsilon$. Moreover,  we can cover the compact metric space $(\boldsymbol{S}_p, d(\cdot,\cdot))$ with finitely many open balls $\{\boldsymbol{U}^j_p\}$, whose radius is less than $\epsilon$. Without loss of generality, these balls are disjoint and nonempty. Choose an $\boldsymbol{s}^j_p\in\boldsymbol{U}^j_p$ for each $p$ and $j$. We define the “essentially finite” game $\tilde{\mathcal{G}}$ with the continuous operator $\tilde{\mathcal{T}}_p$ as
        \begin{equation}
            \tilde{\mathcal{T}}_p(\boldsymbol{s}) = \tilde{\mathcal{T}}_p(\boldsymbol{s}^j_1,\dots,\boldsymbol{s}^j_{|\boldsymbol{P}|}),\ \ \ \ \forall\ \ \boldsymbol{s}\in\boldsymbol{U}^j = \prod_{k=1}^{|\boldsymbol{P}|}\boldsymbol{U}^j_k.
        \end{equation}
        Then it holds that
        \begin{equation}
            \left|\tilde{\mathcal{T}}_p(\boldsymbol{s})-\mathcal{T}_p(\boldsymbol{s})\right|\leq \varphi,\ \ \ \forall\ \boldsymbol{s}\in\boldsymbol{S},\ p\in\boldsymbol{P}
        \end{equation}
        since $d(\boldsymbol{s}_p,\boldsymbol{s}^j_p)\leq \epsilon$ for all $j$. Now we have the desired results.
    \end{proof}

    \subsection*{Proof of Theorem \ref{thm:nash_equilibrium}}
    Let $\{\varphi^k\}$ be a sequence with
    \begin{equation*}
        \underset{k\rightarrow\infty}{\lim}\ \varphi^k = 0.
    \end{equation*}
    \begin{itemize}
        \item For each $\varphi^k$, there exist an ``essentially finite'' $\varphi^k$-approximation $\mathcal{G}^k$ of $\mathcal{G}$ by Proposition \ref{pro:3}.
        \item Since $\mathcal{G}^k$ is ``essentially finite'' for each $k$, it has a $0$-equilibrium $\boldsymbol{g}^k$ by the Nash's Theorem \cite{Nash48,10.2307/1969529}.
        \item By Proposition \ref{pro:2}, we know that $\boldsymbol{g}^k$ is a $2\varphi^k$-equilibrium of $\mathcal{G}$.
        \item As $\boldsymbol{S}$ is a compact space, $\{\boldsymbol{g}^k\}$ has a convergent sub-sequence. Without loss of generality, we assume that 
        \begin{equation*}
            \underset{k\rightarrow\infty}{\lim}\ \boldsymbol{g}^k =\boldsymbol{g}.
        \end{equation*}
        \item By the convergence of $\{\varphi^k\}$ and $\{\boldsymbol{g}^k\}$ with Proposition \ref{pro:1}, it holds that $\boldsymbol{g}$ is a $0$-equilibrium of $\mathcal{G}$.
    \end{itemize}

    \subsection*{Proof of Theorem \ref{thm:hodge_nash_equilibrium}}

    \fixpointhodge*

    \begin{proof}
        First we introduce the slack variable and reformulate the continuous operator of adversary $\mathcal{T}_{\mathcal{A}}$ \eqref{opt:com_info} as 
        \begin{equation}
            \label{opt:com_info_ref}
            \begin{aligned}
                & &\underset{\boldsymbol{w},\ \ \boldsymbol{x}}{\textbf{\textit{minimize}}}&\ \ \frac{1}{2}\|\boldsymbol{w}-\boldsymbol{w}^*\|^2_2 + g(\boldsymbol{x})\\[5pt]
                & &\textbf{\textit{subject to}}&\ \ \boldsymbol{B}\boldsymbol{w} = \boldsymbol{0},\ \ \boldsymbol{w} = \boldsymbol{x},
            \end{aligned}
        \end{equation}
        where 
        \begin{equation}
            \boldsymbol{B} = \boldsymbol{C}^\top\textbf{\textit{diag}}\big(\big(\boldsymbol{C}+2\lambda_0\boldsymbol{I}\big)\boldsymbol{\theta}_{\mathcal{A}}-\boldsymbol{y}\big),
        \end{equation}
        and $g$ is an indicator function as
        \begin{equation}
            g(\boldsymbol{x}) = \left\{
            \begin{array}{rl}
                0, & \text{if}\ \boldsymbol{x}\geq 0;\\[5pt]
                \infty, & \text{otherwise.}
            \end{array}
            \right.
        \end{equation}
        The augmented Lagrangian function of problem \eqref{opt:com_info_ref} is defined by, for any $(\boldsymbol{\mu}, \boldsymbol{w}, \boldsymbol{x})\in\mathbb{R}^{n+N}\times\mathbb{R}^N\times\mathbb{R}^N$,
        \begin{equation}
            L_{\rho}(\boldsymbol{w},\boldsymbol{x},\boldsymbol{\mu})=f(\boldsymbol{w})+g(\boldsymbol{x})+\big\langle\boldsymbol{\mu}, \mathcal{B}^*\boldsymbol{w}+\mathcal{I}^*\boldsymbol{x}\big\rangle + \frac{\rho}{2}\big\|\mathcal{B}^*\boldsymbol{w}+\mathcal{I}^*\boldsymbol{x}\big\|^2,
        \end{equation}
        where $f(\boldsymbol{w}) = \frac{1}{2}\|\boldsymbol{w}-\boldsymbol{w}^*\|^2_2$, $\rho>0$ is a given penalty parameter, 
        \begin{equation}
            \mathcal{B}* = \begin{pmatrix}
            \boldsymbol{B}\\          
            \boldsymbol{I}_N
            \end{pmatrix},\ \ 
            \mathcal{I}* = \begin{pmatrix}
            \boldsymbol{0}\\          
            -\boldsymbol{I}_N
            \end{pmatrix}
        \end{equation}
        and $\mathcal{B}^*$ and $\mathcal{I}^*$ are the adjoint of $\mathcal{B}$ and $\mathcal{I}$ respectively. Suppose that the \textbf{sPADMM} \cite{doi:10.1137/110853996} scheme takes the following iteration scheme for $k = 0, 1, \dots$,
        \begin{equation}
            \left\{
            \begin{array}{l}
            \boldsymbol{w}^{k+1}   = \underset{\boldsymbol{w}}{\textbf{\textit{arg min}}}\ \bigg\{L_{\rho}(\boldsymbol{w},\boldsymbol{x},\boldsymbol{\mu})+\frac{1}{2}\|\boldsymbol{w}-\boldsymbol{w}^k\|^2_{\mathcal{H}}\bigg\}, \\[10pt] 
            \boldsymbol{x}^{k+1}   = \underset{\boldsymbol{x}}{\textbf{\textit{arg min}}}\ \bigg\{L_{\rho}(\boldsymbol{w},\boldsymbol{x},\boldsymbol{\mu})+\frac{1}{2}\|\boldsymbol{x}-\boldsymbol{x}^k\|^2_{\mathcal{Q}}\bigg\}, \\[15pt]          
            \boldsymbol{\mu}^{k+1} = \boldsymbol{\mu}^{k} + \eta\rho(\mathcal{B}^*\boldsymbol{w}+\mathcal{I}^*\boldsymbol{x}),
            \end{array}
            \right.
        \end{equation}
        where $\mathcal{H}:\mathbb{R}^N\rightarrow\mathbb{R}^N$ and $\mathcal{Q}:\mathbb{R}^N\rightarrow\mathbb{R}^N$ are two self-adjoint positive semi-definite linear operators. The \textbf{sPADMM} scheme with $\mathcal{H}=\boldsymbol{0}$ and $\mathcal{H}=\boldsymbol{0}$ will be \textbf{ADMM} which is adopted as the proposed algorithm. Let $(\boldsymbol{\hat{\mu}},\boldsymbol{\hat{w}},\boldsymbol{\hat{x}})$ be an arbitrary solution to the KKT system \eqref{eq:KKT} of \eqref{opt:com_info_ref}.
        We define the following auxiliary notations
        \begin{equation}
            \label{eq:spADMM}
            \left\{
            \begin{array}{l}
            \boldsymbol{u}^{k}   := -\mathcal{B}[\boldsymbol{\mu}^k+(1-\eta)\rho(\mathcal{B}^*\boldsymbol{w}^k_e+\mathcal{I}^*\boldsymbol{x}^k_e)+\rho\mathcal{I}^*(\boldsymbol{x}^{k-1}-\boldsymbol{x}^k)]-\mathcal{H}(\boldsymbol{w}^{k}-\boldsymbol{w}^{k-1}), \\[15pt] 
            \boldsymbol{v}^{k}   := -\mathcal{I}[\boldsymbol{\mu}^k+(1-\eta)\rho(\mathcal{B}^*\boldsymbol{w}^k_e+\mathcal{I}^*\boldsymbol{x}^k_e)]-\mathcal{Q}(\boldsymbol{x}^{k}-\boldsymbol{x}^{k-1}), \\[15pt]          
            \boldsymbol{\Psi}_{k} := \frac{1}{\eta\rho}\|\boldsymbol{\mu}^k_e\|^2+\|\boldsymbol{w}^k_e\|^2_{\mathcal{H}}+\|\boldsymbol{x}^k_e\|^2_{\mathcal{Q}+\rho\mathcal{B}\mathcal{B}^*},\\[15pt] 
            \boldsymbol{\Phi}_{k} := \boldsymbol{\Psi}_{k} + \|\boldsymbol{x}^k-\boldsymbol{x}^{k-1}\|^2_{\mathcal{Q}}+\textbf{\textit{max}}(1-\eta,1-\eta^{-1})\rho\|\mathcal{B}^*\boldsymbol{w}^{k}_e+\mathcal{I}^*\boldsymbol{x}^k_e\|^2,
            \end{array}
            \right.
        \end{equation}
        where $\boldsymbol{w}^k_e = \boldsymbol{w}^k - \boldsymbol{\hat{\mu}}$. Then the \textbf{KKT} system of \eqref{opt:com_info_ref} is 
        \begin{equation}
            \label{eq:KKT}
            -\mathcal{B}^*\boldsymbol{\mu}\in\partial f(\boldsymbol{w}),\ \ -\mathcal{I}^*\boldsymbol{\mu}\in\partial g(\boldsymbol{x}),\ \ \mathcal{B}^*\boldsymbol{w}+\mathcal{I}^*\boldsymbol{x}=\boldsymbol{0}.
        \end{equation}
        Suppose that $(\boldsymbol{\mu}^{\infty},\boldsymbol{w}^{\infty},\boldsymbol{x}^{\infty})$ is an accumulation point of $\{(\boldsymbol{\mu}^{k},\boldsymbol{w}^{k},\boldsymbol{x}^{k})\}$. Let $\{(\boldsymbol{\mu}^{k_l},\boldsymbol{w}^{k_l},\boldsymbol{x}^{k_l})\}_{l\geq 0}$ be a sub-sequence of $\{(\boldsymbol{\mu}^{k},\boldsymbol{w}^{k},\boldsymbol{x}^{k})\}$ which converges to $(\boldsymbol{\mu}^{\infty},\boldsymbol{w}^{\infty},\boldsymbol{x}^{\infty})$. By \cite[Proposition 4.1]{doi:10.1007/s10589-016-9864-7}, it holds that
        \begin{equation}
            \label{eq:pro_4.1_1}
            \boldsymbol{u}^{k}\in\partial f(\boldsymbol{w}^k),\ \boldsymbol{v}^{k}\in\partial g(\boldsymbol{x}^k),
        \end{equation}
        and
        \begin{equation}
            \label{eq:pro_4.1_2}
            \begin{aligned}
                & \boldsymbol{\Psi}_{k} - \boldsymbol{\Psi}_{k+1}&\geq&\ \ 2\|\boldsymbol{w}^{k+1}_e\|^2_{\boldsymbol{\Sigma}_f}+2\|\boldsymbol{x}^{k+1}_e\|^2_{\boldsymbol{\Sigma}_g}+\|\boldsymbol{w}^{k+1}-\boldsymbol{w}^{k}\|^2_{\mathcal{H}}+\|\boldsymbol{x}^{k+1}-\boldsymbol{x}^{k}\|^2_{\mathcal{Q}}\\
                & & &\ \ +\textbf{\textit{min}}(1,1-\eta+\eta^{-1})\rho\|\mathcal{B}^*\boldsymbol{w}^{k+1}_e+\mathcal{I}^*\boldsymbol{x}^{k+1}_e\|^2\\ 
                & & &\ \ + \textbf{\textit{min}}(\eta,1+\eta-\eta^2)\rho\|\mathcal{I}^*(\boldsymbol{x}^{k+1}-\boldsymbol{x}^{k})\|^2,
            \end{aligned}
        \end{equation}
        where $\boldsymbol{\Sigma}_f$. Then we have
        \begin{equation}
            \label{eq:bounded}
            \sum_{k=0}^{\infty}\|\boldsymbol{w}^{k+1}_e\|^2_{\boldsymbol{\Sigma}_f},\ \sum_{k=0}^{\infty}\|\boldsymbol{x}^{k+1}_e\|^2_{\boldsymbol{\Sigma}_g},\ \sum_{k=0}^{\infty}\|\mathcal{B}^*\boldsymbol{w}^{k}_e+\mathcal{I}^*\boldsymbol{x}^{k}_e\|^2,\ \sum_{k=0}^{\infty}\|\mathcal{B}^*(\boldsymbol{x}^{k+1}-\boldsymbol{x}^{k})\|^2,\ \sum_{k=0}^{\infty}\|\boldsymbol{w}^{k+1}-\boldsymbol{w}^{k}\|^2_{\mathcal{H}},\ \sum_{k=0}^{\infty}\|\boldsymbol{x}^{k+1}-\boldsymbol{x}^{k}\|^2_{\mathcal{Q}}
        \end{equation}
        are all bounded.
        By taking limits in \eqref{eq:pro_4.1_1} along with $K_l$ for $l\rightarrow\infty$ and using \eqref{eq:spADMM} and \label{eq:bounded}, we have
        \begin{equation}
            \label{eq:KKT-2}
            -\mathcal{B}^*\boldsymbol{\mu}^\infty\in\partial f(\boldsymbol{w}^\infty),\ \ -\mathcal{I}^*\boldsymbol{\mu}^\infty\in\partial g(\boldsymbol{x}^\infty),\ \ \mathcal{B}^*\boldsymbol{w}^\infty+\mathcal{I}^*\boldsymbol{x}^\infty=\boldsymbol{0},
        \end{equation}
        which implies that $(\boldsymbol{\mu}^{\infty},\boldsymbol{w}^{\infty},\boldsymbol{x}^{\infty})$ is also a solution to the KKT system \eqref{eq:KKT}. Without lose of generality, let $(\boldsymbol{\hat{\mu}},\boldsymbol{\hat{w}},\boldsymbol{\hat{x}}) = (\boldsymbol{\mu}^{\infty},\boldsymbol{w}^{\infty},\boldsymbol{x}^{\infty})$. Then by \cite[Theorem 4.1 (a)]{doi:10.1007/s10589-016-9864-7}, we know that $\{\boldsymbol{\Psi}_{k}\}$ converges to zeros if $\eta\in(0,(1+\sqrt{5})/2)$ and $\{\boldsymbol{\Phi}_{k}\}$ converges to zeros if $\eta\geq(1+\sqrt{5})/2$ but 
        \begin{equation}
            \sum^{\infty}_{k=0}\|\boldsymbol{\mu}^{k+1}-\boldsymbol{\mu}^{k}\|^2<\infty.
        \end{equation}
        Thus, it holds that
        \begin{equation}
            \begin{aligned}
                & \mathcal{B}^*\boldsymbol{x}^k&\rightarrow&\ \ \mathcal{B}^*\boldsymbol{x}^{\infty}\\[5pt]
                & (\boldsymbol{\Sigma}_f+\mathcal{H})\boldsymbol{w}^{k}&\rightarrow&\ \ (\boldsymbol{\Sigma}_f+\mathcal{H})\boldsymbol{w}^{\infty}\\[5pt]
                & (\boldsymbol{\Sigma}_g+\mathcal{Q})\boldsymbol{x}^{k}&\rightarrow&\ \ (\boldsymbol{\Sigma}_g+\mathcal{Q})\boldsymbol{x}^{\infty},
            \end{aligned}
        \end{equation}
        when $k\rightarrow\infty$. Moreover, by using the facts that
        \begin{equation}
            \begin{aligned}
                &\mathcal{B}^*\boldsymbol{w}^k&=&\ \ (\mathcal{B}^*\boldsymbol{w}^k+\mathcal{I}^*\boldsymbol{x}^k)-\mathcal{I}^*\boldsymbol{x}^k,\\[5pt]
                &\mathcal{B}^*\boldsymbol{w}^k+\mathcal{I}^*\boldsymbol{x}^k&\rightarrow&\ \ \mathcal{B}^*\boldsymbol{w}^\infty+\mathcal{I}^*\boldsymbol{x}^\infty=c,
            \end{aligned}
        \end{equation}
        we get 
        \begin{equation}
            \mathcal{B}^*\boldsymbol{w}^k\rightarrow\mathcal{B}^*\boldsymbol{w}^\infty,\ k\rightarrow\infty.
        \end{equation}
        The above results tell us that any accumulation point of the sequence $\{(\boldsymbol{\mu}^{k},\boldsymbol{w}^{k},\boldsymbol{x}^{k})\}$ is a solution to the KKT system \eqref{eq:KKT}. As \eqref{eq:first_order_condition_1} is the first order optimal condition of \textbf{HodgeRank} and $\mathcal{T}_{\mathcal{R}}$ \eqref{opt:reg_hodge} is a strongly convex problem, $\boldsymbol{\theta}_{\mathcal{A}}$ will be the local optimal solution and then the global optimal solution. 

    \end{proof}

    \section*{Proof of Theorem \ref{thm:reversible_matrix} and \ref{thm:reversible_bound}}
    To prove Theorem \ref{thm:reversible_matrix}, we need the following lemma. 

    \begin{lemma}
        For any stochastic matrix $\boldsymbol{P}_{\mathcal{A}}$ which is reversible \textit{w.r.t} $\boldsymbol{\theta}_{\mathcal{A}}=\{\hat{\theta}_1,\hat{\theta}_2,\dots,\hat{\theta}_n\}\in\mathbb{R}^n$ as $0<\hat{\theta}_1\leq\hat{\theta}_2\leq\dots\leq\hat{\theta}_n$, we have
        \begin{equation}
            \left\langle\boldsymbol{P}_{\mathcal{A}}\boldsymbol{\delta}_1,\ \boldsymbol{\delta}_1\right\rangle_{\boldsymbol{\theta}_{\mathcal{A}}}\geq -(\hat{\theta}_1)^2,
        \end{equation}
        where
        \begin{equation*}
            \boldsymbol{\delta}_1 = \boldsymbol{e}_1-\|\boldsymbol{e}_1\|_{\boldsymbol{\theta}_{\mathcal{A}}}\cdot\boldsymbol{1},
        \end{equation*}
        $\boldsymbol{e}_1 = (1,0,\dots,0)^\top$, $\boldsymbol{1} = (1,1,\dots,1)^\top$, and $\|\cdot\|_{\boldsymbol{\theta}}$ means
        \begin{equation*}
             \|\boldsymbol{x}\|_{\boldsymbol{\theta}} = \langle \boldsymbol{x} , \boldsymbol{\theta} \rangle = \sum_{i=1}^{n}x_i\theta_i, 
        \end{equation*} 
        $\langle \cdot , \cdot \rangle_{\boldsymbol{\theta}}$ stands for the weighted inner product of two vectors with respect to $\boldsymbol{\theta}$
        \begin{equation*}
            \left\langle\boldsymbol{x},\ \boldsymbol{y}\right\rangle_{\boldsymbol{\theta}} = \boldsymbol{y}^\top\boldsymbol{\Theta}\ \boldsymbol{x}=\Big\langle \boldsymbol{\Theta}^{\frac{1}{2}}\boldsymbol{x},\ \boldsymbol{\Theta}^{\frac{1}{2}}\boldsymbol{y}\Big\rangle = \sum_{i=1}^nx_iy_i\theta_i,
        \end{equation*}
        $\boldsymbol{\Theta}=\textbf{diag}(\boldsymbol{\theta})$ is a diagonal matrix with $\boldsymbol{\theta} \in \mathbb{R}^n$ being the diagonal components. Moreover, the equality holds if and only if the element in first row and first column is $0$ as $\boldsymbol{P}_{\mathcal{A}}=\{\hat{P}_{ij}\}$
        \begin{equation}
            \hat{P}_{11}=0.
        \end{equation}
    \end{lemma}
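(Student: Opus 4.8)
The plan is to avoid all eigenvalue machinery and instead evaluate the quadratic form $\langle \boldsymbol{P}_{\mathcal{A}}\boldsymbol{\delta}_1,\boldsymbol{\delta}_1\rangle_{\boldsymbol{\theta}_{\mathcal{A}}}$ in closed form, since a direct expansion already yields the sharp constant together with the equality characterization. First I would unwind the definition of $\boldsymbol{\delta}_1$. Because the ``norm'' in the statement is the \emph{linear functional} $\|\boldsymbol{x}\|_{\boldsymbol{\theta}_{\mathcal{A}}}=\sum_i x_i\hat{\theta}_i$ (not the quadratic norm induced by $\langle\cdot,\cdot\rangle_{\boldsymbol{\theta}_{\mathcal{A}}}$), one has $\|\boldsymbol{e}_1\|_{\boldsymbol{\theta}_{\mathcal{A}}}=\hat{\theta}_1$, so that $\boldsymbol{\delta}_1=\boldsymbol{e}_1-\hat{\theta}_1\boldsymbol{1}$. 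I would then record the two facts that drive the computation: $\boldsymbol{P}_{\mathcal{A}}$ is row stochastic, hence $\boldsymbol{P}_{\mathcal{A}}\boldsymbol{1}=\boldsymbol{1}$; and $\boldsymbol{\theta}_{\mathcal{A}}$ is a probability vector, hence $\langle\boldsymbol{1},\boldsymbol{1}\rangle_{\boldsymbol{\theta}_{\mathcal{A}}}=\sum_i\hat{\theta}_i=1$.

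Next I would expand the bilinear form. Using $\boldsymbol{P}_{\mathcal{A}}\boldsymbol{\delta}_1=\boldsymbol{P}_{\mathcal{A}}\boldsymbol{e}_1-\hat{\theta}_1\boldsymbol{1}$ together with bilinearity of $\langle\cdot,\cdot\rangle_{\boldsymbol{\theta}_{\mathcal{A}}}$,
\[
\langle\boldsymbol{P}_{\mathcal{A}}\boldsymbol{\delta}_1,\boldsymbol{\delta}_1\rangle_{\boldsymbol{\theta}_{\mathcal{A}}}
=\langle\boldsymbol{P}_{\mathcal{A}}\boldsymbol{e}_1,\boldsymbol{e}_1\rangle_{\boldsymbol{\theta}_{\mathcal{A}}}
-\hat{\theta}_1\langle\boldsymbol{P}_{\mathcal{A}}\boldsymbol{e}_1,\boldsymbol{1}\rangle_{\boldsymbol{\theta}_{\mathcal{A}}}
-\hat{\theta}_1\langle\boldsymbol{e}_1,\boldsymbol{1}\rangle_{\boldsymbol{\theta}_{\mathcal{A}}}
+\hat{\theta}_1^2\langle\boldsymbol{1},\boldsymbol{1}\rangle_{\boldsymbol{\theta}_{\mathcal{A}}}.
\]
I would evaluate each term from $\langle\boldsymbol{x},\boldsymbol{y}\rangle_{\boldsymbol{\theta}_{\mathcal{A}}}=\sum_i x_iy_i\hat{\theta}_i$. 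The first term collapses to $\hat{\theta}_1\hat{P}_{11}$, since only the first coordinate of $\boldsymbol{e}_1$ survives. The decisive term is $\langle\boldsymbol{P}_{\mathcal{A}}\boldsymbol{e}_1,\boldsymbol{1}\rangle_{\boldsymbol{\theta}_{\mathcal{A}}}=\sum_i\hat{\theta}_i\hat{P}_{i1}$, which I would reduce by invoking the detailed balance condition $\hat{\theta}_i\hat{P}_{i1}=\hat{\theta}_1\hat{P}_{1i}$ and then the row-sum identity $\sum_i\hat{P}_{1i}=1$, giving exactly $\hat{\theta}_1$ (equivalently, this is just the left-invariance $\boldsymbol{\theta}_{\mathcal{A}}^\top\boldsymbol{P}_{\mathcal{A}}=\boldsymbol{\theta}_{\mathcal{A}}^\top$). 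The remaining terms are $\langle\boldsymbol{e}_1,\boldsymbol{1}\rangle_{\boldsymbol{\theta}_{\mathcal{A}}}=\hat{\theta}_1$ and $\langle\boldsymbol{1},\boldsymbol{1}\rangle_{\boldsymbol{\theta}_{\mathcal{A}}}=1$.

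Assembling these pieces yields the clean identity
\[
\langle\boldsymbol{P}_{\mathcal{A}}\boldsymbol{\delta}_1,\boldsymbol{\delta}_1\rangle_{\boldsymbol{\theta}_{\mathcal{A}}}
=\hat{\theta}_1\hat{P}_{11}-\hat{\theta}_1^2
=\hat{\theta}_1\big(\hat{P}_{11}-\hat{\theta}_1\big),
\]
and since $\hat{P}_{11}\ge 0$ is an entry of a stochastic matrix while $\hat{\theta}_1>0$, the lower bound $-\hat{\theta}_1^2$ follows immediately, with equality precisely when $\hat{\theta}_1\hat{P}_{11}=0$, that is $\hat{P}_{11}=0$. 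I do not expect a genuine obstacle: the argument is a one-line expansion once the right structural identities are in place. The only points demanding care are the dual role of the symbol $\|\cdot\|_{\boldsymbol{\theta}_{\mathcal{A}}}$ (the \emph{linear} functional used to define $\boldsymbol{\delta}_1$, which must not be confused with the quadratic norm $\sqrt{\langle\cdot,\cdot\rangle_{\boldsymbol{\theta}_{\mathcal{A}}}}$), and the temptation to bound the form below by the least eigenvalue of $\boldsymbol{P}_{\mathcal{A}}$ on $\boldsymbol{1}^{\perp}$, which only delivers the weaker constant $-\hat{\theta}_1(1-\hat{\theta}_1)$; the exact expansion above is what produces the sharp value $-\hat{\theta}_1^2$ and the sharp equality case $\hat{P}_{11}=0$.
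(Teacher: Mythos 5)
Your proof is correct and essentially the same as the paper's: both are direct expansions of the quadratic form that arrive at the identity $\left\langle\boldsymbol{P}_{\mathcal{A}}\boldsymbol{\delta}_1,\boldsymbol{\delta}_1\right\rangle_{\boldsymbol{\theta}_{\mathcal{A}}}=\hat{\theta}_1\big(\hat{P}_{11}-\hat{\theta}_1\big)$, from which the bound and the equality case $\hat{P}_{11}=0$ follow. The only cosmetic difference is that you expand by bilinearity using $\boldsymbol{P}_{\mathcal{A}}\boldsymbol{1}=\boldsymbol{1}$ and $\boldsymbol{\theta}_{\mathcal{A}}^\top\boldsymbol{P}_{\mathcal{A}}=\boldsymbol{\theta}_{\mathcal{A}}^\top$, whereas the paper first computes $\boldsymbol{P}_{\mathcal{A}}\boldsymbol{\delta}_1$ componentwise via detailed balance and then takes the weighted inner product.
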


    \begin{proof}
        By the definition of reversibility and row stochasticity, we have
        \begin{equation}
            \hat{\theta}_i\hat{P}_{ij} = \hat{\theta}_j\hat{P}_{ji},\ \forall\ i,\ j \in[n],\ i\neq j
        \end{equation}
        and
        \begin{equation}
            \sum_{j=1}^n \hat{P}_{ij} = 1.
        \end{equation}
        Then 
        \begin{equation}
            \begin{aligned}
                & \boldsymbol{P}_{\mathcal{A}}\boldsymbol{\delta}_1 &=&\ \ \boldsymbol{P}_{\mathcal{A}}\big[1-\hat{\theta}_1,\ -\hat{\theta}_1,\ \dots,\ -\hat{\theta}_1\big]^\top\\
                & &=&\ \ \left[\hat{P}_{11}-\hat{\theta}_1,\ \frac{\hat{\theta}_1}{\hat{\theta}_2}\hat{P}_{12}-\hat{\theta}_1,\ \dots,\ \frac{\hat{\theta}_1}{\hat{\theta}_n}\hat{P}_{1n}-\hat{\theta}_1\right]^\top.
            \end{aligned}
        \end{equation}
        Furthermore,
        \begin{equation}
            \begin{aligned}
                & \left\langle\boldsymbol{P}_{\mathcal{A}}\boldsymbol{\delta}_1,\ \boldsymbol{\delta}_1\right\rangle_{\boldsymbol{\theta}_{\mathcal{A}}} &=&\ \ \hat{\theta}_1\left(\hat{P}_{11}-\hat{\theta}_1\right)\left(1-\hat{\theta}_1\right)-\hat{\theta}_1\sum_{j=2}^n\left(\frac{\hat{\theta}_1}{\hat{\theta}_j}\hat{P}_{1j}-\hat{\theta}_1\right)\hat{\theta}_j\\
                & &=&\ \  \hat{\theta}_1\left(\hat{P}_{11}-\hat{\theta}_1\right)\\[3.5pt]
                & &\geq& -(\hat{\theta}_1)^2,
            \end{aligned}
        \end{equation}
        where the equality holds if and only if $(\boldsymbol{P}_{\mathcal{A}})_{11}=0$. 
    \end{proof}

    \reversiblematrix*

    \begin{proof}
        Suppose $\lambda_2$ is the second largest eigenvalue of $\boldsymbol{P}_{\mathcal{A}}$, it satisfies
        \begin{equation}
            \label{eq:eigenvalue_sup}
            \lambda_2 = \underset{\boldsymbol{x}\neq\boldsymbol{0},\ \|\boldsymbol{x}\|_{\boldsymbol{\theta}_{\mathcal{A}}}=0}{\text{sup}}\ \frac{\ \left\langle \boldsymbol{P}_{\mathcal{A}}\boldsymbol{x},\ \boldsymbol{x}\right\rangle_{\boldsymbol{\theta}_{\mathcal{A}}}}{\langle\boldsymbol{x},\ \boldsymbol{x}\rangle_{\boldsymbol{\theta}_{\mathcal{A}}}}.
        \end{equation}
        With the above lemma and the fact 
        \begin{equation}
            \left\langle\boldsymbol{\delta}_1,\ \boldsymbol{\delta}_1\right\rangle_{\boldsymbol{\theta}_{\mathcal{A}}} = \hat{\theta}_1(1-\hat{\theta}_1), 
        \end{equation}
        it follows that
        \begin{equation}
            \begin{aligned}
                & \lambda_2 &=&\ \ \underset{\boldsymbol{x}\neq\boldsymbol{0},\ \|\boldsymbol{x}\|_{\boldsymbol{\theta}_{\mathcal{A}}}=0}{\text{sup}}\ \frac{\ \left\langle \boldsymbol{P}_{\mathcal{A}}\boldsymbol{x},\ \boldsymbol{x}\right\rangle_{\boldsymbol{\theta}_{\mathcal{A}}}}{\langle\boldsymbol{x},\ \boldsymbol{x}\rangle_{\boldsymbol{\theta}_{\mathcal{A}}}}\\
                & &\geq&\ \ \frac{\ \left\langle \boldsymbol{P}_{\mathcal{A}}\boldsymbol{\delta}_1,\ \boldsymbol{\delta}_1\right\rangle_{\boldsymbol{\theta}_{\mathcal{A}}}}{\langle\boldsymbol{\delta}_1,\ \boldsymbol{\delta}_1\rangle_{\boldsymbol{\theta}_{\mathcal{A}}}}\\
                & &=&\ \ \frac{-(\hat{\theta}_1)^2}{\hat{\theta}_1(1-\hat{\theta}_1)} = \frac{-\hat{\theta}_1}{1-\hat{\theta}_1}.
            \end{aligned}
        \end{equation}
        Now we establish the first part of the results. 

        Now we torn to the second part. If $\lambda_2$ archives its lower bound, by the above lemma, we know that $(\boldsymbol{P}_{\mathcal{A}})_{11}=0$. Furthermore, by the self-adjoint property of $\boldsymbol{P}_{\mathcal{A}}$ \textit{w.r.t} $\langle\cdot,\ \cdot\rangle_{\boldsymbol{\theta}_{\mathcal{A}}}$, the supremum in \eqref{eq:eigenvalue_sup} can be attained by letting $\boldsymbol{x}$ be the eigenvector corresponding to $\lambda_2$. It implies that 
        \begin{equation}
            \lambda_2 = \underset{\boldsymbol{x}\neq\boldsymbol{0},\ \|\boldsymbol{x}\|_{\boldsymbol{\theta}_{\mathcal{A}}}=0}{\text{sup}}\ \frac{\ \left\langle \boldsymbol{P}_{\mathcal{A}}\boldsymbol{x},\ \boldsymbol{x}\right\rangle_{\boldsymbol{\theta}_{\mathcal{A}}}}{\langle\boldsymbol{x},\ \boldsymbol{x}\rangle_{\boldsymbol{\theta}_{\mathcal{A}}}} = \frac{\ \left\langle \boldsymbol{P}_{\mathcal{A}}\boldsymbol{\delta}_1,\ \boldsymbol{\delta}_1\right\rangle_{\boldsymbol{\theta}_{\mathcal{A}}}}{\langle\boldsymbol{\delta}_1,\ \boldsymbol{\delta}_1\rangle_{\boldsymbol{\theta}_{\mathcal{A}}}} = \frac{-\hat{\theta}_1}{1-\hat{\theta}_1}
        \end{equation}
        and $\boldsymbol{\delta}_1$ is a corresponding eigenvector of $\lambda_2$:
        \begin{equation}
            \boldsymbol{P}_{\mathcal{A}}\boldsymbol{\delta}_1 = \frac{-\hat{\theta}_1}{1-\hat{\theta}_1}\boldsymbol{\delta}_1.
        \end{equation}
        By the definition of $\boldsymbol{\delta}_1$, we have
        \begin{equation}
            \begin{pmatrix}
            0 & (\boldsymbol{P}_{\mathcal{A}})_{12} & \dots & (\boldsymbol{P}_{\mathcal{A}})_{1n}\\[5pt]
            (\boldsymbol{P}_{\mathcal{A}})_{21} & (\boldsymbol{P}_{\mathcal{A}})_{22} & \dots & (\boldsymbol{P}_{\mathcal{A}})_{2n}\\[5pt]
            \vdots & \vdots & \ddots & \vdots \\[5pt]
            (\boldsymbol{P}_{\mathcal{A}})_{n1} & (\boldsymbol{P}_{\mathcal{A}})_{n2} & \dots & (\boldsymbol{P}_{\mathcal{A}})_{nn}
            \end{pmatrix}
            \begin{pmatrix}
                1 - \hat{\theta}_1\\[5pt]
                  - \hat{\theta}_1\\[5pt]
                            \vdots\\[5pt]
                  - \hat{\theta}_1
            \end{pmatrix} 
            = -\frac{-\hat{\theta}_1}{1-\hat{\theta}_1}
            \begin{pmatrix}
                1 - \hat{\theta}_1\\[5pt]
                  - \hat{\theta}_1\\[5pt]
                            \vdots\\[5pt]
                  - \hat{\theta}_1
            \end{pmatrix}.
        \end{equation}
        With the reversibility, we knows that $(\boldsymbol{P}_{\mathcal{A}})_{1j} = \hat{\theta}_j/\hat{\theta}_1(\boldsymbol{P}_{\mathcal{A}})_{j1}$. Then we can check that
        \begin{equation}
            (\boldsymbol{P}_{\mathcal{A}})_{1j} = \frac{\hat{\theta}_j}{1-\hat{\theta}_1},\ \forall\ j>1.
        \end{equation}
        By the detailed balance condition, we also know that
        \begin{equation}
            (\boldsymbol{P}_{\mathcal{A}})_{j1} = \frac{\hat{\theta}_1}{1-\hat{\theta}_1},\ \forall\ j>1.
        \end{equation}
        It is noteworthy that we do not adopt the fact that $\hat{\theta}_1$ has the lowest probability. As a consequence, it holds that
        \begin{equation}
            \left\langle \boldsymbol{P}_{\mathcal{A}}\boldsymbol{\delta}_k,\ \boldsymbol{\delta}_k\right\rangle_{\boldsymbol{\theta}_{\mathcal{A}}}\geq -(\hat{\theta}_k)^2, 
        \end{equation}
        where $\boldsymbol{\delta}_k = \boldsymbol{e}_k-\|\boldsymbol{e}_k\|_{\boldsymbol{\theta}_{\mathcal{A}}}\cdot\boldsymbol{1}$. 
        Therefore, $\boldsymbol{P}_{\mathcal{A}}$ has this form:
        \begin{equation*}
            \begingroup 
            \renewcommand\arraystretch{1}
            \setlength\arraycolsep{3.5pt}
            \mbox{\normalsize\(\boldsymbol{P}_{\mathcal{A}} = 
            \begin{pmatrix}
                0 & \frac{\displaystyle \hat{\theta}_2}{\displaystyle 1-\hat{\theta}_1} & \cdots & \frac{\displaystyle \hat{\theta}_n}{\displaystyle 1-\hat{\theta}_1}\\
                \frac{\displaystyle \hat{\theta}_1}{\displaystyle 1-\hat{\theta}_1} &  &  & \\
                \vdots &  &\boldsymbol{P}_{\mathcal{A}_2}  & \\
                \frac{\displaystyle \hat{\theta}_1}{\displaystyle 1-\hat{\theta}_1} &  &  & 
            \end{pmatrix}
            \)},
            \endgroup
        \end{equation*}
        where $\boldsymbol{P}_{\mathcal{A}_2}$ is also a reversible \textit{w.r.t} $\{\hat{\theta}_2,\dots,\hat{\theta}_n\}$, row stochastic matrix.
    \end{proof}

    Based on the above results, we show the specific formula of $\boldsymbol{P}_{\mathcal{A}}$ can help to establish the fixed point of the corresponding adversarial game.
    
    \reversibilefinal*

    Here $\{\boldsymbol{P}^{(k)}\},\ k=n-1,\dots,1$ is recursively defined as 
    \begin{equation*}
    \begingroup 
    \renewcommand\arraystretch{1}
    \setlength\arraycolsep{2.75pt}
    \mbox
    {
        \normalsize
        \(
            \boldsymbol{P}^{(k)} \coloneqq 
            \begin{pmatrix}
                0 & \frac{\displaystyle\ \ \theta^{(k)}_{k+1}\ \ }{\displaystyle\ \ \beta_{k+1}\ \ } & \cdots & \frac{\displaystyle\ \ \theta^{(k)}_{n}\ \ }{\displaystyle\ \ \beta_{k+1}\ \ }\\[7.5pt]
                1-\alpha_{k+1} &  &  & \\[5pt]
                \vdots &  &\alpha_{k+1}\cdot\boldsymbol{P}^{(k+1)}  & \\[5pt]
                1-\alpha_{k+1} &  &  & 
            \end{pmatrix}\in\mathbb{R}^{(n-k+1)\times(n-k+1)},
        \)
    }
    \endgroup
    \end{equation*}
    and
    \begin{subequations}
        \begin{align*}
            \boldsymbol{P}^{(n)}\ \ &\coloneqq\ \ 0\\[5pt]
            \boldsymbol{\theta}^{(1)}\ \ &\coloneqq\ \ \boldsymbol{\theta}_{\mathcal{A}}\\[5pt]
            \boldsymbol{\theta}^{(k+1)}\ &\coloneqq\ \ \boldsymbol{\theta}^{(k)}\ /\ \beta_{k+1}\\[5pt]
            \beta_{k+1}\ &\coloneqq\ \ 1\ -\ \theta^{(k)}_{k}\\[5pt]
            \alpha_{k+1}\ &\coloneqq\ \ 1\ -\ \theta^{(k)}_{k}\ /\ \beta_{k+1}.
        \end{align*}
    \end{subequations}

    \begin{proof}
        With Theorem \ref{thm:reversible_matrix}, we know that $\boldsymbol{P}^{(2)}$ satisfies the detailed balance condition with the vector $\boldsymbol{\theta}_{\mathcal{A}}/\hat{\theta}_1$ as 
        \begin{equation*}
            (\boldsymbol{\theta}_{\mathcal{A}}/\hat{\theta}_1)_i P^{(2)}_{ij} = (\boldsymbol{\theta}_{\mathcal{A}}/\hat{\theta}_1)_j P^{(2)}_{ji},\ \ i,\ j\in[n-1],
        \end{equation*}
        where $\boldsymbol{\theta}_{\mathcal{A}}/\hat{\theta}_1 = \{\hat{\theta}_2,\dots,\hat{\theta}_n\}^{\top}$ and $\boldsymbol{P}^{(2)}$ has the constant row sum. If $\boldsymbol{x}_{2:n} = (x_2,\dots,x_n)^{\top}$ is the eigenvector of $\boldsymbol{P}^{(2)}$, we know that $\boldsymbol{x}_2$ is orthogonal to $\boldsymbol{1}\in\mathbb{R}^{n-1}$ as the latter one is also an eigenvector of $\boldsymbol{P}^{(2)}$. Moreover, finding $\boldsymbol{x}_{2:n}$ is equivalent to find an eigenvector of $\boldsymbol{P}_{\mathcal{A}}$ which is orthogonal to $\boldsymbol{1}\in\mathbb{R}^n$ and $\boldsymbol{\delta}_1 = \boldsymbol{e}_1-\|\boldsymbol{e}_1\|_{\boldsymbol{\theta}_{\mathcal{A}}}\cdot\boldsymbol{1}$. This requires that 
        \begin{equation*}
            \begin{aligned}
                \boldsymbol{P}_{\mathcal{A}}
                \begin{bmatrix}
                    x_1\\
                    --- \\ 
                    \boldsymbol{x}_{2:n}
                \end{bmatrix}
                = \lambda \begin{bmatrix}
                    x_1\\ 
                    --- \\ 
                    \boldsymbol{x}_{2:n}
                \end{bmatrix}
                \hspace{-3.5em}
                \phantom{\begin{bmatrix}x_1\\ -- \\ \boldsymbol{x}_{2:n}\end{bmatrix}}\hspace{-0.5em}
                \begin{tabular}{l}
                $\left.\lefteqn{\phantom{\mathcal{A}^{(k)}}}\right\}1$\\[5pt]
                $\left.\lefteqn{\phantom{\mathcal{A}^{(k)}}}\right\}n-1$
                \end{tabular}
                ,\\[3pt]
                \boldsymbol{1}^\top
                \begin{bmatrix}
                    x_1\\ 
                    --- \\ 
                    \boldsymbol{x}_{2:n}
                \end{bmatrix} = 0,\ \ 
                \boldsymbol{\delta}^{\top}_1
                \begin{bmatrix}
                    x_1\\ 
                    --- \\ 
                    \boldsymbol{x}_{2:n}
                \end{bmatrix} = 0.
            \end{aligned}
        \end{equation*}
        So $x_1$ must be $0$. Therefore, the second eigenvalue of $\boldsymbol{P}^{(2)}$ is the third eigenvalue of $\boldsymbol{P}_{\mathcal{A}}$ and the first row and column are defined by the above-mentioned iteration rules. Obviously, this procedure could be conducted further, thus obtaining at each step a new eigenvalue that fixes a new row and column in the matrix $\boldsymbol{P}_{\mathcal{A}}$. Finally, we remark that the eigenvalues of $\boldsymbol{P}_{\mathcal{A}}$ are
        \begin{equation}
            \lambda_{k+1} = -\frac{\hat{\theta}_k}{\hat{\theta}_{k+1}+\dots+\hat{\theta}_n}\cdot\left.\prod_{l=1}^{k-1}\left(1-\frac{\hat{\theta}_l}{\hat{\theta}_{l+1}+\dots+\hat{\theta}_n}\right)\right.,\ \ k=2,\dots,n.
        \end{equation}
        As a consequence, the last entry of $\boldsymbol{P}_{\mathcal{A}}=\{\hat{P}_{ij}\}$ will be
        \begin{equation}
            \hat{P}_{nn} = 1 - \sum_{j=1}^{n-1}P^{(1)}_{nj}.
        \end{equation}
    \end{proof}

    \reversiblebound*

    \begin{proof}
        First, we show that for any initial distribution
        \begin{equation}
            \begin{aligned}
                & \nu(f,\boldsymbol{P}_{\mathcal{A}}) &=&\ \ \underset{m\rightarrow\infty}{\textbf{\textit{lim}}}\ m\textbf{\textit{Var}}\left(\frac{1}{m}\sum_{k=0}^{m-1}f(\boldsymbol{v}_k)\right)\\[5pt]
                & &=&\ \ \left\langle\left(\boldsymbol{I}-\boldsymbol{P}_{\mathcal{A}}\right)^{-1}\left(\boldsymbol{I}+\boldsymbol{P}_{\mathcal{A}}\right)\left(f-\sum_{i=1}^{n}f(i)\hat{\theta}_i\cdot\boldsymbol{1}\right),\ f-\sum_{i=1}^{n}f(i)\hat{\theta}_i\cdot\boldsymbol{1}\right\rangle_{\boldsymbol{\theta}_{\mathcal{A}}}.
            \end{aligned}
        \end{equation}
        See \cite{peskun1973optimum} for the detailed proof. The above equality implies that
        \begin{equation}
            \nu(f,\boldsymbol{P}_{\mathcal{A}}) = \sum_{k=2}^{n}\frac{1+\lambda_i}{1-\lambda_i}\left\langle f, \boldsymbol{\delta}_k\right\rangle^2_{\boldsymbol{\theta}_{\mathcal{A}}},
        \end{equation}
        where $\boldsymbol{\delta}_k$ satisfies 
        \begin{equation}
            \boldsymbol{P}_{\mathcal{A}}\boldsymbol{\delta}_k = \lambda_k\boldsymbol{\delta}_k,\ \ \|\boldsymbol{\delta}_k\|_{\boldsymbol{\theta}_{\mathcal{A}}}=1.
        \end{equation}
        This formulation links the asymptotic variance $\nu(f,\boldsymbol{P}_{\mathcal{A}})$ to the eigensystem of $\boldsymbol{P}_{\mathcal{A}}$. Moreover, we know that 
        \begin{equation}
            \label{eq:tttt}
            \textbf{\textit{max}}\ \nu(f,\boldsymbol{P}_{\mathcal{A}}) = \frac{1+\lambda_2}{1-\lambda_2}.
        \end{equation}
        Substituting the value of $\lambda_2 \geq -\hat{\theta}_1/(1-\hat{\theta}_1)$ into \eqref{eq:tttt}, we obtain the minimum possible value of $\nu(f,\boldsymbol{P}_{\mathcal{A}})$.
    \end{proof}

    \section*{Proof of Theorem \ref{thm:irrevisible_matrix} and \ref{thm:irreversible_bound}}

    To prove these theorems, we start with some notations and lemmas. Let $\boldsymbol{P}_{\mathcal{A}}=\{(\hat{P}_{ij}\}$, $i,\ j\in[n],\ i\neq j$ be a transition matrix with an invariant distribution $\boldsymbol{\theta}_{\mathcal{A}}=\{\hat{\theta}_1,\dots,\hat{\theta}_n\}$, and $\boldsymbol{X}=\{X_t:t\geq0\}$ be a Markov chain generated by $\boldsymbol{P}_{\mathcal{A}}$. We define the adjoint transition matrix $\boldsymbol{P}^*_{\mathcal{A}}=\{\hat{P}^*_{ij}\}$ of $\boldsymbol{P}_{\mathcal{A}}$ as
    \begin{equation}
        \hat{P}^*_{ij} = \frac{\ \hat{\theta}_j}{\hat{\theta}_i}\hat{P}_{ji},\ \ i,\ j\in[n],\ i\neq j,
    \end{equation}
    which is the adjoint of $\boldsymbol{P}_{\mathcal{A}}$ under the inner product $\langle\cdot,\cdot\rangle_{\boldsymbol{\theta}_{\Scale[0.5]{\mathcal{A}}}}$. The dual chain of $\boldsymbol{X}$, denoted as $\boldsymbol{X}^*$, is the Markov chain generated by $\boldsymbol{P}^*_{\mathcal{A}}$. Note the first hitting time to candidate $i$ of chain $\boldsymbol{X}$ is
    \begin{equation}
        T_i = \underset{}{\inf}\left\{t\geq 0\ \Big|\ X_t=i\right\}.
    \end{equation}
    Similarly, we could define $T^*_i$ of $\boldsymbol{X}^*$. $\mathbb{P}_{\mu}$ and $\mathbb{E}_{\mu}$ denote the probability and expectation for chains started from probability distribution $\mu$. The fundamental matrix $\boldsymbol{Z}=\{Z_{ij}\}$ of $\boldsymbol{P}_{\mathcal{A}}$ is 
    \begin{equation*}
         \boldsymbol{Z} = (\boldsymbol{I}-\boldsymbol{P}_{\mathcal{A}})^{-1},
    \end{equation*}
    where $\boldsymbol{I}$ is the $n$-dimensional identity matrix.

    The following lemma relates the fundamental matrix with the hitting times.

    \begin{lemma}
        For any pair of points $i,j\in[n],\ i\neq j$, we have
        \begin{equation}
            \begin{aligned}
                & \hat{\theta}_i\cdot\mathbb{E}_{\boldsymbol{\theta}_{\Scale[0.5]{\mathcal{A}}}}[T_i] &=&\ \ Z_{ii},\\
                & \hat{\theta}_j\cdot\mathbb{E}_{i}[T_j] &=&\ \ Z_{jj} - Z_{ij},
            \end{aligned}
        \end{equation}
        where $\mathbb{E}_i$ represents the expectation of the chain $\boldsymbol{X}$ started at item (state) $i$ and time $t=0$.
    \end{lemma}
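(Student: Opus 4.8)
The plan is to derive both identities from a single matrix equation satisfied by the mean first-passage matrix $\boldsymbol{M}=(m_{ij})$ with $m_{ij}:=\mathbb{E}_i[T_j]$. First I would fix the meaning of $\boldsymbol{Z}=(\boldsymbol{I}-\boldsymbol{P}_{\mathcal{A}})^{-1}$: since $\boldsymbol{I}-\boldsymbol{P}_{\mathcal{A}}$ is singular, with right null vector $\boldsymbol{1}$ and left null vector $\boldsymbol{\theta}_{\mathcal{A}}$, the symbol must denote the group inverse, equivalently the deviation matrix $\boldsymbol{Z}=\sum_{t\ge 0}(\boldsymbol{P}_{\mathcal{A}}^{t}-\boldsymbol{1}\boldsymbol{\theta}_{\mathcal{A}}^{\top})$, which is exactly the operator appearing in \eqref{eq:asymptotic_var_ref} once it acts on mean-zero vectors. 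I would record the three properties that drive everything: $\boldsymbol{Z}(\boldsymbol{I}-\boldsymbol{P}_{\mathcal{A}})=\boldsymbol{I}-\boldsymbol{1}\boldsymbol{\theta}_{\mathcal{A}}^{\top}$, $\boldsymbol{Z}\boldsymbol{1}=\boldsymbol{0}$, and $\boldsymbol{\theta}_{\mathcal{A}}^{\top}\boldsymbol{Z}=\boldsymbol{0}$. Irreducibility of $\boldsymbol{P}_{\mathcal{A}}$ (assumed throughout this section) guarantees that all hitting times have finite mean and that $\boldsymbol{Z}$ is well defined.

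Next I would set up the first-passage equation. Because $T_j=\inf\{t\ge 0:X_t=j\}$ vanishes when the chain starts at $j$, the diagonal of $\boldsymbol{M}$ is zero. Conditioning on the first step $X_1$ gives $m_{ij}=1+\sum_{k}\hat{P}_{ik}m_{kj}$ for $i\neq j$, while Kac's recurrence formula $\mathbb{E}_j[T_j^{+}]=1/\hat{\theta}_j$ for the return time $T_j^{+}=\inf\{t\ge 1:X_t=j\}$ pins down the diagonal rows. Assembling these row-by-row, the off-diagonal entries of $(\boldsymbol{I}-\boldsymbol{P}_{\mathcal{A}})\boldsymbol{M}$ equal $1$ and the diagonal entries equal $1-1/\hat{\theta}_j$, which packages into the clean identity
\begin{equation*}
    (\boldsymbol{I}-\boldsymbol{P}_{\mathcal{A}})\boldsymbol{M}=\boldsymbol{1}\boldsymbol{1}^{\top}-\boldsymbol{\Theta}^{-1},
\end{equation*}
where $\boldsymbol{\Theta}=\textbf{diag}(\boldsymbol{\theta}_{\mathcal{A}})$, so that $\boldsymbol{\Theta}^{-1}=\textbf{diag}(1/\hat{\theta}_1,\dots,1/\hat{\theta}_n)$.

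The final step is to left-multiply by $\boldsymbol{Z}$. Using $\boldsymbol{Z}\boldsymbol{1}\boldsymbol{1}^{\top}=\boldsymbol{0}$ and $\boldsymbol{Z}(\boldsymbol{I}-\boldsymbol{P}_{\mathcal{A}})=\boldsymbol{I}-\boldsymbol{1}\boldsymbol{\theta}_{\mathcal{A}}^{\top}$ I would obtain $(\boldsymbol{I}-\boldsymbol{1}\boldsymbol{\theta}_{\mathcal{A}}^{\top})\boldsymbol{M}=-\boldsymbol{Z}\boldsymbol{\Theta}^{-1}$, whose $(i,j)$ entry reads $m_{ij}-\sum_{k}\hat{\theta}_k m_{kj}=-Z_{ij}/\hat{\theta}_j$. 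Recognizing $\sum_{k}\hat{\theta}_k m_{kj}=\mathbb{E}_{\boldsymbol{\theta}_{\mathcal{A}}}[T_j]$, the diagonal choice $i=j$ (where $m_{jj}=0$) yields $\hat{\theta}_j\,\mathbb{E}_{\boldsymbol{\theta}_{\mathcal{A}}}[T_j]=Z_{jj}$, the first claim; feeding this back into the off-diagonal entries gives $m_{ij}=(Z_{jj}-Z_{ij})/\hat{\theta}_j$, i.e. $\hat{\theta}_j\,\mathbb{E}_i[T_j]=Z_{jj}-Z_{ij}$, the second claim.

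The hard part will be the bookkeeping around the singularity of $\boldsymbol{I}-\boldsymbol{P}_{\mathcal{A}}$: I must commit to the group-inverse reading of $\boldsymbol{Z}$ and check it is consistent with \eqref{eq:asymptotic_var_ref}, which it is, since the candidate fundamental matrices differ by $\boldsymbol{1}\boldsymbol{\theta}_{\mathcal{A}}^{\top}$ and this annihilates every mean-zero vector in $\mathcal{F}$. The only genuinely probabilistic input is Kac's formula $\mathbb{E}_j[T_j^{+}]=1/\hat{\theta}_j$, which fixes the diagonal of $(\boldsymbol{I}-\boldsymbol{P}_{\mathcal{A}})\boldsymbol{M}$; everything else is linear algebra, so the argument needs no reversibility and applies verbatim to the irreversible matrix of \eqref{eq:irreversibiletransitionmatrix}.
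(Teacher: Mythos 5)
Your proof is correct, and it is a genuine proof where the paper offers none: the paper disposes of this lemma by citing it as Lemmas 11 and 12 of Chapter 2 of Aldous--Fill. Your derivation --- assembling the first-step equations and Kac's formula into $(\boldsymbol{I}-\boldsymbol{P}_{\mathcal{A}})\boldsymbol{M}=\boldsymbol{1}\boldsymbol{1}^{\top}-\boldsymbol{\Theta}^{-1}$ and then applying the deviation matrix on the left --- is the standard linear-algebraic route and checks out line by line. Two remarks. First, your insistence on reading $\boldsymbol{Z}=(\boldsymbol{I}-\boldsymbol{P}_{\mathcal{A}})^{-1}$ as the group inverse (equivalently the deviation matrix) is not optional pedantry: $\boldsymbol{I}-\boldsymbol{P}_{\mathcal{A}}$ is singular, so the lemma is literally false as typeset unless $\boldsymbol{Z}$ is so interpreted, and your observation that the ambiguity is harmless on the mean-zero subspace $\mathcal{F}$ (the candidate fundamental matrices differ by $\boldsymbol{1}\boldsymbol{\theta}_{\mathcal{A}}^{\top}$) is exactly the reconciliation with \eqref{eq:asymptotic_var_ref} that the paper leaves implicit. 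Second, the only probabilistic inputs are finiteness of the mean hitting times and Kac's formula $\mathbb{E}_j[T_j^{+}]=1/\hat{\theta}_j$, both guaranteed by irreducibility of the finite chain; if you define $\boldsymbol{Z}$ by the series $\sum_{t\ge 0}(\boldsymbol{P}_{\mathcal{A}}^{t}-\boldsymbol{1}\boldsymbol{\theta}_{\mathcal{A}}^{\top})$ you should either assume aperiodicity or pass to Ces\`aro sums, whereas the group-inverse definition needs no such caveat. Compared with the occupation-time argument in the cited source, your single matrix identity delivers both statements simultaneously and, as you note, uses no reversibility, so it applies verbatim to the irreversible matrix \eqref{eq:irreversibiletransitionmatrix}.
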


    This lemma can be found in \cite{aldous-fill-2014} as Lemma 11 and 12 of Chapter 2. With this lemma, we could establish the lower bound of $\nu(\boldsymbol{P}_{\mathcal{A}})$.

    \begin{proposition}
        \label{prop:1}
        Suppose that $\boldsymbol{P}_{\mathcal{A}}$ is an irreducible transition matrix with invariant distribution $\boldsymbol{\theta}_{\mathcal{A}}$ and $\nu(\boldsymbol{P}_{\mathcal{A}})$ is defined as 
        \begin{equation*}
            \begin{aligned}
                &v(\boldsymbol{P}_{\mathcal{A}})&:=&\ \ \underset{\boldsymbol{f}\in\mathcal{F},\ \|\boldsymbol{f}\|_{\Scale[0.5]{\boldsymbol{\theta}_{\mathcal{A}}}}=1}{\textbf{\textit{sup}}} \Big\langle\big(\boldsymbol{I}-\boldsymbol{P}_{\mathcal{A}}\big)^{-1}\boldsymbol{f},\ \boldsymbol{f}\Big\rangle_{\boldsymbol{\theta}_{\mathcal{A}}}\\
                & &=&\ \ 2\big\langle\boldsymbol{Z}\boldsymbol{f},\boldsymbol{f}\big\rangle_{\boldsymbol{\theta}_{\Scale[0.45]{\mathcal{A}}}}-1,
            \end{aligned}
        \end{equation*}
        where $\mathcal{F}$ is the space of mean zero function as
        \begin{equation*}
            \mathcal{F} :=\Big\{\boldsymbol{f}\in\mathbb{R}^n\ \Big\vert\ \mathbb{E}_{\boldsymbol{\theta}_{\mathcal{A}}}[\boldsymbol{f}] = 0\Big\},
        \end{equation*}
        we have
        \begin{equation}
            \label{eq:prop_1_result}
            v(\boldsymbol{P}_{\mathcal{A}}) \geq \underset{i,j\in[n]}{\max}\ \frac{\hat{\theta}_i\hat{\theta}_j}{\hat{\theta}_i+\hat{\theta}_j}\cdot\Big(\mathbb{E}_{i}[T_j]+\mathbb{E}_{j}[T_i]\Big).
        \end{equation}
    \end{proposition}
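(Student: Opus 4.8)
The plan is to exploit the variational characterization $v(\boldsymbol{P}_{\mathcal{A}})=\sup_{\boldsymbol{f}\in\mathcal{F},\,\|\boldsymbol{f}\|_{\boldsymbol{\theta}_{\mathcal{A}}}=1}\langle\boldsymbol{Z}\boldsymbol{f},\boldsymbol{f}\rangle_{\boldsymbol{\theta}_{\mathcal{A}}}$ and to lower-bound the supremum by evaluating the Rayleigh-type quotient at a judiciously chosen test function. Since both $\langle\boldsymbol{Z}\boldsymbol{f},\boldsymbol{f}\rangle_{\boldsymbol{\theta}_{\mathcal{A}}}$ and $\|\boldsymbol{f}\|^2_{\boldsymbol{\theta}_{\mathcal{A}}}$ are homogeneous of degree two in $\boldsymbol{f}$, for every nonzero $\boldsymbol{f}\in\mathcal{F}$ one has $v(\boldsymbol{P}_{\mathcal{A}})\geq\langle\boldsymbol{Z}\boldsymbol{f},\boldsymbol{f}\rangle_{\boldsymbol{\theta}_{\mathcal{A}}}/\|\boldsymbol{f}\|^2_{\boldsymbol{\theta}_{\mathcal{A}}}$, so it suffices to produce a single admissible $\boldsymbol{f}$ for each pair realizing the claimed value. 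Here $\boldsymbol{Z}=(\boldsymbol{I}-\boldsymbol{P}_{\mathcal{A}})^{-1}$ is read as the fundamental matrix acting on the mean-zero subspace $\mathcal{F}$, on which $\boldsymbol{I}-\boldsymbol{P}_{\mathcal{A}}$ is invertible, consistent with \eqref{eq:asymptotic_var_ref}.

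For a fixed pair $i\neq j$ I would take the two-point test function supported on $\{i,j\}$,
\[
\boldsymbol{f} = \hat{\theta}_j\,\boldsymbol{e}_i - \hat{\theta}_i\,\boldsymbol{e}_j ,
\]
which lies in $\mathcal{F}$ because $\mathbb{E}_{\boldsymbol{\theta}_{\mathcal{A}}}[\boldsymbol{f}]=\hat{\theta}_j\hat{\theta}_i-\hat{\theta}_i\hat{\theta}_j=0$, and whose squared norm is $\|\boldsymbol{f}\|^2_{\boldsymbol{\theta}_{\mathcal{A}}}=\hat{\theta}_j^2\hat{\theta}_i+\hat{\theta}_i^2\hat{\theta}_j=\hat{\theta}_i\hat{\theta}_j(\hat{\theta}_i+\hat{\theta}_j)$. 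Expanding the quadratic form over the support gives
\[
\langle\boldsymbol{Z}\boldsymbol{f},\boldsymbol{f}\rangle_{\boldsymbol{\theta}_{\mathcal{A}}} = \hat{\theta}_i\hat{\theta}_j\big(\hat{\theta}_j Z_{ii}-\hat{\theta}_i Z_{ij}-\hat{\theta}_j Z_{ji}+\hat{\theta}_i Z_{jj}\big).
\]
The next step is to eliminate the four fundamental-matrix entries using the preceding lemma. Applying the identity $\hat{\theta}_j\mathbb{E}_i[T_j]=Z_{jj}-Z_{ij}$ together with its $i\leftrightarrow j$ swap $\hat{\theta}_i\mathbb{E}_j[T_i]=Z_{ii}-Z_{ji}$, I substitute $Z_{jj}=Z_{ij}+\hat{\theta}_j\mathbb{E}_i[T_j]$ and $Z_{ii}=Z_{ji}+\hat{\theta}_i\mathbb{E}_j[T_i]$; the $Z_{ij}$ and $Z_{ji}$ contributions then cancel pairwise, leaving $\langle\boldsymbol{Z}\boldsymbol{f},\boldsymbol{f}\rangle_{\boldsymbol{\theta}_{\mathcal{A}}}=\hat{\theta}_i^2\hat{\theta}_j^2\big(\mathbb{E}_i[T_j]+\mathbb{E}_j[T_i]\big)$.

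Dividing by the norm yields
\[
\frac{\langle\boldsymbol{Z}\boldsymbol{f},\boldsymbol{f}\rangle_{\boldsymbol{\theta}_{\mathcal{A}}}}{\|\boldsymbol{f}\|^2_{\boldsymbol{\theta}_{\mathcal{A}}}} = \frac{\hat{\theta}_i\hat{\theta}_j}{\hat{\theta}_i+\hat{\theta}_j}\big(\mathbb{E}_i[T_j]+\mathbb{E}_j[T_i]\big),
\]
so $v(\boldsymbol{P}_{\mathcal{A}})$ dominates the right-hand side for every pair $(i,j)$, and taking the maximum over $i,j\in[n]$ gives \eqref{eq:prop_1_result}. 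I expect the one genuinely delicate step to be the algebraic cancellation: because the chain is irreversible, $\boldsymbol{Z}$ is not self-adjoint with respect to $\langle\cdot,\cdot\rangle_{\boldsymbol{\theta}_{\mathcal{A}}}$, so the off-diagonal entries $Z_{ij}$ and $Z_{ji}$ are distinct and only the commute-time combination survives; getting the coefficients right—and noticing that only the hitting-time identity (applied in both orders), and not the diagonal identity $Z_{ii}=\hat{\theta}_i\mathbb{E}_{\boldsymbol{\theta}_{\mathcal{A}}}[T_i]$, is actually required—is where the care lies. A secondary point to check is that the chosen $\boldsymbol{f}$ is a legitimate element of $\mathcal{F}$ and that the restriction of $\boldsymbol{Z}$ to $\mathcal{F}$ is well posed.
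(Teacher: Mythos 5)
Your proposal is correct and follows essentially the same route as the paper: your test function $\hat{\theta}_j\boldsymbol{e}_i-\hat{\theta}_i\boldsymbol{e}_j$ is just the scalar multiple $\hat{\theta}_i\hat{\theta}_j\,\boldsymbol{f}_{ij}$ of the paper's two-point function $\boldsymbol{f}_{ij}=\tfrac{1}{\hat{\theta}_i}\boldsymbol{e}_i-\tfrac{1}{\hat{\theta}_j}\boldsymbol{e}_j$, so by homogeneity of the Rayleigh quotient the two computations coincide. The paper likewise evaluates $\langle\boldsymbol{Z}\boldsymbol{f}_{ij},\boldsymbol{f}_{ij}\rangle_{\boldsymbol{\theta}_{\mathcal{A}}}$, eliminates the fundamental-matrix entries via the same hitting-time identities $\hat{\theta}_j\mathbb{E}_i[T_j]=Z_{jj}-Z_{ij}$ and its swap, normalizes, and maximizes over pairs.
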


    \begin{proof}
        Choose an unordered pair $[i, j], i, j\in[n], i<j$ and denote 
        \begin{equation}
            \label{eq:f_ij}
            \boldsymbol{f}_{ij} = \left(0,\ \dots,0,\ \frac{1}{\hat{\theta}_i},\ 0,\ \dots,\ 0,\ -\frac{1}{\hat{\theta}_j},\ 0,\ \dots,\ 0\right)^{\top},
        \end{equation}
        by the above lemma, it holds that
        \begin{equation*}
            \begin{aligned}
                & \Big\langle \boldsymbol{Z}\boldsymbol{f}_{ij}, \boldsymbol{f}_{ij}\Big\rangle_{\boldsymbol{\theta}_{\Scale[0.5]{\mathcal{A}}}} &=&\ \ \left(\frac{1}{\hat{\theta}_i}z_{ii}-\frac{1}{\hat{\theta}_j}z_{ij}\right)-\left(\frac{1}{\hat{\theta}_i}z_{ji}-\frac{1}{\hat{\theta}_j}z_{ij}\right)\\[5pt]
                & &=&\ \ \frac{1}{\hat{\theta}_i}(z_{ii}-z_{ji})+\frac{1}{\hat{\theta}_j}(z_{jj}-z_{ij})\\[5pt]
                & &=&\ \ \mathbb{E}_j[T_i] + \mathbb{E}_i[T_j].
            \end{aligned}
        \end{equation*}
        Moreover, the normalization of $\boldsymbol{f}_{ij}$ satisfies
        \begin{equation}
            \label{eq:inter_result}
            \left\langle \boldsymbol{Z}\frac{\boldsymbol{f}_{ij}}{\|\boldsymbol{f}_{ij}\|_{\boldsymbol{\theta}_{\mathcal{A}}}}, \frac{\boldsymbol{f}_{ij}}{\|\boldsymbol{f}_{ij}\|_{\boldsymbol{\theta}_{\mathcal{A}}}}\right\rangle_{\boldsymbol{\theta}_{\Scale[0.5]{\mathcal{A}}}} = \frac{\hat{\theta}_i\hat{\theta}_j}{\hat{\theta}_i+\hat{\theta}_j}\cdot\Big(\mathbb{E}_j[T_i] + \mathbb{E}_i[T_j]\Big).
        \end{equation}
        Taking the maximum operation with respect to $i$ and $j$, we prove the lower bound of $v(\boldsymbol{P}_{\mathcal{A}})$.  
    \end{proof}

    The following lemma shows a lower bound of the expectation part in the right hand of \eqref{eq:inter_result}, which can be also found in \cite[Chapter 2, Corollary 8]{aldous-fill-2014}.

    \begin{lemma}
        \label{lemma:1}
        Let $\boldsymbol{P}_{\mathcal{A}}$ be an irreducible transition matrix with invariant distribution $\boldsymbol{\theta}_{\mathcal{A}}$. For any unordered pair $[i, j], i, j\in[n], i<j$,
        \begin{equation}
            \hat{\theta}_j\Big(\mathbb{E}_j[T_i] + \mathbb{E}_i[T_j]\Big)  = \frac{1}{\mathbb{P}_j(T_i<T_j^+)},
        \end{equation}
        where $T_j^+$ denote the first return time to $j$ of chain $\boldsymbol{X}$
        \begin{equation*}
            T_j^+ = \underset{}{\inf}\left\{t\geq 1\ \Big\vert\  X_t = j\right\},
        \end{equation*}
        and $\mathbb{P}_i(\cdot)$ stands for the probability of the chain $\boldsymbol{X}$ started at item (state) $i$ and time $t=0$.
    \end{lemma}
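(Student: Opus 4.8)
The plan is to combine the mean-return-time (Kac) identity with two applications of the strong Markov property that split the excursions of $\boldsymbol{X}$ away from $j$ according to whether they reach $i$ first. Throughout write $p:=\mathbb{P}_j(T_i<T_j^+)$ for the probability that, started at $j$, the chain visits $i$ before returning to $j$, and let $A:=T_i\wedge T_j^+$ be the first time the chain either hits $i$ or returns to $j$. Irreducibility guarantees $0<p\leq 1$ and that all the expectations below are finite, and since $i\neq j$ the event $\{T_i=T_j^+\}$ has probability zero.

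First I would record the renewal identity $\mathbb{E}_j[T_i]=\mathbb{E}_j[A]/p$. Running the chain from $j$ up to time $A$: on $\{T_i<T_j^+\}$ (probability $p$) we have $A=T_i$ and the search for $i$ is complete; on the complementary event $\{T_j^+<T_i\}$ we have $A=T_j^+$, the chain sits at $j$ again, and by the strong Markov property the residual time to reach $i$ is an independent copy of $T_i$ started at $j$. Taking expectations gives $\mathbb{E}_j[T_i]=\mathbb{E}_j[A]+(1-p)\,\mathbb{E}_j[T_i]$, hence $\mathbb{E}_j[T_i]=\mathbb{E}_j[A]/p$.

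Next I would decompose the mean return time $\mathbb{E}_j[T_j^+]$ through the hitting of $i$. Splitting on $\{T_j^+<T_i\}$ versus $\{T_i<T_j^+\}$ and applying the strong Markov property at $T_i$ on the second event — after which the chain, now at state $i$, must still travel to $j$, contributing $\mathbb{E}_i[T_j]$ — yields
$$\mathbb{E}_j[T_j^+]=\mathbb{E}_j\!\left[A\,\mathbb{I}\{T_j^+<T_i\}\right]+\mathbb{E}_j\!\left[A\,\mathbb{I}\{T_i<T_j^+\}\right]+p\,\mathbb{E}_i[T_j]=\mathbb{E}_j[A]+p\,\mathbb{E}_i[T_j],$$
where I have used $A=T_j^+$ on the first event and $A=T_i$ on the second. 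Invoking Kac's formula $\mathbb{E}_j[T_j^+]=1/\hat{\theta}_j$ and substituting $\mathbb{E}_j[A]=p\,\mathbb{E}_j[T_i]$ from the previous step gives $1/\hat{\theta}_j=p\,(\mathbb{E}_j[T_i]+\mathbb{E}_i[T_j])$, which rearranges to the claimed $\hat{\theta}_j(\mathbb{E}_j[T_i]+\mathbb{E}_i[T_j])=1/p=1/\mathbb{P}_j(T_i<T_j^+)$.

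The main obstacle is the bookkeeping of the strong Markov decompositions: one must verify that on $\{T_i<T_j^+\}$ the first return to $j$ genuinely equals $T_i$ plus the hitting time of $j$ from $i$, i.e. that no visit to $j$ occurs before $T_i$, which is exactly what the event $\{T_i<T_j^+\}$ encodes, together with the fact that $\{T_i<T_j^+\}\in\mathcal{F}_{T_i}$ so that the shifted excursion is conditionally independent of it. A secondary point is supplying Kac's formula $\mathbb{E}_j[T_j^+]=1/\hat{\theta}_j$; since $\boldsymbol{\theta}_{\mathcal{A}}$ is the unique invariant distribution of the irreducible chain, this is the standard mean-recurrence-time identity and can be cited from \cite{aldous-fill-2014}. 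Everything else is linearity of expectation.
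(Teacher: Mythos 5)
Your proof is correct. Note first that the paper itself does not prove this lemma at all: it simply cites it as Corollary 8 of Chapter 2 in Aldous--Fill, so there is no in-paper argument to compare against. Your derivation --- stopping the chain at $A=T_i\wedge T_j^+$, using the strong Markov property at $T_j^+$ on $\{T_j^+<T_i\}$ to get the renewal identity $\mathbb{E}_j[T_i]=\mathbb{E}_j[A]/p$, then splitting $\mathbb{E}_j[T_j^+]=\mathbb{E}_j[A]+p\,\mathbb{E}_i[T_j]$ via the strong Markov property at $T_i$ on $\{T_i<T_j^+\}$, and finally invoking Kac's formula $\mathbb{E}_j[T_j^+]=1/\hat{\theta}_j$ --- is the classical excursion-decomposition route to this identity and all the steps check out: on the finite state space $[n]$ irreducibility makes every hitting time integrable, the event $\{T_i<T_j^+\}$ is $\mathcal{F}_{T_i}$-measurable as you note, and the residual time after $T_i$ is indeed $\mathbb{E}_i[T_j]$ since $T_j=T_j^+$ when started from $i\neq j$. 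The only cosmetic remark is that your first decomposition is more transparently written as $T_i=A+(T_i-A)\mathbb{I}\{T_j^+<T_i\}$ before taking expectations, but that is exactly what you computed. What your approach buys over the paper's is a self-contained proof rather than an external citation; what it costs is nothing, since the argument is short.
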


    Now we can prove the following results.
    \irrevisiblematrix*

    \begin{proof}
        With the above lemmas and propositions, we have
        \begin{equation}
            \begin{aligned}
                &v(\boldsymbol{P}_{\mathcal{A}})&:=&\ \ \underset{\boldsymbol{f}\in\mathcal{F},\ \|\boldsymbol{f}\|_{\Scale[0.5]{\boldsymbol{\theta}_{\mathcal{A}}}}=1}{\textbf{\textit{sup}}} \Big\langle\big(\boldsymbol{I}-\boldsymbol{P}_{\mathcal{A}}\big)^{-1}\boldsymbol{f},\ \boldsymbol{f}\Big\rangle_{\boldsymbol{\theta}_{\mathcal{A}}}\\
                & &\geq&\ \ \underset{i,j\in[n]}{\max}\ \frac{\hat{\theta}_i\hat{\theta}_j}{\hat{\theta}_i+\hat{\theta}_j}\cdot\Big(\mathbb{E}_{i}[T_j]+\mathbb{E}_{j}[T_i]\Big)\\
                & &=&\ \ \underset{i,j\in[n]}{\max}\ \frac{\hat{\theta}_i}{\hat{\theta}_i+\hat{\theta}_j}\cdot\hat{\theta}_j\Big(\mathbb{E}_{i}[T_j]+\mathbb{E}_{j}[T_i]\Big)\\
                & &\geq&\ \ \underset{i,j\in[n]}{\max}\ \frac{\hat{\theta}_i}{\hat{\theta}_i+\hat{\theta}_j}\\
                & &=&\ \ \frac{\hat{\theta}_n}{\hat{\theta}_1+\hat{\theta}_n}. 
            \end{aligned}
        \end{equation}
        The first inequality holds by the Proposition \ref{prop:1}, the second one holds by the Lemma \ref{lemma:1} and the last equality holds with the fact
        \begin{equation*}
            0<\hat{\theta}_1\leq\hat{\theta}_2\leq\dots\leq\hat{\theta}_n<1.
        \end{equation*}
    \end{proof}

    Next, we give a specific formulation of $\boldsymbol{P}_{\mathcal{A}}$ with invariant distribution $\boldsymbol{\theta}_{\mathcal{A}}$, which can help establish the fixed point of the corresponding adversarial game.

    \irreversibilefinal

    \begin{equation}
        \hat{P}_{ij} = \left\{
        \begin{array}{rl}
            0,&1\leq j<i<n,\\[7.5pt]
            \frac{\displaystyle \hat{\theta}_i-\hat{\theta}_1}{\displaystyle \hat{\theta}_i-\hat{\theta}_n},&1\leq j=i<n,\\[7.5pt]
            \frac{\displaystyle\hat{\theta}_j\big(\hat{\theta}_1+\hat{\theta}_n\big)}{\displaystyle\hat{\theta}_n\big(\hat{\theta}_j+\hat{\theta}_n\big)}\ {\displaystyle\prod_{k=1}^{j-1}}\ \frac{\displaystyle\hat{\theta}_n-\hat{\theta}_k}{\displaystyle\hat{\theta}_k+\hat{\theta}_n},&1\leq j<n=i,\\[7.5pt]
            \frac{\displaystyle2\hat{\theta}_j\big(\hat{\theta}_1+\hat{\theta}_n\big)}{\displaystyle\big(\hat{\theta}_i+\hat{\theta}_n\big)\big(\hat{\theta}_j+\hat{\theta}_n\big)}{\displaystyle\prod_{k=i+1}^{j-1}}\frac{\displaystyle\hat{\theta}_n-\hat{\theta}_k}{\displaystyle\hat{\theta}_k+\hat{\theta}_n},&1\leq i<j<n,\\[7.5pt]
            1-{\displaystyle\sum_{k=1}^{n-1}}\hat{P}_{ik},&\text{otherwise.} 
        \end{array}
        \right.
        \tag{\ref{eq:irreversibiletransitionmatrix} revisited}
    \end{equation}

    \begin{proof}
    \begin{enumerate}
        \item To prove $\boldsymbol{P}_{\mathcal{A}}$ \eqref{eq:irreversibiletransitionmatrix} is irreducible, it is obvious as 
        \begin{equation}
            \hat{P}_{n1} = \frac{\hat{\theta}_1}{\hat{\theta}_n}>0,\ \ \hat{P}_{i,i+1} = \frac{2\hat{\theta}_{i+1}\big(\hat{\theta}_1+\hat{\theta}_n\big)}{\big(\hat{\theta}_i+\hat{\theta}_n\big)\big(\hat{\theta}_(i+1)+\hat{\theta}_n\big)}>0,\ \ \forall\ i\in[n].
        \end{equation}
        \item To prove the result, it suffices to show that
        \begin{equation}
            \hat{P}_{in}\geq 0,\ \forall\ i\in[n]
        \end{equation}
        or equivalently,
        \begin{equation}
            \sum_{j=1}^{n-1}\hat{P}_{ij}\leq 1,\ \forall\ i\in[n].
        \end{equation}
        For each $i<n$, we claim that
        \begin{equation}
            \label{eq:positive_results}
            1 - \sum_{k=1}^j\hat{P}_{ik} = \frac{\hat{\theta}_1+\hat{\theta}_n}{\hat{\theta}_i+\hat{\theta}_n}\prod_{k=i+1}^{j}\frac{\hat{\theta}_n-\hat{\theta}_k}{\hat{\theta}_n+\hat{\theta}_k},\ \ \forall\ i\leq j< n,
        \end{equation}
        and
        \begin{equation}
            1 - \hat{P}_{ii} = \frac{\hat{\theta}_1+\hat{\theta}_n}{\hat{\theta}_i+\hat{\theta}_n}.
        \end{equation}
        Suppose that the statement \eqref{eq:positive_results} holds for $j = {j}'$ with $i<{j}'<n-1$. Then for $j={j}'+1$,
        \begin{equation}
            \label{eq:ini}
            \begin{aligned}
                & 1 - \underset{k\leq {j}'+1}{\sum}\hat{P}_{ik} &=&\ \ 1 - \underset{k\leq {j}'}{\sum}\hat{P}_{ik} -\hat{P}_{i({j}'+1)}\\
                & &=&\ \ \frac{\hat{\theta}_1+\hat{\theta}_n}{\hat{\theta}_i+\hat{\theta}_n}\prod_{k=i+1}^{{j}'}\frac{\hat{\theta}_n-\hat{\theta}_k}{\hat{\theta}_n+\hat{\theta}_k}-\frac{2\hat{\theta}_{{j}'+1}(\hat{\theta}_1+\hat{\theta}_n)}{(\hat{\theta}_i+\hat{\theta}_n)(\hat{\theta}_{{j}'+1}+\hat{\theta}_n)}\prod_{k=i+1}^{{j}'}\frac{\hat{\theta}_n-\hat{\theta}_k}{\hat{\theta}_n+\hat{\theta}_k}\\[5pt]
                & &=&\ \ \frac{\hat{\theta}_1+\hat{\theta}_n}{\hat{\theta}_i+\hat{\theta}_n}\prod_{k=i+1}^{{j}'+1}\frac{\hat{\theta}_n-\hat{\theta}_k}{\hat{\theta}_n+\hat{\theta}_k}.
            \end{aligned}
        \end{equation}
        Therefore, \eqref{eq:positive_results} has been proved by the induction. When $i=n$, we observe that
        \begin{equation}
            \frac{\hat{P}_{nj}}{\hat{P}_{1j}} = \frac{\hat{\theta}_1+\hat{\theta}_n}{2\hat{\theta}_n}\cdot\frac{\hat{\theta}_n-\hat{\theta}_1}{\hat{\theta}_n+\hat{\theta}_1} = \frac{\hat{\theta}_n-\hat{\theta}_1}{2\hat{\theta}_n},\ \ \forall\ 1<j<n.
        \end{equation}
        Furthermore,
        \begin{equation}
            \label{eq:in_i}
            \begin{aligned}
                & \underset{j<n}{\sum}\hat{P}_{nj} &=&\ \ \hat{P}_{n1}+\sum_{j=2}^{n-1}\hat{P}_{nj}\\
                & &=&\ \ \frac{\hat{\theta}_1}{\hat{\theta}_n} + \frac{\hat{\theta}_n-\hat{\theta}_1}{2\hat{\theta}_n}\sum_{j=2}^{n-1}\hat{P}_{1j}\\[5pt]
                & &\leq&\ \ \frac{\hat{\theta}_1}{\hat{\theta}_n} + \frac{\hat{\theta}_n-\hat{\theta}_1}{2\hat{\theta}_n}\\[5pt]
                & &=&\ \ \frac{\hat{\theta}_1+\hat{\theta}_n}{2\hat{\theta}_n}\\[5pt]
                & &\leq&\ \ 1.
            \end{aligned}
        \end{equation}
        Combine \eqref{eq:ini} and \eqref{eq:in_i}, we have proved the non-negativity of $\boldsymbol{P}_{\mathcal{A}}$.
        \item To prove $\boldsymbol{P}_{\mathcal{A}}$ \eqref{eq:irreversibiletransitionmatrix} is invariant with $\boldsymbol{\theta}_{\mathcal{A}}$, we claim that
        \begin{equation}
            \label{eq:4claim}
            \hat{\theta}_n\hat{P}_{nj}+\sum_{k=1}^{i}\hat{\theta}_k\hat{P}_{kj}=\frac{\hat{\theta}_j(\hat{\theta}_1+\hat{\theta}_n)}{\hat{\theta}_j+\hat{\theta}_n}\prod_{k=i+1}^{j-1}\frac{\hat{\theta}_n-\hat{\theta}_k}{\hat{\theta}_n+\hat{\theta}_k},\ \ \forall\ i<j<n.
        \end{equation}
        When $i=1$, \eqref{eq:irreversibiletransitionmatrix} shows that
        \begin{equation}
            \begin{aligned}
                &\hat{\theta}_n\hat{P}_{nj}+\hat{\theta}_1\hat{P}_{1j}&=&\ \ \frac{\hat{\theta}_j(\hat{\theta}_1+\hat{\theta}_n)}{\hat{\theta}_j+\hat{\theta}_n}\prod_{k=1}^{j-1}\frac{\hat{\theta}_n-\hat{\theta}_k}{\hat{\theta}_n+\hat{\theta}_k}+\frac{2\hat{\theta}_i\hat{\theta}_j(\hat{\theta}_1+\hat{\theta}_n)}{(\hat{\theta}_1+\hat{\theta}_n)(\hat{\theta}_j+\hat{\theta}_n)}\prod_{k=2}^{j-1}\frac{\hat{\theta}_n-\hat{\theta}_k}{\hat{\theta}_n+\hat{\theta}_k}\\
                & &=&\ \ \left(\frac{\hat{\theta}_n-\hat{\theta}_1}{\hat{\theta}_1+\hat{\theta}_n}+\frac{2\hat{\theta}_1}{\hat{\theta}_1+\hat{\theta}_n}\right)\ \frac{\hat{\theta}_j(\hat{\theta}_1+\hat{\theta}_n)}{\hat{\theta}_j+\hat{\theta}_n}\ \prod_{k=2}^{j-1}\frac{\hat{\theta}_n-\hat{\theta}_k}{\hat{\theta}_n+\hat{\theta}_k}\\[5pt]
                & &=&\ \ \frac{\hat{\theta}_j(\hat{\theta}_1+\hat{\theta}_n)}{\hat{\theta}_j+\hat{\theta}_n}\ \prod_{k=2}^{j-1}\frac{\hat{\theta}_n-\hat{\theta}_k}{\hat{\theta}_n+\hat{\theta}_k}.
            \end{aligned}
        \end{equation}
        Assume that the claim \eqref{eq:4claim} holds for $i={i}'<j-1$, we can get the following result for $i={i}'+1<j$:
        \begin{equation}
            \begin{aligned}
                & & &\ \ \hat{\theta}_n\hat{P}_{nj}+\sum_{k=1}^{{i}'+1}\hat{\theta}_k\hat{P}_{kj}\\
                & &=&\ \ \hat{\theta}_n\hat{P}_{nj}+\sum_{k=1}^{{i}'}\hat{\theta}_k\hat{P}_{kj}+\hat{\theta}_{{i}'+1}\hat{P}_{{i}'j}\\
                & &=&\ \ \frac{\hat{\theta}_j(\hat{\theta}_1+\hat{\theta}_n)}{\hat{\theta}_j+\hat{\theta}_n}\prod_{k={i}'+1}^{j-1}\frac{\hat{\theta}_n-\hat{\theta}_k}{\hat{\theta}_n+\hat{\theta}_k}+\frac{2\hat{\theta}_{{i}'+1}\hat{\theta}_j(\hat{\theta}_1+\hat{\theta}_n)}{(\hat{\theta}_{{i}'+1}+\hat{\theta}_n)(\hat{\theta}_j+\hat{\theta}_n)}\prod_{k={i}+2}^{j-1}\frac{\hat{\theta}_n-\hat{\theta}_k}{\hat{\theta}_n+\hat{\theta}_k}\\[5pt]
                & &=&\ \ \frac{\hat{\theta}_j(\hat{\theta}_1+\hat{\theta}_n)}{\hat{\theta}_j+\hat{\theta}_n}\prod_{k={i}'+2}^{j-1}\frac{\hat{\theta}_n-\hat{\theta}_k}{\hat{\theta}_n+\hat{\theta}_k}\left(\frac{\hat{\theta}_n-\hat{\theta}_{{i}'+1}}{\hat{\theta}_{{i}'+1}+\hat{\theta}_n}+\frac{2\hat{\theta}_{{i}'+1}}{\hat{\theta}_{{i}'+1}+\hat{\theta}_n}\right)\\[5pt]
                & &=&\ \ \frac{\hat{\theta}_j(\hat{\theta}_1+\hat{\theta}_n)}{\hat{\theta}_j+\hat{\theta}_n}\prod_{k={i}'+2}^{j-1}\frac{\hat{\theta}_n-\hat{\theta}_k}{\hat{\theta}_n+\hat{\theta}_k}.
            \end{aligned}
        \end{equation}
        Therefore the claim \eqref{eq:4claim} holds by induction. When $j<i<n$, we have $\hat{P}_{ij} = 0$ and
        \begin{equation}
            \begin{aligned}
                & \sum_{k=1}^{n}\hat{\theta}_k\hat{P}_{kj} &=&\ \ \hat{\theta}_j\hat{P}_{jj} + \hat{\theta}_n\hat{P}_{nj} + \sum_{k=1}^{j-1}\hat{\theta}_k\hat{P}_{kj}\\
                & &=&\ \ \frac{\hat{\theta}_j(\hat{\theta}_j-\hat{\theta}_1)}{\hat{\theta}_j+\hat{\theta}_n} + \frac{\hat{\theta}_j(\hat{\theta}_1-\hat{\theta}_n)}{\hat{\theta}_j+\hat{\theta}_n}\\[5pt]
                & &=&\ \ \hat{\theta}_j.
            \end{aligned}
        \end{equation}
        At last, for $j=n$, we have
        \begin{equation}
            \begin{aligned}
                & \sum_{k=1}^{n}\hat{\theta}_k\hat{P}_{kn} &=&\ \ \sum_{k=1}^{n}\hat{\theta}_k\left(1-\sum_{l=1}^{n-1}\hat{P}_{kl}\right)\\
                & &=&\ \ \sum_{k=1}^{n}\hat{\theta}_k - \sum_{l=1}^{n-1}\sum_{k=1}^{n}\hat{\theta}_k\hat{P}_{kl}\\
                & &=&\ \ 1-\sum_{l=1}^{n-1}\hat{\theta}_l = \hat{\theta}_n.
            \end{aligned}
        \end{equation}
        Now we have proved $\boldsymbol{P}_{\mathcal{A}}$ \eqref{eq:irreversibiletransitionmatrix} is invariant with $\boldsymbol{\theta}_{\mathcal{A}}$.
    \end{enumerate}
    \end{proof}

    At last, we prove the construction of \eqref{eq:irreversibiletransitionmatrix} is one of the optimal transition matrices $\boldsymbol{P}_{\mathcal{A}}$ with invariant distribution $\boldsymbol{\theta}_{\mathcal{A}}$ which will lead $\nu(\boldsymbol{P}_{\mathcal{A}})$ to attain its lower bound.
    
    \irreversiblelowerbound*

    \begin{proof}
        By Proposition \ref{prop:1}, we know that the lower bound of $\nu(\boldsymbol{f},\boldsymbol{P})$ is related to \eqref{eq:prop_1_result}. We first show that constructing $\boldsymbol{P}_{\mathcal{A}}$ from \eqref{eq:irreversibiletransitionmatrix} will obtain 
        \begin{equation}
            \label{eq:induction_result}
            \frac{\hat{\theta}_i\hat{\theta}_j}{\hat{\theta}_i+\hat{\theta}_j}\left(\mathbb{E}_i[T_j]+\mathbb{E}_j[T_i]\right)= \frac{\hat{\theta}_n}{\hat{\theta}_1+\hat{\theta}_n},\ \ \forall\ i,j\in[n],\ i\neq j.
        \end{equation}
        Consider a new process $\boldsymbol{X}^{(-k)}$ for $k>1,k\in[n]$, which represents the process $\boldsymbol{X}$ ignoring the time spent in state $k$. First, we claim that $\boldsymbol{X}^{(-k)}$ is still a Markov process with the same type of transition matrix $\boldsymbol{P}^{(-k)}_{\mathcal{A}}=\{\hat{P}^{(-k)}_{ij}\}$ defined by \eqref{eq:irreversibiletransitionmatrix}. By the strong Markov property, $\boldsymbol{X}^{(-k)}$ is still Markov. By the definition of $\boldsymbol{X}^{(-k)}$, we have
        \begin{equation}
            \hat{P}^{(-k)}_{ij} = \hat{P}_{ij} + \frac{\hat{P}_{ik}\hat{P}_{kj}}{1-\hat{P}_{kk}}.
        \end{equation}
        For $i<n$, we have $\hat{P}^{(-k)}_{ij}=\hat{P}_{ij}$ if $k<i$ or $k>j$ as $\hat{P}_{ik}=0$ or $\hat{P}_{kj}=0$. So it suffices to prove the case $i<k<j$. Furthermore, it suffices to prove the case $j<n$ since $\hat{P}_{in}$ of \eqref{eq:irreversibiletransitionmatrix} is $1-\underset{j<n}{\sum}\hat{P}_{ij}$. By \eqref{eq:irreversibiletransitionmatrix}, we have
        \begin{equation}
            \begin{aligned}
                & \frac{\hat{P}_{ik}\hat{P}_{kj}}{1-\hat{P}_{kk}} &=&\ \ \frac{\hat{\theta}_n+\hat{\theta}_k}{\hat{\theta}_n+\hat{\theta}_1}\times\frac{2\hat{\theta}_k(\hat{\theta}_n+\hat{\theta}_1)}{(\hat{\theta}_n+\hat{\theta}_i)(\hat{\theta}_n+\hat{\theta}_k)}\prod_{l=i+1}^{k-1}\frac{\hat{\theta}_n-\hat{\theta}_l}{\hat{\theta}_n+\hat{\theta}_l}\times\frac{2\hat{\theta}_j(\hat{\theta}_n+\hat{\theta}_1)}{(\hat{\theta}_n+\hat{\theta}_k)(\hat{\theta}_n+\hat{\theta}_j)}\prod_{l=k+1}^{j-1}\frac{\hat{\theta}_n-\hat{\theta}_l}{\hat{\theta}_n+\hat{\theta}_l}\\[5pt]
                & &=&\ \ \frac{2\hat{\theta}_k}{\hat{\theta}_n+\hat{\theta}_k}\times\frac{2\hat{\theta}_j(\hat{\theta}_n+\hat{\theta}_1)}{(\hat{\theta}_n+\hat{\theta}_i)(\hat{\theta}_n+\hat{\theta}_j)}\prod_{l=i+1}^{k-1}\frac{\hat{\theta}_n-\hat{\theta}_l}{\hat{\theta}_n+\hat{\theta}_l}\prod_{l=k+1}^{j-1}\frac{\hat{\theta}_n-\hat{\theta}_l}{\hat{\theta}_n+\hat{\theta}_l}.
            \end{aligned}
        \end{equation}
        Then 
        \begin{equation}
            \begin{aligned}
                & \hat{P}^{(-k)}_{ij} &=&\ \ \hat{P}_{ij}+\frac{\hat{P}_{ik}\hat{P}_{kj}}{1-\hat{P}_{kk}}\\[5pt]
                & &=&\ \ \left(\frac{\hat{\theta}_n-\hat{\theta}_k}{\hat{\theta}_n+\hat{\theta}_k}+\frac{2\hat{\theta}_k}{\hat{\theta}_n+\hat{\theta}_k}\right)\times\frac{2\hat{\theta}_j(\hat{\theta}_n+\hat{\theta}_1)}{(\hat{\theta}_n+\hat{\theta}_i)(\hat{\theta}_n+\hat{\theta}_j)}\prod_{l=i+1}^{k-1}\frac{\hat{\theta}_n-\hat{\theta}_l}{\hat{\theta}_n+\hat{\theta}_l}\prod_{l=k+1}^{j-1}\frac{\hat{\theta}_n-\hat{\theta}_l}{\hat{\theta}_n+\hat{\theta}_l}\\[5pt]
                & &=&\ \ \frac{2\hat{\theta}_j(\hat{\theta}_n+\hat{\theta}_1)}{(\hat{\theta}_n+\hat{\theta}_i)(\hat{\theta}_n+\hat{\theta}_j)}\prod_{l=i+1}^{k-1}\frac{\hat{\theta}_n-\hat{\theta}_l}{\hat{\theta}_n+\hat{\theta}_l}\prod_{l=k+1}^{j-1}\frac{\hat{\theta}_n-\hat{\theta}_l}{\hat{\theta}_n+\hat{\theta}_l}.
            \end{aligned}
        \end{equation}
        When $i=n$,
        \begin{equation}
            \begin{aligned}
                & \frac{\hat{P}_{nk}\hat{P}_{kj}}{1-\hat{P}_{kk}} &=&\ \ \frac{\hat{\theta}_n+\hat{\theta}_k}{\hat{\theta}_n+\hat{\theta}_1}\times\frac{\hat{\theta}_k(\hat{\theta}_n+\hat{\theta}_1)}{\hat{\theta}_n(\hat{\theta}_n+\hat{\theta}_k)}\prod_{l=i+1}^{k-1}\frac{\hat{\theta}_n-\hat{\theta}_l}{\hat{\theta}_n+\hat{\theta}_l}\times\frac{2\hat{\theta}_j(\hat{\theta}_n+\hat{\theta}_1)}{(\hat{\theta}_n+\hat{\theta}_k)(\hat{\theta}_n+\hat{\theta}_j)}\prod_{l=k+1}^{j-1}\frac{\hat{\theta}_n-\hat{\theta}_l}{\hat{\theta}_n+\hat{\theta}_l}\\[5pt]
                & &=&\ \ \frac{2\hat{\theta}_k}{\hat{\theta}_n+\hat{\theta}_k}\times\frac{\hat{\theta}_j(\hat{\theta}_n+\hat{\theta}_1)}{\hat{\theta}_n(\hat{\theta}_n+\hat{\theta}_j)}\prod_{l=i+1}^{k-1}\frac{\hat{\theta}_n-\hat{\theta}_l}{\hat{\theta}_n+\hat{\theta}_l}\prod_{l=k+1}^{j-1}\frac{\hat{\theta}_n-\hat{\theta}_l}{\hat{\theta}_n+\hat{\theta}_l},
            \end{aligned}
        \end{equation}
        and
        \begin{equation}
            \begin{aligned}
                & \hat{P}^{(-k)}_{nj} &=&\ \ \hat{P}_{nj}+\frac{\hat{P}_{nk}\hat{P}_{kj}}{1-\hat{P}_{kk}}\\[5pt]
                & &=&\ \ \left(\frac{\hat{\theta}_n-\hat{\theta}_k}{\hat{\theta}_n+\hat{\theta}_k}+\frac{2\hat{\theta}_k}{\hat{\theta}_n+\hat{\theta}_k}\right)\times\frac{\hat{\theta}_j(\hat{\theta}_n+\hat{\theta}_1)}{\hat{\theta}_n(\hat{\theta}_n+\hat{\theta}_j)}\prod_{l=i+1}^{k-1}\frac{\hat{\theta}_n-\hat{\theta}_l}{\hat{\theta}_n+\hat{\theta}_l}\prod_{l=k+1}^{j-1}\frac{\hat{\theta}_n-\hat{\theta}_l}{\hat{\theta}_n+\hat{\theta}_l}\\[5pt]
                & &=&\ \ \frac{\hat{\theta}_j(\hat{\theta}_n+\hat{\theta}_1)}{\hat{\theta}_n(\hat{\theta}_n+\hat{\theta}_j)}\prod_{l=i+1}^{k-1}\frac{\hat{\theta}_n-\hat{\theta}_l}{\hat{\theta}_n+\hat{\theta}_l}\prod_{l=k+1}^{j-1}\frac{\hat{\theta}_n-\hat{\theta}_l}{\hat{\theta}_n+\hat{\theta}_l}.             
            \end{aligned}
        \end{equation}
        These complete the verification of claim that the transition matrix of $\boldsymbol{X}^{(-k)}$ has the same formulation as \eqref{eq:irreversibiletransitionmatrix}.

        Next we shall prove the result by mathematical induction. Starting with the case $n=3$, \eqref{eq:irreversibiletransitionmatrix} shows that
        \begin{equation}
            \boldsymbol{P}_{\mathcal{A}} = \begin{pmatrix}
            \displaystyle 0 & \displaystyle\frac{2\hat{\theta}_2}{\hat{\theta}_2+\hat{\theta}_3} & \displaystyle\frac{\hat{\theta}_3-\hat{\theta}_2}{\hat{\theta}_3+\hat{\theta}_2}\\[15pt]
            \displaystyle 0 & \displaystyle\frac{\hat{\theta}_2-\hat{\theta}_1}{\hat{\theta}_3+\hat{\theta}_2} & \displaystyle\frac{\hat{\theta}_3+\hat{\theta}_1}{\hat{\theta}_3+\hat{\theta}_2}\\[15pt]
            \displaystyle\frac{\hat{\theta}_1}{\hat{\theta}_3} & \displaystyle\frac{\hat{\theta}_2(\hat{\theta}_3-\hat{\theta}_1)}{\hat{\theta}_2(\hat{\theta}_3+\hat{\theta}_2)} & \displaystyle\frac{\hat{\theta}_3-\hat{\theta}_1}{\hat{\theta}_3+\hat{\theta}_2}.
            \end{pmatrix}
        \end{equation}
        Recall from Lemma \ref{lemma:1}, we have 
        \begin{equation*}
            \hat{\theta}_j(\mathbb{E}_i[T_j]+\mathbb{E}_j[T_i]) = \frac{1}{\mathbb{P}_j(T_i<T_j^+)}.
        \end{equation*}
        For $j=1$ and $i=3$, $\mathbb{P}_1(T_3<T_1^+)=1$ as the chain $\boldsymbol{X}$ starting from $1$ must first go to $3$ before return $1$. Therefore \eqref{eq:induction_result} holds. For $j=2$ and $i=3$, since visiting $1$ means arriving at $3$ earlier, the chain $\boldsymbol{X}$ will visit $3$ before returning to $2$ unless it returns immediately. That is,
        \begin{equation}
            \mathbb{P}_2(T_3<T_2^+) = \hat{P}_{23} = \frac{\hat{\theta}_3+\hat{\theta}_1}{\hat{\theta}_3+\hat{\theta}_2}.
        \end{equation}
        With Lemma \ref{lemma:1}, \eqref{eq:induction_result} holds. For $j=1$ and $i=2$, the probability that  chain $\boldsymbol{X}$ returning to $1$ without visiting $2$ equals to the probability of going to $3$ from $1$ multiplied by the probability of arriving at $1$ after leaving from $3$. That is 
        \begin{equation}
            \mathbb{P}_1(T_2<T_1^+) = 1-\mathbb{P}_1(T^+_1<T_2)=1-\hat{P}_{13}\frac{\hat{P}_{31}}{\hat{P}_{31}+\hat{P}_{32}} = \frac{\hat{\theta}_2(\hat{\theta}_1+\hat{\theta}_3)}{\hat{\theta}_3(\hat{\theta}_1+\hat{\theta}_2)}.
        \end{equation}
        Consequently, we have 
        \begin{equation}
            \begin{aligned}
                & \frac{\hat{\theta}_1\hat{\theta}_2}{\hat{\theta}_1+\hat{\theta}_2}(\mathbb{E}_2[T_1]+\mathbb{E}_1[T_2]) &=&\ \ \frac{\hat{\theta}_2}{\hat{\theta}_1+\hat{\theta}_2}\cdot\frac{1}{\mathbb{P}_1(T_2<T_1^+)}\\[5pt]
                & &=&\ \ \frac{\hat{\theta}_2}{\hat{\theta}_1+\hat{\theta}_2}\cdot\frac{\hat{\theta}_3(\hat{\theta}_1+\hat{\theta}_2)}{\hat{\theta}_2(\hat{\theta}_1+\hat{\theta}_3)}\\[5pt]
                & &=&\ \ \frac{\hat{\theta}_3}{\hat{\theta}_1+\hat{\theta}_3}.
            \end{aligned}
        \end{equation}
        Assume that the claim \eqref{eq:induction_result} is true for the number of state being $n-1$ $(n >3)$. We want to prove that \eqref{eq:induction_result} still holds when the number of state being $n$. This statement is ture when $i=1$ and $j=n$ by the same argument for the case $n=3$. Then, for any unorder pair $[i,j],i\neq j$, we can choose a state $k\neq 1, i, j,n$ and it holds
        \begin{equation}
              \frac{\hat{\theta}_i^{(-k)}\hat{\theta}_j^{(-k)}}{\hat{\theta}_i^{(-k)}+\hat{\theta}_j^{(-k)}}\left(\mathbb{E}_i[T^{(-k)}_j]+\mathbb{E}_j[T^{(-k)}_i]\right)=\frac{\hat{\theta}_n^{(-k)}}{\hat{\theta}_1^{(-k)}+\hat{\theta}_n^{(-k)}},
        \end{equation}
    where the super-index $(-k)$ represents the associated statistic for the Markov chain $\boldsymbol{X}$ ignoring state $k$. On one hand, from Lemma 9 in \cite[Chapter 2]{aldous-fill-2014}, we know that the number of visiting $k$ from $i$ before time $T_j$ is $\hat{\theta}_k(\mathbb{E}_i[T_j]+\mathbb{E}_j[T_i]-\mathbb{E}_i[T_k])$. On the other hand, this number equals to $\mathbb{E}_i[T_j]-\mathbb{E}_i[T^{(-k)}_j]$. So
    \begin{equation}
        \hat{\theta}_k\left(\mathbb{E}_i[T_j]+\mathbb{E}_j[T_i]-\mathbb{E}_i[T_k]\right)=\mathbb{E}_i[T_j]-\mathbb{E}_i[T^{(-k)}_j].
    \end{equation}
    Change the role of $i$ and $j$, we have
    \begin{equation}
        \hat{\theta}_k\left(\mathbb{E}_j[T_i]+\mathbb{E}_i[T_k]-\mathbb{E}_j[T_k]\right)=\mathbb{E}_j[T_i]-\mathbb{E}_j[T^{(-k)}_i].
    \end{equation}
    Sum the above two equations and we obtain
    \begin{equation}
           (1-\hat{\theta}_k)\left(\mathbb{E}_i[T_j]+\mathbb{E}_j[T_i]\right)=\mathbb{E}_i[T^{(-k)}_j]+\mathbb{E}_j[T^{(-k)}_i].
    \end{equation}
    Then we have
    \begin{equation}
        \begin{aligned}
            & \frac{\hat{\theta}_i\hat{\theta}_j}{\hat{\theta}_i+\hat{\theta}_j}\left(\mathbb{E}_i[T_j]+\mathbb{E}_j[T_i]\right) &=&\ \ \frac{\hat{\theta}_i\hat{\theta}_j}{\hat{\theta}_i+\hat{\theta}_j}\cdot\frac{1}{1-\hat{\theta}_k}\cdot\left(\mathbb{E}_i[T^{(-k)}_j]+\mathbb{E}_j[T^{(-k)}_i]\right)\\[5pt]
            & &=&\ \ \frac{\hat{\theta}^{(-k)}_i\hat{\theta}^{(-k)}_j}{\hat{\theta}^{(-k)}_i+\hat{\theta}^{(-k)}_j}\cdot\left(\mathbb{E}_i[T^{(-k)}_j]+\mathbb{E}_j[T^{(-k)}_i]\right)\\[5pt]
            & &=&\ \ \frac{\hat{\theta}^{(-k)}_n}{\hat{\theta}^{(-k)}_1+\hat{\theta}^{(-k)}_n}\\[5pt]
            & &=&\ \ \frac{\hat{\theta}_n}{\hat{\theta}_1+\hat{\theta}_n}
        \end{aligned}
    \end{equation}
    and this complete the proof of \eqref{eq:induction_result}. 

    Next we will give the exact value of $\langle\boldsymbol{Z}\boldsymbol{f},\boldsymbol{f}\rangle$ when $\boldsymbol{f}\in\mathcal{F}$. Recall the definition of $\boldsymbol{f}_{ij}$ \eqref{eq:f_ij} and Proposition \ref{prop:1}, we have
    \begin{equation}
        \left\langle \boldsymbol{Z}\frac{\boldsymbol{f}_{ij}}{\|\boldsymbol{f}_{ij}\|_{\boldsymbol{\theta}_{\mathcal{A}}}}, \frac{\boldsymbol{f}_{ij}}{\|\boldsymbol{f}_{ij}\|_{\boldsymbol{\theta}_{\mathcal{A}}}}\right\rangle_{\boldsymbol{\theta}_{\Scale[0.5]{\mathcal{A}}}} = \frac{\hat{\theta}_i\hat{\theta}_j}{\hat{\theta}_i+\hat{\theta}_j}\cdot\Big(\mathbb{E}_j[T_i] + \mathbb{E}_i[T_j]\Big),\ \ \forall\ i\neq j. 
        \tag{\ref{eq:inter_result} revisited}
    \end{equation}
    Substitute the right hand of \eqref{eq:inter_result} with \eqref{eq:induction_result}, it becomes
    \begin{equation}
        \left\langle \boldsymbol{Z}\frac{\boldsymbol{f}_{ij}}{\|\boldsymbol{f}_{ij}\|_{\boldsymbol{\theta}_{\mathcal{A}}}}, \frac{\boldsymbol{f}_{ij}}{\|\boldsymbol{f}_{ij}\|_{\boldsymbol{\theta}_{\mathcal{A}}}}\right\rangle_{\boldsymbol{\theta}_{\Scale[0.5]{\mathcal{A}}}} =\ \ \frac{\hat{\theta}_n}{\hat{\theta}_1+\hat{\theta}_n},
    \end{equation}
    or equivalently
    \begin{equation}
        \Big\langle \boldsymbol{Z}\boldsymbol{f}_{ij}, \boldsymbol{f}_{ij}\Big\rangle_{\boldsymbol{\theta}_{\Scale[0.5]{\mathcal{A}}}} =\ \ \frac{\hat{\theta}_n}{\hat{\theta}_1+\hat{\theta}_n}\Big\langle\boldsymbol{f}_{ij}, \boldsymbol{f}_{ij}\Big\rangle_{\boldsymbol{\theta}_{\Scale[0.5]{\mathcal{A}}}} =\ \ \frac{\hat{\theta}_n}{\hat{\theta}_1+\hat{\theta}_n}\left(\frac{1}{\hat{\theta}_i}+\frac{1}{\hat{\theta}_j}\right).
    \end{equation}
    Our goal is to prove
    \begin{equation}
         \Big\langle \boldsymbol{Z}\boldsymbol{f}, \boldsymbol{f}\Big\rangle_{\boldsymbol{\theta}_{\Scale[0.5]{\mathcal{A}}}} =\ \ \frac{\hat{\theta}_n}{\hat{\theta}_1+\hat{\theta}_n}\Big\langle\boldsymbol{f}, \boldsymbol{f}\Big\rangle_{\boldsymbol{\theta}_{\Scale[0.5]{\mathcal{A}}}},\ \ \forall\ \boldsymbol{f}\in\mathcal{F}.
    \end{equation}
    Since $\{\boldsymbol{f}_{in}\},i\in[n-1]$ are linearly independent, $\{\boldsymbol{f}_{in}\},i\in[n-1]$ forms a set of basis of $\mathcal{F}$. For any $\boldsymbol{f}\in\mathcal{F}$,
    \begin{equation}
        \boldsymbol{f} = \sum_{i=1}^{n-1}\alpha_i\boldsymbol{f}_{in},
    \end{equation}
    and $\boldsymbol{f}_{ij} = \boldsymbol{f}_{in} - \boldsymbol{f}_{jn}$. Then
    \begin{equation}
        \begin{aligned}
            & \Big\langle \boldsymbol{Z}\boldsymbol{f}_{ij}, \boldsymbol{f}_{ij}\Big\rangle_{\boldsymbol{\theta}_{\Scale[0.5]{\mathcal{A}}}} &=&\ \ \Big\langle \boldsymbol{Z}(\boldsymbol{f}_{in} - \boldsymbol{f}_{jn}), \boldsymbol{f}_{in} - \boldsymbol{f}_{jn}\Big\rangle_{\boldsymbol{\theta}_{\Scale[0.5]{\mathcal{A}}}}\\[7.5pt]
            & &=&\ \ \Big\langle \boldsymbol{Z}\boldsymbol{f}_{in}, \boldsymbol{f}_{in}\Big\rangle_{\boldsymbol{\theta}_{\Scale[0.5]{\mathcal{A}}}}+\Big\langle \boldsymbol{Z}\boldsymbol{f}_{jn}, \boldsymbol{f}_{jn}\Big\rangle_{\boldsymbol{\theta}_{\Scale[0.5]{\mathcal{A}}}} - \Big\langle \boldsymbol{Z}\boldsymbol{f}_{in}, \boldsymbol{f}_{jn}\Big\rangle_{\boldsymbol{\theta}_{\Scale[0.5]{\mathcal{A}}}} - \Big\langle \boldsymbol{Z}\boldsymbol{f}_{jn}, \boldsymbol{f}_{in}\Big\rangle_{\boldsymbol{\theta}_{\Scale[0.5]{\mathcal{A}}}}.
        \end{aligned}
    \end{equation}
    Then
    \begin{equation}
        \begin{aligned}
            & \Big\langle \boldsymbol{Z}\boldsymbol{f}_{in}, \boldsymbol{f}_{in}\Big\rangle_{\boldsymbol{\theta}_{\Scale[0.5]{\mathcal{A}}}}+\Big\langle \boldsymbol{Z}\boldsymbol{f}_{jn}, \boldsymbol{f}_{jn}\Big\rangle_{\boldsymbol{\theta}_{\Scale[0.5]{\mathcal{A}}}}&=&\ \ \Big\langle \boldsymbol{Z}\boldsymbol{f}_{in}, \boldsymbol{f}_{jn}\Big\rangle_{\boldsymbol{\theta}_{\Scale[0.5]{\mathcal{A}}}} + \Big\langle \boldsymbol{Z}\boldsymbol{f}_{jn}, \boldsymbol{f}_{in}\Big\rangle_{\boldsymbol{\theta}_{\Scale[0.5]{\mathcal{A}}}} + \Big\langle \boldsymbol{Z}\boldsymbol{f}_{ij}, \boldsymbol{f}_{ij}\Big\rangle_{\boldsymbol{\theta}_{\Scale[0.5]{\mathcal{A}}}}\\[5pt]
            & &=&\ \ \frac{\hat{\theta}_n}{\hat{\theta}_n+\hat{\theta}_1}\left(\frac{1}{\ \hat{\theta}_n}+\frac{1}{\ \hat{\theta}_i}+\frac{1}{\ \hat{\theta}_n}+\frac{1}{\ \hat{\theta}_j}-\frac{1}{\ \hat{\theta}_i}-\frac{1}{\ \hat{\theta}_j}\right)\\[5pt]
            & &=&\ \ \frac{\hat{\theta}_n}{\hat{\theta}_n+\hat{\theta}_1}\cdot\frac{2}{\ \hat{\theta}_n}\\[5pt]
            & &=&\ \ \frac{2\ \hat{\theta}_n}{\hat{\theta}_n+\hat{\theta}_1}\cdot\Big\langle\boldsymbol{f}_{in},\boldsymbol{f}_{jn}\Big\rangle_{\boldsymbol{\theta}_{\Scale[0.5]{\mathcal{A}}}}.
        \end{aligned}
    \end{equation}
    Consequently, we have
    \begin{equation}
        \begin{aligned}
            & \Big\langle \boldsymbol{Z}\boldsymbol{f}, \boldsymbol{f}\Big\rangle_{\boldsymbol{\theta}_{\Scale[0.5]{\mathcal{A}}}} &=&\ \ \left\langle \boldsymbol{Z}\sum_{i=1}^{n-1}\alpha_i\boldsymbol{f}_{in}, \sum_{j=1}^{n-1}\alpha_j\boldsymbol{f}_{jn}\right\rangle_{\boldsymbol{\theta}_{\Scale[0.5]{\mathcal{A}}}}\\[5pt]
            & &=&\ \ \sum_{i=1}^{n-1}\alpha^2_i\Big\langle \boldsymbol{Z}\boldsymbol{f}_{in}, \boldsymbol{f}_{in}\Big\rangle_{\boldsymbol{\theta}_{\Scale[0.5]{\mathcal{A}}}} + \underset{i<j}{\sum}\alpha_i\alpha_j\Big(\Big\langle \boldsymbol{Z}\boldsymbol{f}_{in}, \boldsymbol{f}_{jn}\Big\rangle_{\boldsymbol{\theta}_{\Scale[0.5]{\mathcal{A}}}}+\Big\langle \boldsymbol{Z}\boldsymbol{f}_{jn}, \boldsymbol{f}_{in}\Big\rangle_{\boldsymbol{\theta}_{\Scale[0.5]{\mathcal{A}}}}\Big)\\[5pt]
            & &=&\ \ \frac{\hat{\theta}_n}{\hat{\theta}_n+\hat{\theta}_1} \sum_{i=1}^{n-1}\alpha^2_i\Big\langle\boldsymbol{f}_{in}, \boldsymbol{f}_{in}\Big\rangle_{\boldsymbol{\theta}_{\Scale[0.5]{\mathcal{A}}}} + \frac{\hat{\theta}_n}{\hat{\theta}_n+\hat{\theta}_1}\underset{i<j}{\sum}\alpha_i\alpha_j\Big(\Big\langle\boldsymbol{f}_{in}, \boldsymbol{f}_{jn}\Big\rangle_{\boldsymbol{\theta}_{\Scale[0.5]{\mathcal{A}}}}+\Big\langle\boldsymbol{f}_{jn}, \boldsymbol{f}_{in}\Big\rangle_{\boldsymbol{\theta}_{\Scale[0.5]{\mathcal{A}}}}\Big)\\[5pt]
            & &=&\ \ \frac{\hat{\theta}_n}{\hat{\theta}_n+\hat{\theta}_1}\left\langle\sum_{i=1}^{n-1}\alpha_i\boldsymbol{f}_{in},\sum_{j=1}^{n-1}\alpha_j\boldsymbol{f}_{jn}\right\rangle_{\boldsymbol{\theta}_{\Scale[0.5]{\mathcal{A}}}}\\[5pt]
            & &=&\ \ \frac{\hat{\theta}_n}{\hat{\theta}_n+\hat{\theta}_1}\Big\langle\boldsymbol{f},\boldsymbol{f}\Big\rangle_{\boldsymbol{\theta}_{\Scale[0.5]{\mathcal{A}}}}.
        \end{aligned}
    \end{equation}
    Now we have proved that
    \begin{equation}
        v(\boldsymbol{P}_{\mathcal{A}}) = \underset{\boldsymbol{f}\in\mathcal{F},\ \|\boldsymbol{f}\|_{\Scale[0.5]{\boldsymbol{\theta}_{\mathcal{A}}}}=1}{\textbf{\textit{sup}}} \Big\langle\big(\boldsymbol{I}-\boldsymbol{P}_{\mathcal{A}}\big)^{-1}\boldsymbol{f},\ \boldsymbol{f}\Big\rangle_{\boldsymbol{\theta}_{\mathcal{A}}} = \frac{\hat{\theta}_n}{\hat{\theta}_1+\hat{\theta}_n}
    \end{equation}
    with $\boldsymbol{P}_{\mathcal{A}}$ defined by \eqref{eq:irreversibiletransitionmatrix}. Along with Theorem \ref{thm:irrevisible_matrix}, we know that
    \begin{equation}
        \underset{\boldsymbol{P}_{\mathcal{A}}}{\textbf{\textit{inf}}}\ v(\boldsymbol{P}_{\mathcal{A}}) = \frac{\hat{\theta}_n}{\hat{\theta}_n+\hat{\theta}_1}.
    \end{equation}
    As a consequence, constructing the transition matrix like \eqref{eq:irreversibiletransitionmatrix} would lead the corresponding asymptotic variance to attain its lower bound. 
    \end{proof}
\end{document}